\providecommand{\keywords}[1]
{
    \textbf{\textit{Keywords---}} #1
}
\theoremstyle{plain}
\newtheorem{theorem}{Theorem}[section]
\theoremstyle{definition}
\theoremstyle{remark}
\icmltitlerunning{Smoothing the Black-Box: Signed-Distance Supervision for Black-Box Model Copying}
\begin{document}

\twocolumn[
  \icmltitle{Smoothing the Black-Box: Signed-Distance Supervision for Black-Box Model Copying}

  \icmlsetsymbol{equal}{*}

  \begin{icmlauthorlist}
    \icmlauthor{Rub\'en Jim\'enez}{yyy}
    \icmlauthor{Oriol Pujol}{yyy}
  \end{icmlauthorlist}

  \icmlaffiliation{yyy}{Departament de Matemàtiques i Informàtica, Universitat de Barcelona, Barcelona, Catalonia, Spain}

  \icmlcorrespondingauthor{Oriol Pujol}{oriol\_pujol@ub.edu}

  \icmlkeywords{}

  \vskip 0.3in
]

\printAffiliationsAndNotice{}

\begin{abstract}
Deployed machine learning systems must continuously evolve as data, architectures, and regulations change, often without access to original training data or model internals. In such settings, black-box copying provides a practical refactoring mechanism, i.e. upgrading legacy models by learning replicas from input-output queries alone. When restricted to hard-label outputs, copying turns into a discontinuous surface reconstruction problem from pointwise queries, severely limiting the ability to recover boundary geometry efficiently. We propose a distance-based copying (distillation) framework that replaces hard-label supervision with signed distances to the teacher’s decision boundary, converting copying into a smooth regression problem that exploits local geometry. We develop an $\alpha$-governed smoothing and regularization scheme with Hölder/Lipschitz control over the induced target surface, and introduce two model-agnostic algorithms to estimate signed distances under label-only access. Experiments on synthetic problems and UCI benchmarks show consistent improvements in fidelity and generalization accuracy over hard-label baselines, while enabling distance outputs as uncertainty-related signals for black-box replicas.
\end{abstract}

\keywords{Model Copies, Black-box Distillation, Regularization, Distance-based Supervision}

\section{Introduction}
\label{sec1}

Deployed machine learning systems must evolve as data, architectures, and regulations change. Practitioners and companies are confronted with the problem of updating legacy models, migrating them to new architectures, or adapting them to satisfy transparency, privacy, or explainability requirements \cite{b3,b4,b5,b6,b7,b8,b9,b10}. This creates a growing need for systematic approaches to model maintenance and continuous evolution.

These challenges can be viewed as problems of environmental adaptation \cite{unceta2020environmental}, where a deployed model must be modified in response to shifts in data, features, computational budgets, institutional conditions, or regulation. Within this setting, differential replication denotes generating a new model that preserves salient properties of an existing one (e.g., decision boundary, predictive behaviour, performance) while satisfying updated constraints. Depending on access to data and internals, replication spans from straightforward retraining, through dataset editing or enrichment, to white-box transfer and distillation-based teacher-student strategies. In the most restrictive regime, when neither data nor internals are available, replication collapses to black-box copying, i.e. training a replica solely from input-output queries to approximate the original decision boundary \cite{b20}. This data- and model-agnostic regime is the focus of this work.

Black-box copying has been used to address feature unavailability, architectural migration, interpretability needs, and regulatory compliance \cite{b11,b12,b13,b14}. In practice, however, practitioners care not only about fidelity to the source model, but also about task accuracy and additional properties such as fairness, privacy, robustness, or computational efficiency. An important limitation in many realistic scenarios is that the oracle exposes only hard labels, providing a discontinuous supervisory signal that is especially problematic for non-smooth boundaries and high-dimensional inputs \cite{b15}. Most existing methods respond by designing better query datasets \cite{b18,b19}, but the structure of the hard-label signal itself has received comparatively little attention.

In this work, we study black-box copying in a non-adversarial, fully data-free setting where query access to deployed models is legitimate and inexpensive, and we frame copying and distillation as tools for model maintenance and responsible evolution, analogous to refactoring in software engineering. Departing from prior work focused on compression or adversarial extraction, we introduce a distance-based distillation framework that replaces hard-label supervision with signed distances to the teacher’s decision boundary, transforming copying from learning a discontinuous surface into regressing a smooth target that exploits local geometry. Building on this formulation, we propose a novel $\alpha$-parametrized, target-centric regularization mechanism that explicitly controls the regularity of the supervisory signal, with regimes that interpolate between replication and distance copying and provide provable H{\"o}lder/Lipschitz guarantees. This data-free regularization improves sample efficiency in low-data regimes and enables controlled post-hoc refinement of deployed models, while the distance outputs naturally endow replicas with uncertainty estimates useful for downstream decision-making.

Section~\ref{sec2} reviews related work. Section~\ref{sec3} introduces our framework and the proposed target-centric regularization scheme, as well as two model-agnostic algorithms for estimating signed distances from hard labels. Sections~\ref{sec4} and~\ref{sec:5} present the experimental setup and results on synthetic datasets and UCI benchmarks, respectively. We conclude by summarizing the main findings, discussing key limitations, and outlining directions for future research. The appendices provide pseudocode, the proof of Theorem~\ref{teo1}, and supplementary tables and figures supporting the reported results.

\section{Related Work: Distillation and Copying with Hard-Label Binary Teachers}
\label{sec2}

Knowledge distillation traditionally transfers knowledge from high-capacity teachers to compact students by matching soft class probabilities \cite{b30}, typically requiring access to model internals \cite{b32,b33}. In contrast, \emph{hard-label black-box copying} trains replicas solely from input--output queries \cite{b20,b21,Tramer2016Stealing}, addressing data-free model maintenance when neither training data nor internals are available.

Hard labels create discontinuous supervision that hinders decision boundary recovery \cite{b15}. Prior work mitigates this via boundary proxies. Wang et al. \cite{Wang2021DB3KD} estimate per-class boundary distances to construct soft labels; adversarial methods like boundary attacks \cite{Brendel2018BoundaryAttack} probe geometry via walks; and recent extensions pursue logit reconstruction \cite{Zhou2023DecisionLogitsKD}, generative boundary sampling \cite{Pei2025QEDG}, or active probing \cite{Pal2020ActiveThief}.

Unlike these, we regress signed distances directly with $\alpha$-controlled H\"older/Lipschitz regularity (Thm.~\ref{teo1}), providing a single-parameter family interpolating replication ($\alpha=0$) and distance copying ($\alpha=1$) while guaranteeing target smoothness without complex soft-label engineering.

\subsection{Differential replication through copying}\label{sec:diff}

Knowledge distillation typically assumes fixed datasets and soft targets (logits), even in data-free settings where synthetic samples query the teacher \cite{b30}. Black-box copying differs fundamentally \cite{b20} considering unknown training distributions and hard-label oracles. As a result, it becomes a \emph{dual optimization} over both synthetic queries $S$ and replica parameters $\theta$. For expressive student spaces $\mathcal{H}_t$, hard-label supervision is separable, so unconstrained fidelity minimization ($\theta^\dagger$) achieves perfect empirical fit. Copying thus reduces to constrained capacity ($\Omega(\theta)$) minimization:
\begin{flalign}\label{eq:capacity}
\underset{\theta,S}{\text{minimize}}  &\quad \Omega(\theta)\\
\text{subject to} & \quad \left\|R^{\mathcal{F}}_{emp}(f_{\mathcal{C}}(\theta),f_{\mathcal{O}})-R^{\mathcal{F}}_{emp}(f_{\mathcal{C}}(\theta^{\dagger}),f_{\mathcal{O}})\right\|<\varepsilon \nonumber
\end{flalign}
for a tolerance $\varepsilon$, where $\theta^{\dagger}$ is the solution of the unconstrained fidelity minimization problem\footnote{Single-pass unconstrained problem: $$\theta^{\dagger} = \underset{\theta,S}{\text{argmin}} \quad R^{\mathcal{F}}_{emp}(f_{\mathcal{C}}(\theta),f_{\mathcal{O}}).$$}, $R^{\mathcal{F}}_{emp}$ is the empirical fidelity risk, and $f_{\mathcal{C}}$ and $f_{\mathcal{O}}$ correspond to the copying and original models, respectively.

A simplified version of the former problem samples $S \sim P_\mathcal{Z}$ and then minimizes $R^\mathcal{F}_\text{emp}$. Although in practice this {\it single-pass} approach is the standard setup, achieving good fidelities require of large synthetic sets. Other approaches, such as alternating optimization jointly refines queries and parameters for memory-efficient approximations of \eqref{eq:capacity} \cite{b21}. Our signed-distance supervision can be layered on top of existing copying pipelines, providing a controllable smoothness–robustness regularizer.

\section{Target-centric Regularized Distance-based Copying}
\label{sec3}

Unlike prior decision-based distillation methods that reconstruct multi-class soft targets and optimize divergence-based objectives, we cast hard-label copying as a \emph{regression} problem by learning to approximate the full signed-distance surface induced by the teacher’s decision boundary. Moreover, we propose a novel target-centric regularization mechanism governed by a parameter $\alpha$ that converts discontinuous label feedback into a smooth metric target with provable regularity for which we provide explicit regularity guarantees (H\"older or Lipschitz continuity in relevant regimes), interpolating between pure replication ($\alpha=0$) and full distance copying ($\alpha=1$), and providing an interpretable control over target smoothness.

Specifically, we propose to build the copy $f_{\mathcal{C}}$ using signed distances to the decision boundary of the black-box teacher. Without loss of generality, we restrict our analysis to a binary teacher $f_{\mathcal{O}}:\mathbb{R}^d\rightarrow\{-1,+1\}$\footnote{Multiclass problems can be reduced to collections of binary classification tasks, for example via Error Correcting Output Coding.}. Then, for any synthetic query point $s_i \in S \subset \mathcal{Z}$, we replace the hard-label output $f_{\mathcal{O}}(s_i)$ by a signed-distance target $\ell(s_i)$ defined as
\begin{align}
\ell(s_i) &=  f_{\mathcal{O}}(s_i)\cdot\xi(s_i) ,\\
\xi(s_i) &= \inf_{\substack{z \in \mathcal{Z}\\ f_{\mathcal{O}}(z) \neq f_{\mathcal{O}}(s_i)}} d(s_i, z),
\end{align}
where $\xi(s_i)$ denotes the distance from $s_i$ to the closest point in the operational space with opposite predicted label.

Using these targets, which encode both class membership and distance information in a single scalar, we redefine the optimization goal in the single-pass by replacing the oracle outputs with signed-distance supervision in the empirical fidelity loss,
$R^{\mathcal{F}}_{emp}(f_{\mathcal{C}}(\cdot;\theta),\ell)$.

\subsection{Target-centric Regularization}

The framework of distillation implicitly exhibits a regularization effect by mimicking the original classifier using only the finite information obtained through the sampled query set $S$. This has been exploited in self-distillation settings \cite{b37,b38,b39,b40}, where the same architecture may show improved generalization performance. In the black-box setting, a similar effect emerges: although the copy aims to imitate the black-box, it rarely replicates all its irregularities, since it only observes the black-box’s outputs on a finite set. 

In this context, distance-based supervision can further strengthen this regularization effect. Signed distances vary smoothly across the input space, thereby imposing a continuity constraint on the target function, especially around the boundary. This property facilitates learning and reduces the influence of small-scale irregularities, which are naturally down-weighted in the loss and can be safely ignored.

To exert fine-grained control over the regularization induced by distance-based targets, we introduce an $\alpha$-governed, model-agnostic, target-centric regularization scheme. Specifically, the original signed-distance target $\ell$ is generalized by exponentiating the distance term as:
\begin{equation}
  \ell_{\alpha}(s_i) = f_{\mathcal{O}}(s_i)\,\xi(s_i)^{\alpha}.
\end{equation}
The parameter $\alpha$ allows the method to interpolate smoothly between pure replication ($\alpha = 0$) and full distance copying ($\alpha = 1$), directly controlling the regularity of the supervisory signal. The following theorem  characterizes the regularity of the resulting targets.

\begin{theorem}[Regularity of $\alpha$ signed distances]
\label{teo1}
Let $f: \mathcal{X} \longrightarrow \{-1,1\}$ be a function and let $\alpha > 0$, we consider $d$ a distance in $\mathcal{X}$ bounded by $D > 0$ and define $l_{\alpha}(x) = f(x)d(x, A_{x})^{\alpha}$ for all $x \in \mathcal{X}$, where $A_{x} = \{y \in \mathcal{X}\;\vert\; f(y) \neq f(x)\}$. Then, for all $x,y \in \mathcal{X}$:
\begin{itemize}
    \item If $\alpha \leq 1$, we have $\vert l_{\alpha}(x) - l_{\alpha}(y) \vert \leq 2d(x,y)^{\alpha}$.
    \item If $\alpha \geq 1$, we have $\vert l_{\alpha}(x) - l_{\alpha}(y)\vert \leq 2\alpha D^{\alpha-1}d(x,y)$.
\end{itemize}
\end{theorem}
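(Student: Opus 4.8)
The plan is to reduce the whole statement to the unsigned distance‑to‑a‑set function together with two elementary scalar inequalities for $t\mapsto t^{\alpha}$. First I would note that $A_x$ takes only two possible values: with $A^{+}=\{y:f(y)=+1\}$ and $A^{-}=\{y:f(y)=-1\}$, we have $A_x=A^{-}$ when $f(x)=+1$ and $A_x=A^{+}$ when $f(x)=-1$ (the degenerate case of constant $f$, where both $A_x=\varnothing$ and $l_{\alpha}$ is not finite‑valued, is set aside). For a fixed nonempty set $A$ the map $g_A(x):=d(x,A)=\inf_{z\in A}d(x,z)$ is $1$‑Lipschitz, $|g_A(x)-g_A(y)|\le d(x,y)$, by the triangle inequality, and $0\le g_A\le D$ since $d\le D$. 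I would then split into two cases according to whether $f(x)=f(y)$.

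\emph{Case 1: $f(x)=f(y)$.} Here $A_x=A_y=:A$ and the common sign $f(x)$ cancels, so $|l_{\alpha}(x)-l_{\alpha}(y)|=|g_A(x)^{\alpha}-g_A(y)^{\alpha}|$. Now I invoke the scalar facts. For $\alpha\le 1$, concavity of $t\mapsto t^{\alpha}$ on $[0,\infty)$ gives subadditivity and hence $|a^{\alpha}-b^{\alpha}|\le|a-b|^{\alpha}$, so $|g_A(x)^{\alpha}-g_A(y)^{\alpha}|\le|g_A(x)-g_A(y)|^{\alpha}\le d(x,y)^{\alpha}$. For $\alpha\ge 1$, $t\mapsto t^{\alpha}$ has derivative $\alpha t^{\alpha-1}\le\alpha D^{\alpha-1}$ on $[0,D]$, hence is $\alpha D^{\alpha-1}$‑Lipschitz there, so $|g_A(x)^{\alpha}-g_A(y)^{\alpha}|\le\alpha D^{\alpha-1}|g_A(x)-g_A(y)|\le\alpha D^{\alpha-1}d(x,y)$. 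In both cases the claimed bound holds (the factor $2$ is not even needed in this case).

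\emph{Case 2: $f(x)\ne f(y)$}, say $f(x)=+1$ and $f(y)=-1$. Then $l_{\alpha}(x)=g_{A^{-}}(x)^{\alpha}\ge 0$ and $l_{\alpha}(y)=-g_{A^{+}}(y)^{\alpha}\le 0$, so $|l_{\alpha}(x)-l_{\alpha}(y)|=g_{A^{-}}(x)^{\alpha}+g_{A^{+}}(y)^{\alpha}$. The key geometric observation is that $y$ itself lies in $A^{-}$ and $x$ itself lies in $A^{+}$, so $y$ and $x$ are admissible competitors in the respective infima, giving $g_{A^{-}}(x)\le d(x,y)$ and $g_{A^{+}}(y)\le d(x,y)$. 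For $\alpha\le 1$ this yields directly $g_{A^{-}}(x)^{\alpha}+g_{A^{+}}(y)^{\alpha}\le 2d(x,y)^{\alpha}$. For $\alpha\ge 1$, using in addition $d(x,y)\le D$ one has $d(x,y)^{\alpha}=d(x,y)^{\alpha-1}d(x,y)\le D^{\alpha-1}d(x,y)\le\alpha D^{\alpha-1}d(x,y)$, so each of the two terms is at most $\alpha D^{\alpha-1}d(x,y)$ and the sum is at most $2\alpha D^{\alpha-1}d(x,y)$. Combining Cases 1 and 2 proves the theorem.

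I do not expect a genuine obstacle: the argument is short. The only points needing care are (i) stating the two elementary inequalities for $t\mapsto t^{\alpha}$ cleanly — concavity/subadditivity for $\alpha\le 1$, and the bounded‑derivative Lipschitz estimate on the compact interval $[0,D]$ for $\alpha\ge 1$ — and (ii) the cross‑label case, where one must remember that the opposite‑label point is a legitimate competitor in the infimum defining the distance to the other class; this is exactly what produces the factor $2$ and makes the bound sharp near the boundary. It is worth remarking that boundedness of $d$ is used only in the $\alpha\ge 1$ regime, whereas for $\alpha\le 1$ the H\"older estimate is diameter‑free.
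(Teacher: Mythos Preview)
Your proof is correct and follows essentially the same approach as the paper: the same two-case split on whether $f(x)=f(y)$, the same 1-Lipschitz property of $x\mapsto d(x,A)$ in the equal-label case, and the same competitor observation $d(x,A_x)\le d(x,y)$ in the cross-label case. The only cosmetic difference is that the paper derives the scalar inequalities for $t\mapsto t^{\alpha}$ via an explicit integral comparison, whereas you invoke concavity/subadditivity for $\alpha\le 1$ and the bounded-derivative Lipschitz bound on $[0,D]$ for $\alpha\ge 1$; these are equivalent justifications of the same estimates.
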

\begin{proof}
See Appendix \ref{appxZ}.
\end{proof}

Theorem~\ref{teo1} shows that $\alpha$-signed distance targets are $\alpha$-H\"older continuous for $\alpha\leq 1$ and Lipschitz continuous for $\alpha\geq 1$. These guarantees explain the regularization effect of distance-based copying. Intuitively this implies that, within any ball of radius $\varepsilon$, function values vary by at most $O(\varepsilon^\alpha)$ ($\alpha\le 1$) or $O(\varepsilon)$ ($\alpha\ge 1$), ensuring the graph remains bounded in a vertical strip of width proportional to $\varepsilon^{\min(\alpha,1)}$. This encourages the copy to learn a smoother decision surface by progressively attenuating small-scale boundary irregularities as $\alpha$ increases.

\begin{figure}[htbp]
\centering
\begin{subfigure}[t]{0.488\columnwidth}
    \includegraphics[width=\linewidth]{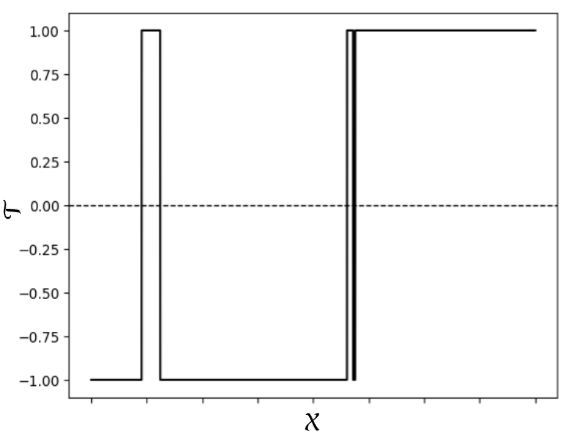}
    \caption{Black-box}
\end{subfigure}
\hfill
\begin{subfigure}[t]{0.492\columnwidth}
    \includegraphics[width=\linewidth]{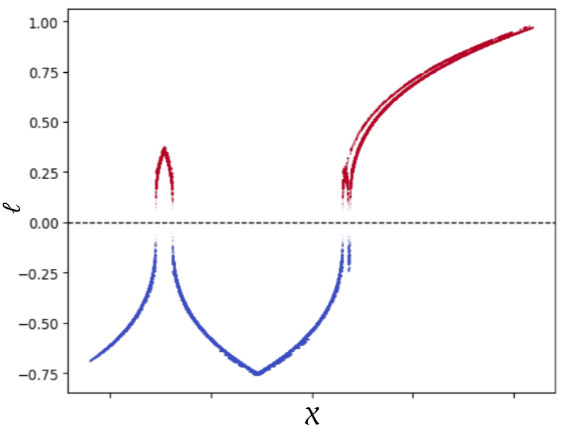}
    \caption{$\alpha = 0.33$}
\end{subfigure}

\begin{subfigure}[t]{0.488\columnwidth}
	\includegraphics[width=\linewidth]{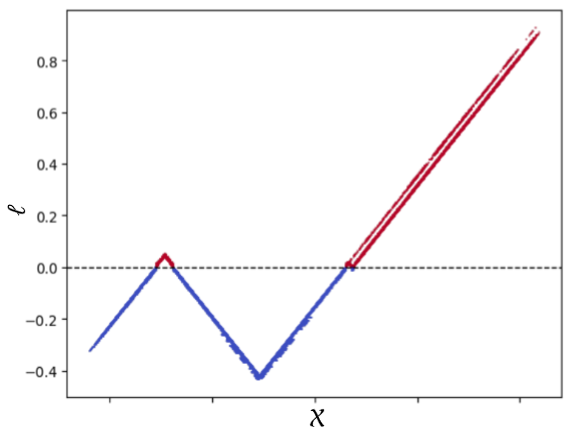}
	\caption{$\alpha = 1$}
\end{subfigure}
\hfill
\begin{subfigure}[t]{0.492\columnwidth}
    \includegraphics[width=\linewidth]{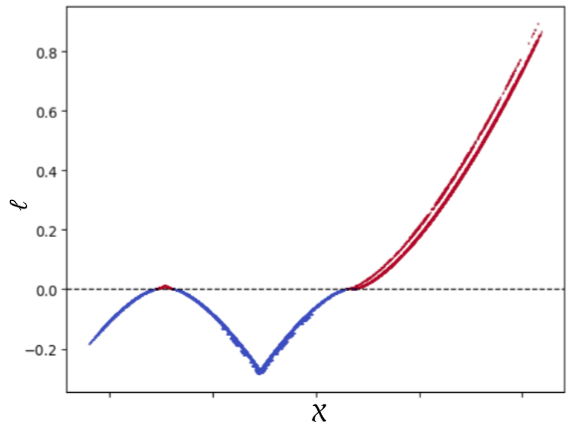}
    \caption{$\alpha = 1.5$}
\end{subfigure}
\caption{Comparison between a black-box and several distance-based labellings of a synthetic dataset $\mathscr{X}$.}
\label{fig:-123}
\end{figure}

The practical implications of this behaviour are illustrated in Fig.~\ref{fig:-123}. As $\alpha$ increases, the target surface becomes progressively smoother, especially around the desired target decision boundary. Small irregularities in the black-box have a limited influence on the target, which helps the copy to remove them. 

\subsection{Computing the distance to the decision boundary}
Once the proposal has been introduced, we discuss how these distances to the decision boundary, $\xi(s_{i}),\; s_i\in S \subset \mathcal{Z} $, can be computed, introducing and comparing two distance, sampling and model-agnostic approaches.

\begin{figure}[htbp]
\centering 
\includegraphics[width=\linewidth]{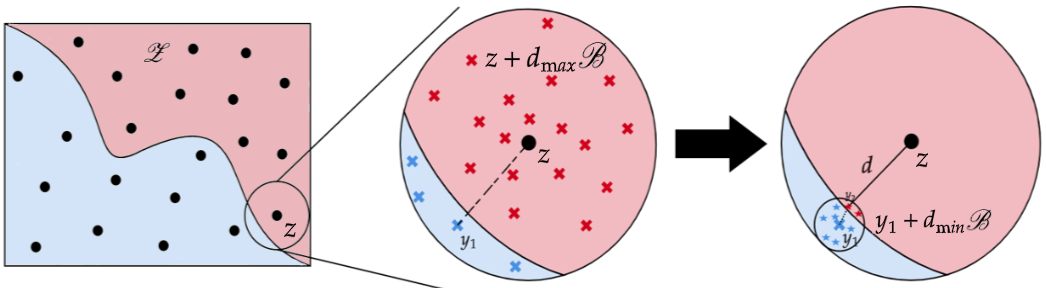}
\caption{Graphical representation of Algorithm 1}
\label{fig:2}
\end{figure}

To achieve high-quality distance estimates closely approximating the true boundary distances, we can apply Algorithm 1 (see Appendix \ref{appxA}), that iterates through each point in the synthetic dataset and computes the desired distance for each of them using a refining scheme. To construct these approximations, we previously compute and store an auxiliary set of points, $\mathscr{B}$, sampled from the unit ball. Given a query $z$, the center, $c$, is initialized to this value. The algorithm iteratively proceeds by finding the point $x \in c + d_{\mathrm{max}}\mathscr{B}$ such that minimizes $d(x, c)$ with the constraint $f_{\mathcal{O}}(x) \neq f_{\mathcal{O}}(c)$. Each iteration updates the center $c$ to the newly found $x$ and decreases the exploration radius $d_{\mathrm{max}}$ (see Fig. \ref{fig:2}).

Nevertheless, even though Alg. 1 can produce high quality distances, its complexity is considerable. Specifically, assuming that the maximum number of iterations $it_{\mathrm{max}}$ is always reached, that the cost of computing the distance $d$ is proportional to the dimension $dim$, and that $|\mathscr{B}| = m$, Alg. 1 requires $n\cdot (it_{\mathrm{max}}\cdot m + 1)$ black-box calls and $O(it_{\mathrm{max}}\cdot n\cdot m\cdot dim)$ operations. 

\begin{figure}[htbp]
\centering 
\includegraphics[width=\linewidth]{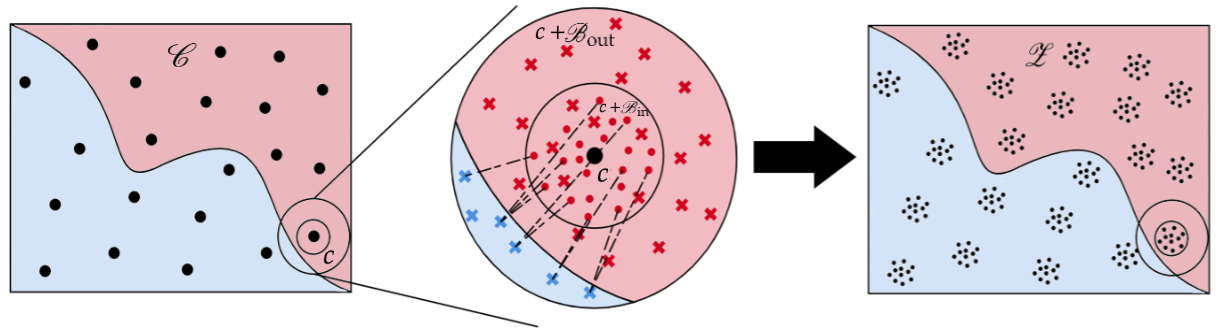}
\caption{Graphical representation of Algorithm 2}
\label{fig:-1}
\end{figure}

An alternative approach emphasizes maximizing the number of labelled points per query batch. Algorithm 2 (see Appendix \ref{appxA}) achieves this by sampling the synthetic dataset $\mathscr{Z}$ in small clusters that are labelled at the same time. In detail, we start by sampling a first dataset of centres, $\mathscr{C}$ of size $n_{c}$, that determines the positions of the clusters. Then, for each of these positions $c$, we center a small $\mathscr{B}_{\mathrm{in}}$ and a large $\mathscr{B}_{\mathrm{out}}$ cloud of points around it, labelling both with the black-box. Then, we use the outer cloud to compute the distances to the boundary of the points in the inner one, taking the closest point with a different label (see Fig. \ref{fig:-1}). 

Thanks to the use of these clusters, and denoting by $n_{\mathrm{in}}$ and $n_{\mathrm{out}}$ the number of points
in $\mathscr{B}_{\mathrm{in}}$ and $\mathscr{B}_{\mathrm{out}}$ respectively, this algorithm requires $n_{c}(n_{\mathrm{in}} + n_{\mathrm{out}})$ black-box evaluations and $O(n_{c}\cdot n_{\mathrm{in}}\cdot n_{\mathrm{out}}\cdot dim)$ additional operations. Since the roles of $m$ and $n_{\mathrm{out}}$ are comparable between Alg. 1 and 2, this represents a significant reduction in cost, i.e. the number of black-box calls decreases by a factor of $it_{\mathrm{max}}\cdot n_{\mathrm{in}}$ compared to Alg. 1.

\section{Experimental Setup}
\label{sec4}
\subsection{Datasets}
In this section, several datasets are introduced to validate the explained proposal in a series of experiments, using both synthetic data in dimension two as well as real datasets from the UCI machine learning repository \cite{b55}.

\begin{table}[!ht]
\centering
\caption{Information of the datasets used in this work.}
\begin{tabular}{lccc}
\toprule
Dataset & $\vert \mathscr{D}_{\mathrm{tr}}\vert$ & $\vert \mathscr{D}_{\mathrm{te}}\vert$ & Dim.\\
\midrule
Dat. 1: Colliding Gaussians & 800 & 200 & 2\\
Dat. 2: Two spirals & 8000 & 2000 & 2\\
Dat. 3: Irregular blobs & 8000 & 2000 & 2\\
Dat. 4: Breast Cancer & 455 & 114 & 30\\
Dat. 5: Rice & 3048 & 762 & 7\\
Dat. 6: C. b. (Mines vs Rocks) & 166 & 42 & 60\\
\bottomrule
\end{tabular}
\label{tab:1}
\end{table}

\begin{figure}[!ht]
\centering
\begin{subfigure}[t]{0.3\columnwidth}
    \includegraphics[width=\linewidth]{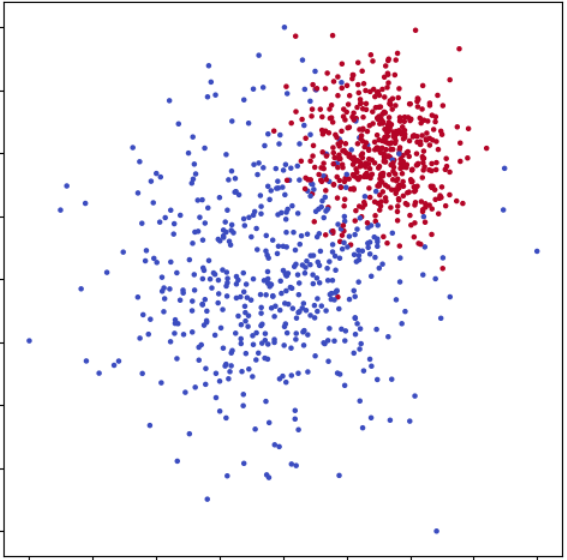}
    \caption{Dataset 1}
\end{subfigure}
\hfill
\begin{subfigure}[t]{0.3005\columnwidth}
    \includegraphics[width=\linewidth]{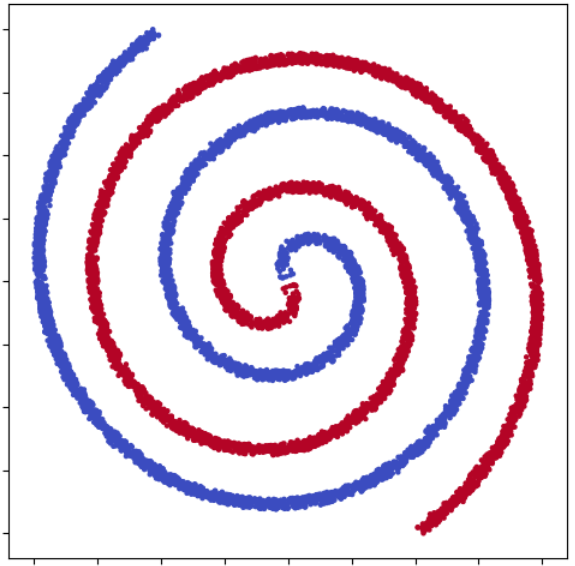}
    \caption{Dataset 2}
\end{subfigure}
\hfill
\begin{subfigure}[t]{0.3015\columnwidth}
    \includegraphics[width=\linewidth]{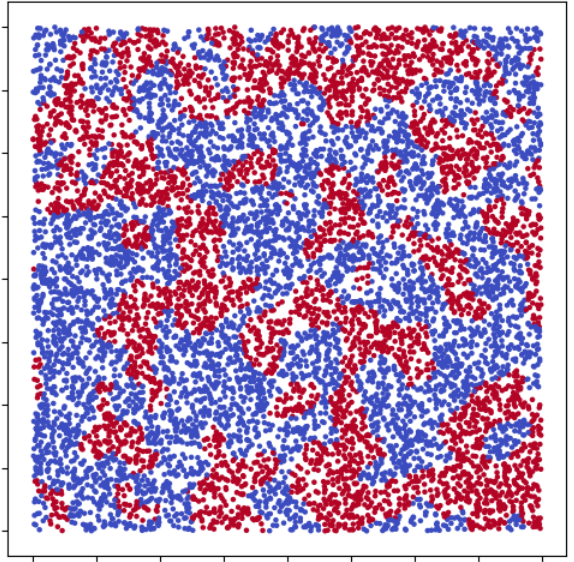}
    \caption{Dataset 3}
\end{subfigure}
\caption{Synthetic datasets considered in dimension 2.}
\label{fig:3}
\end{figure}

On the one hand, we consider three synthetic datasets (shown in Fig. \ref{fig:3}), aimed at visualizing the behaviours of the copies in different controlled but challenging scenarios. Dataset 1 is generated through two overlapping Gaussian distributions, which allows for a controllable class overlap. Dataset 2 displays two spirals shapes, which assess the ability of the copies to replicate highly non-linear manifolds. Dataset 3 is a convoluted dataset of irregular blobs, which aims at studying the performance in difficult scenarios with non-convex, clustered targets. 

On the other hand, to test this approach on real problems, we have also considered three datasets extracted from the UCI repository. In detail, Dataset 4, is Breast Cancer Wisconsin (Diagnostic) dataset; Dataset 5, Rice (Cammeo and Osmancik), and Dataset 6, Connectionist bench (Mines vs Rocks) datasets. These test how distance-based copies scale with feature dimensionality and boundary complexity beyond synthetic controls. Specific details of the datasets can be found in Table~\ref{tab:1}.

\subsection{Models and Model Parameters}
We have considered different types of models implemented with Scikit-learn \cite{b56} and Keras \cite{b57}. 

Specifically, as {\bf black-boxes}, we use:
\begin{itemize}
\item \textbf{Random Forest (RF):} 100 trees of maximum depth 10 and a minimum of 5 samples per leaf.

\item \textbf{Gradient Boosting Machines (GB):} trained with a 0.1 learning rate and using trees with a maximum of 31 leaves and a minimum of 20 samples in each of them.

\item\textbf{Neural Networks (NN):} following a 128-64-32-16-1 architecture and trained with a learning rate of 0.001 and batch size of 32 during 50 epochs.
\end{itemize}

The {\bf copies} are built using:
\begin{itemize}
    \item {\bf Neural Networks} with 0.001 learning rate, a batch size of 32 and $\mathrm{int}(100\cdot 20^{1 - \log_{1000}(\vert\mathscr{Z}\vert)})$ epochs\footnote{The number of epochs has been adjusted according to the size of the synthetic dataset, with larger datasets leading to fewer epochs. For example, the formula assigns 100 epochs to synthetic datasets of 1000 samples, while for datasets of 1,000,000 points the number of epochs is reduced to 5.}
    \begin{itemize}
        \item \textbf{Small (SNN):} 32-16-1 architecture.
        \item \textbf{Medium (MNN):} 128-64-32-16-1 architecture.
        \item \textbf{Large (LNN):} 512-256-256-128-64-32-16-1 architecture.
    \end{itemize}
    \item \textbf{Gradient Boosting Regressor:} with the same configuration as the black-box.
\end{itemize} 

\subsection{Performance Metrics}
As metrics, we have used the empirical fidelity error $R^{\mathcal{F}}_{emp}$ on an independent large uniform dataset $\mathcal{S}$ on the region of interest using 0-1 loss, and the accuracy $\mathcal{A}_{\mathcal{C}}$. The first measures how well the copy replicates the original black-box and the second shows the performance on the real test dataset. In addition, to test the quality of the distances predicted by these copies, we have used the MAE and the RMSE metrics.

\begin{figure*}[t]
\centering
\begin{subfigure}[t]{0.24\textwidth}
    \includegraphics[width=\linewidth]{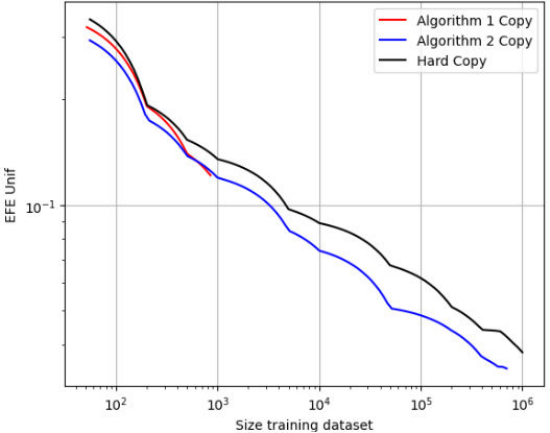}
    \caption{Dat. 3-$R^{\mathcal{F}}_{emp}$}
\end{subfigure}
\hfill
\begin{subfigure}[t]{0.244\textwidth}
    \includegraphics[width=\linewidth]{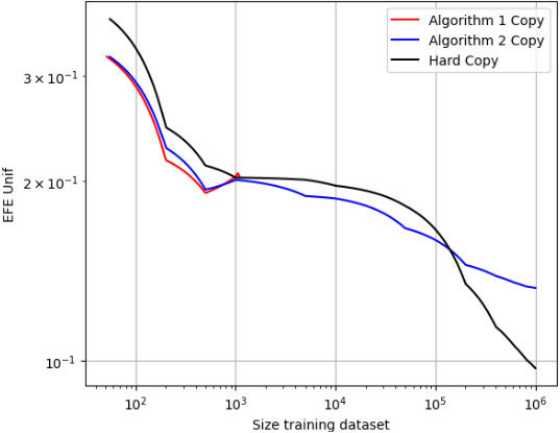}
    \caption{Dat. 6-$R^{\mathcal{F}}_{emp}$}
\end{subfigure}
\hfill
\begin{subfigure}[t]{0.248\textwidth}
    \includegraphics[width=\linewidth]{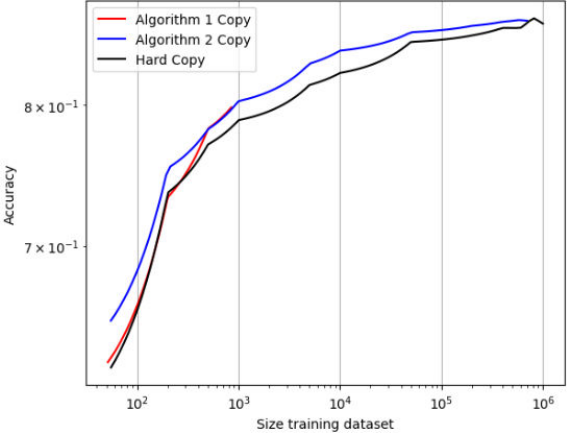}
    \caption{Dat. 3-$\mathcal{A}_{\mathcal{C}}$}
\end{subfigure}
\hfill
\begin{subfigure}[t]{0.248\textwidth}
    \includegraphics[width=\linewidth]{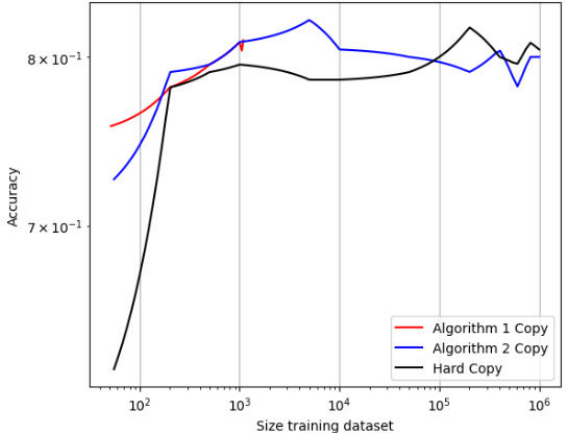}
    \caption{Dat. 6-$\mathcal{A}_{\mathcal{C}}$}
\end{subfigure}
\caption{Evolution of the metrics as a function of the number of training points. Plots made with a RF black-box and a LNN copy.}
\label{fig:4}
\end{figure*}

\begin{figure*}[t]
\centering
\begin{subfigure}[t]{0.489\textwidth}
    \includegraphics[width=\linewidth]{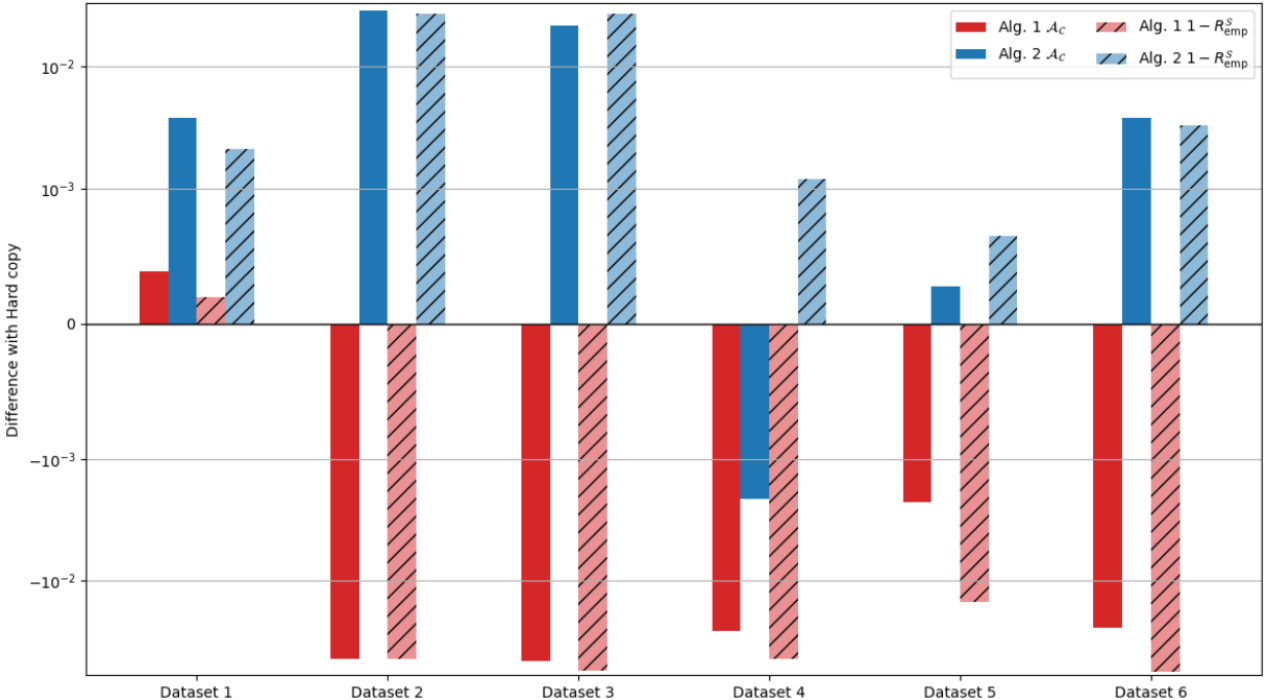}
    \caption{Average values in Fig. \ref{fig:4} - like line plots}
    \label{fig1.1}
\end{subfigure}
\hfill
\begin{subfigure}[t]{0.491\textwidth}
    \includegraphics[width=\linewidth]{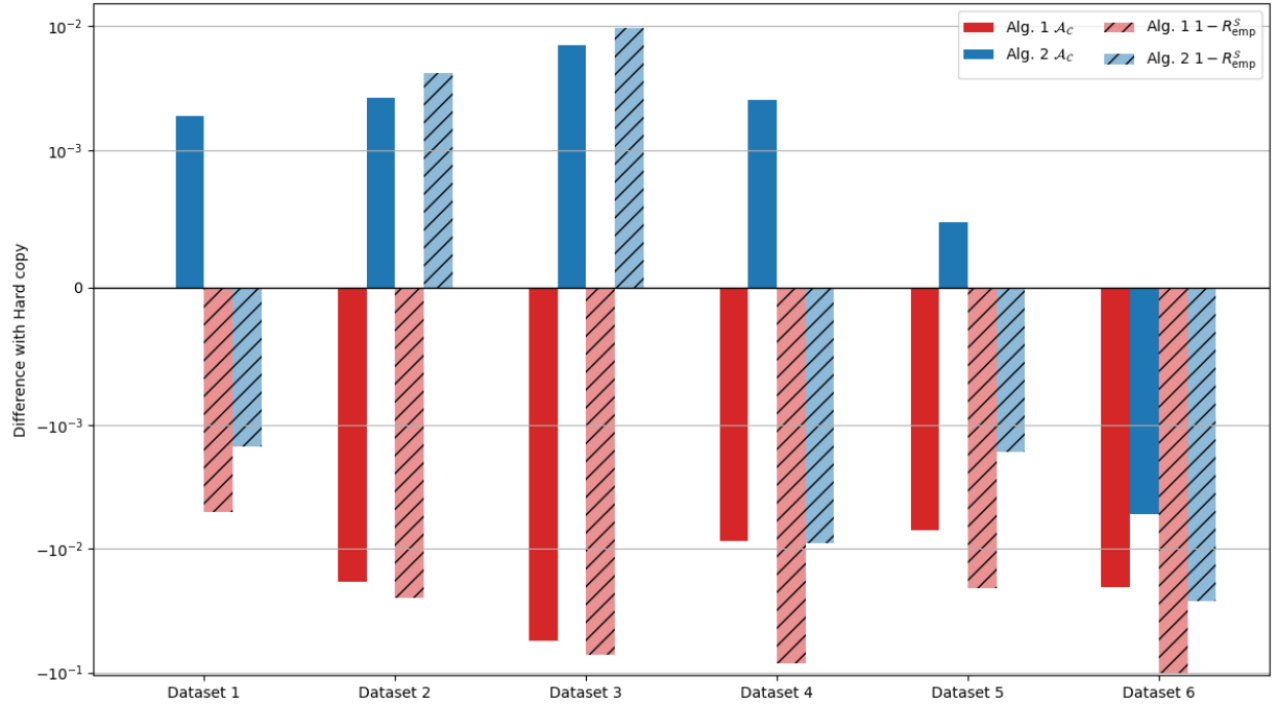}
    \caption{Final values in Fig. \ref{fig:4} - like line plots}
    \label{fig1.2}
\end{subfigure}
\caption{Algorithm comparison. Hard copies mark the horizontal axis. Results averaged over all black-box - NN copy combinations.}
\label{fig:-1314}
\end{figure*}

\subsection{Experimental Settings}
We design three experiments to analyse the performance of the copies, the regularization effect induced by distance-based copying, and the quality of the predicted distances.
\begin{itemize}
\item {\bf Experiment 1:} This experiment analyses how the performance of all pairwise combinations of black-box and copying models evolves as a function of the number of synthetic samples using both distance algorithms. We set a maximum budget of 1,000,000 synthetic points and a time limit of 240 seconds, fixing $\alpha = 1$. 
 
\item {\bf Experiment 2:} To show the regularization effect of the proposal, we have also trained these copies for multiple values of $\alpha$, and compared their results to the ones of the black-box and the hard copy. All copies have been trained on the same 1,000,000 synthetic points labelled with Alg. 2.

\item {\bf Experiment 3:} To assess the approximation error of distance-based copies, we analysed the quality of their predicted distances. We sampled a uniform dataset and a subset of the test dataset. Then, for each point in these datasets, we have computed their distance to the decision boundary using Alg. 1, treating these values as the ground-truth and comparing them to the predictions made by the copies of Experiment 1.
\end{itemize}

All results are obtained using a five-fold hold-out procedure, training the black-box models on the larger data split and evaluating them on the complementary set. In addition, we have used translated Sobol sequences \cite{b54} to sample the synthetic datasets.

\section{Results and Discussion}
\label{sec:5}
\subsection{Experiment 1. Global Comparison at $\alpha=1$.}
This experiment aims to show how black-box/copy combinations scale with synthetic dataset size under fixed computational budgets for the basic signed-distance, i.e. $\alpha = 1$. We track fidelity and accuracy evolution to reveal when distance-based supervision outperforms hard labels. Figure \ref{fig:4} shows representative curves (Datasets 3 and 6) of accuracy and fidelity errors with respect to the size of the sample set, while Figure \ref{fig:-1314} summarizes the average evolution and final comparisons across all datasets.

The curves in Figure \ref{fig:4} (extended version in Appendix \ref{appxC}) serve as example references for the general qualitative behaviour of the proposal. The figures show that signed distance copying generally display lower fidelity error in low and mid budget regimes, and higher accuracy on the test set. In this experiment with fixed alpha we observe the first consequences of the regularization effect. 

Figure~\ref{fig:-1314}(a) and (b) (based on tables in Appendix \ref{appxB}) show the average and final performance differences with respect to the hard-copy baseline, respectively, where bars above the horizontal axis indicate improvements over hard copying. The figure also compares both distance-based algorithms under a variable but limited synthetic data budget. The first observation is that Algorithm~2 outperforms Algorithm~1 in this operational setting. Recall that the scheme of Algorithm~1 asymptotically converges to the true distance, at the cost of larger point budgets.

\begin{table}[!ht]
\centering \scriptsize
\setlength{\tabcolsep}{4.8pt}
\caption{Relative differences in generalization and fidelity errors of distance copies compared to hard copies (negative values are preferred, indicating error reduction). Distance copies are built with Alg.~2 and trained using 1{,}000{,}000 synthetic points. Results are averaged over datasets and black-box models.}

\label{tab:nova}
\begin{tabular}{l
                *{2}{c}
                *{2}{c}}
\toprule
$f_{\mathcal{C}}$ model &
\multicolumn{2}{c}{$\alpha = 1$} &
\multicolumn{2}{c}{Best $\alpha > 0$} \\
\cmidrule(lr){2-5}
& $1-\mathcal{A}_{\mathcal{C}}$ (\%) & $R^{\mathcal{F}}_{emp}$ (\%) & $1-\mathcal{A}_{\mathcal{C}}$ (\%) & $R^{\mathcal{F}}_{emp}$ (\%) \\
\midrule
SNN & {\bf -7.52} & +3.76 & {\bf -12.44} & {\bf -4.24} \\
MNN & {\bf -6.99} & +7.71 & {\bf -15.11} & {\bf -1.57} \\
LNN & {\bf -16.61} & +5.96 & {\bf -19.85} & {\bf -4.14} \\
GB & +0.90 & +59.82 & {\bf -5.16} & +14.21 \\
\bottomrule
\end{tabular}
\end{table}

Focusing on the results obtained with Algorithm~2, we observe that, on average (Fig.~\ref{fig:-1314}(a)), it tends to achieve higher accuracies and improved fidelities compared to hard copies. However, in the final regime (Fig.~\ref{fig:-1314}(b)), a degradation in fidelity becomes apparent. These results suggest that signed-distance supervision provides an advantage in small and intermediate data-budget regimes. Due to the induced regularization effect, accuracy often improves (solid bars) at the expense of fidelity (dashed bars). This is further confirmed in the comprehensive analysis found in the first column of Table \ref{tab:nova} ($\alpha=1$). For clarity of interpretation, we report relative differences in both generalization and fidelity errors. Negative values indicate a percentage reduction in the corresponding error metric\footnote{The relative difference of the copy metric $m_c$ is computed with respect to the baseline metric, $m_b$, i.e. $(\%) = 100\cdot \frac{m_{c}-m_{b}}{m_{b}}.$} and are marked in bold font. Observe that all neural networks models reduce the generalization error at the cost of worsening the fidelity. Gradient boosting exhibits a large degradation in fidelity, most likely due to its limited ability to perform smooth regression.

\begin{figure*}[t]
	\centering
	\begin{subfigure}[t]{0.158\textwidth}
		\includegraphics[width=\linewidth]{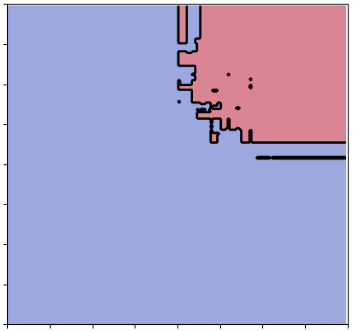}
		\caption{GB black-box}
	\end{subfigure}
	\hfill
	\begin{subfigure}[t]{0.158\textwidth}
		\includegraphics[width=\linewidth]{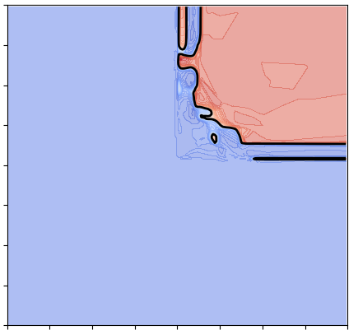}
		\caption{$\alpha = 0$}
	\end{subfigure}
	\hfill
	\begin{subfigure}[t]{0.158\textwidth}
		\includegraphics[width=\linewidth]{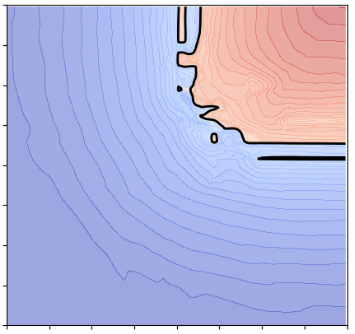}
		\caption{$\alpha = 0.25$}
	\end{subfigure}
	\hfill
	\begin{subfigure}[t]{0.158\textwidth}
		\includegraphics[width=\linewidth]{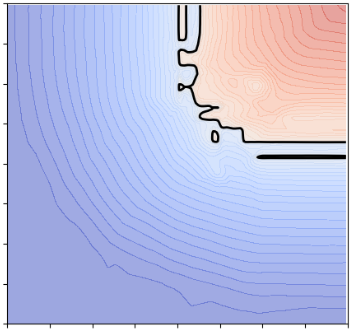}
		\caption{$\alpha = 0.5$}
	\end{subfigure}
    	\hfill
	\begin{subfigure}[t]{0.158\textwidth}
		\includegraphics[width=\linewidth]{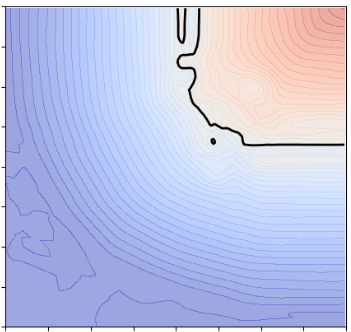}
		\caption{$\alpha = 0.75$}
	\end{subfigure}
	\hfill
	\begin{subfigure}[t]{0.158\textwidth}
		\includegraphics[width=\linewidth]{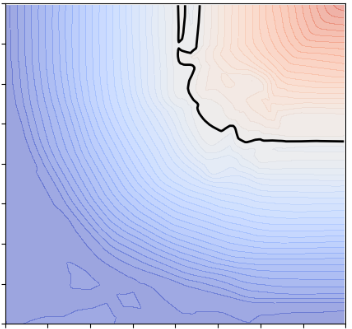}
		\caption{$\alpha = 1$}
	\end{subfigure}
	\caption{Regularization produced by distance copying on Dataset 1. MNN copies trained on the same 1,000,000 synthetic points.}
	\label{fig:5}
\end{figure*}

\begin{figure*}[t]
	\centering
	\begin{subfigure}[t]{0.49\textwidth}
		\includegraphics[width=\linewidth]{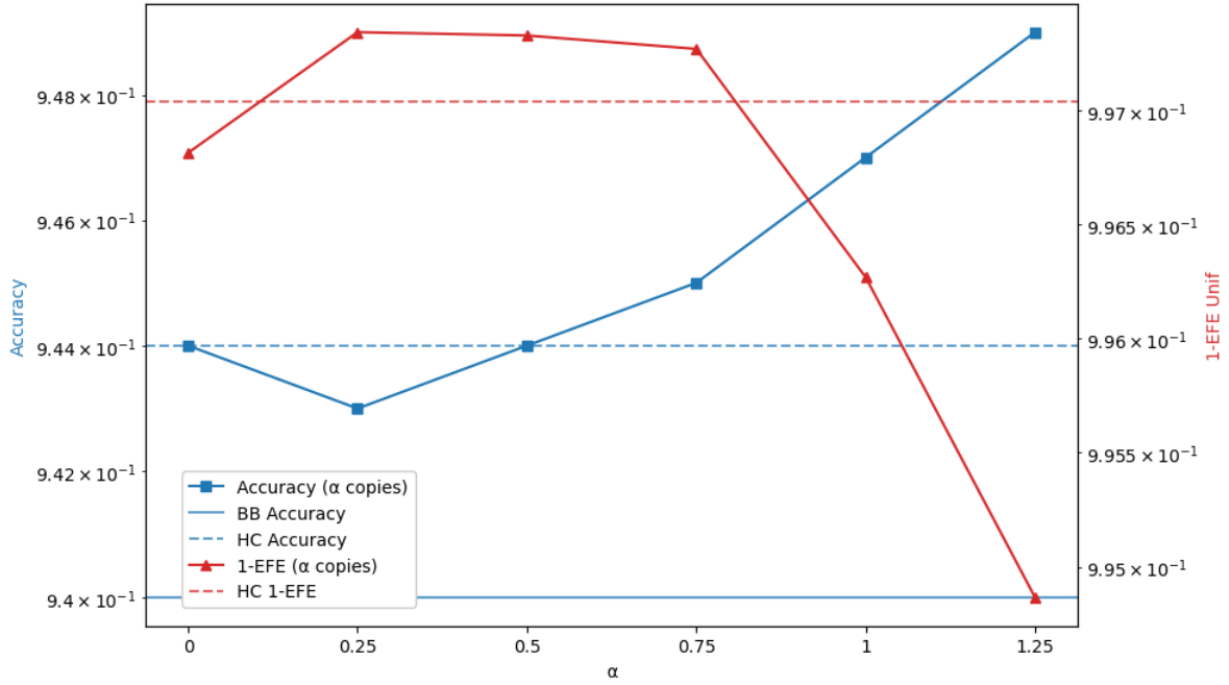}
		\caption{Black-box RF/ Copy MNN}
	\end{subfigure}
	\hfill
	\begin{subfigure}[t]{0.486\textwidth}
		\includegraphics[width=\linewidth]{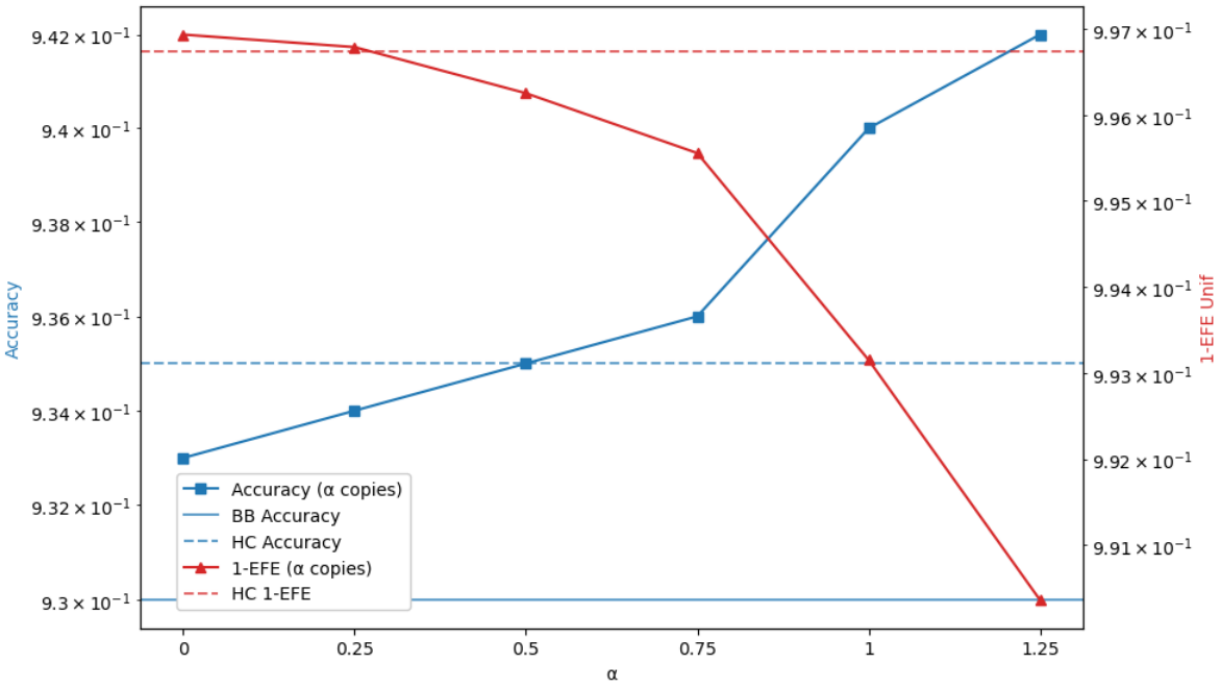}
		\caption{Black-box GB/ Copy GB}
	\end{subfigure}
	\caption{Plots showing the evolution of the two metrics as a function of the regularization parameter $\alpha$ in Dataset 1.}
	\label{fig:6}
\end{figure*}

\subsection{Experiment 2. Regularization Regimes}
The second experiment explicitly deals with the regularization effect. It is worth noting that traditional self-distillation\footnote{ Saving the differences in access to data and model internals, such as logits and soft targets.} is translated in our framework as the hard-label copying with $\alpha = 0$\footnote{Our approach with $\alpha = 0$ already displays all the benefits of self-distillation. This highlights that the proposed approach extends beyond this baseline.}. In order to visually illustrate the standard trending behaviour of the proposed approach, Figure \ref{fig:5}(a) shows the original black-box boundary for Dataset 1 using a gradient boosting model. Figure \ref{fig:5}(b) shows the corresponding hard-copy result ($\alpha = 0$). Figures \ref{fig:5} (c-f) show the decision boundaries for different values of $\alpha$. We observe that increasing $\alpha$ leads to progressively smoother decision boundaries. Complementing this qualitative illustration, Figure \ref{fig:6} (extended versions in Appendix \ref{appExp2Plots}) quantitatively shows two examples of the ideal behaviour of the method as the value of $\alpha$ increases (irregular or overfitted model). The horizontal dashed red and blue lines correspond to the fidelity and accuracy of the hard copy, respectively. As expected, the fidelity of the method (solid red line) tends to decrease as the value of $\alpha$ and the regularization becomes stronger. Additionally, in the cases in which the black-box displays some overfitting, the proposed regularization further improves generalization capability (solid blue line), even in the absence of the original data. Observe that in these examples, there are values of $\alpha$ in which fidelity is above the original hard copy performance, and that accuracy improves consistently over both the original black-box model and the corresponding hard copy. 

In practice, the parameter $\alpha$ enables explicit control over the trade-off between fidelity and accuracy, allowing practitioners to prioritize boundary replication (small $\alpha$), task accuracy (larger $\alpha$), or an intermediate compromise. At this point, it is important to further understand the implications of Theorem~\ref{teo1}, which identifies two regimes. For $\alpha \in (0,1]$ the supervisory signal is $\alpha$-H\"older continuous, and increasing $\alpha$ directly increases smoothness. However, when $\alpha\geq 1$, it changes to a Lipschitz continuous regime in which regularity saturates. Increasing $\alpha$ beyond this point emphasizes robustness by aggressively suppressing small-scale boundary noise and favouring the global geometry of the decision surface. This shift can be advantageous in highly noisy or overfitted teachers. This can be observed in Figure \ref{fig:6} at $\alpha=1.25$ where generalization further increases.  

\begin{table}[ht]
\centering
\scriptsize
\caption{Wins/ties/losses vs. hard-copy baseline (parentheses = conservative excluding ties). One-sided binomial tests (H$_0$: p=0.5). $\dagger$ p$<$0.001, $\bullet$ p$<$0.01,  $\circ$ p$<$0.05.}
\label{tab:wins-pvals}
\begin{tabular}{l cc | cc}
\hline
 & \multicolumn{2}{c|}{\textbf{Fidelity}} & \multicolumn{2}{c}{\textbf{Accuracy}} \\
\cline{2-5}
 & W/T/L & p-value & W/T/L  & p-value \\
\hline
SNN  & {\bf 13(12)/0/5}  & {\bf 0.041-0.118}$\circ$   & {\bf 14(12)/3/1}   & {\bf 0.002-0.02}$\bullet$  \\
MNN  & {\bf 16(15)/0/2}   & {\bf 0.0006-0.003}$\dagger$  & {\bf 17(15)/1/0}  & {\bf 7e-5-0.002}$\dagger$ \\
LNN  & {\bf 16(15)/0/2}  & {\bf 0.0006-0.003}$\dagger$  & {\bf 15(15)/2/0}   & {\bf 0.0006-0.0006}$\dagger$ \\
GB   & 7(4)/2/9     & 0.759-0.996    & 12(10)/3/3  & 0.048-0.154  \\
\hline
\end{tabular}
\end{table}

Table~\ref{tab:nova} (Best $\alpha>0$) summarizes the gains achieved at the optimal $\alpha$ (see Appendix~\ref{appxMax} for min/max/median). Negative values denote relative reductions in error. All copy models consistently reduce classification error (typically $10$–$20\%$), while neural students (SNN, MNN, LNN) also improve empirical fidelity (up to $4\%$), indicating that properly chosen $\alpha$ can simultaneously enhance generalization and boundary matching. In contrast, GB students improve accuracy at the expense of fidelity, exhibiting a stronger trade-off.

Table~\ref{tab:wins-pvals} reports win/tie/loss counts against hard copying over all evolution curves\footnote{Evolution curves are constructed by training copies on synthetic datasets generated using Algorithm~2, where the clustering effect restricts the effective number of samples.} (Appendix~\ref{appExp2Plots}). Distance-based copying yields statistically significant\footnote{Parenthesized values correspond to conservative counts excluding ties. Reported p-value ranges reflect tests using full vs.\ conservative counts; significance markers refer to values within this range.} improvements for neural students with MNN and LNN achieving strong gains in both accuracy and fidelity ($p<10^{-3}$), even under conservative counts, while SNN shows consistent accuracy gains with weaker but significant fidelity improvements. By contrast, GB students exhibit mixed outcomes, with marginal accuracy benefits and no significant fidelity gains, suggesting that tree-based models benefit less from distance supervision.

\begin{figure}[!h]
\centering
\begin{subfigure}[t]{0.481\columnwidth}
    \includegraphics[width=\linewidth]{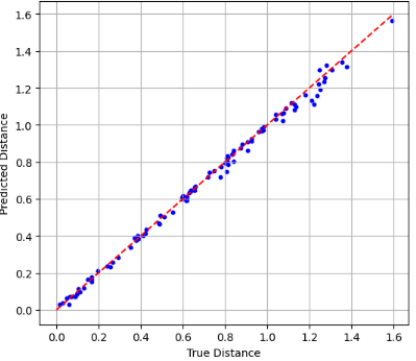}
    \caption{Dat. 1 - Alg. 1}
\end{subfigure}
\hfill
\begin{subfigure}[t]{0.495\columnwidth}
    \includegraphics[width=\linewidth]{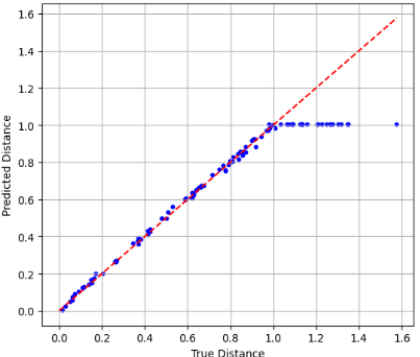}
    \caption{Dat. 1 - Alg. 2}
\end{subfigure}

\begin{subfigure}[t]{0.490\columnwidth}
    \includegraphics[width=\linewidth]{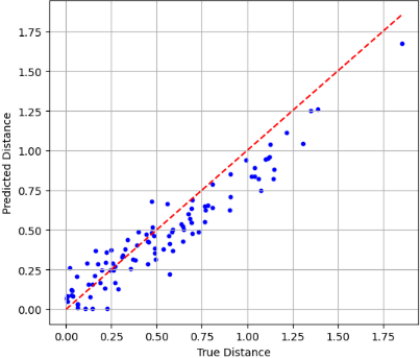}
    \caption{Dat. 5 - Alg. 1}
\end{subfigure}
\hfill
\begin{subfigure}[t]{0.486\columnwidth}
    \includegraphics[width=\linewidth]{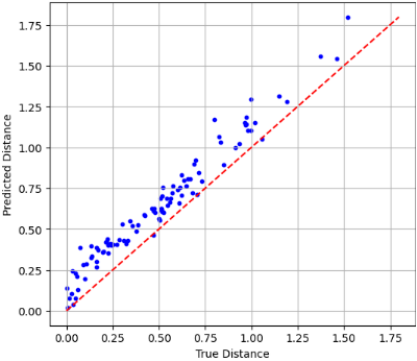}
    \caption{Dat. 5 - Alg. 2}
\end{subfigure}
\caption{Scatter plots of distance predictions against ground truth in $\mathcal{S}^{'}$. Results are shown for a RF black-box and MNN copies.}
\label{fig:9}
\end{figure}

\subsection{Experiment 3. Quality of the Distances}

Because signed distances act both as regression targets during copying and as uncertainty measures at inference time, their accurate approximation is critical. We evaluate the distances predicted by copies trained with both algorithms against the ground truth using MAE and RMSE in Table \ref{tab:4}, and complement these metrics with qualitative scatter plots in Figure \ref{fig:9} (extended versions available in Appendix \ref{appDistTabs} and \ref{appDistScatter}). Looking at the results, note that the predicted distances closely approximate the ground truth in the two-dimensional datasets. Nevertheless, the quality of these approximations decreases in higher-dimensional scenarios. This tendency is expected as the approximation of distances becomes more difficult, due to the inherent complexity of these spaces.

\begin{table}[!h]
\centering
\scriptsize
\caption{Error of the predicted distances averaged over every black-box, copy, and dataset combination.}
\label{tab:4}
\begin{tabular}{l c *{6}{c} c}
	\toprule
	Copy/Metric & Dat. 1 & Dat. 2 & Dat. 3 &
	Dat. 4 & Dat. 5 & Dat. 6  \\
	\midrule
	A1/ MAE & 
	0.022 & 0.013 & 0.020 & 0.283 & 0.083 & 0.499 \\
	A2/ MAE &  
	0.040 & 0.008 & 0.012 & 0.296 & 0.152 & 0.443 \\
	\midrule
	A1/ RMSE &  
	0.030 & 0.017 & 0.026 & 0.391 & 0.107 & 0.669 \\
	A2/ RMSE &  
	0.072 & 0.010 & 0.016 & 0.413 & 0.176 & 0.628 \\
	\bottomrule
\end{tabular}
\end{table}

Beyond dimensionality, performance also depends on the black-box model. When neural networks act as teachers, predictions generally align more closely with the target, even in high-dimensional settings. However, for Algorithm~2 this does not always translate into improved error metrics: more accurate distance predictions cause the model to more frequently saturate the maximum distance threshold in the synthetic data, which can paradoxically degrade aggregate scores. Nevertheless, this effect does not compromise the usefulness of the distances as uncertainty indicators. Consistent with this, Algorithm~2 typically outperforms Algorithm~1, likely due to its higher density of labelled samples.

\section{Conclusions}
\label{sec6}

We introduce a distance-based copying framework for hard-label black-box classifiers, reframing replication as regression over signed distances to the teacher’s decision boundary. Our $\alpha$-parameterized, target-centric regularization, supported by theoretical regularity guarantees, provides explicit control over the supervisory signal. While $\alpha\in[0,1]$ primarily control target smoothness, $\alpha>1$ emphasizes robustness to local boundary irregularities. Experiments show that appropriate choices of $\alpha$ can improve generalization accuracy when the original black-box exhibits overfitting, while often maintaining competitive empirical fidelity. Results indicate improved performance in low- and mid-data regimes, smoother learned decision boundaries, and distance outputs that correlate with predictive uncertainty. Although highly irregular teachers may induce a trade-off between accuracy and exact boundary fidelity, our fully data-free framework enables post-hoc refinement of models, offering a principled mechanism for controlled maintenance and evolution.

{\bf Limitations:} Our method depends on approximating signed distances to the teacher’s decision boundary, whose accuracy may deteriorate in high-dimensional spaces or for highly irregular models, potentially limiting the effectiveness of distance-based supervision. The framework is formulated for binary classifiers; while multi-class problems can be handled via standard reductions, efficient multi-class querying and distance estimation remain open directions. The regularization parameter $\alpha$ is selected empirically; automated $\alpha$-continuation or homotopy-based strategies are promising but left for future work. Moreover, although we provide regularity guarantees on the targets, gains in generalization and sample efficiency are demonstrated empirically rather than via formal learning-theoretic bounds. Finally, experiments are limited to synthetic and tabular datasets; extending validation to large-scale, high-dimensional domains is an important direction for future work.

\bibliography{paper}
\bibliographystyle{icml2026}

\newpage
\appendix
\onecolumn

\section{Pseudocode for black-box distance computation algorithms}
\label{appxA}

In this appendix, we present the pseudocode of the algorithms described in Sec. \ref{sec3}.

As a comment regarding both algorithms, it is possible that the way in which the points of the sets $\mathscr{B}$, $\mathscr{B}_{\mathrm{in}}$ and $\mathscr{B}_{\mathrm{out}}$ are sampled may seem unusual, because their distribution in the unit ball is neither uniform nor another distribution that evenly spaces the points in some sense. The reason behind this is that, with this choice, more points are generated around the centre of the ball rather than on its outskirts, something that reduces the error for small distances at the cost of increasing it for bigger ones. That controls the relative error of the computations and lowers the number of issues around the decision boundary, where misclassification is more likely. 

\subsection*{Algorithm 1}

\begin{algorithm}[h]
	\caption{Individual distance computation}
	\label{alg:1}
	\begin{algorithmic}[1]
		\REQUIRE The black-box model $f_{\mathcal{O}}$, the region of interest $R$, a distance $d$ and the parameters $n, m, d_{\mathrm{max}}, d_{\mathrm{min}}, it_{\mathrm{max}}$.
		\STATE Sample $n$ points $\mathscr{Z}$ in the region $R$ and label them with $f_{\mathcal{O}}$. 
		\STATE Sample $m$ points $\mathscr{B}$ in the unit ball with uniform directions and radius.
		\FOR{$z$ belonging to $\mathscr{Z}$}
		\STATE Store $z$ in a new variable $c$.
		\STATE Center $\mathscr{B}$ in $c$ and rescale it by $d_{\mathrm{max}}$, labelling the points in the resulting set with $f_{\mathcal{O}}$.
		\STATE Find the closest point of $c + d_{\mathrm{max}}\mathscr{B}$ to $c$ with a different label and store it in $c$. If there are none, store $d_{\mathrm{max}}f_{\mathcal{O}}(z)$ and continue the loop stated in 3.
		\FOR{$i$ belonging to $\{2,\dots, it_{\mathrm{max}}\}$}
		\STATE Repeat steps 5 and 6 using $d_{\mathrm{min}}$ instead of $d_{\mathrm{max}}$. 
		\ENDFOR
		\STATE Store the signed distance $d(c,z)f_{\mathcal{O}}(z)$ between the original $z$ and the current point in $c$.
		\ENDFOR
		\STATE \textbf{return} The set $\mathscr{Z} = \{z_{i}\}_{i}$ and the signed distances $\{\ell_{i}\}_{i}$.
	\end{algorithmic}
\end{algorithm}

\subsection*{Algorithm 2}

\begin{algorithm}[h]
	\caption{Grouped distance computation}
	\label{alg:2}
	\begin{algorithmic}[1]
		\REQUIRE The black-box model $f_{\mathcal{O}}$, the region of interest $R$, a distance $d$ and the parameters $n_{c}, n_{\mathrm{in}}, n_{\mathrm{out}}, d_{\mathrm{in}}, d_{\mathrm{out}}$.
		\STATE Sample $n_{c}$ points $\mathscr{C}$ in the region of interest $R$.
		\STATE Sample $n_{\mathrm{in}}$ and $n_{\mathrm{out}}$ points in the unit ball with uniform directions and radius. Rescale them by $d_{\mathrm{in}}$ and $d_{\mathrm{out}}$ to obtain the sets $\mathscr{B}_{\mathrm{in}}$ and $\mathscr{B}_{\mathrm{out}}$ respectively.
		\FOR{$c$ belonging to $\mathscr{C}$}
		\STATE Center $\mathscr{B}_{\mathrm{in}}$ and $\mathscr{B}_{\mathrm{out}}$ in $c$ and label them with $f_{\mathcal{O}}$.
		\FOR{$p$ belonging to $c + \mathscr{B}_{\mathrm{in}}$}
		\STATE Find $y$ the closest point of $c + \mathscr{B}_{\mathrm{out}}$ to $p$ with a different label and store $d(p,y)f_{\mathcal{O}}(p)$. If there are none, store $d_{\mathrm{out}}f_{\mathcal{O}}(p)$.
		\ENDFOR
		\ENDFOR
		\STATE \textbf{return} The set $\mathscr{Z} = \bigcup_{c \in \mathscr{C}}(c + \mathscr{B}_{\mathrm{in}}) = \{z_{i}\}_{i}$ and their corresponding signed distances to the boundary $\{\ell_{i}\}_{i}$.
	\end{algorithmic}
\end{algorithm}

\clearpage
\section{Proof of Theorem \ref{teo1}}
\label{appxZ}
\begin{theorem}[Regularity of $\alpha$ signed distances]
Let $f: \mathcal{X} \longrightarrow \{-1,1\}$ be a function and let $\alpha > 0$, we consider $d$ a distance in $\mathcal{X}$ bounded by $D > 0$ and define $l_{\alpha}(x) = f(x)d(x, A_{x})^{\alpha}$ for all $x \in \mathcal{X}$, where $A_{x} = \{y \in \mathcal{X}\;\vert\; f(y) \neq f(x)\}$. Then, for all $x,y \in \mathcal{X}$:
\begin{itemize}
    \item If $\alpha \leq 1$, we have $\vert l_{\alpha}(x) - l_{\alpha}(y) \vert \leq 2d(x,y)^{\alpha}$.
    \item If $\alpha \geq 1$, we have $\vert l_{\alpha}(x) - l_{\alpha}(y)\vert \leq 2\alpha D^{\alpha-1}d(x,y)$.
\end{itemize}
\end{theorem}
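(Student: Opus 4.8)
The plan is to eliminate the awkward $x$-dependence of the target set $A_x$ by rewriting $l_\alpha$ as a fixed difference of two ordinary distance-to-a-set functions, and then reduce everything to elementary one-variable inequalities for $t\mapsto t^{\alpha}$. Write $P=f^{-1}(\{1\})$ and $N=f^{-1}(\{-1\})$, so $\mathcal{X}=P\sqcup N$; we may assume $P\neq\emptyset$ and $N\neq\emptyset$, since otherwise $A_x$ is empty for every $x$, $l_\alpha$ is constant, and both inequalities hold trivially. Put $u(x)=d(x,N)$ and $v(x)=d(x,P)$. The crux of the argument is the pointwise identity
\begin{equation}
l_{\alpha}(x)=u(x)^{\alpha}-v(x)^{\alpha}\qquad\text{for all }x\in\mathcal{X},
\end{equation}
which follows from a two-line case check: if $x\in P$ then $A_x=N$ and $v(x)=d(x,P)=0$, so $l_\alpha(x)=u(x)^\alpha=u(x)^\alpha-v(x)^\alpha$; the case $x\in N$ is symmetric. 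Once $A_x$ has been replaced by the fixed pair $N,P$, the remaining work is routine.

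Next I would record the two standard facts needed. First, $u$ and $v$ are $1$-Lipschitz with respect to $d$ (the usual reverse-triangle argument for a distance-to-a-set function), and since $N,P\neq\emptyset$ and $d\le D$ they take values in $[0,D]$. Second, for $a,b\ge 0$: when $\alpha\le 1$ one has $|a^{\alpha}-b^{\alpha}|\le|a-b|^{\alpha}$, obtained from the subadditivity $(s+t)^{\alpha}\le s^{\alpha}+t^{\alpha}$; when $\alpha\ge 1$ the mean value theorem gives $|a^{\alpha}-b^{\alpha}|\le\alpha\,\max(a,b)^{\alpha-1}\,|a-b|$. Applying the triangle inequality to the identity above yields
\begin{equation}
|l_{\alpha}(x)-l_{\alpha}(y)|\le|u(x)^{\alpha}-u(y)^{\alpha}|+|v(x)^{\alpha}-v(y)^{\alpha}|.
\end{equation}

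Finally I would substitute the estimates term by term. For $\alpha\le 1$, combining the second fact with $1$-Lipschitzness and monotonicity of $t\mapsto t^\alpha$ gives $|u(x)^{\alpha}-u(y)^{\alpha}|\le|u(x)-u(y)|^{\alpha}\le d(x,y)^{\alpha}$, and likewise for $v$, so the sum is at most $2\,d(x,y)^{\alpha}$. For $\alpha\ge 1$, since $u(x),u(y)\in[0,D]$ we get $|u(x)^{\alpha}-u(y)^{\alpha}|\le\alpha D^{\alpha-1}|u(x)-u(y)|\le\alpha D^{\alpha-1}d(x,y)$, and similarly for $v$, so the sum is at most $2\alpha D^{\alpha-1}d(x,y)$. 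This gives both claims.

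I expect the only genuine obstacle to be spotting the decomposition $l_\alpha=u^{\alpha}-v^{\alpha}$. A direct attack that keeps $f(x)$ as an explicit sign factor has to grapple with the label flip across the boundary and loses the clean constant — for instance $t\mapsto\operatorname{sgn}(t)|t|^{\alpha}$ is not $\alpha$-H\"older with constant $1$, which would wrongly suggest the bound fails. Once the identity is in hand, the proof is just the two textbook inequalities for $t^{\alpha}$ applied to the two $1$-Lipschitz pieces, and the constant $2$ is simply the two summands. The one bookkeeping detail to dispatch at the outset is the degenerate constant-$f$ case, where $A_x=\emptyset$.
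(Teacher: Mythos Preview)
Your proof is correct and takes a genuinely different route from the paper. The paper argues by a case split on whether $f(x)=f(y)$: in the opposite-sign case it uses $d(x,A_x)\le d(x,y)$ and $d(y,A_y)\le d(x,y)$ so that $|l_\alpha(x)-l_\alpha(y)|=d(x,A_x)^\alpha+d(y,A_y)^\alpha\le 2d(x,y)^\alpha$; in the same-sign case $A_x=A_y$ and the $1$-Lipschitzness of $z\mapsto d(z,A_x)$ combined with the integral form of the $t^\alpha$ estimates gives the bound with constant $1$, which is then relaxed to $2$. Your decomposition $l_\alpha=d(\cdot,N)^\alpha-d(\cdot,P)^\alpha$ neatly sidesteps the sign case analysis altogether and makes the constant $2$ appear as the number of summands; it is arguably the cleaner argument. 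The paper's route, on the other hand, reveals that the constant $2$ is only forced by the opposite-sign case (same-sign pairs satisfy the inequality with constant $1$), a sharpening your uniform bound does not expose. Your handling of the degenerate constant-$f$ case, which the paper leaves implicit, is also a welcome bit of hygiene.
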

\begin{proof}
To show this result, given any $x,y \in \mathcal{X}$, we can start by distinguishing two different cases. On the one hand, if $f(x) \neq f(y)$, then we have by definition that $d(x, A_{x}) \leq d(x,y)$ and $d(y, A_{y}) \leq d(x,y)$, something that implies that $\vert l_{\alpha}(x) - l_{\alpha}(y) \vert = d(x, A_{x})^{\alpha} + d(y, A_{y})^{\alpha} \leq 2d(x,y)^{\alpha}$. In particular, if $\alpha \geq 1$, we can also deduce that $\vert l_{\alpha}(x) - l_{\alpha}(y) \vert \leq 2d(x,y)^{\alpha} \leq 2\alpha D^{\alpha - 1}d(x,y)$.

From here, assuming now that $f(x) = f(y)$, we can observe that $A_{x} = A_{y}$. As a consequence, for all $z \in A_{x}$, we have that $d(x, A_{x}) \leq d(x,z) \leq d(x,y) + d(y, z)$, so we can conclude that $d(x, A_{x}) \leq d(x,y) + d(y, A_{y})$. Then, the same argument exchanging the roles of $x$ and $y$ shows that $\vert d(x,A_{x}) - d(y, A_{y})\vert \leq d(x,y)$.  

Finally, we distinguish the two additional cases depending on the value of $\alpha$:

\textbf{Case 1:} On the one hand, assuming that $\alpha \leq 1$, we can consider without loss of generality that $d(y,A_{y}) \leq d(x,A_{x})$ to deduce that:
\begin{eqnarray}
 & \vert l_{\alpha}(x) - l_{\alpha}(y) \vert = \vert d(x, A_{x})^{\alpha} - d(y, A_{y})^{\alpha} \vert = \int_{d(y,A_{y})}^{d(x,A_{x})} \alpha t^{\alpha-1} dt \leq  \int_{d(y,A_{y})}^{d(x,A_{x})} \alpha (t - d(y, A_{y}))^{\alpha-1} dt = \nonumber\\
 & ((t - d(y, A_{y}))^{\alpha}\vert_{d(y,A_{y})}^{d(x,A_{x})} = \vert d(x, A_{x}) - d(y,A_{y})\vert^{\alpha} \leq d(x,y)^{\alpha}
\end{eqnarray}

\textbf{Case 2:} On the other hand, if $\alpha \geq 1$, we can conclude that:
\begin{eqnarray}
 & \vert l_{\alpha}(x) - l_{\alpha}(y) \vert = \vert\int_{d(y,A_{y})}^{d(x,A_{x})} \alpha t^{\alpha-1} dt\vert \leq  \vert\int_{d(y,A_{y})}^{d(x,A_{x})} \alpha D^{\alpha-1} dt\vert \leq \nonumber\\
 & \alpha D^{\alpha-1}\vert d(x,A_{x}) - d(y, A_{y})\vert \leq  2\alpha D^{\alpha-1}d(x,y) 
\end{eqnarray} something that finishes the proof.
\end{proof}

\clearpage
\section{Detailed final results for Experiment 1}
\label{appxB}
We present tables that summarize the final metrics achieved by the copies of Experiment 1. Results are shown for every combination of black-box, copy and dataset we analysed with the format mean $\pm$ std.

In these tables the last two columns display the average ranking of the metrics (1, 2 or 3) that appear in the corresponding row, together with their average across all datasets.

\begin{center}
\resizebox{!}{0.2\textheight}{%
\tiny
\begin{tabular}{l c
                *{3}{c}
                *{3}{c}
                *{3}{c}
              c c}
\toprule
Copy &
$f_{\mathcal{O}}$/$f_{\mathcal{C}}$ &
\multicolumn{3}{c}{Dataset 1} &
\multicolumn{3}{c}{Dataset 2} &
\multicolumn{3}{c}{Dataset 3} &
$\mathcal{A}_{\mathcal{C}}$/$R^{\mathcal{F}}_{emp}$ Dat. 1-3 &
$\mathcal{A}_{\mathcal{C}}$/$R^{\mathcal{F}}_{emp}$\\
\cmidrule(lr){3-11}
& & $\mathcal{A}_{\mathcal{O}}$ & $\mathcal{A}_{\mathcal{C}}$ & $R^{\mathcal{F}}_{emp}$ & 
$\mathcal{A}_{\mathcal{O}}$&
$\mathcal{A}_{\mathcal{C}}$ & $R^{\mathcal{F}}_{emp}$ & 
$\mathcal{A}_{\mathcal{O}}$&
$\mathcal{A}_{\mathcal{C}}$ & $R^{\mathcal{F}}_{emp}$ \\
\midrule
Algo. 1 copy & RF/SNN & 
0.94 & 0.945$\pm$0.000 & 0.0053$\pm$0.0007 &
0.99 & 0.883$\pm$0.047 & 0.1162$\pm$0.0360 &
0.87 & 0.720$\pm$0.020 & 0.2038$\pm$0.0160 & 3/3 & 2.5/3\\
Algo. 2 copy & RF/SNN & 
0.94 & \textbf{0.948$\pm$0.002} & 0.0047$\pm$0.0007 &
0.99 & \textbf{0.991$\pm$0.004} & \textbf{0.0248$\pm$0.0012} &
0.87 & \textbf{0.802$\pm$0.003} & 0.1147$\pm$0.0072 & \textbf{1}/1.67 & \textbf{1.33}/1.83\\
Hard copy  & RF/SNN & 
0.94 & 0.947$\pm$0.002 & \textbf{0.0037$\pm$0.0007} &
0.99 & 0.986$\pm$0.005 & 0.0273$\pm$0.0059 &
0.87 & 0.798$\pm$0.010 & \textbf{0.1142$\pm$0.0028} & 2/\textbf{1.33} & 1.83/\textbf{1.17}\\
\midrule
Algo. 1 copy & GB/SNN & 
0.93 & 0.941$\pm$0.009 & 0.0200$\pm$0.0032 &
1.00 & 0.949$\pm$0.022 & 0.0682$\pm$0.0167 &
0.90 & 0.713$\pm$0.016 & 0.2517$\pm$0.0108 & 3/3 & 2.67/3\\
Algo. 2 copy & GB/SNN &
0.93 & \textbf{0.947$\pm$0.005} & 0.0178$\pm$0.0029 &
1.00 & \textbf{0.995$\pm$0.002} & \textbf{0.0215$\pm$0.0027} &
0.90 & \textbf{0.798$\pm$0.009} & \textbf{0.1569$\pm$0.0086} & \textbf{1}/\textbf{1.33} & \textbf{1.17}/1.67\\
Hard copy  & GB/SNN & 
0.93 & 0.944$\pm$0.002 & \textbf{0.0112$\pm$0.0022} &
1.00 & 0.991$\pm$0.002 & 0.0233$\pm$0.0017 &
0.90 & 0.788$\pm$0.011 & 0.1680$\pm$0.0102 & 2/1.67 & 1.5/\textbf{1.33}\\
\midrule
Algo. 1 copy & NN/SNN & 
0.94 & \textbf{0.940$\pm$0.005} & \textbf{0.0014$\pm$0.0007} &
1.00 & 0.996$\pm$0.003 & 0.0398$\pm$0.0097 &
0.83 & 0.745$\pm$0.010 & 0.1442$\pm$0.0117 & 2/2.33 & 2/2.67\\
Algo. 2 copy & NN/SNN &
0.94 & \textbf{0.940$\pm$0.003} & 0.0015$\pm$0.0003 &
1.00 & \textbf{1.000$\pm$0.000} & \textbf{0.0173$\pm$0.0030} &
0.83 & \textbf{0.807$\pm$0.007} & \textbf{0.0576$\pm$0.0065} & \textbf{1}/\textbf{1.33} & \textbf{1.33}/1.67\\
Hard copy  & NN/SNN & 
0.94 & 0.939$\pm$0.002 & \textbf{0.0014$\pm$0.0008} &
1.00 & \textbf{1.000$\pm$0.001} & 0.0190$\pm$0.0037 &
0.83 & 0.786$\pm$0.010 & 0.0850$\pm$0.0119 & 1.67/1.67 & \textbf{1.33}/\textbf{1.33}\\
\midrule
Algo. 1 copy & RF/MNN &
0.94 & \textbf{0.946$\pm$0.002} & 0.0062$\pm$0.0003 &
0.99 & 0.983$\pm$0.007 & 0.0355$\pm$0.0080 &
0.87 & 0.783$\pm$0.009 & 0.1352$\pm$0.0108 & 2.33/3 & 2.33/3\\
Algo. 2 copy & RF/MNN &
0.94 & 0.945$\pm$0.003 & 0.0037$\pm$0.0004 &
0.99 & \textbf{0.993$\pm$0.002} & \textbf{0.0132$\pm$0.0013} &
0.87 & \textbf{0.862$\pm$0.003} & \textbf{0.0444$\pm$0.0031} & \textbf{1.33}/\textbf{1.33} & \textbf{1.17}/1.67\\
Hard copy  & RF/MNN & 
0.94 & 0.943$\pm$0.002 & \textbf{0.0030$\pm$0.0010} &
0.99 & 0.989$\pm$0.001 & 0.0173$\pm$0.0017 &
0.87 & 0.856$\pm$0.007 & 0.0475$\pm$0.0043 & 2.33/1.67 & 2.67/\textbf{1.33}\\
\midrule
Algo. 1 copy & GB/MNN & 
0.93 & 0.942$\pm$0.007 & 0.0216$\pm$0.0047 &
1.00 & 0.995$\pm$0.003 & 0.0277$\pm$0.0044 &
0.90 & 0.798$\pm$0.009 & 0.1615$\pm$0.0026 & 2.67/3 & 2.83/3\\
Algo. 2 copy & GB/MNN &
0.93 & \textbf{0.946$\pm$0.004} & 0.0094$\pm$0.0007 &
1.00 & \textbf{0.998$\pm$0.001} & \textbf{0.0125$\pm$0.0015} &
0.90 & \textbf{0.890$\pm$0.004} & \textbf{0.0496$\pm$0.0040} & \textbf{1}/\textbf{1.33} & \textbf{1}/1.67\\
Hard copy  & GB/MNN & 
0.93 & 0.937$\pm$0.002 & \textbf{0.0057$\pm$0.0008} &
1.00 & 0.996$\pm$0.001 & 0.0155$\pm$0.0032 &
0.90 & 0.880$\pm$0.009 & 0.0657$\pm$0.0065 & 2.33/1.67 & 2.17/\textbf{1.33}\\
\midrule
Algo. 1 copy & NN/MNN & 
0.94 & 0.939$\pm$0.004 & 0.0018$\pm$0.0009 &
1.00 & \textbf{1.000$\pm$0.000} & 0.0193$\pm$0.0057 &
0.82 & 0.803$\pm$0.010 & 0.0559$\pm$0.0070 & 2.33/3 & 2/3\\
Algo. 2 copy & NN/MNN &
0.94 & 0.941$\pm$0.004 & \textbf{0.0011$\pm$0.0002} &
1.00 & \textbf{1.000$\pm$0.000} & \textbf{0.0087$\pm$0.0023} &
0.82 & \textbf{0.819$\pm$0.008} & \textbf{0.0185$\pm$0.0020} & \textbf{1.33}/\textbf{1} & 1.83/\textbf{1.5}\\
Hard copy  & NN/MNN & 
0.94 & \textbf{0.942$\pm$0.002} & 0.0013$\pm$0.0005 &
1.00 & \textbf{1.000$\pm$0.000} & 0.0157$\pm$0.0032 &
0.82 & 0.817$\pm$0.009 & 0.0264$\pm$0.0041 & \textbf{1.33}/2 & \textbf{1.33}/\textbf{1.5}\\
\midrule
Algo. 1 copy & RF/LNN & 
0.94 & 0.945$\pm$0.003 & 0.0062$\pm$0.0009 &
0.99 & 0.973$\pm$0.008 & 0.0386$\pm$0.0058 &
0.87 & 0.802$\pm$0.013 & 0.1174$\pm$0.0111 & 2.67/3 & 2.33/3\\
Algo. 2 copy & RF/LNN &
0.94 & \textbf{0.948$\pm$0.004} & 0.0036$\pm$0.0007 &
0.99 & \textbf{0.988$\pm$0.008} & \textbf{0.0120$\pm$0.0014} &
0.87 & \textbf{0.865$\pm$0.002} & \textbf{0.0348$\pm$0.0036} & \textbf{1}/\textbf{1.33} & \textbf{1.33}/1.67\\
Hard copy  & RF/LNN & 
0.94 & 0.944$\pm$0.002 & \textbf{0.0035$\pm$0.0006} &
0.99 & 0.982$\pm$0.010 & 0.0214$\pm$0.0042 &
0.87 & 0.864$\pm$0.003 & 0.0381$\pm$0.0028 & 2.33/1.67 & 2/\textbf{1.33}\\
\midrule
Algo. 1 copy & GB/LNN & 
0.93 & 0.937$\pm$0.007 & 0.0181$\pm$0.0012 &
1.00 & 0.994$\pm$0.002 & 0.0287$\pm$0.0044 &
0.90 & 0.828$\pm$0.007 & 0.1327$\pm$0.0040 & 2.67/3 & 2.67/3\\
Algo. 2 copy & GB/LNN &
0.93 & \textbf{0.939$\pm$0.005} & 0.0086$\pm$0.0012 &
1.00 & \textbf{0.997$\pm$0.001} & \textbf{0.0123$\pm$0.0020} &
0.90 & \textbf{0.895$\pm$0.002} & \textbf{0.0355$\pm$0.0024} & \textbf{1}/\textbf{1.33} & \textbf{1.33}/1.67\\
Hard copy  & GB/LNN & 
0.93 & 0.937$\pm$0.002 & \textbf{0.0068$\pm$0.0006} &
1.00 & 0.995$\pm$0.001 & 0.0154$\pm$0.0035 &
0.90 & 0.889$\pm$0.004 & 0.0470$\pm$0.0022 & 2/1.67 & 1.83/\textbf{1.33}\\
\midrule
Algo. 1 copy & NN/LNN & 
0.94 & 0.939$\pm$0.006 & 0.0033$\pm$0.0040 &
1.00 & \textbf{1.000$\pm$0.000} & 0.0202$\pm$0.0036 &
0.83 & 0.814$\pm$0.010 & 0.0555$\pm$0.0130 & 2/3 & 2.17/3\\
Algo. 2 copy & NN/LNN &
0.94 & 0.937$\pm$0.005 & \textbf{0.0015$\pm$0.0008} &
1.00 & \textbf{1.000$\pm$0.000} & \textbf{0.0088$\pm$0.0017} &
0.83 & \textbf{0.823$\pm$0.011} & \textbf{0.0159$\pm$0.0024} & 1.67/\textbf{1} & 1.67/\textbf{1.5}\\
Hard copy  & NN/LNN & 
0.94 & \textbf{0.941$\pm$0.002} & 0.0019$\pm$0.0013 &
1.00 & \textbf{1.000$\pm$0.000} & 0.0143$\pm$0.0010 &
0.83 & 0.822$\pm$0.011 & 0.0238$\pm$0.0035 & \textbf{1.33}/2 & \textbf{1.33}/\textbf{1.5}\\
\midrule
Algo. 1 copy & RF/GB & 
0.94 & 0.944$\pm$0.006 & 0.0055$\pm$0.0005 &
0.99 & 0.981$\pm$0.007 & 0.0337$\pm$0.0037 &
0.87 & 0.801$\pm$0.004 & 0.1127$\pm$0.0040 & 2.33/3 & 1.83/3\\
Algo. 2 copy & RF/GB &
0.94 & \textbf{0.945$\pm$0.003} & 0.0026$\pm$0.0003 &
0.99 & \textbf{0.989$\pm$0.004} & 0.0130$\pm$0.0012 &
0.87 & 0.844$\pm$0.003 & 0.0540$\pm$0.0014 & \textbf{1.33}/2 & \textbf{1.5}/2\\
Hard copy  & RF/GB & 
0.94 & 0.941$\pm$0.002 & \textbf{0.0014$\pm$0.0002} &
0.99 & \textbf{0.989$\pm$0.004} & \textbf{0.0072$\pm$0.0011} &
0.87 & \textbf{0.856$\pm$0.004} & \textbf{0.0277$\pm$0.0005} & 1.67/\textbf{1} & 1.83/\textbf{1}\\
\midrule
Algo. 1 copy & GB/GB & 
0.93 & \textbf{0.940$\pm$0.003} & 0.0139$\pm$0.0004 &
1.00 & 0.993$\pm$0.001 & 0.0325$\pm$0.0020 &
0.90 & 0.829$\pm$0.004 & 0.1165$\pm$0.0047 & 2.33/3 & \textbf{1.67}/3\\
Algo. 2 copy & GB/GB &
0.93 & 0.939$\pm$0.002 & 0.0069$\pm$0.0005 &
1.00 & \textbf{0.998$\pm$0.001} & 0.0123$\pm$0.0005 &
0.90 & 0.861$\pm$0.002 & 0.0687$\pm$0.0027 & \textbf{1.67}/2 & 1.83/2\\
Hard copy  & GB/GB & 
0.93 & 0.933$\pm$0.002 & \textbf{0.0031$\pm$0.0003} &
1.00 & 0.997$\pm$0.001 & \textbf{0.0060$\pm$0.0005} &
0.90 & \textbf{0.882$\pm$0.003} & \textbf{0.0325$\pm$0.0020} & 2/\textbf{1} & \textbf{1.67}/\textbf{1}\\
\midrule
Algo. 1 copy & NN/GB & 
0.94 & 0.937$\pm$0.004 & 0.0027$\pm$0.0005 &
1.00 & \textbf{1.000$\pm$0.000} & 0.0340$\pm$0.0021 &
0.83 & 0.811$\pm$0.011 & 0.0649$\pm$0.0046 & 2/3 & 2.33/3\\
Algo. 2 copy & NN/GB &
0.94 & 0.937$\pm$0.008 & 0.0017$\pm$0.0001 &
1.00 & \textbf{1.000$\pm$0.000} & 0.0219$\pm$0.0010 &
0.83 & 0.812$\pm$0.012 & 0.0516$\pm$0.0068 & 1.67/2 & \textbf{1.5}/1.83\\
Hard copy  & NN/GB & 
0.94 & \textbf{0.939$\pm$0.002} & \textbf{0.0008$\pm$0.0001} &
1.00 & \textbf{1.000$\pm$0.000} & \textbf{0.0125$\pm$0.0006} &
0.83 & \textbf{0.826$\pm$0.011} & \textbf{0.0315$\pm$0.0035} & \textbf{1}/\textbf{1} & \textbf{1.5}/\textbf{1.17}\\
\bottomrule
\end{tabular}
}
\end{center}

\begin{center}
\resizebox{!}{0.2\textheight}{%
\centering
\tiny
\begin{tabular}{l c
                *{3}{c}
                *{3}{c}
                *{3}{c}
              c c}
\toprule
Copy &
$f_{\mathcal{O}}$/$f_{\mathcal{C}}$ &
\multicolumn{3}{c}{Dataset 4} &
\multicolumn{3}{c}{Dataset 5} &
\multicolumn{3}{c}{Dataset 6} &
$\mathcal{A}_{\mathcal{C}}$/$R^{\mathcal{F}}_{emp}$ Dat. 4-6 &
$\mathcal{A}_{\mathcal{C}}$/$R^{\mathcal{F}}_{emp}$\\
\cmidrule(lr){3-11}
& & $\mathcal{A}_{\mathcal{O}}$ & $\mathcal{A}_{\mathcal{C}}$ & $R^{\mathcal{F}}_{emp}$ & 
$\mathcal{A}_{\mathcal{O}}$&
$\mathcal{A}_{\mathcal{C}}$ & $R^{\mathcal{F}}_{emp}$ & 
$\mathcal{A}_{\mathcal{O}}$&
$\mathcal{A}_{\mathcal{C}}$ & $R^{\mathcal{F}}_{emp}$ \\
\midrule
Algo. 1 copy & RF/SNN & 
0.97 & 0.949$\pm$0.030 & 0.1678$\pm$0.0082 &
0.92 & 0.920$\pm$0.009 & 0.0523$\pm$0.0052 &
0.83 & \textbf{0.829$\pm$0.053} & 0.2218$\pm$0.0093 & 2/3 & 2.5/3\\
Algo. 2 copy & RF/SNN &
0.97 & 0.965$\pm$0.006 & 0.0762$\pm$0.0017 &
0.92 & \textbf{0.929$\pm$0.004} & 0.0306$\pm$0.0033 &
0.83 & 0.814$\pm$0.032 & 0.1244$\pm$0.0066 & \textbf{1.67}/2 & \textbf{1.33}/1.83\\
Hard copy  & RF/SNN & 
0.97 & \textbf{0.967$\pm$0.007} & \textbf{0.0613$\pm$0.0029} &
0.92 & \textbf{0.929$\pm$0.007} & \textbf{0.0272$\pm$0.0032} &
0.83 & 0.810$\pm$0.040 & \textbf{0.1154$\pm$0.0101} & \textbf{1.67}/\textbf{1} & 1.83/\textbf{1.17}\\
\midrule
Algo. 1 copy & GB/SNN & 
0.98 & 0.960$\pm$0.023 & 0.1606$\pm$0.0043 &
0.92 & 0.905$\pm$0.010 & 0.0541$\pm$0.0016 &
0.87 & 0.781$\pm$0.059 & 0.2484$\pm$0.0101 & 2.33/3 & 2.67/3\\
Algo. 2 copy & GB/SNN &
0.98 & 0.977$\pm$0.017 & 0.0769$\pm$0.0043 &
0.92 & \textbf{0.927$\pm$0.004} & 0.0349$\pm$0.0034 &
0.87 & \textbf{0.838$\pm$0.044} & 0.1264$\pm$0.0098 & 1.33/2 & \textbf{1.17}/1.67\\
Hard copy  & GB/SNN & 
0.98 & \textbf{0.979$\pm$0.012} & \textbf{0.0674$\pm$0.0048} &
0.92 & \textbf{0.927$\pm$0.004} & \textbf{0.0341$\pm$0.0021} &
0.87 & \textbf{0.838$\pm$0.041} & \textbf{0.1108$\pm$0.0107} & \textbf{1}/\textbf{1} & 1.5/\textbf{1.33}\\
\midrule
Algo. 1 copy & NN/SNN & 
0.98 & 0.979$\pm$0.015 & 0.0298$\pm$0.0024 &
0.92 & \textbf{0.925$\pm$0.004} & 0.0135$\pm$0.0010 &
0.89 & 0.838$\pm$0.032 & 0.0892$\pm$0.0041 & 2/3 & 2/2.67\\
Algo. 2 copy & NN/SNN &
0.98 & \textbf{0.981$\pm$0.010} & 0.0190$\pm$0.0012 &
0.92 & 0.924$\pm$0.005 & 0.0107$\pm$0.0013 &
0.89 & 0.848$\pm$0.039 & 0.0441$\pm$0.0029 & 1.67/2 & \textbf{1.33}/1.67\\
Hard copy  & NN/SNN & 
0.98 & \textbf{0.981$\pm$0.010} & \textbf{0.0176$\pm$0.0015} &
0.92 & \textbf{0.925$\pm$0.007} & \textbf{0.0083$\pm$0.0011} &
0.89 & \textbf{0.862$\pm$0.038} & \textbf{0.0417$\pm$0.0031} & \textbf{1}/\textbf{1} & \textbf{1.33}/\textbf{1.33}\\

\midrule
Algo. 1 copy & RF/MNN &
0.97 & 0.965$\pm$0.019 & 0.1771$\pm$0.0104 &
0.93 & 0.926$\pm$0.006 & 0.0501$\pm$0.0048 &
0.81 & 0.786$\pm$0.050 & 0.2105$\pm$0.0164 & 2.33/3 & 2.33/3\\
Algo. 2 copy & RF/MNN &
0.97 & \textbf{0.968$\pm$0.007} & 0.0613$\pm$0.0023 &
0.93 & \textbf{0.928$\pm$0.004} & 0.0230$\pm$0.0014 &
0.81 & \textbf{0.810$\pm$0.040} & 0.1440$\pm$0.0060 & \textbf{1}/2 & \textbf{1.17}/1.67\\
Hard copy  & RF/MNN & 
0.97 & \textbf{0.968$\pm$0.009} & \textbf{0.0493$\pm$0.0016} &
0.93 & \textbf{0.928$\pm$0.006} & \textbf{0.0210$\pm$0.0015} &
0.81 & 0.800$\pm$0.072 & \textbf{0.1021$\pm$0.0043} & 1.33/\textbf{1} & 2.67/\textbf{1.33}\\
\midrule
Algo. 1 copy & GB/MNN & 
0.98 & 0.958$\pm$0.039 & 0.1798$\pm$0.0092 &
0.92 & 0.919$\pm$0.006 & 0.0582$\pm$0.0025 &
0.87 & 0.786$\pm$0.026 & 0.2396$\pm$0.0124 & 3/3 & 2.83/3\\
Algo. 2 copy & GB/MNN &
0.98 & \textbf{0.981$\pm$0.012} & 0.0627$\pm$0.0062 &
0.92 & \textbf{0.927$\pm$0.003} & 0.0302$\pm$0.0028 &
0.87 & \textbf{0.833$\pm$0.026} & 0.1504$\pm$0.0144 & \textbf{1}/2 & \textbf{1}/1.67\\
Hard copy  & GB/MNN & 
0.98 & 0.979$\pm$0.012 & \textbf{0.0508$\pm$0.0039} &
0.92 & 0.926$\pm$0.004 & \textbf{0.0285$\pm$0.0016} &
0.87 & 0.824$\pm$0.024 & \textbf{0.0874$\pm$0.0097} & 2/\textbf{1} & 2.17/\textbf{1.33}\\
\midrule
Algo. 1 copy & NN/MNN & 
0.98 & \textbf{0.984$\pm$0.010} & 0.0351$\pm$0.0090 &
0.92 & \textbf{0.925$\pm$0.004} & 0.0129$\pm$0.0009 &
0.88 & 0.829$\pm$0.028 & 0.0791$\pm$0.0061 & 1.67/3 & 2/3\\
Algo. 2 copy & NN/MNN &
0.98 & 0.979$\pm$0.007 & 0.0160$\pm$0.0017 &
0.92 & 0.924$\pm$0.004 & 0.0078$\pm$0.0006 &
0.88 & 0.862$\pm$0.032 & 0.0384$\pm$0.0016 & 2.33/2 & 1.83/\textbf{1.5}\\
Hard copy  & NN/MNN & 
0.98 & 0.983$\pm$0.010 & \textbf{0.0138$\pm$0.0016} &
0.92 & \textbf{0.925$\pm$0.004} & \textbf{0.0065$\pm$0.0010} &
0.88 & \textbf{0.886$\pm$0.038} & \textbf{0.0313$\pm$0.0012} & \textbf{1.33}/\textbf{1} & \textbf{1.33}/\textbf{1.5}\\

\midrule
Algo. 1 copy & RF/LNN & 
0.97 & 0.953$\pm$0.026 & 0.1821$\pm$0.0107 &
0.93 & 0.921$\pm$0.005 & 0.0569$\pm$0.0128 &
0.80 & \textbf{0.810$\pm$0.072} & 0.2157$\pm$0.0163 & 2/3 & 2.33/3\\
Algo. 2 copy & RF/LNN &
0.97 & \textbf{0.970$\pm$0.009} & 0.0630$\pm$0.0018 &
0.93 & \textbf{0.928$\pm$0.004} & 0.0233$\pm$0.0032 &
0.80 & 0.800$\pm$0.039 & 0.1324$\pm$0.0074 & \textbf{1.67}/2 & \textbf{1.33}/1.67\\
Hard copy  & RF/LNN & 
0.97 & 0.960$\pm$0.012 & \textbf{0.0496$\pm$0.0018} &
0.93 & \textbf{0.928$\pm$0.006} & \textbf{0.0228$\pm$0.0032} &
0.80 & 0.805$\pm$0.044 & \textbf{0.0971$\pm$0.0042} & \textbf{1.67}/\textbf{1} & 2/\textbf{1.33}\\
\midrule
Algo. 1 copy & GB/LNN & 
0.98 & 0.956$\pm$0.029 & 0.1683$\pm$0.0126 &
0.92 & 0.907$\pm$0.027 & 0.0599$\pm$0.0115 &
0.87 & 0.819$\pm$0.044 & 0.2163$\pm$0.0097 & 2.67/3 & 2.67/3\\
Algo. 2 copy & GB/LNN &
0.98 & \textbf{0.979$\pm$0.009} & 0.0648$\pm$0.0050 &
0.92 & \textbf{0.929$\pm$0.004} & 0.0294$\pm$0.0020 &
0.87 & 0.814$\pm$0.038 & 0.1415$\pm$0.0096 & \textbf{1.67}/2 & \textbf{1.33}/1.67\\
Hard copy  & GB/LNN & 
0.98 & 0.961$\pm$0.015 & \textbf{0.0504$\pm$0.0037} &
0.92 & 0.926$\pm$0.004 & \textbf{0.0275$\pm$0.0012} &
0.87 & \textbf{0.833$\pm$0.040} & \textbf{0.0858$\pm$0.0067} & \textbf{1.67}/\textbf{1} & \textbf{1.83}/1.33\\
\midrule
Algo. 1 copy & NN/LNN & 
0.97 & 0.972$\pm$0.007 & 0.0342$\pm$0.0067 &
0.92 & 0.925$\pm$0.003 & 0.0137$\pm$0.0027 &
0.85 & 0.838$\pm$0.051 & 0.0822$\pm$0.0111 & 2.33/3 & 2.17/3\\
Algo. 2 copy & NN/LNN &
0.97 & \textbf{0.975$\pm$0.007} & 0.0156$\pm$0.0016 &
0.92 & \textbf{0.927$\pm$0.006} & 0.0077$\pm$0.0009 &
0.85 & 0.833$\pm$0.050 & 0.0371$\pm$0.0019 & 1.67/2 & 1.67/\textbf{1.5}\\
Hard copy  & NN/LNN & 
0.97 & \textbf{0.975$\pm$0.007} & \textbf{0.0140$\pm$0.0017} &
0.92 & 0.926$\pm$0.005 & \textbf{0.0070$\pm$0.0018} &
0.85 & \textbf{0.843$\pm$0.039} & \textbf{0.0296$\pm$0.0011} & \textbf{1.33}/\textbf{1} & 1.33/\textbf{1.5}\\

\midrule
Algo. 1 copy & RF/GB & 
0.97 & \textbf{0.970$\pm$0.009} & 0.0933$\pm$0.0028 &
0.93 & 0.927$\pm$0.004 & 0.0388$\pm$0.0018 &
0.82 & \textbf{0.805$\pm$0.071} & 0.1402$\pm$0.0079 & \textbf{1.33}/3 & 1.83/3\\
Algo. 2 copy & RF/GB &
0.97 & 0.967$\pm$0.007 & 0.0621$\pm$0.0017 &
0.93 & \textbf{0.928$\pm$0.005} & 0.0255$\pm$0.0022 &
0.82 & 0.800$\pm$0.070 & 0.1040$\pm$0.0043 & 1.67/2 & \textbf{1.5}/2\\
Hard copy  & RF/GB & 
0.97 & 0.967$\pm$0.007 & \textbf{0.0498$\pm$0.0024} &
0.93 & \textbf{0.928$\pm$0.003} & \textbf{0.0160$\pm$0.0011} &
0.82 & 0.791$\pm$0.076 & \textbf{0.0988$\pm$0.0047} & 2/\textbf{1} & 1.83/\textbf{1}\\
\midrule
Algo. 1 copy & GB/GB & 
0.98 & \textbf{0.975$\pm$0.015} & 0.0938$\pm$0.0023 &
0.92 & \textbf{0.927$\pm$0.005} & 0.0376$\pm$0.0025 &
0.87 & \textbf{0.848$\pm$0.019} & 0.1354$\pm$0.0061 & \textbf{1}/3 & \textbf{1.67}/3\\
Algo. 2 copy & GB/GB &
0.98 & 0.970$\pm$0.009 & 0.0594$\pm$0.0027 &
0.92 & 0.926$\pm$0.005 & 0.0278$\pm$0.0023 &
0.87 & \textbf{0.848$\pm$0.054} & 0.0838$\pm$0.0014 & 2/2 & 1.83/2\\
Hard copy  & GB/GB & 
0.98 & 0.972$\pm$0.009 & \textbf{0.0428$\pm$0.0042} &
0.92 & \textbf{0.927$\pm$0.005} & \textbf{0.0206$\pm$0.0024} &
0.87 & \textbf{0.848$\pm$0.044} & \textbf{0.0691$\pm$0.0045} & 1.33/\textbf{1} & \textbf{1.67}/\textbf{1}\\
\midrule
Algo. 1 copy & NN/GB & 
0.98 & 0.972$\pm$0.015 & 0.0887$\pm$0.0079 &
0.92 & 0.924$\pm$0.006 & 0.0298$\pm$0.0038 &
0.89 & 0.786$\pm$0.067 & 0.1543$\pm$0.0016 & 2.67/3 & 2.33/3\\
Algo. 2 copy & NN/GB &
0.98 & \textbf{0.984$\pm$0.010} & 0.0752$\pm$0.0062 &
0.92 & \textbf{0.925$\pm$0.006} & 0.0238$\pm$0.0033 &
0.89 & 0.848$\pm$0.036 & \textbf{0.1311$\pm$0.0009} & \textbf{1.33}/1.67 & \textbf{1.5}/1.83\\
Hard copy  & NN/GB & 
0.98 & 0.977$\pm$0.016 & \textbf{0.0744$\pm$0.0064} &
0.92 & 0.923$\pm$0.005 & \textbf{0.0206$\pm$0.0032} &
0.89 & \textbf{0.852$\pm$0.028} & 0.1339$\pm$0.0008 & 2/\textbf{1.33} & \textbf{1.5}/\textbf{1.17}\\
\bottomrule
\end{tabular}
}
\end{center}

\clearpage
\section{Detailed average statistics for Experiment 2}
\label{appxMax}

Global statistics of relative differences (\%) in classification error ($1-\mathcal{A}_{\mathcal{C}}$) and empirical fidelity error ($R^\mathcal{F}_\text{emp}$). The table reports aggregated minimum, maximum, mean, and median values. Negative values indicate percentage reductions in error.

\begin{table*}[!ht]
\centering
\label{tab:global_stats}
\setlength{\tabcolsep}{6pt}
\begin{tabular}{l l r r r r}
\toprule
$f_{\mathcal{C}}$ model & Stat & $(1-\mathcal{A}_{\mathcal{C}})$@$\alpha{=}1$ & $R^\mathcal{F}_\text{emp}$@$\alpha{=}1$ & $1-\mathcal{A}_{\mathcal{C}}$ \text{best} & $R^\mathcal{F}_\text{emp}$ \text{best} \\
\midrule
SNN & mean   &  -7.5241 &  +3.7612 & -12.4392 &  -4.2433 \\
SNN & median &  -3.0486 &  +5.6204 &  -6.3520 &  -1.7754 \\
SNN & min    & -67.7778 & -39.8993 & -67.7778 & -39.8993 \\
SNN & max    &  +8.3333 & +52.6226 &  +0.0000 & +26.8811 \\
\midrule
MNN & mean   &  -6.9863 &  +7.7095 & -15.1050 &  -1.5722 \\
MNN & median &  -3.1630 & +10.5833 &  -7.1485 &  -0.1927 \\
MNN & min    & -55.5556 & -42.0304 & -100.0000 & -42.0304 \\
MNN & max    & +20.0000 & +68.3114 & +10.0000 & +56.9783 \\
\midrule
LNN & mean   & -16.6116 &  +5.9607 & -19.8544 &  -4.1424 \\
LNN & median &  -5.2801 &  +6.4937 &  -6.4286 &  -6.4447 \\
LNN & min    & -100.0000 & -46.7362 & -100.0000 & -47.5401 \\
LNN & max    &  +8.5714 & +58.2688 &  +1.0135 & +55.4514 \\
\midrule
GB  & mean   &  +0.9017 & +59.8241 &  -5.1572 & +14.2069 \\
GB  & median &  +0.3663 & +67.9962 &  -1.4046 & +13.4212 \\
GB  & min    & -33.3333 &  -1.8922 & -33.3333 & -17.1805 \\
GB  & max    & +23.0769 & +123.8385 &  +3.2258 & +45.0774 \\
\bottomrule
\end{tabular}
\end{table*}

\clearpage
\newpage
\section{Evolution of the metrics as a function of the number of training points.}
\label{appxC}
\captionsetup[subfigure]{labelformat=empty}

\begin{figure}[!ht]
	\centering
	\begin{subfigure}[t]{0.15\textwidth}
		\includegraphics[width=\linewidth]{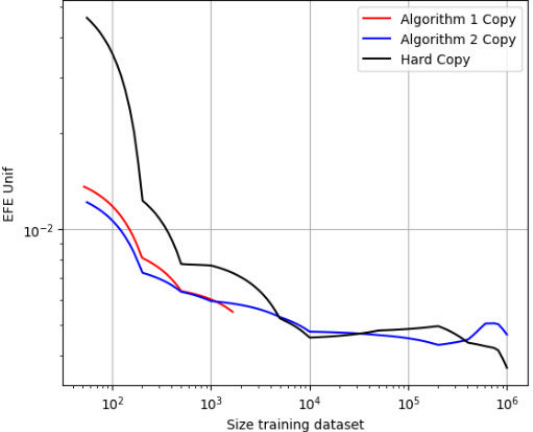}
		\caption{1-RF/SNN-$R^{\mathcal{F}}_{emp}$}
	\end{subfigure}
	\hfill
	\begin{subfigure}[t]{0.15\textwidth}
		\includegraphics[width=\linewidth]{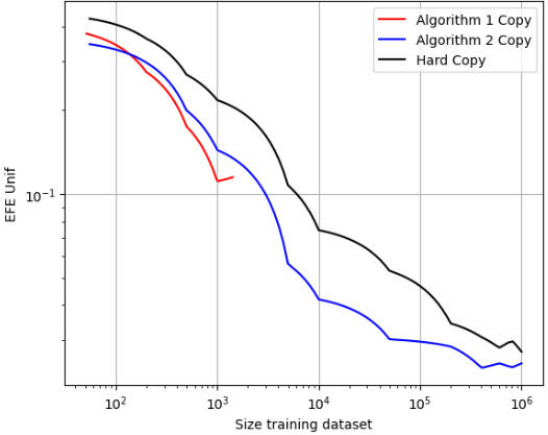}
		\caption{2-RF/SNN-$R^{\mathcal{F}}_{emp}$}
	\end{subfigure}
	\hfill
	\begin{subfigure}[t]{0.15\textwidth}
		\includegraphics[width=\linewidth]{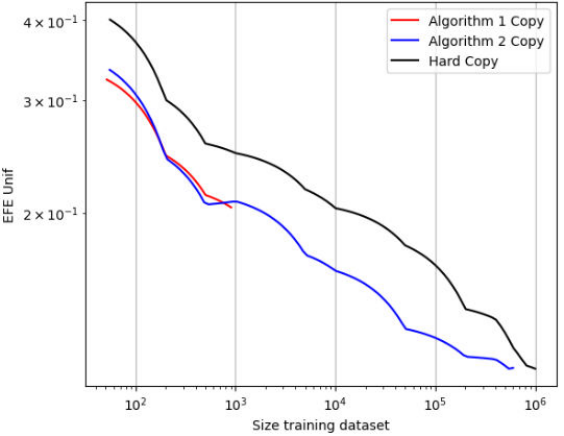}
		\caption{3-RF/SNN-$R^{\mathcal{F}}_{emp}$}
	\end{subfigure}
	\hfill
	\begin{subfigure}[t]{0.15\textwidth}
		\includegraphics[width=\linewidth]{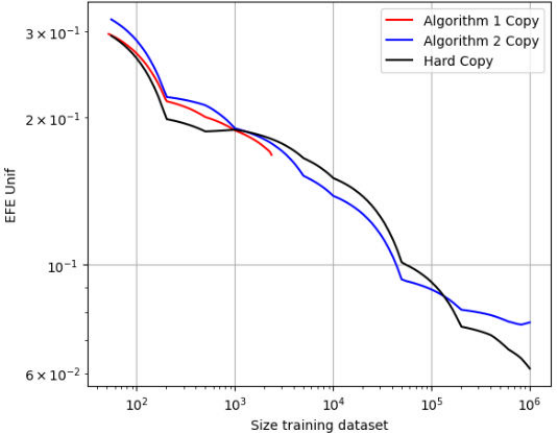}
		\caption{4-RF/SNN-$R^{\mathcal{F}}_{emp}$}
	\end{subfigure}
	\hfill
	\begin{subfigure}[t]{0.15\textwidth}
		\includegraphics[width=\linewidth]{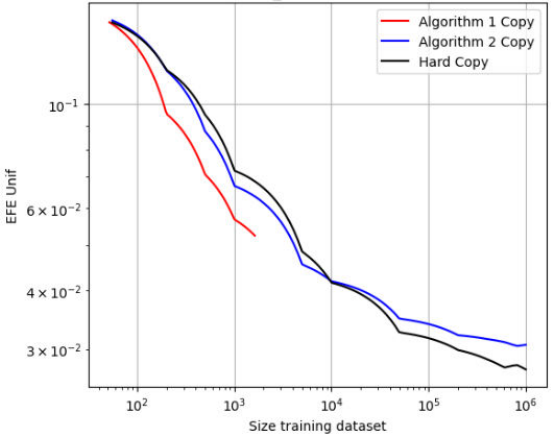}
		\caption{5-RF/SNN-$R^{\mathcal{F}}_{emp}$}
	\end{subfigure}
	\hfill
	\begin{subfigure}[t]{0.15\textwidth}
		\includegraphics[width=\linewidth]{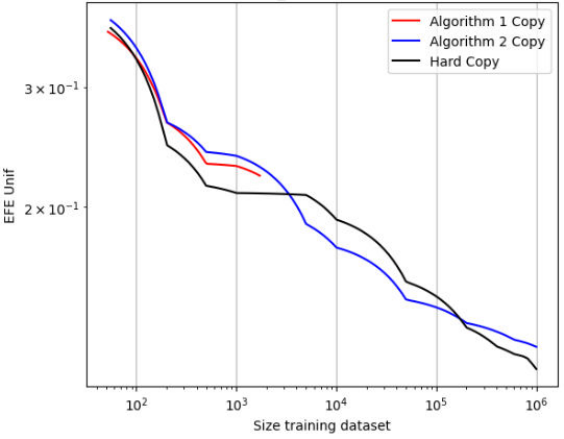}
		\caption{6-RF/SNN-$R^{\mathcal{F}}_{emp}$}
	\end{subfigure}

	\begin{subfigure}[t]{0.15\textwidth}
		\includegraphics[width=\linewidth]{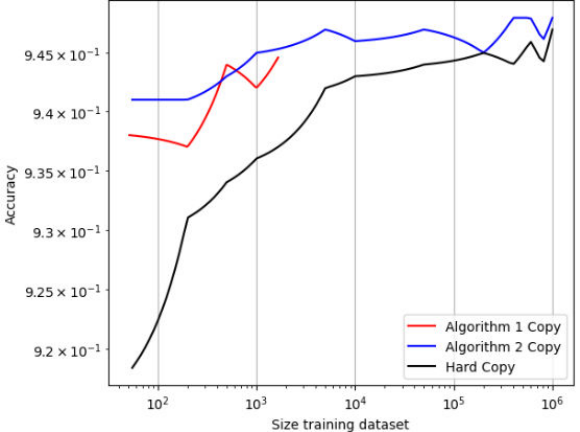}
		\caption{1-RF/SNN-$\mathcal{A}_{\mathcal{C}}$}
	\end{subfigure}
	\hfill
	\begin{subfigure}[t]{0.15\textwidth}
		\includegraphics[width=\linewidth]{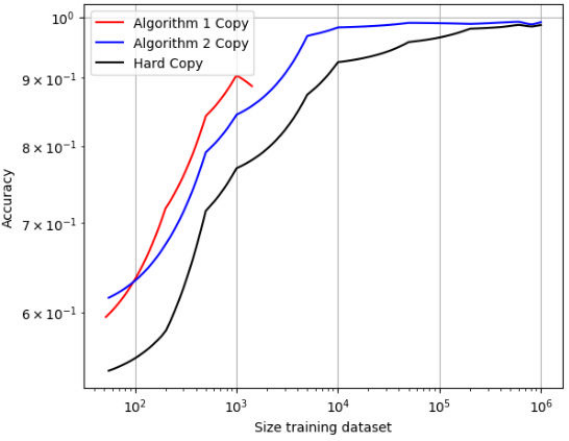}
		\caption{2-RF/SNN-$\mathcal{A}_{\mathcal{C}}$}
	\end{subfigure}
	\hfill
	\begin{subfigure}[t]{0.15\textwidth}
		\includegraphics[width=\linewidth]{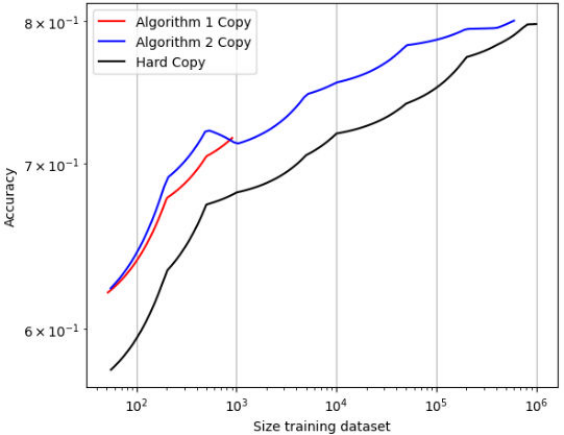}
		\caption{3-RF/SNN-$\mathcal{A}_{\mathcal{C}}$}
	\end{subfigure}
	\hfill
	\begin{subfigure}[t]{0.15\textwidth}
		\includegraphics[width=\linewidth]{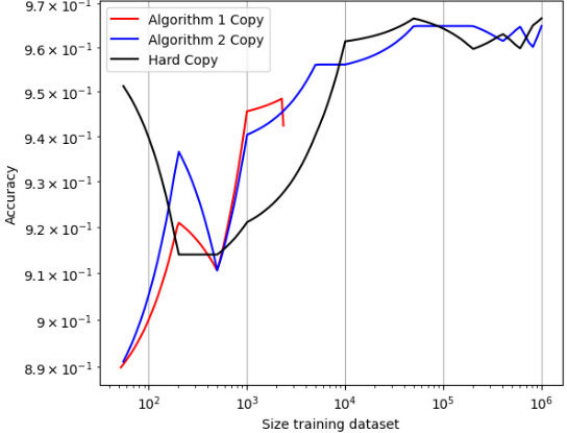}
		\caption{4-RF/SNN-$\mathcal{A}_{\mathcal{C}}$}
	\end{subfigure}
	\hfill
	\begin{subfigure}[t]{0.15\textwidth}
		\includegraphics[width=\linewidth]{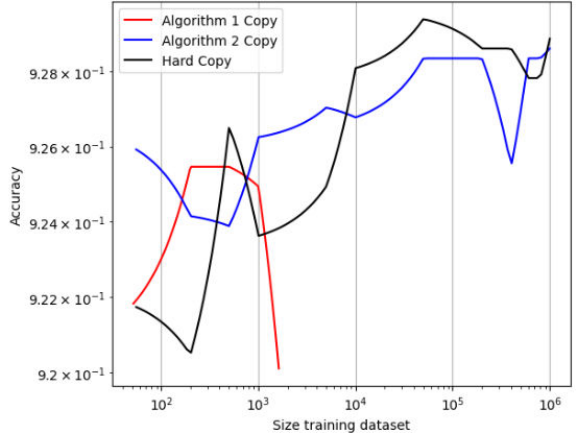}
		\caption{5-RF/SNN-$\mathcal{A}_{\mathcal{C}}$}
	\end{subfigure}
	\hfill
	\begin{subfigure}[t]{0.15\textwidth}
		\includegraphics[width=\linewidth]{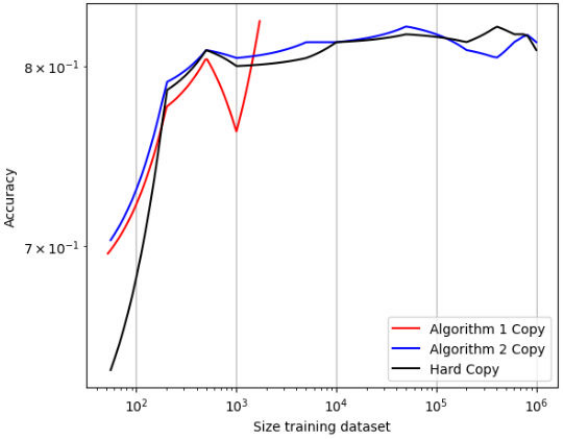}
		\caption{6-RF/SNN-$\mathcal{A}_{\mathcal{C}}$}
	\end{subfigure}

	\begin{subfigure}[t]{0.15\textwidth}
		\includegraphics[width=\linewidth]{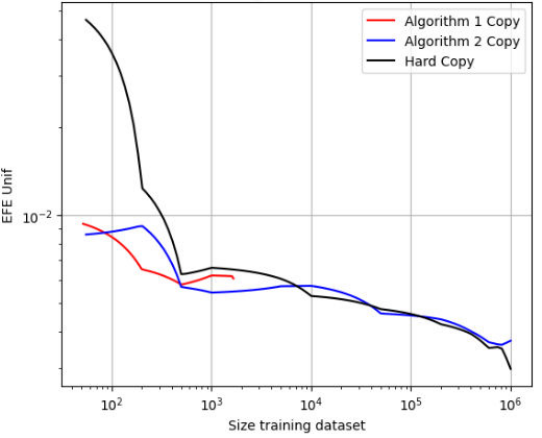}
		\caption{1-RF/MNN-$R^{\mathcal{F}}_{emp}$}
	\end{subfigure}
	\hfill
	\begin{subfigure}[t]{0.15\textwidth}
		\includegraphics[width=\linewidth]{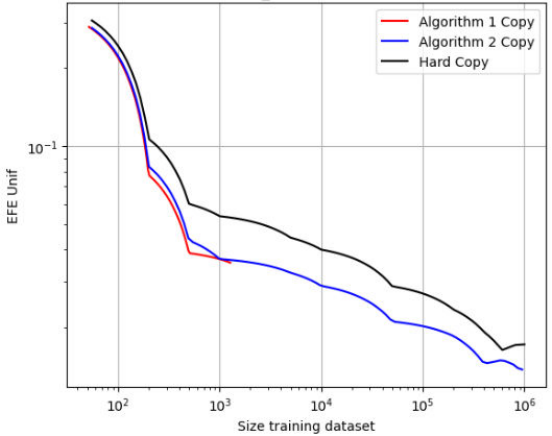}
		\caption{2-RF/MNN-$R^{\mathcal{F}}_{emp}$}
	\end{subfigure}
	\hfill
	\begin{subfigure}[t]{0.15\textwidth}
		\includegraphics[width=\linewidth]{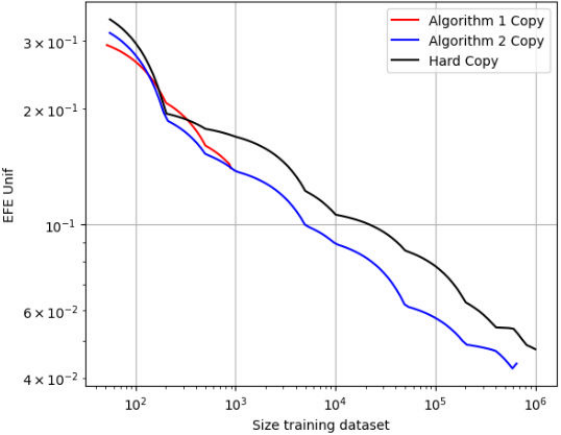}
		\caption{3-RF/MNN-$R^{\mathcal{F}}_{emp}$}
	\end{subfigure}
	\hfill
	\begin{subfigure}[t]{0.15\textwidth}
		\includegraphics[width=\linewidth]{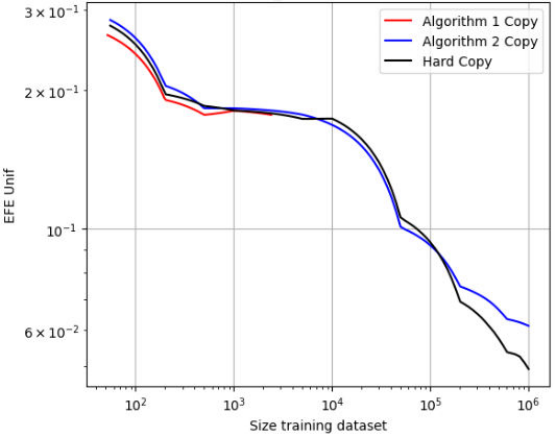}
		\caption{4-RF/MNN-$R^{\mathcal{F}}_{emp}$}
	\end{subfigure}
	\hfill
	\begin{subfigure}[t]{0.15\textwidth}
		\includegraphics[width=\linewidth]{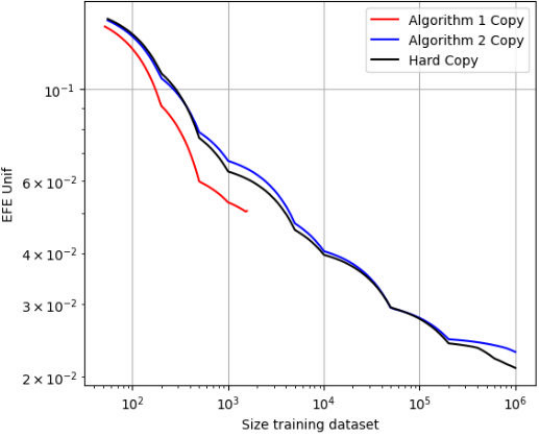}
		\caption{5-RF/MNN-$R^{\mathcal{F}}_{emp}$}
	\end{subfigure}
	\hfill
	\begin{subfigure}[t]{0.15\textwidth}
		\includegraphics[width=\linewidth]{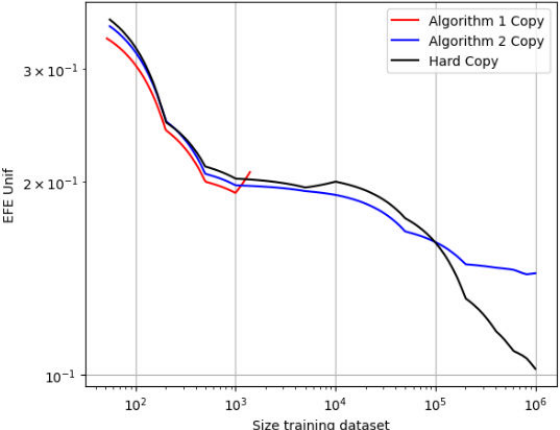}
		\caption{6-RF/MNN-$R^{\mathcal{F}}_{emp}$}
	\end{subfigure}

	\begin{subfigure}[t]{0.15\textwidth}
		\includegraphics[width=\linewidth]{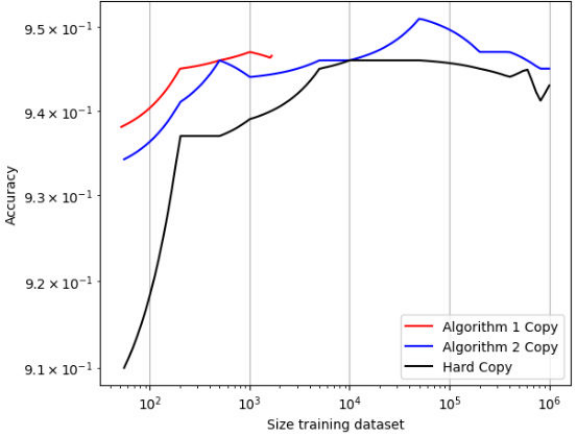}
		\caption{1-RF/MNN-$\mathcal{A}_{\mathcal{C}}$}
	\end{subfigure}
	\hfill
	\begin{subfigure}[t]{0.15\textwidth}
		\includegraphics[width=\linewidth]{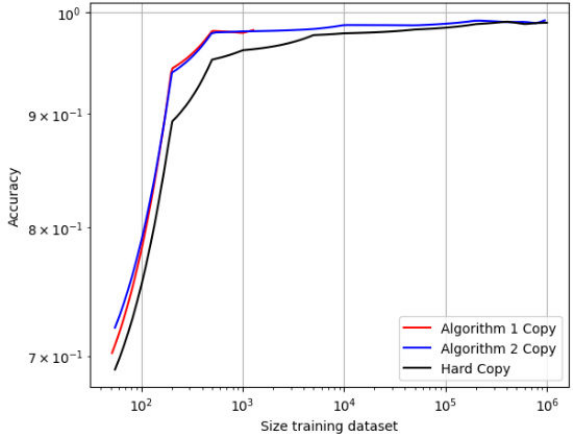}
		\caption{2-RF/MNN-$\mathcal{A}_{\mathcal{C}}$}
	\end{subfigure}
	\hfill
	\begin{subfigure}[t]{0.15\textwidth}
		\includegraphics[width=\linewidth]{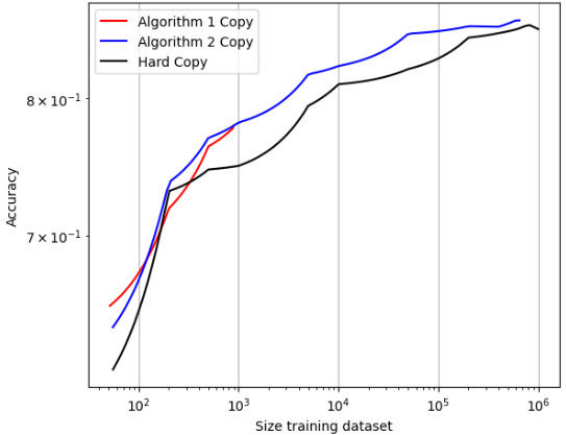}
		\caption{3-RF/MNN-$\mathcal{A}_{\mathcal{C}}$}
	\end{subfigure}
	\hfill
	\begin{subfigure}[t]{0.15\textwidth}
		\includegraphics[width=\linewidth]{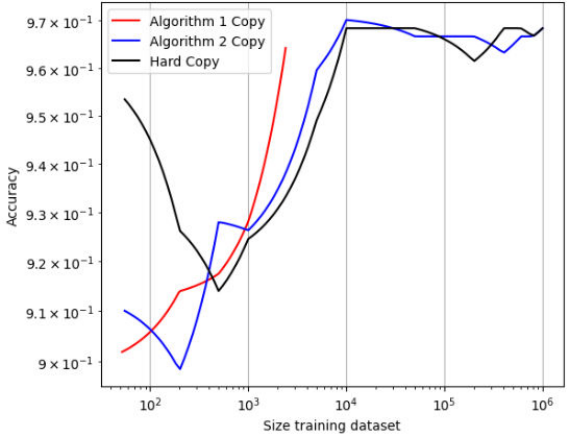}
		\caption{4-RF/MNN-$\mathcal{A}_{\mathcal{C}}$}
	\end{subfigure}
	\hfill
	\begin{subfigure}[t]{0.15\textwidth}
		\includegraphics[width=\linewidth]{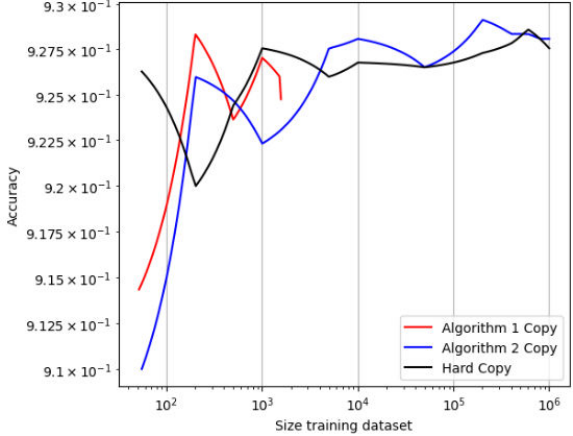}
		\caption{5-RF/MNN-$\mathcal{A}_{\mathcal{C}}$}
	\end{subfigure}
	\hfill
	\begin{subfigure}[t]{0.15\textwidth}
		\includegraphics[width=\linewidth]{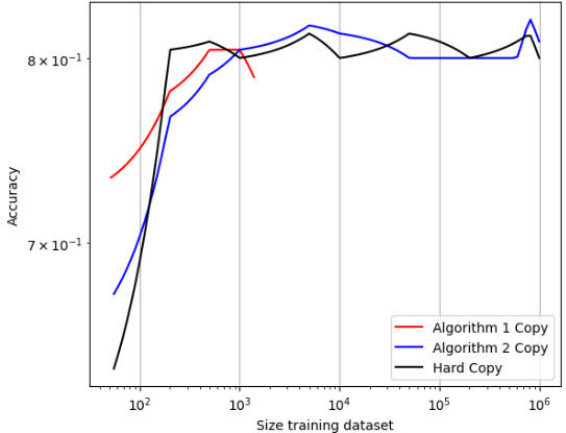}
		\caption{6-RF/MNN-$\mathcal{A}_{\mathcal{C}}$}
	\end{subfigure}

	\begin{subfigure}[t]{0.15\textwidth}
		\includegraphics[width=\linewidth]{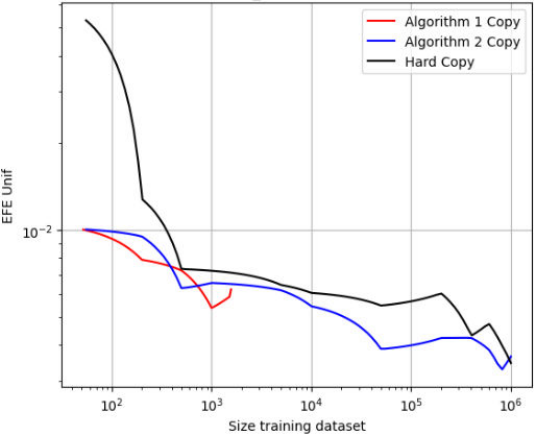}
		\caption{1-RF/LNN-$R^{\mathcal{F}}_{emp}$}
	\end{subfigure}
	\hfill
	\begin{subfigure}[t]{0.15\textwidth}
		\includegraphics[width=\linewidth]{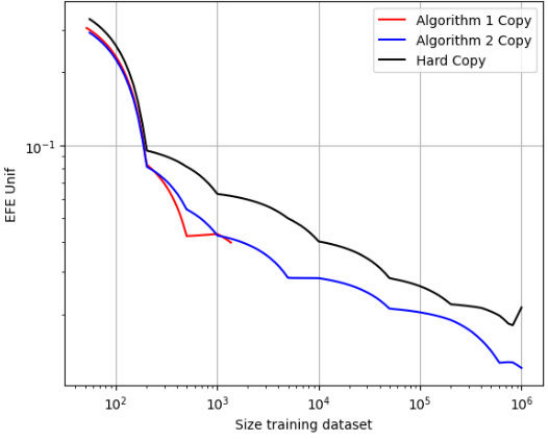}
		\caption{2-RF/LNN-$R^{\mathcal{F}}_{emp}$}
	\end{subfigure}
	\hfill
	\begin{subfigure}[t]{0.15\textwidth}
		\includegraphics[width=\linewidth]{Figures/Figure_47/3-RF-LNN-R.pdf}
		\caption{3-RF/LNN-$R^{\mathcal{F}}_{emp}$}
	\end{subfigure}
	\hfill
	\begin{subfigure}[t]{0.15\textwidth}
		\includegraphics[width=\linewidth]{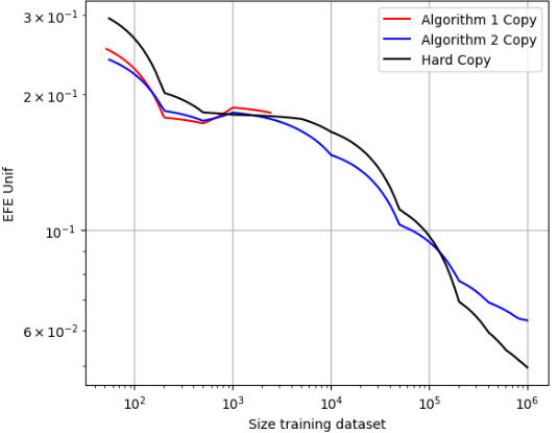}
		\caption{4-RF/LNN-$R^{\mathcal{F}}_{emp}$}
	\end{subfigure}
	\hfill
	\begin{subfigure}[t]{0.15\textwidth}
		\includegraphics[width=\linewidth]{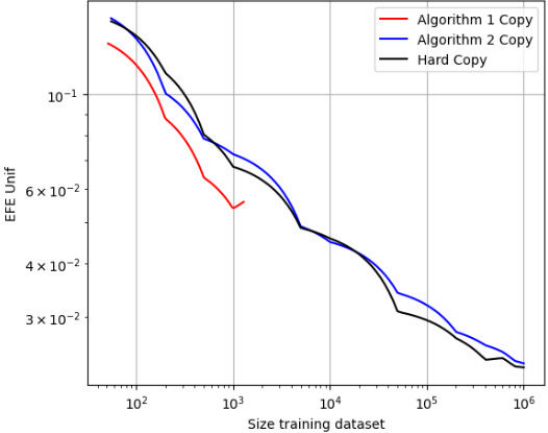}
		\caption{5-RF/LNN-$R^{\mathcal{F}}_{emp}$}
	\end{subfigure}
	\hfill
	\begin{subfigure}[t]{0.15\textwidth}
		\includegraphics[width=\linewidth]{Figures/Figure_47/6-RF-LNN-R.pdf}
		\caption{6-RF/LNN-$R^{\mathcal{F}}_{emp}$}
	\end{subfigure}

	\begin{subfigure}[t]{0.15\textwidth}
		\includegraphics[width=\linewidth]{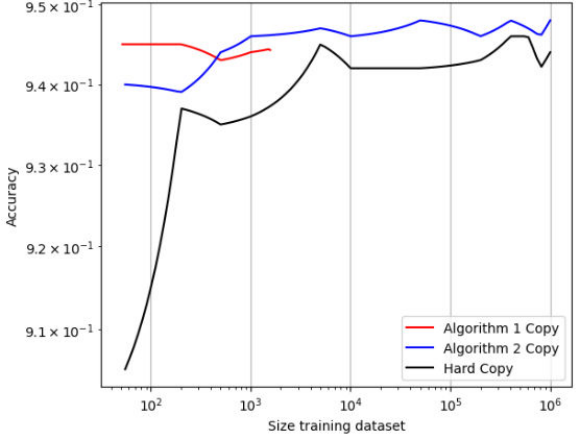}
		\caption{1-RF/LNN-$\mathcal{A}_{\mathcal{C}}$}
	\end{subfigure}
	\hfill
	\begin{subfigure}[t]{0.15\textwidth}
		\includegraphics[width=\linewidth]{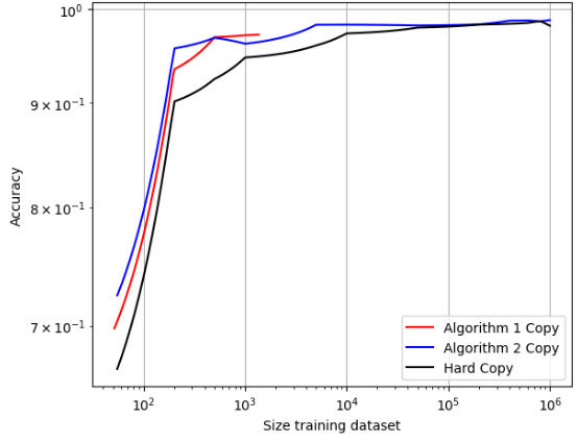}
		\caption{2-RF/LNN-$\mathcal{A}_{\mathcal{C}}$}
	\end{subfigure}
	\hfill
	\begin{subfigure}[t]{0.15\textwidth}
		\includegraphics[width=\linewidth]{Figures/Figure_47/3-RF-LNN-A.pdf}
		\caption{3-RF/LNN-$\mathcal{A}_{\mathcal{C}}$}
	\end{subfigure}
	\hfill
	\begin{subfigure}[t]{0.15\textwidth}
		\includegraphics[width=\linewidth]{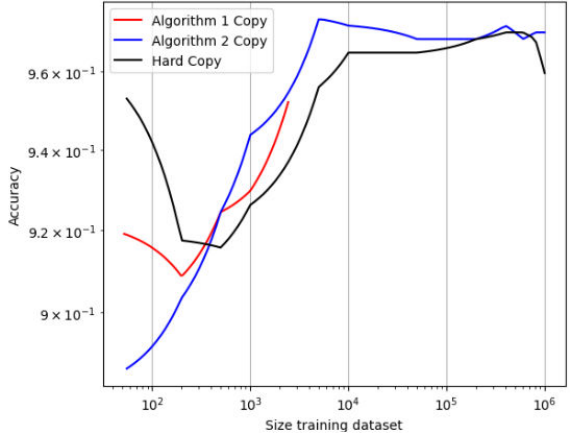}
		\caption{4-RF/LNN-$\mathcal{A}_{\mathcal{C}}$}
	\end{subfigure}
	\hfill
	\begin{subfigure}[t]{0.15\textwidth}
		\includegraphics[width=\linewidth]{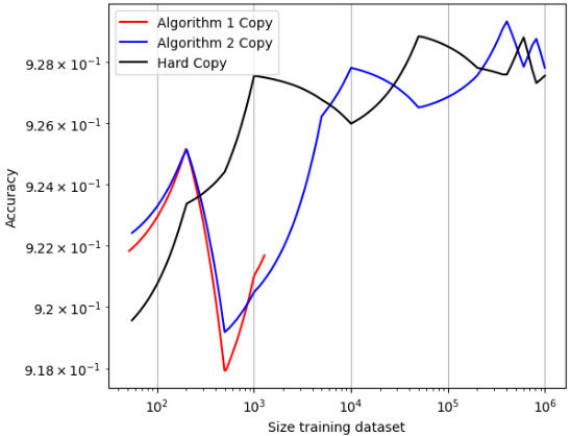}
		\caption{5-RF/LNN-$\mathcal{A}_{\mathcal{C}}$}
	\end{subfigure}
	\hfill
	\begin{subfigure}[t]{0.15\textwidth}
		\includegraphics[width=\linewidth]{Figures/Figure_47/6-RF-LNN-A.pdf}
		\caption{6-RF/LNN-$\mathcal{A}_{\mathcal{C}}$}
	\end{subfigure}

	\begin{subfigure}[t]{0.15\textwidth}
		\includegraphics[width=\linewidth]{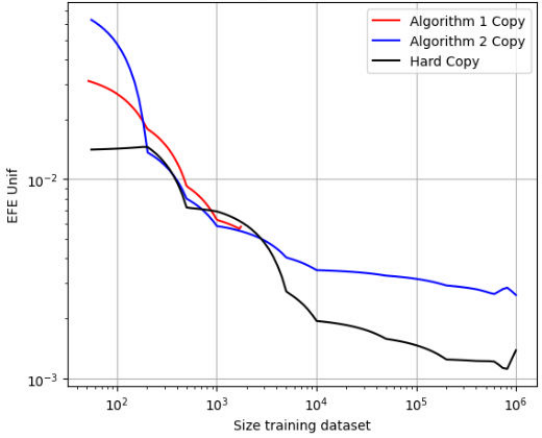}
		\caption{1-RF/GB-$R^{\mathcal{F}}_{emp}$}
	\end{subfigure}
	\hfill
	\begin{subfigure}[t]{0.15\textwidth}
		\includegraphics[width=\linewidth]{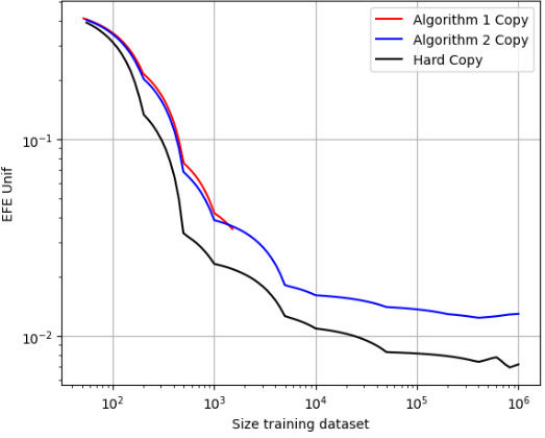}
		\caption{2-RF/GB-$R^{\mathcal{F}}_{emp}$}
	\end{subfigure}
	\hfill
	\begin{subfigure}[t]{0.15\textwidth}
		\includegraphics[width=\linewidth]{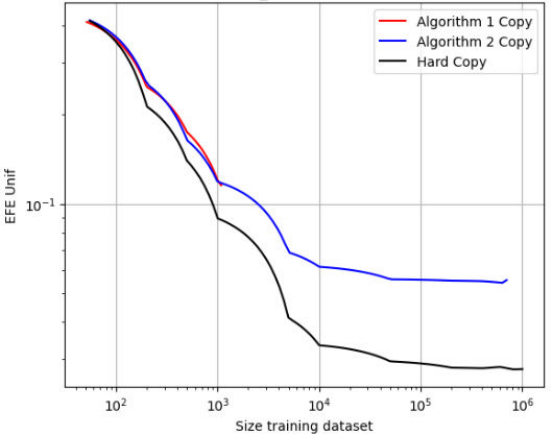}
		\caption{3-RF/GB-$R^{\mathcal{F}}_{emp}$}
	\end{subfigure}
	\hfill
	\begin{subfigure}[t]{0.15\textwidth}
		\includegraphics[width=\linewidth]{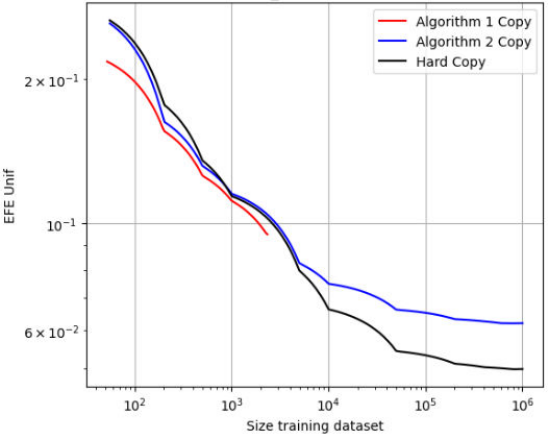}
		\caption{4-RF/GB-$R^{\mathcal{F}}_{emp}$}
	\end{subfigure}
	\hfill
	\begin{subfigure}[t]{0.15\textwidth}
		\includegraphics[width=\linewidth]{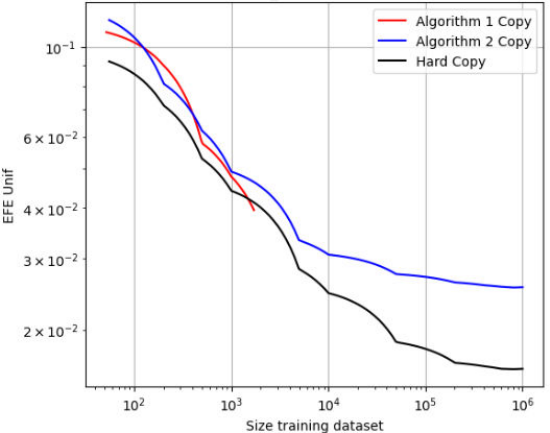}
		\caption{5-RF/GB-$R^{\mathcal{F}}_{emp}$}
	\end{subfigure}
	\hfill
	\begin{subfigure}[t]{0.15\textwidth}
		\includegraphics[width=\linewidth]{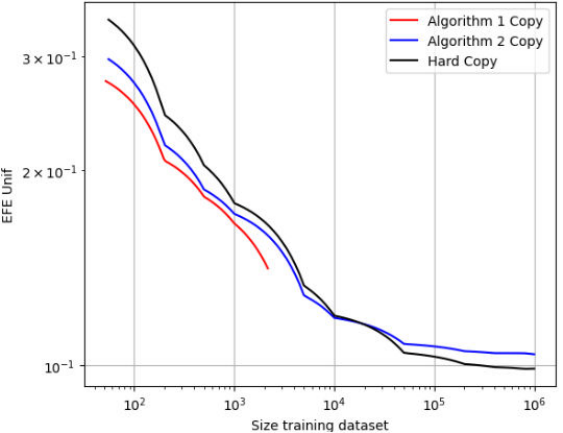}
		\caption{6-RF/GB-$R^{\mathcal{F}}_{emp}$}
	\end{subfigure}

	\begin{subfigure}[t]{0.15\textwidth}
		\includegraphics[width=\linewidth]{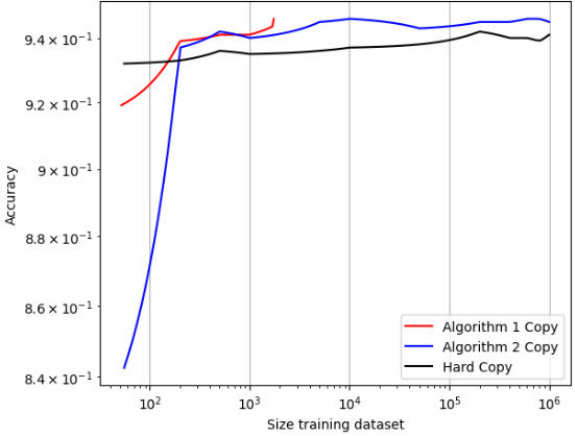}
		\caption{1-RF/GB-$\mathcal{A}_{\mathcal{C}}$}
	\end{subfigure}
	\hfill
	\begin{subfigure}[t]{0.15\textwidth}
		\includegraphics[width=\linewidth]{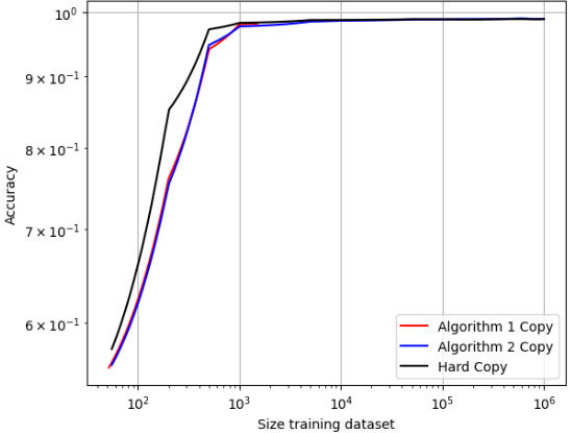}
		\caption{2-RF/GB-$\mathcal{A}_{\mathcal{C}}$}
	\end{subfigure}
	\hfill
	\begin{subfigure}[t]{0.15\textwidth}
		\includegraphics[width=\linewidth]{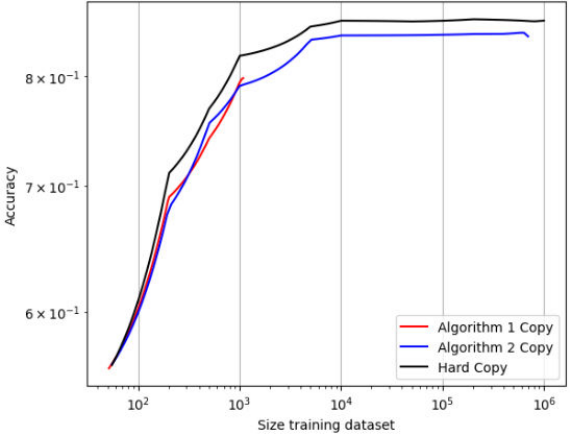}
		\caption{3-RF/GB-$\mathcal{A}_{\mathcal{C}}$}
	\end{subfigure}
	\hfill
	\begin{subfigure}[t]{0.15\textwidth}
		\includegraphics[width=\linewidth]{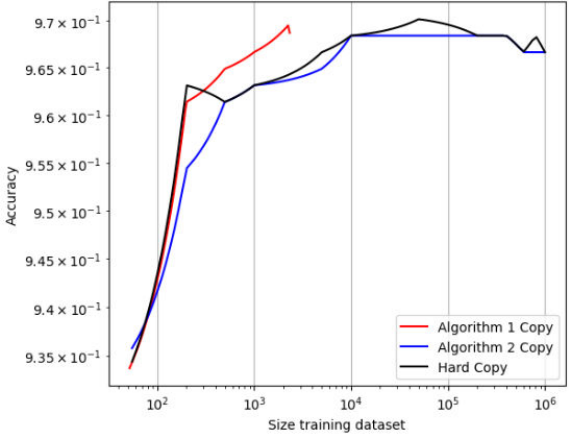}
		\caption{4-RF/GB-$\mathcal{A}_{\mathcal{C}}$}
	\end{subfigure}
	\hfill
	\begin{subfigure}[t]{0.15\textwidth}
		\includegraphics[width=\linewidth]{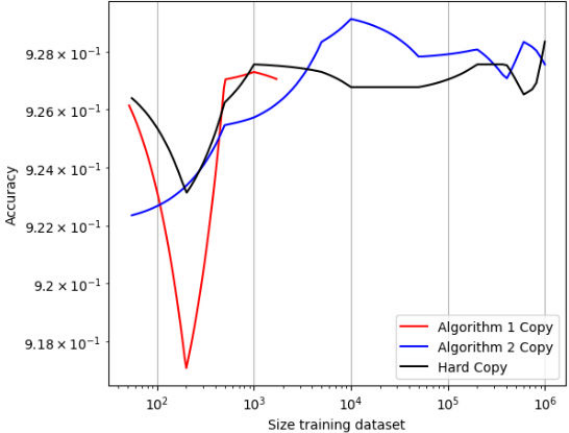}
		\caption{5-RF/GB-$\mathcal{A}_{\mathcal{C}}$}
	\end{subfigure}
	\hfill
	\begin{subfigure}[t]{0.15\textwidth}
		\includegraphics[width=\linewidth]{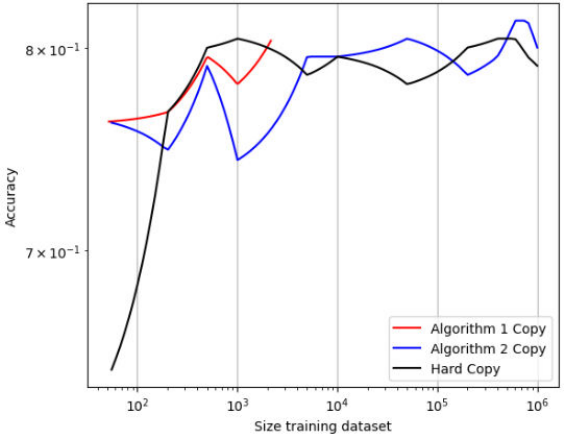}
		\caption{6-RF/GB-$\mathcal{A}_{\mathcal{C}}$}
	\end{subfigure}
\end{figure}

\begin{figure}[!ht]
	\centering
	\begin{subfigure}[t]{0.15\textwidth}
		\includegraphics[width=\linewidth]{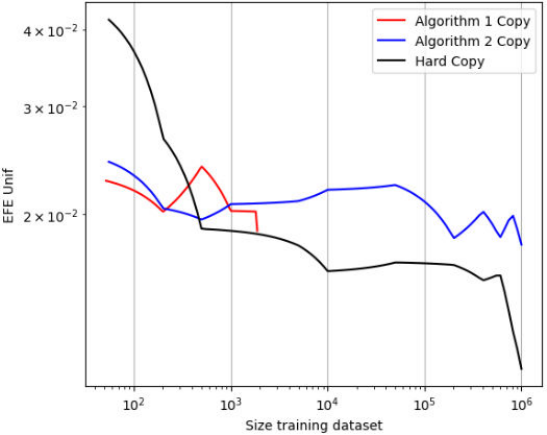}
		\caption{1-GB/SNN-$R^{\mathcal{F}}_{emp}$}
	\end{subfigure}
	\hfill
	\begin{subfigure}[t]{0.15\textwidth}
		\includegraphics[width=\linewidth]{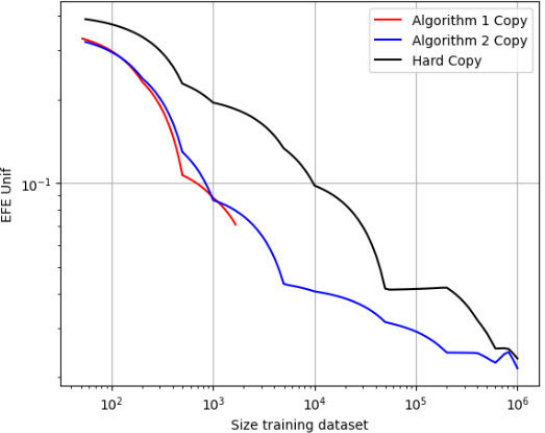}
		\caption{2-GB/SNN-$R^{\mathcal{F}}_{emp}$}
	\end{subfigure}
	\hfill
	\begin{subfigure}[t]{0.15\textwidth}
		\includegraphics[width=\linewidth]{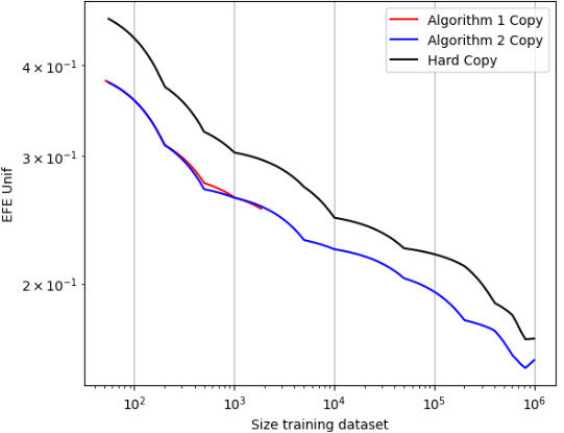}
		\caption{3-GB/SNN-$R^{\mathcal{F}}_{emp}$}
	\end{subfigure}
	\hfill
	\begin{subfigure}[t]{0.15\textwidth}
		\includegraphics[width=\linewidth]{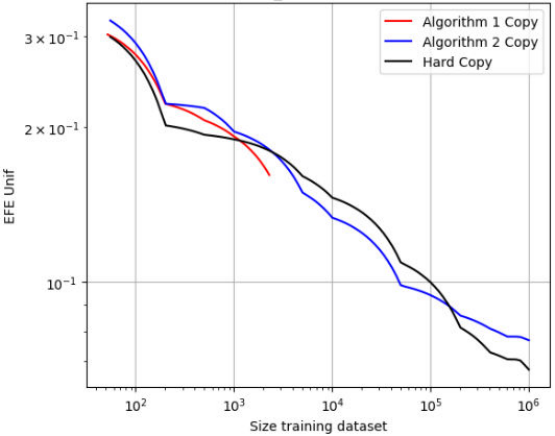}
		\caption{4-GB/SNN-$R^{\mathcal{F}}_{emp}$}
	\end{subfigure}
	\hfill
	\begin{subfigure}[t]{0.15\textwidth}
		\includegraphics[width=\linewidth]{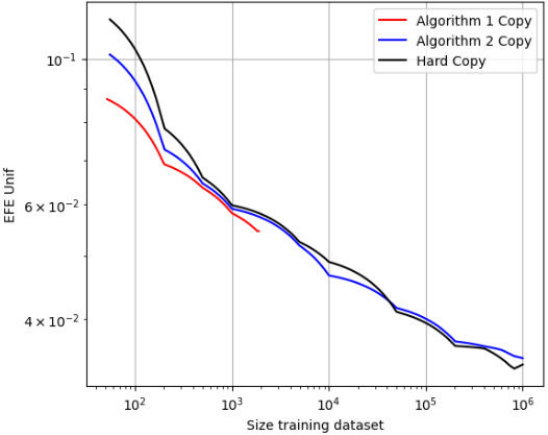}
		\caption{5-GB/SNN-$R^{\mathcal{F}}_{emp}$}
	\end{subfigure}
	\hfill
	\begin{subfigure}[t]{0.15\textwidth}
		\includegraphics[width=\linewidth]{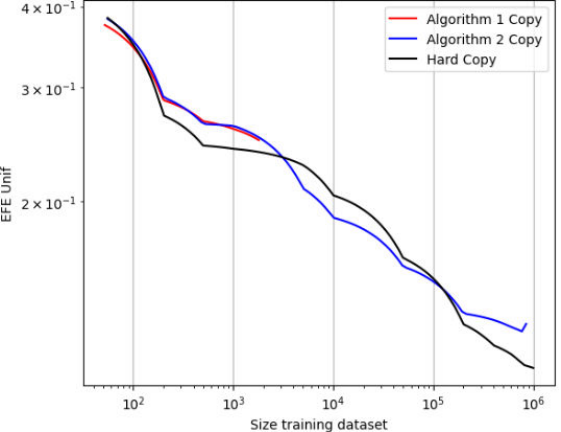}
		\caption{6-GB/SNN-$R^{\mathcal{F}}_{emp}$}
	\end{subfigure}
	
	\vspace{0.5mm}
	
	\begin{subfigure}[t]{0.15\textwidth}
		\includegraphics[width=\linewidth]{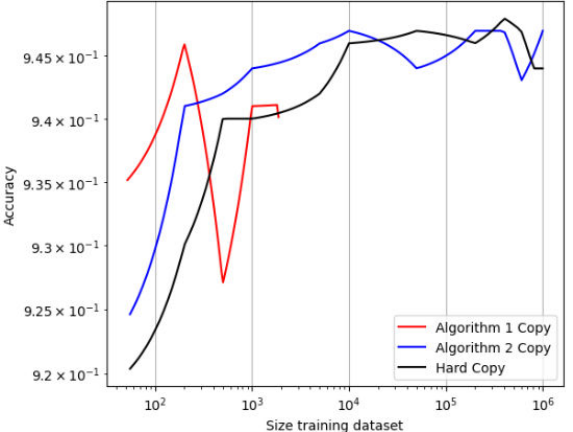}
		\caption{1-GB/SNN-$\mathcal{A}_{\mathcal{C}}$}
	\end{subfigure}
	\hfill
	\begin{subfigure}[t]{0.15\textwidth}
		\includegraphics[width=\linewidth]{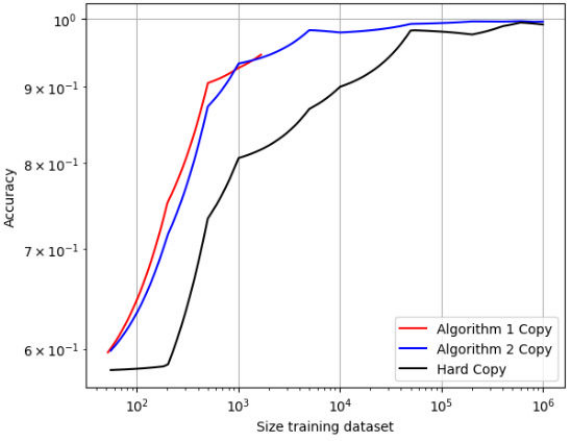}
		\caption{2-GB/SNN-$\mathcal{A}_{\mathcal{C}}$}
	\end{subfigure}
	\hfill
	\begin{subfigure}[t]{0.15\textwidth}
		\includegraphics[width=\linewidth]{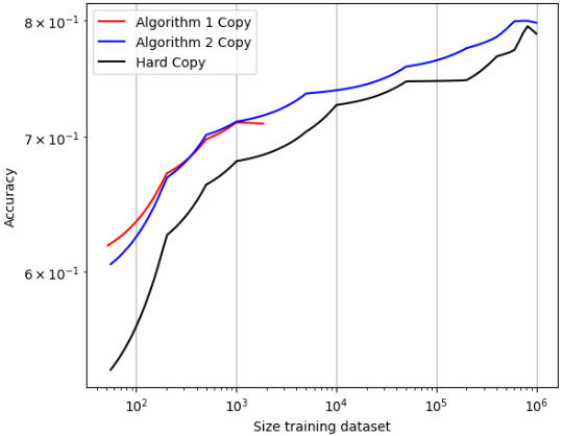}
		\caption{3-GB/SNN-$\mathcal{A}_{\mathcal{C}}$}
	\end{subfigure}
	\hfill
	\begin{subfigure}[t]{0.15\textwidth}
		\includegraphics[width=\linewidth]{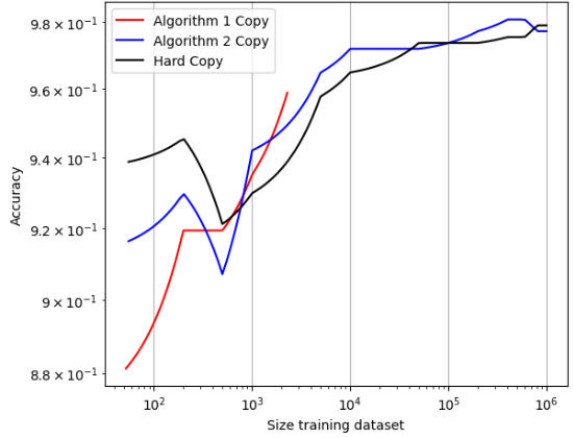}
		\caption{4-GB/SNN-$\mathcal{A}_{\mathcal{C}}$}
	\end{subfigure}
	\hfill
	\begin{subfigure}[t]{0.15\textwidth}
		\includegraphics[width=\linewidth]{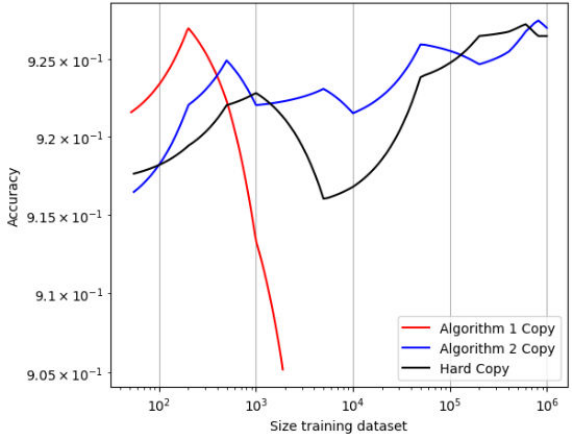}
		\caption{5-GB/SNN-$\mathcal{A}_{\mathcal{C}}$}
	\end{subfigure}
	\hfill
	\begin{subfigure}[t]{0.15\textwidth}
		\includegraphics[width=\linewidth]{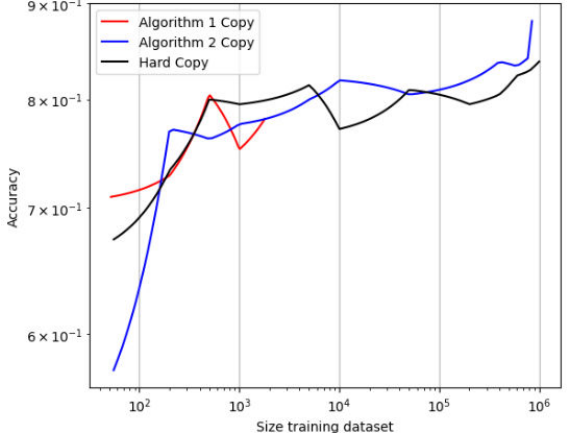}
		\caption{6-GB/SNN-$\mathcal{A}_{\mathcal{C}}$}
	\end{subfigure}
	
	\vspace{5mm}
	
	\begin{subfigure}[t]{0.15\textwidth}
		\includegraphics[width=\linewidth]{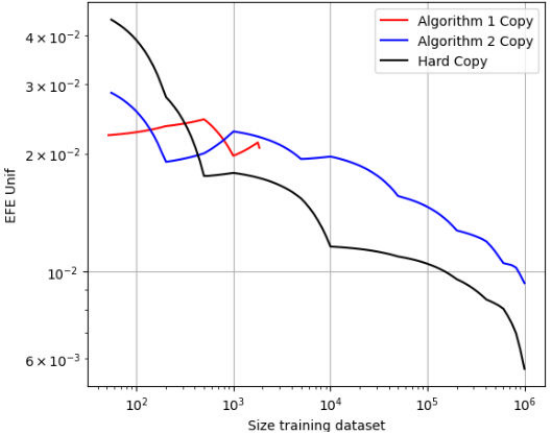}
		\caption{1-GB/MNN-$R^{\mathcal{F}}_{emp}$}
	\end{subfigure}
	\hfill
	\begin{subfigure}[t]{0.15\textwidth}
		\includegraphics[width=\linewidth]{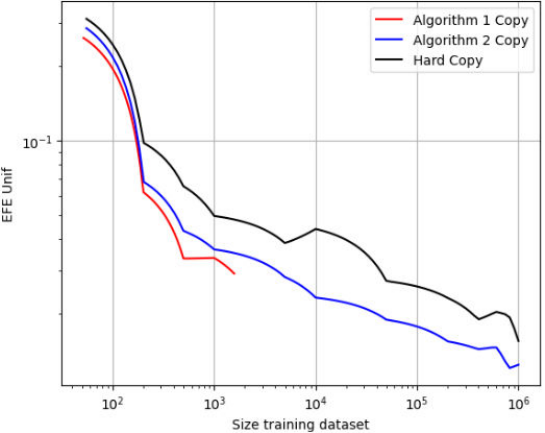}
		\caption{2-GB/MNN-$R^{\mathcal{F}}_{emp}$}
	\end{subfigure}
	\hfill
	\begin{subfigure}[t]{0.15\textwidth}
		\includegraphics[width=\linewidth]{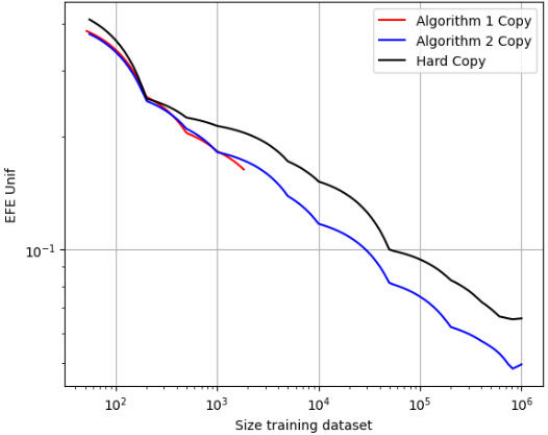}
		\caption{3-GB/MNN-$R^{\mathcal{F}}_{emp}$}
	\end{subfigure}
	\hfill
	\begin{subfigure}[t]{0.15\textwidth}
		\includegraphics[width=\linewidth]{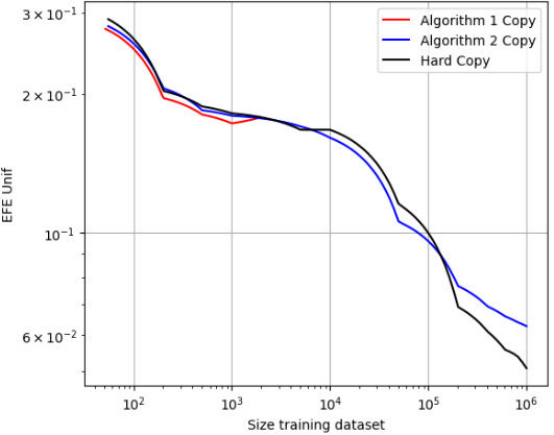}
		\caption{4-GB/MNN-$R^{\mathcal{F}}_{emp}$}
	\end{subfigure}
	\hfill
	\begin{subfigure}[t]{0.15\textwidth}
		\includegraphics[width=\linewidth]{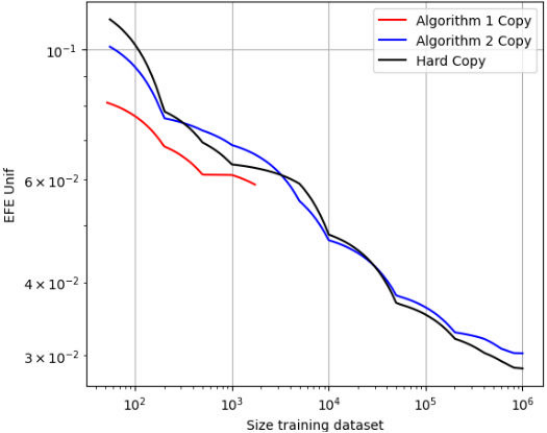}
		\caption{5-GB/MNN-$R^{\mathcal{F}}_{emp}$}
	\end{subfigure}
	\hfill
	\begin{subfigure}[t]{0.15\textwidth}
		\includegraphics[width=\linewidth]{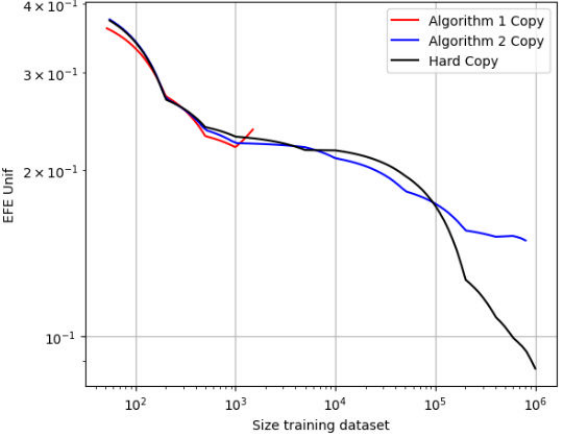}
		\caption{6-GB/MNN-$R^{\mathcal{F}}_{emp}$}
	\end{subfigure}
	
	\vspace{0.5mm}
	
	\begin{subfigure}[t]{0.15\textwidth}
		\includegraphics[width=\linewidth]{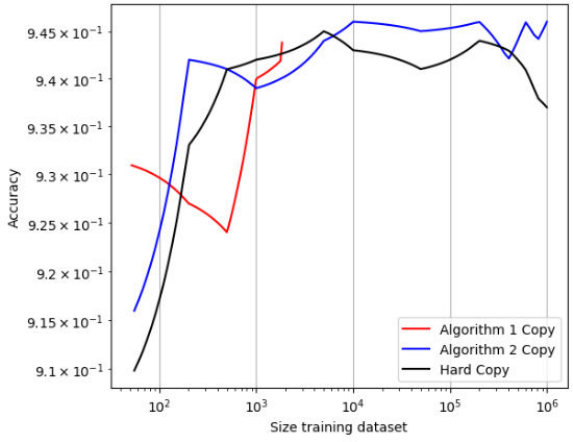}
		\caption{1-GB/MNN-$\mathcal{A}_{\mathcal{C}}$}
	\end{subfigure}
	\hfill
	\begin{subfigure}[t]{0.15\textwidth}
		\includegraphics[width=\linewidth]{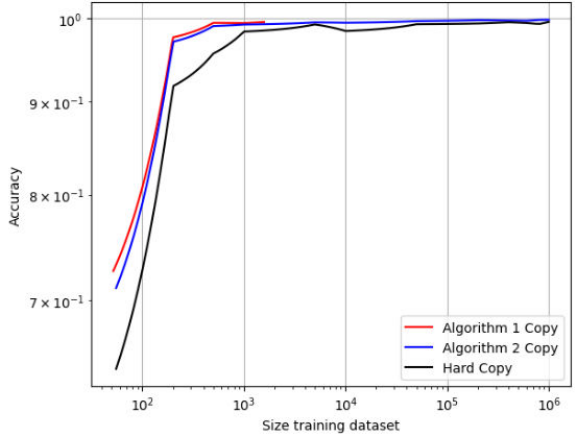}
		\caption{2-GB/MNN-$\mathcal{A}_{\mathcal{C}}$}
	\end{subfigure}
	\hfill
	\begin{subfigure}[t]{0.15\textwidth}
		\includegraphics[width=\linewidth]{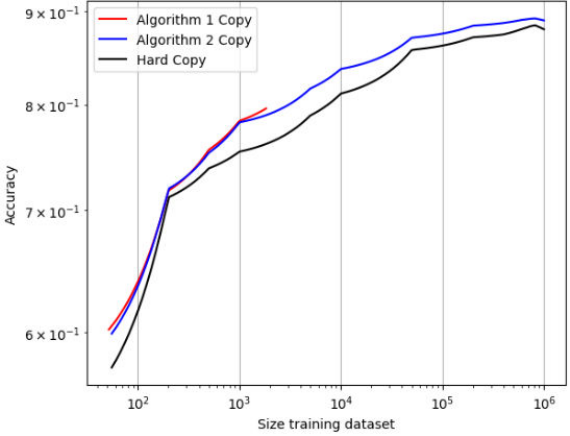}
		\caption{3-GB/MNN-$\mathcal{A}_{\mathcal{C}}$}
	\end{subfigure}
	\hfill
	\begin{subfigure}[t]{0.15\textwidth}
		\includegraphics[width=\linewidth]{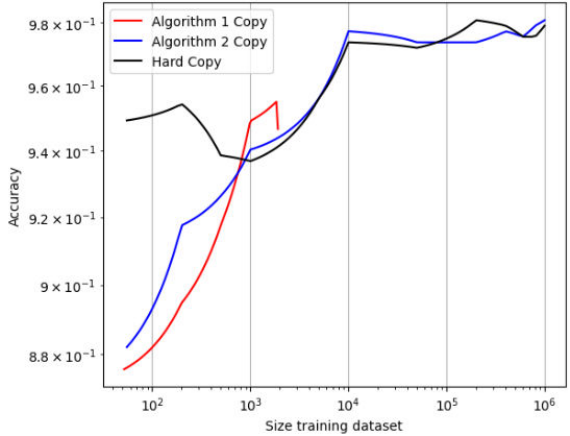}
		\caption{4-GB/MNN-$\mathcal{A}_{\mathcal{C}}$}
	\end{subfigure}
	\hfill
	\begin{subfigure}[t]{0.15\textwidth}
		\includegraphics[width=\linewidth]{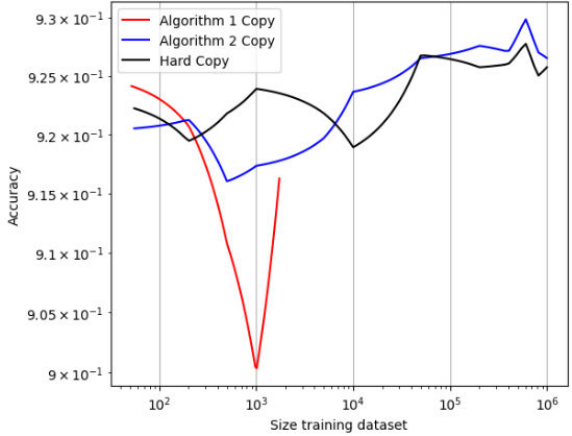}
		\caption{5-GB/MNN-$\mathcal{A}_{\mathcal{C}}$}
	\end{subfigure}
	\hfill
	\begin{subfigure}[t]{0.15\textwidth}
		\includegraphics[width=\linewidth]{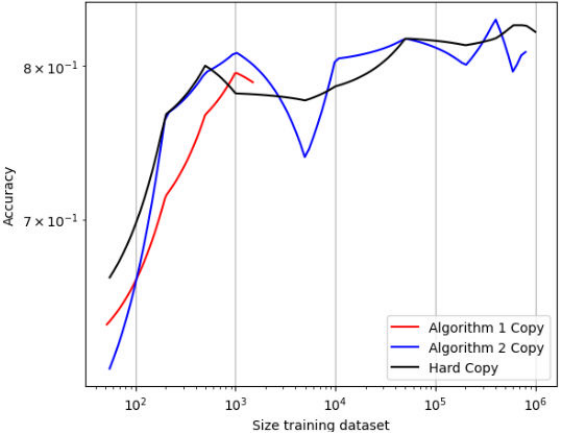}
		\caption{6-GB/MNN-$\mathcal{A}_{\mathcal{C}}$}
	\end{subfigure}
	
	\vspace{5mm} 
	
	\begin{subfigure}[t]{0.15\textwidth}
		\includegraphics[width=\linewidth]{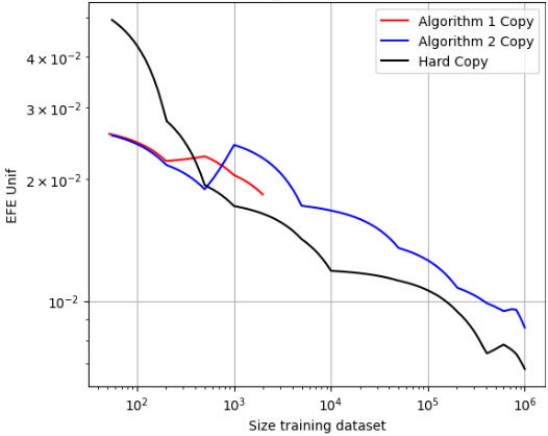}
		\caption{1-GB/LNN-$R^{\mathcal{F}}_{emp}$}
	\end{subfigure}
	\hfill
	\begin{subfigure}[t]{0.15\textwidth}
		\includegraphics[width=\linewidth]{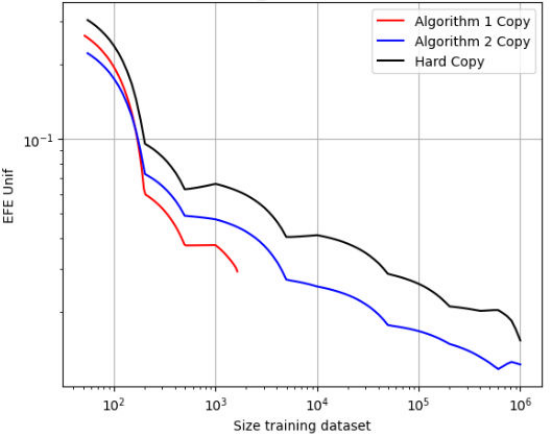}
		\caption{2-GB/LNN-$R^{\mathcal{F}}_{emp}$}
	\end{subfigure}
	\hfill
	\begin{subfigure}[t]{0.15\textwidth}
		\includegraphics[width=\linewidth]{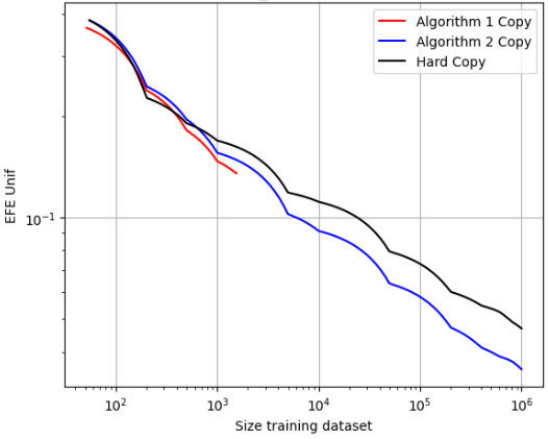}
		\caption{3-GB/LNN-$R^{\mathcal{F}}_{emp}$}
	\end{subfigure}
	\hfill
	\begin{subfigure}[t]{0.15\textwidth}
		\includegraphics[width=\linewidth]{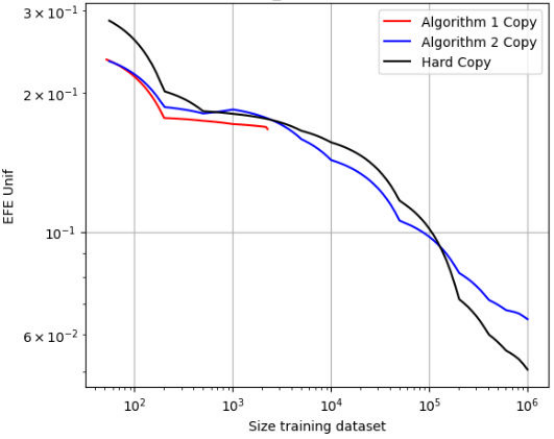}
		\caption{4-GB/LNN-$R^{\mathcal{F}}_{emp}$}
	\end{subfigure}
	\hfill
	\begin{subfigure}[t]{0.15\textwidth}
		\includegraphics[width=\linewidth]{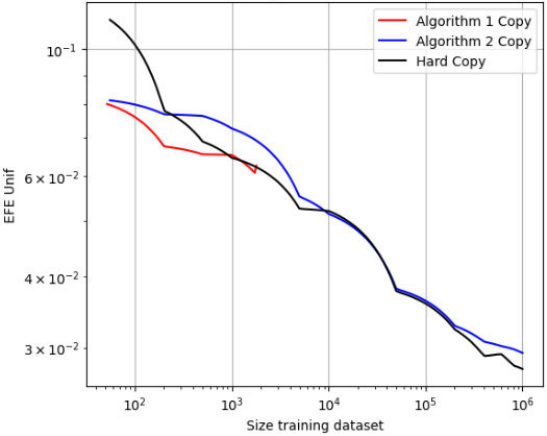}
		\caption{5-GB/LNN-$R^{\mathcal{F}}_{emp}$}
	\end{subfigure}
	\hfill
	\begin{subfigure}[t]{0.15\textwidth}
		\includegraphics[width=\linewidth]{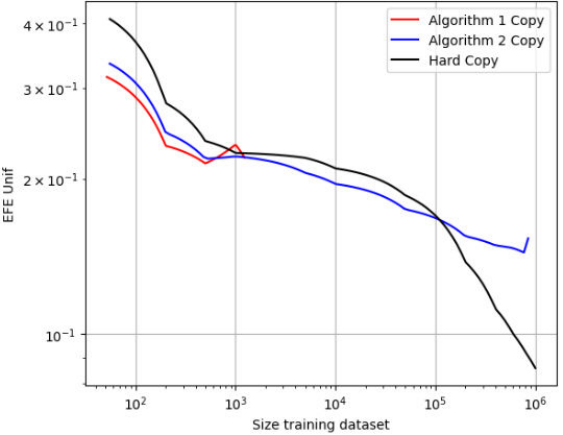}
		\caption{6-GB/LNN-$R^{\mathcal{F}}_{emp}$}
	\end{subfigure}
	
	\vspace{0.5mm}
	
	\begin{subfigure}[t]{0.15\textwidth}
		\includegraphics[width=\linewidth]{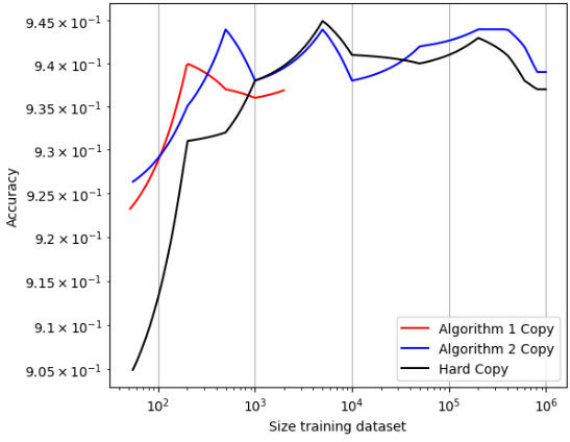}
		\caption{1-GB/LNN-$\mathcal{A}_{\mathcal{C}}$}
	\end{subfigure}
	\hfill
	\begin{subfigure}[t]{0.15\textwidth}
		\includegraphics[width=\linewidth]{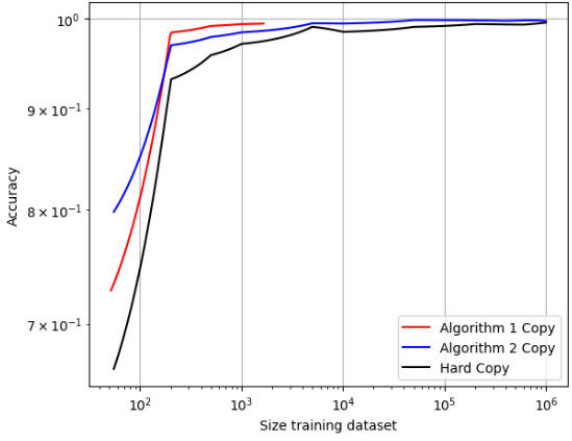}
		\caption{2-GB/LNN-$\mathcal{A}_{\mathcal{C}}$}
	\end{subfigure}
	\hfill
	\begin{subfigure}[t]{0.15\textwidth}
		\includegraphics[width=\linewidth]{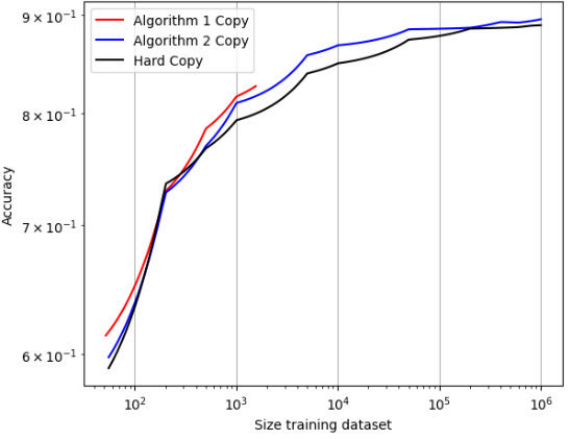}
		\caption{3-GB/LNN-$\mathcal{A}_{\mathcal{C}}$}
	\end{subfigure}
	\hfill
	\begin{subfigure}[t]{0.15\textwidth}
		\includegraphics[width=\linewidth]{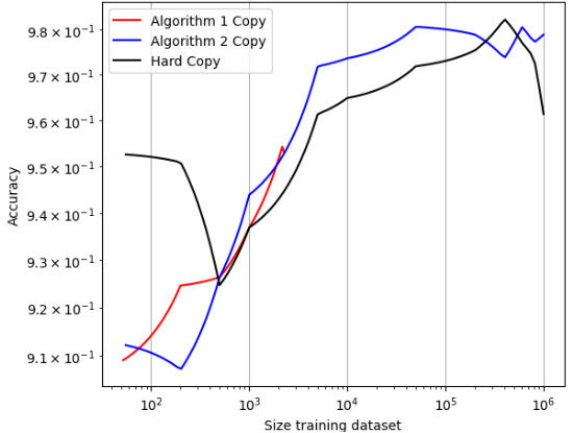}
		\caption{4-GB/LNN-$\mathcal{A}_{\mathcal{C}}$}
	\end{subfigure}
	\hfill
	\begin{subfigure}[t]{0.15\textwidth}
		\includegraphics[width=\linewidth]{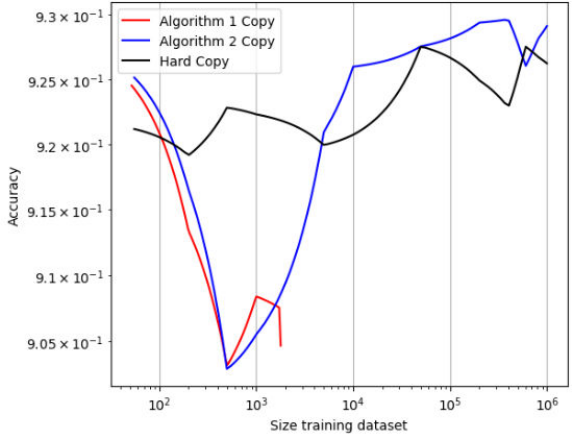}
		\caption{5-GB/LNN-$\mathcal{A}_{\mathcal{C}}$}
	\end{subfigure}
	\hfill
	\begin{subfigure}[t]{0.15\textwidth}
		\includegraphics[width=\linewidth]{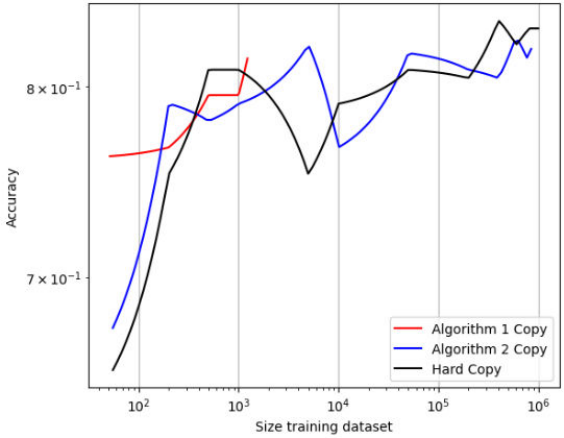}
		\caption{6-GB/LNN-$\mathcal{A}_{\mathcal{C}}$}
	\end{subfigure}
	
	\vspace{5mm}
	
	\begin{subfigure}[t]{0.15\textwidth}
		\includegraphics[width=\linewidth]{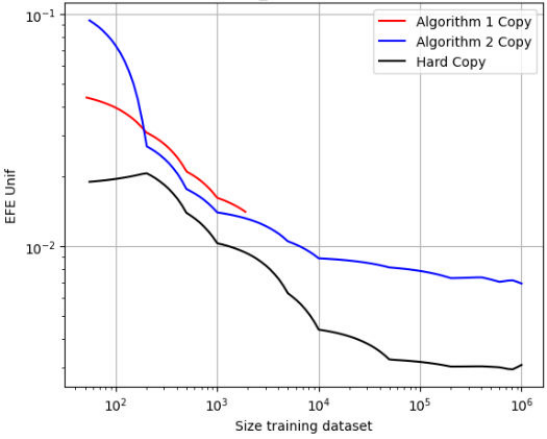}
		\caption{1-GB/GB-$R^{\mathcal{F}}_{emp}$}
	\end{subfigure}
	\hfill
	\begin{subfigure}[t]{0.15\textwidth}
		\includegraphics[width=\linewidth]{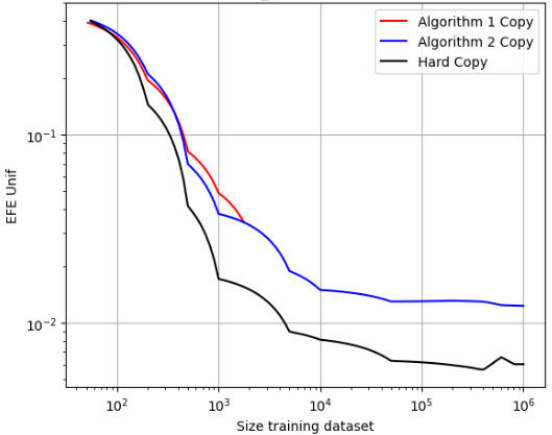}
		\caption{2-GB/GB-$R^{\mathcal{F}}_{emp}$}
	\end{subfigure}
	\hfill
	\begin{subfigure}[t]{0.15\textwidth}
		\includegraphics[width=\linewidth]{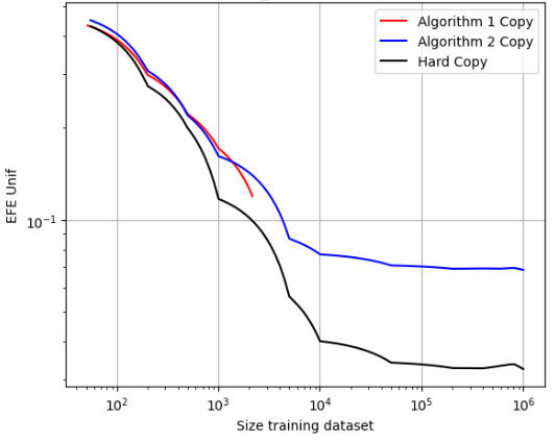}
		\caption{3-GB/GB-$R^{\mathcal{F}}_{emp}$}
	\end{subfigure}
	\hfill
	\begin{subfigure}[t]{0.15\textwidth}
		\includegraphics[width=\linewidth]{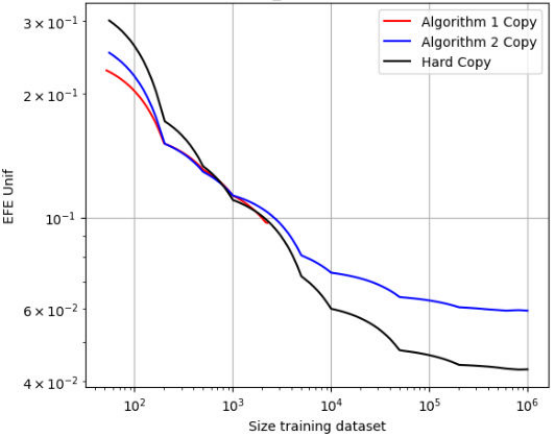}
		\caption{4-GB/GB-$R^{\mathcal{F}}_{emp}$}
	\end{subfigure}
	\hfill
	\begin{subfigure}[t]{0.15\textwidth}
		\includegraphics[width=\linewidth]{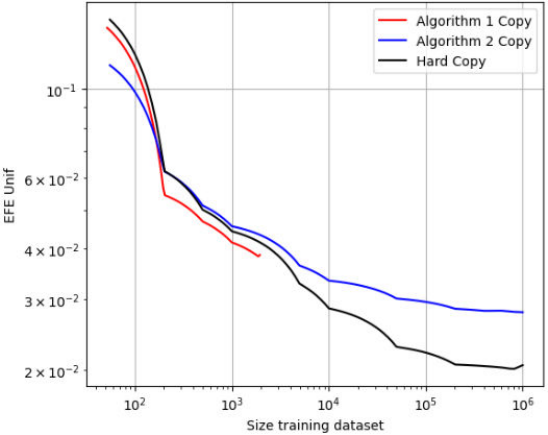}
		\caption{5-GB/GB-$R^{\mathcal{F}}_{emp}$}
	\end{subfigure}
	\hfill
	\begin{subfigure}[t]{0.15\textwidth}
		\includegraphics[width=\linewidth]{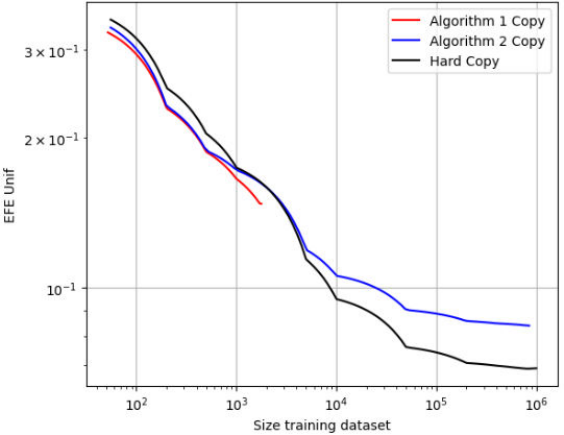}
		\caption{6-GB/GB-$R^{\mathcal{F}}_{emp}$}
	\end{subfigure}
	
	\vspace{0.5mm}
	
	\begin{subfigure}[t]{0.15\textwidth}
		\includegraphics[width=\linewidth]{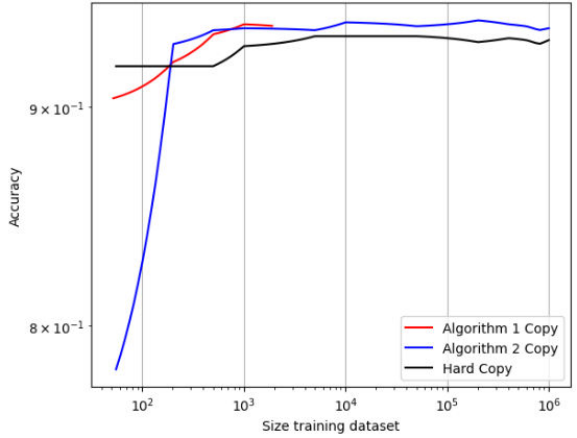}
		\caption{1-GB/GB-$\mathcal{A}_{\mathcal{C}}$}
	\end{subfigure}
	\hfill
	\begin{subfigure}[t]{0.15\textwidth}
		\includegraphics[width=\linewidth]{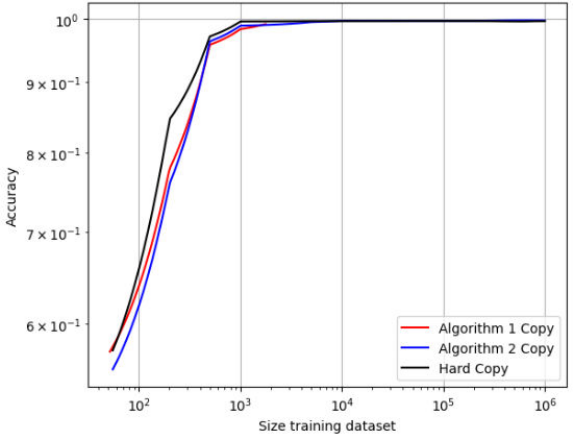}
		\caption{2-GB/GB-$\mathcal{A}_{\mathcal{C}}$}
	\end{subfigure}
	\hfill
	\begin{subfigure}[t]{0.15\textwidth}
		\includegraphics[width=\linewidth]{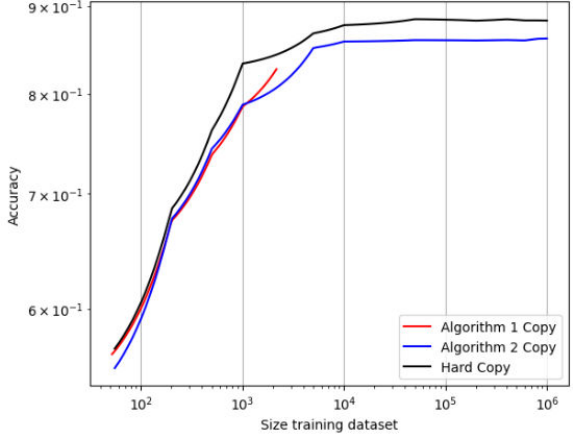}
		\caption{3-GB/GB-$\mathcal{A}_{\mathcal{C}}$}
	\end{subfigure}
	\hfill
	\begin{subfigure}[t]{0.15\textwidth}
		\includegraphics[width=\linewidth]{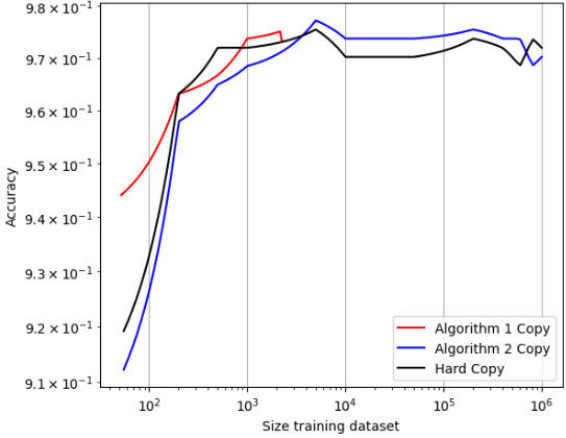}
		\caption{4-GB/GB-$\mathcal{A}_{\mathcal{C}}$}
	\end{subfigure}
	\hfill
	\begin{subfigure}[t]{0.15\textwidth}
		\includegraphics[width=\linewidth]{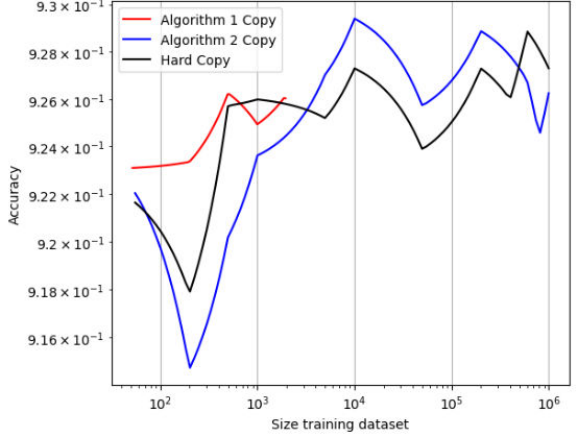}
		\caption{5-GB/GB-$\mathcal{A}_{\mathcal{C}}$}
	\end{subfigure}
	\hfill
	\begin{subfigure}[t]{0.15\textwidth}
		\includegraphics[width=\linewidth]{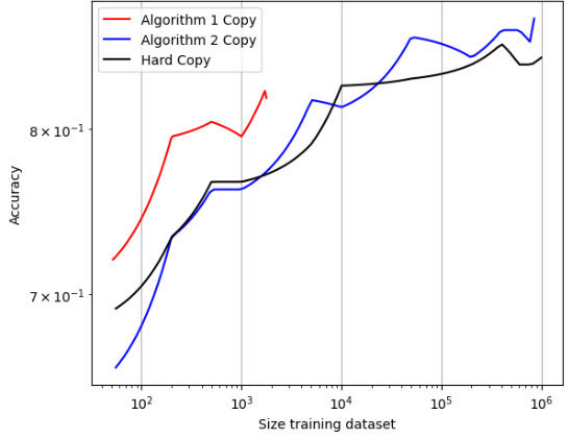}
		\caption{6-GB/GB-$\mathcal{A}_{\mathcal{C}}$}
	\end{subfigure}
\end{figure}


\begin{figure}[!ht]
	\centering
	\begin{subfigure}[t]{0.15\textwidth}
		\includegraphics[width=\linewidth]{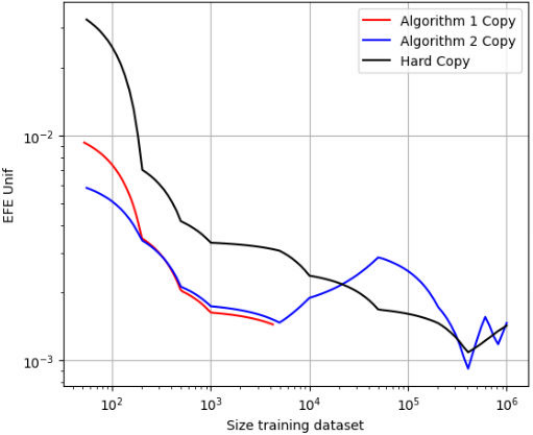}
		\caption{1-NN/SNN-$R^{\mathcal{F}}_{emp}$}
	\end{subfigure}
	\hfill
	\begin{subfigure}[t]{0.15\textwidth}
		\includegraphics[width=\linewidth]{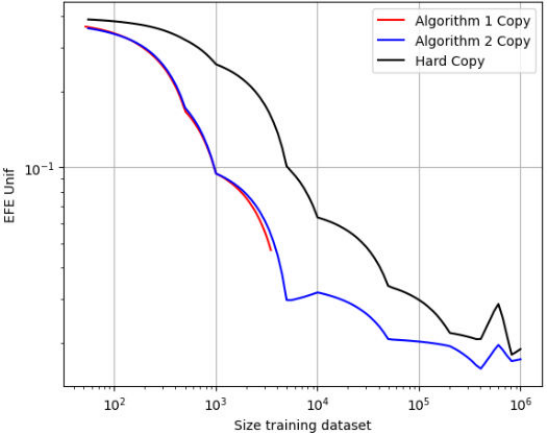}
		\caption{2-NN/SNN-$R^{\mathcal{F}}_{emp}$}
	\end{subfigure}
	\hfill
	\begin{subfigure}[t]{0.15\textwidth}
		\includegraphics[width=\linewidth]{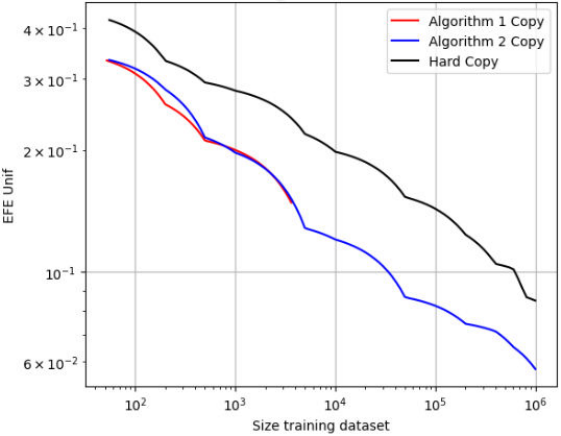}
		\caption{3-NN/SNN-$R^{\mathcal{F}}_{emp}$}
	\end{subfigure}
	\hfill
	\begin{subfigure}[t]{0.15\textwidth}
		\includegraphics[width=\linewidth]{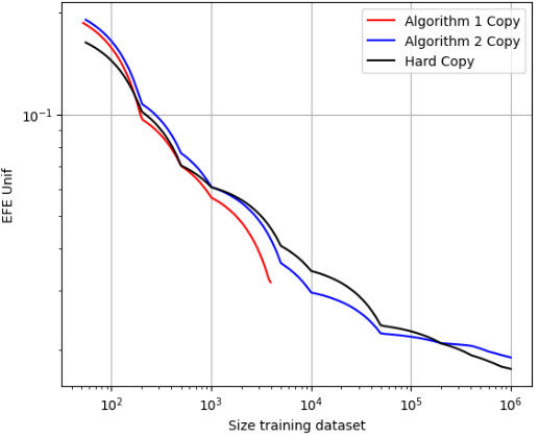}
		\caption{4-NN/SNN-$R^{\mathcal{F}}_{emp}$}
	\end{subfigure}
	\hfill
	\begin{subfigure}[t]{0.15\textwidth}
		\includegraphics[width=\linewidth]{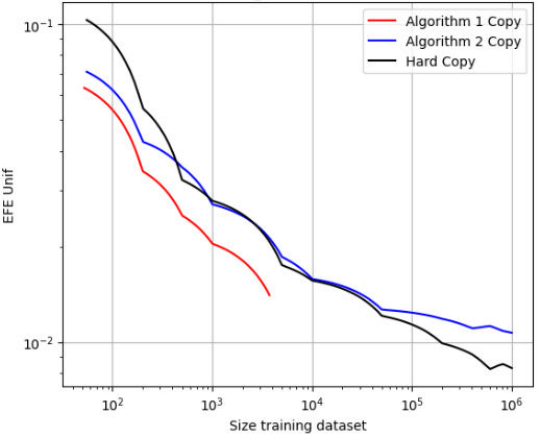}
		\caption{5-NN/SNN-$R^{\mathcal{F}}_{emp}$}
	\end{subfigure}
	\hfill
	\begin{subfigure}[t]{0.15\textwidth}
		\includegraphics[width=\linewidth]{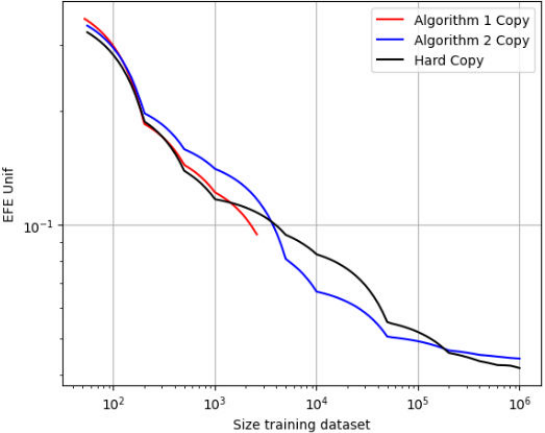}
		\caption{6-NN/SNN-$R^{\mathcal{F}}_{emp}$}
	\end{subfigure}
	
	\vspace{0.5mm}
	
	\begin{subfigure}[t]{0.15\textwidth}
		\includegraphics[width=\linewidth]{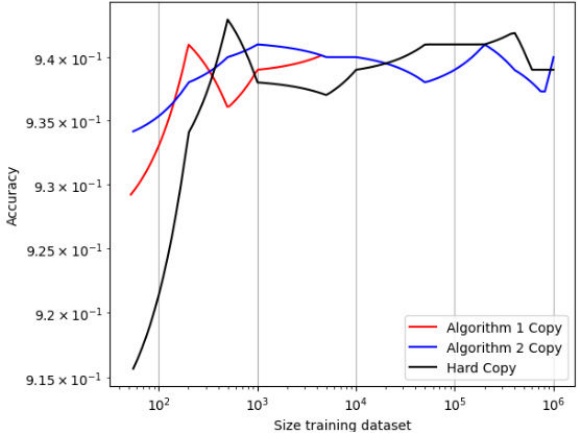}
		\caption{1-NN/SNN-$\mathcal{A}_{\mathcal{C}}$}
	\end{subfigure}
	\hfill
	\begin{subfigure}[t]{0.15\textwidth}
		\includegraphics[width=\linewidth]{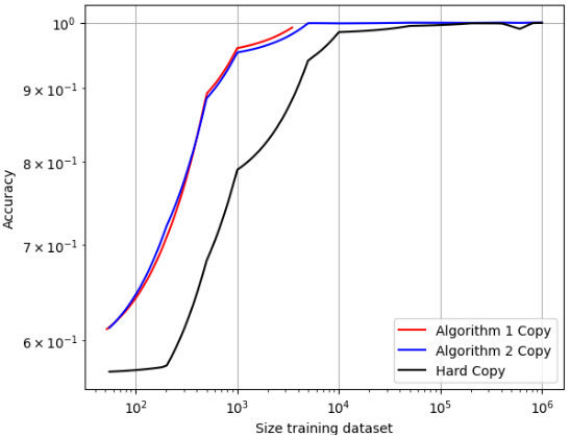}
		\caption{2-NN/SNN-$\mathcal{A}_{\mathcal{C}}$}
	\end{subfigure}
	\hfill
	\begin{subfigure}[t]{0.15\textwidth}
		\includegraphics[width=\linewidth]{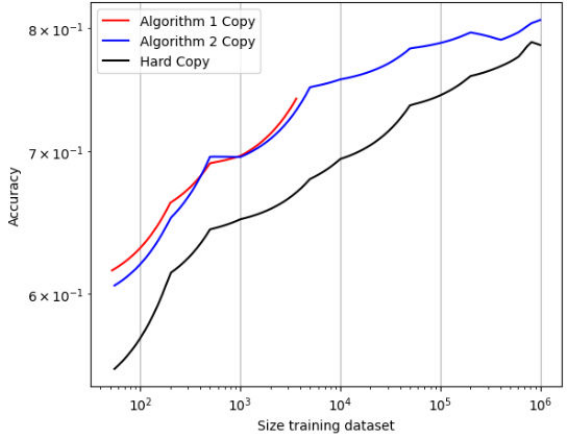}
		\caption{3-NN/SNN-$\mathcal{A}_{\mathcal{C}}$}
	\end{subfigure}
	\hfill
	\begin{subfigure}[t]{0.15\textwidth}
		\includegraphics[width=\linewidth]{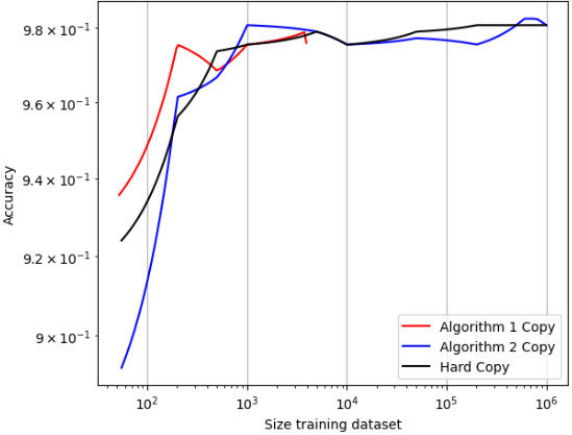}
		\caption{4-NN/SNN-$\mathcal{A}_{\mathcal{C}}$}
	\end{subfigure}
	\hfill
	\begin{subfigure}[t]{0.15\textwidth}
		\includegraphics[width=\linewidth]{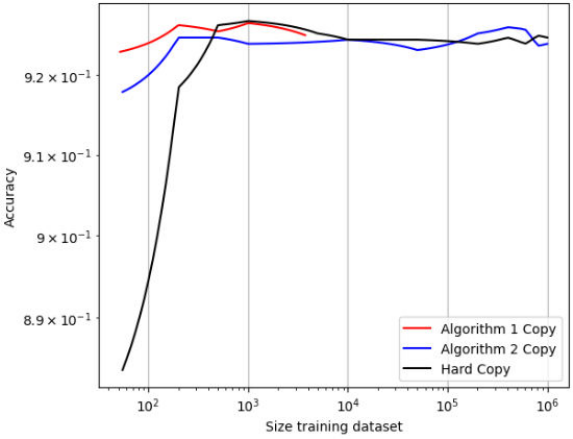}
		\caption{5-NN/SNN-$\mathcal{A}_{\mathcal{C}}$}
	\end{subfigure}
	\hfill
	\begin{subfigure}[t]{0.15\textwidth}
		\includegraphics[width=\linewidth]{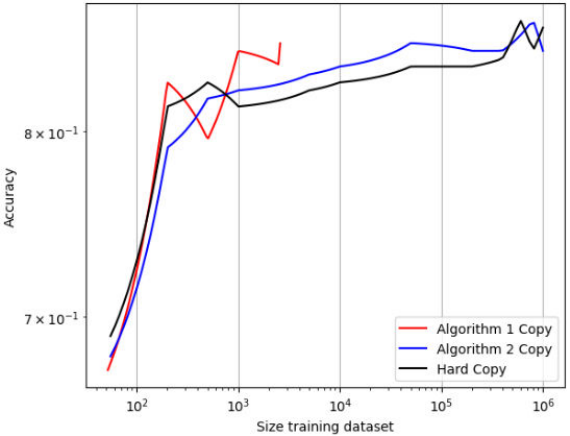}
		\caption{6-NN/SNN-$\mathcal{A}_{\mathcal{C}}$}
	\end{subfigure}
	
	\vspace{5mm}
	
	\begin{subfigure}[t]{0.15\textwidth}
		\includegraphics[width=\linewidth]{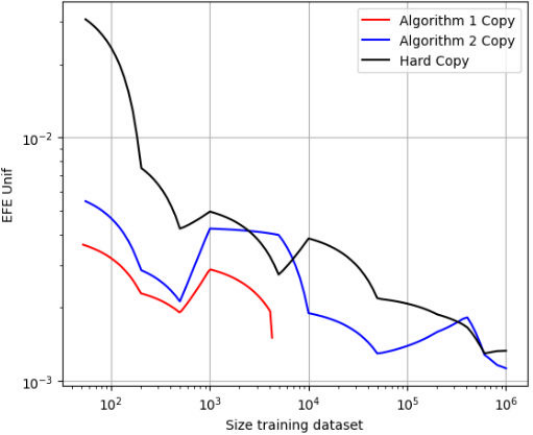}
		\caption{1-NN/MNN-$R^{\mathcal{F}}_{emp}$}
	\end{subfigure}
	\hfill
	\begin{subfigure}[t]{0.15\textwidth}
		\includegraphics[width=\linewidth]{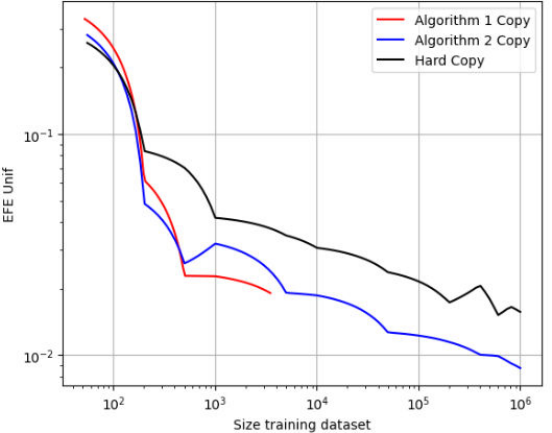}
		\caption{2-NN/MNN-$R^{\mathcal{F}}_{emp}$}
	\end{subfigure}
	\hfill
	\begin{subfigure}[t]{0.15\textwidth}
		\includegraphics[width=\linewidth]{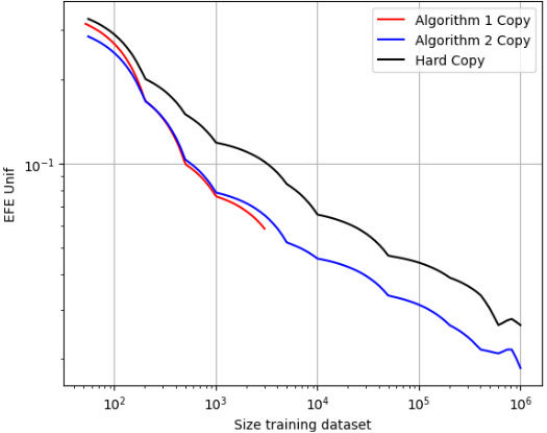}
		\caption{3-NN/MNN-$R^{\mathcal{F}}_{emp}$}
	\end{subfigure}
	\hfill
	\begin{subfigure}[t]{0.15\textwidth}
		\includegraphics[width=\linewidth]{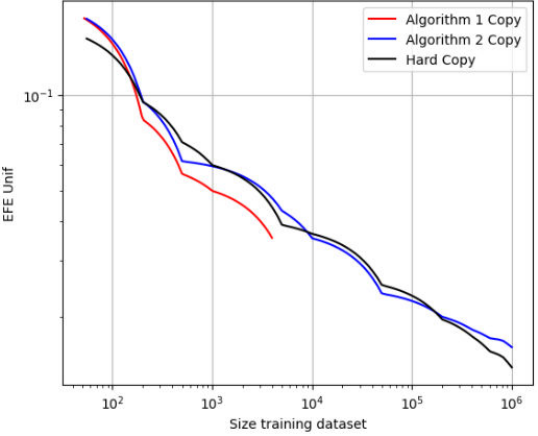}
		\caption{4-NN/MNN-$R^{\mathcal{F}}_{emp}$}
	\end{subfigure}
	\hfill
	\begin{subfigure}[t]{0.15\textwidth}
		\includegraphics[width=\linewidth]{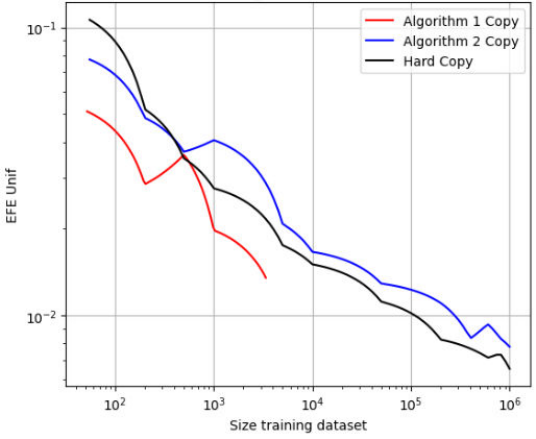}
		\caption{5-NN/MNN-$R^{\mathcal{F}}_{emp}$}
	\end{subfigure}
	\hfill
	\begin{subfigure}[t]{0.15\textwidth}
		\includegraphics[width=\linewidth]{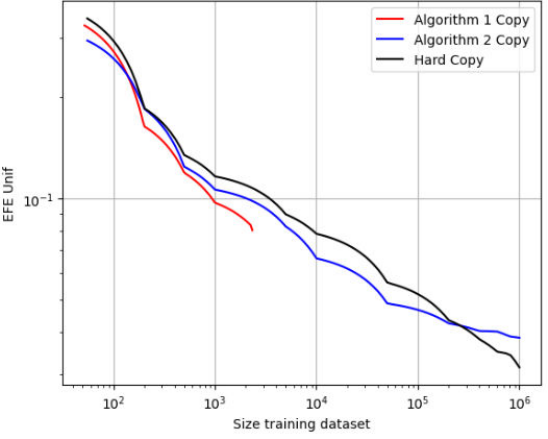}
		\caption{6-NN/MNN-$R^{\mathcal{F}}_{emp}$}
	\end{subfigure}
	
	\vspace{0.5mm}
	
	\begin{subfigure}[t]{0.15\textwidth}
		\includegraphics[width=\linewidth]{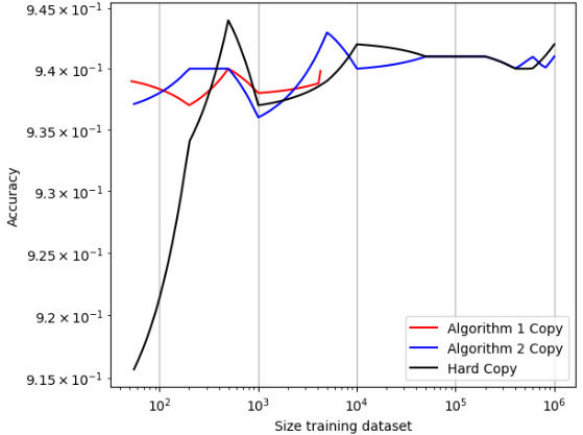}
		\caption{1-NN/MNN-$\mathcal{A}_{\mathcal{C}}$}
	\end{subfigure}
	\hfill
	\begin{subfigure}[t]{0.15\textwidth}
		\includegraphics[width=\linewidth]{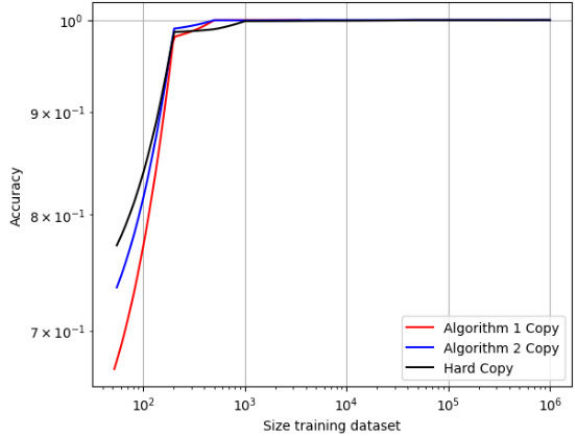}
		\caption{2-NN/MNN-$\mathcal{A}_{\mathcal{C}}$}
	\end{subfigure}
	\hfill
	\begin{subfigure}[t]{0.15\textwidth}
		\includegraphics[width=\linewidth]{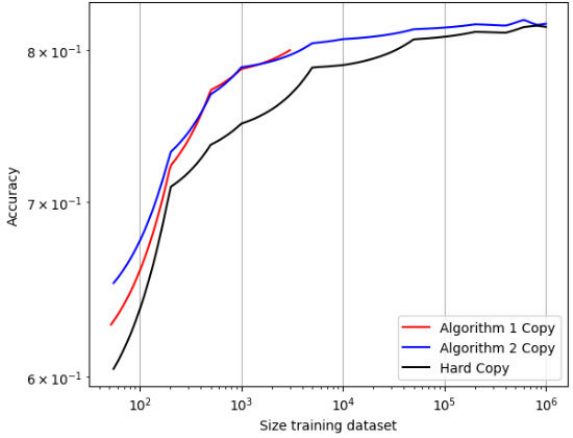}
		\caption{3-NN/MNN-$\mathcal{A}_{\mathcal{C}}$}
	\end{subfigure}
	\hfill
	\begin{subfigure}[t]{0.15\textwidth}
		\includegraphics[width=\linewidth]{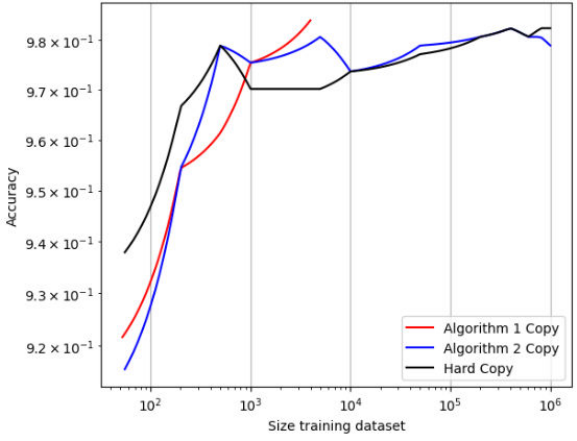}
		\caption{4-NN/MNN-$\mathcal{A}_{\mathcal{C}}$}
	\end{subfigure}
	\hfill
	\begin{subfigure}[t]{0.15\textwidth}
		\includegraphics[width=\linewidth]{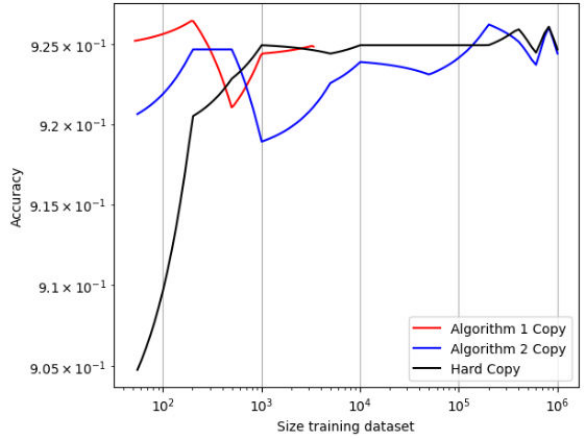}
		\caption{5-NN/MNN-$\mathcal{A}_{\mathcal{C}}$}
	\end{subfigure}
	\hfill
	\begin{subfigure}[t]{0.15\textwidth}
		\includegraphics[width=\linewidth]{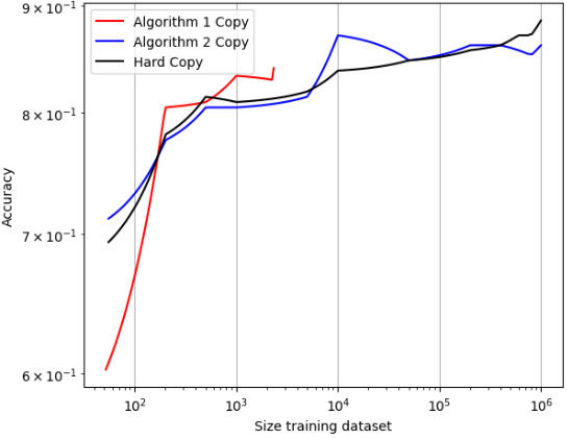}
		\caption{6-NN/MNN-$\mathcal{A}_{\mathcal{C}}$}
	\end{subfigure}
	
	\vspace{5mm} 
	
	\begin{subfigure}[t]{0.15\textwidth}
		\includegraphics[width=\linewidth]{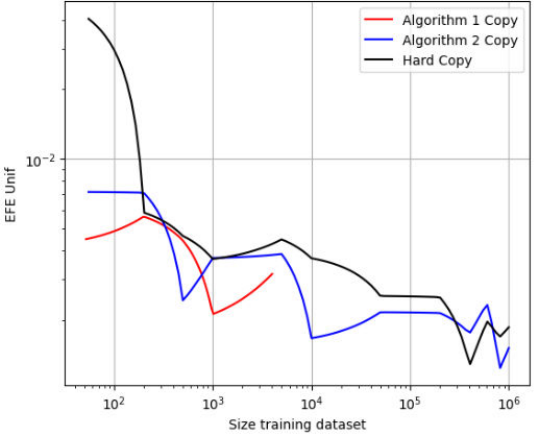}
		\caption{1-NN/LNN-$R^{\mathcal{F}}_{emp}$}
	\end{subfigure}
	\hfill
	\begin{subfigure}[t]{0.15\textwidth}
		\includegraphics[width=\linewidth]{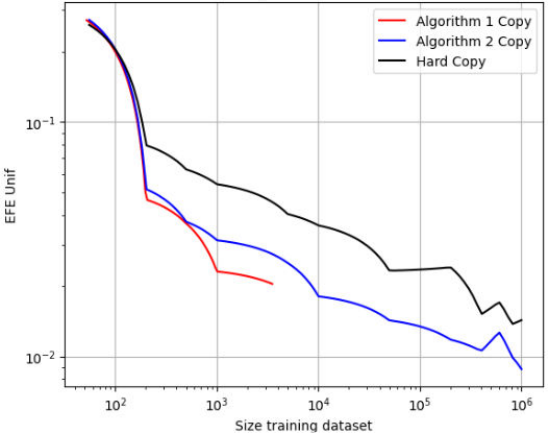}
		\caption{2-NN/LNN-$R^{\mathcal{F}}_{emp}$}
	\end{subfigure}
	\hfill
	\begin{subfigure}[t]{0.15\textwidth}
		\includegraphics[width=\linewidth]{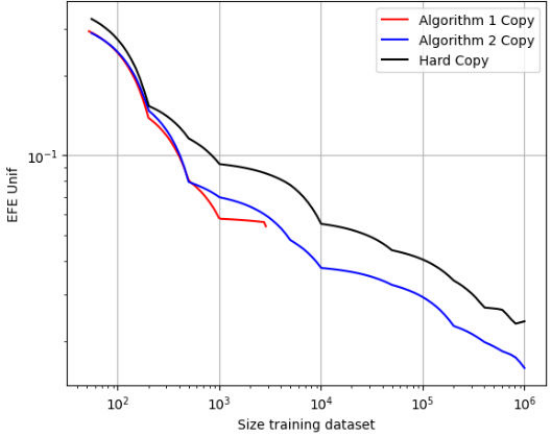}
		\caption{3-NN/LNN-$R^{\mathcal{F}}_{emp}$}
	\end{subfigure}
	\hfill
	\begin{subfigure}[t]{0.15\textwidth}
		\includegraphics[width=\linewidth]{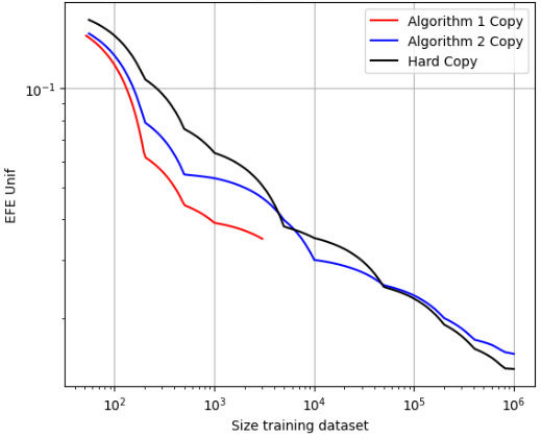}
		\caption{4-NN/LNN-$R^{\mathcal{F}}_{emp}$}
	\end{subfigure}
	\hfill
	\begin{subfigure}[t]{0.15\textwidth}
		\includegraphics[width=\linewidth]{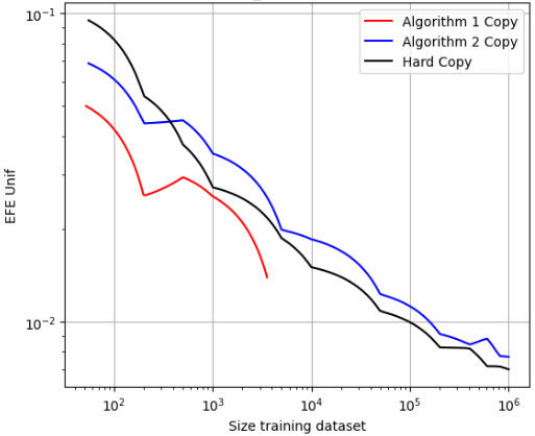}
		\caption{5-NN/LNN-$R^{\mathcal{F}}_{emp}$}
	\end{subfigure}
	\hfill
	\begin{subfigure}[t]{0.15\textwidth}
		\includegraphics[width=\linewidth]{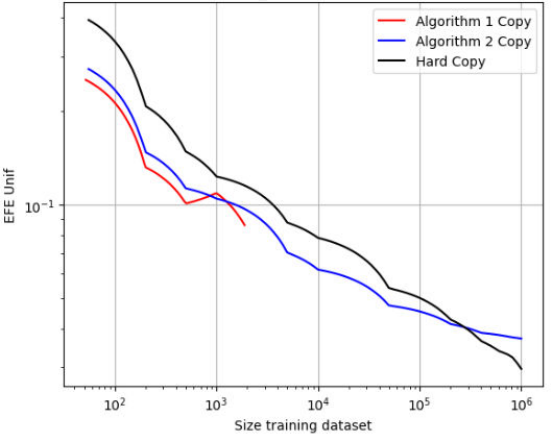}
		\caption{6-NN/LNN-$R^{\mathcal{F}}_{emp}$}
	\end{subfigure}
	
	\vspace{0.4mm}
	
	\begin{subfigure}[t]{0.15\textwidth}
		\includegraphics[width=\linewidth]{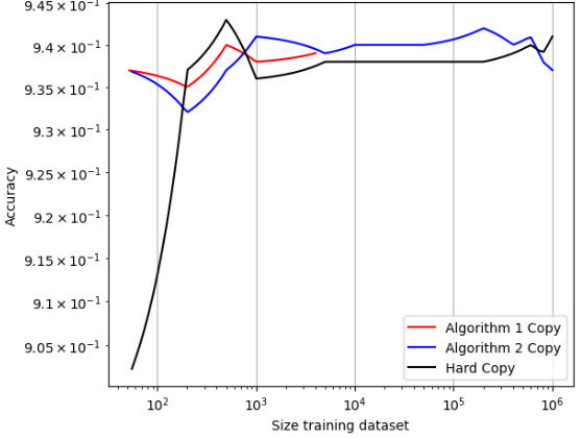}
		\caption{1-NN/LNN-$\mathcal{A}_{\mathcal{C}}$}
	\end{subfigure}
	\hfill
	\begin{subfigure}[t]{0.15\textwidth}
		\includegraphics[width=\linewidth]{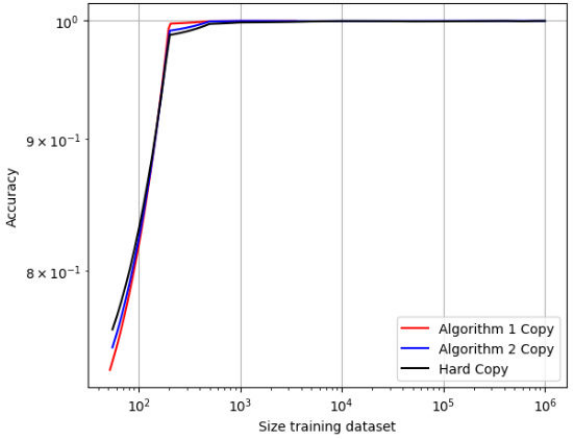}
		\caption{2-NN/LNN-$\mathcal{A}_{\mathcal{C}}$}
	\end{subfigure}
	\hfill
	\begin{subfigure}[t]{0.15\textwidth}
		\includegraphics[width=\linewidth]{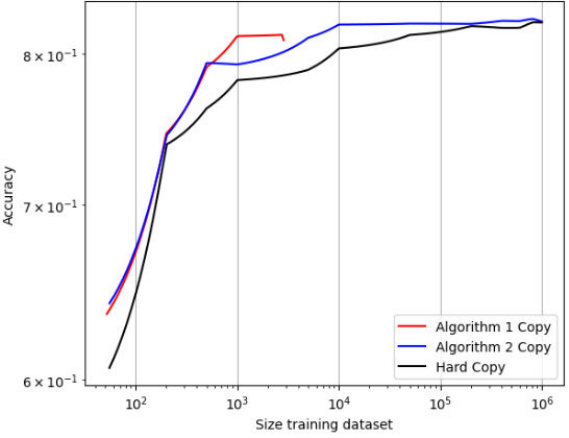}
		\caption{3-NN/LNN-$\mathcal{A}_{\mathcal{C}}$}
	\end{subfigure}
	\hfill
	\begin{subfigure}[t]{0.15\textwidth}
		\includegraphics[width=\linewidth]{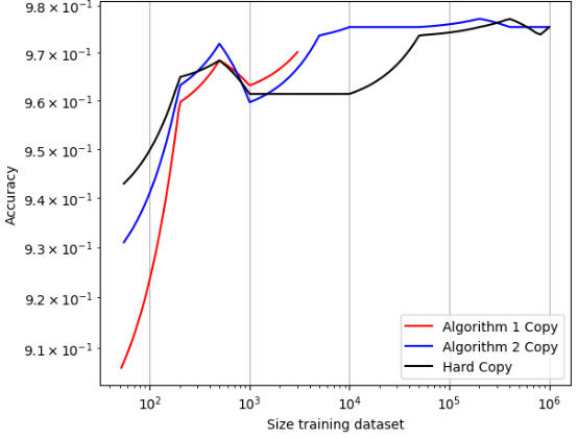}
		\caption{4-NN/LNN-$\mathcal{A}_{\mathcal{C}}$}
	\end{subfigure}
	\hfill
	\begin{subfigure}[t]{0.15\textwidth}
		\includegraphics[width=\linewidth]{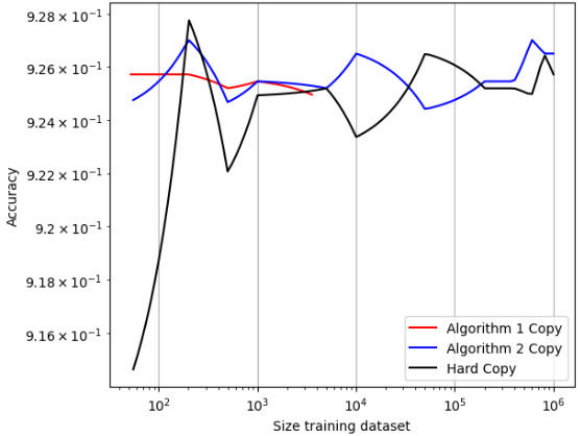}
		\caption{5-NN/LNN-$\mathcal{A}_{\mathcal{C}}$}
	\end{subfigure}
	\hfill
	\begin{subfigure}[t]{0.15\textwidth}
		\includegraphics[width=\linewidth]{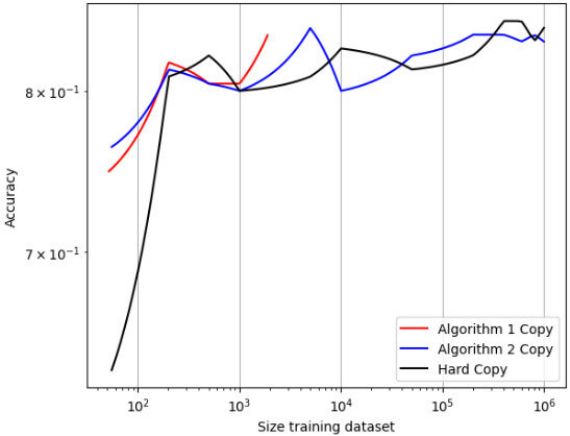}
		\caption{6-NN/LNN-$\mathcal{A}_{\mathcal{C}}$}
	\end{subfigure}
	
	\vspace{4.5mm}
	
	\begin{subfigure}[t]{0.15\textwidth}
		\includegraphics[width=\linewidth]{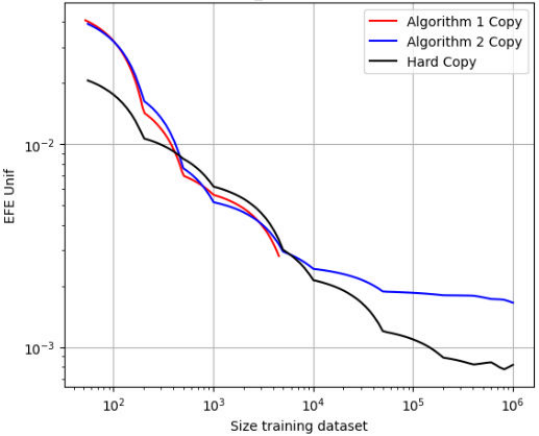}
		\caption{1-NN/GB-$R^{\mathcal{F}}_{emp}$}
	\end{subfigure}
	\hfill
	\begin{subfigure}[t]{0.15\textwidth}
		\includegraphics[width=\linewidth]{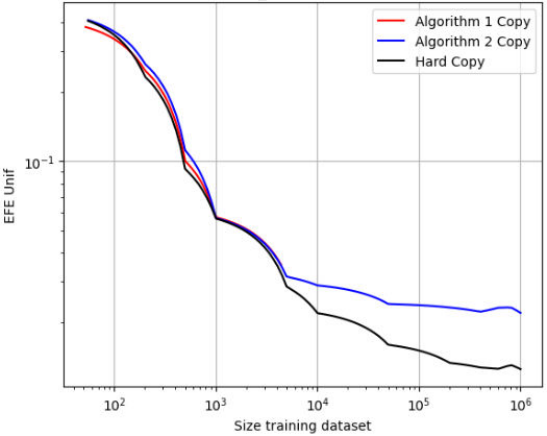}
		\caption{2-NN/GB-$R^{\mathcal{F}}_{emp}$}
	\end{subfigure}
	\hfill
	\begin{subfigure}[t]{0.15\textwidth}
		\includegraphics[width=\linewidth]{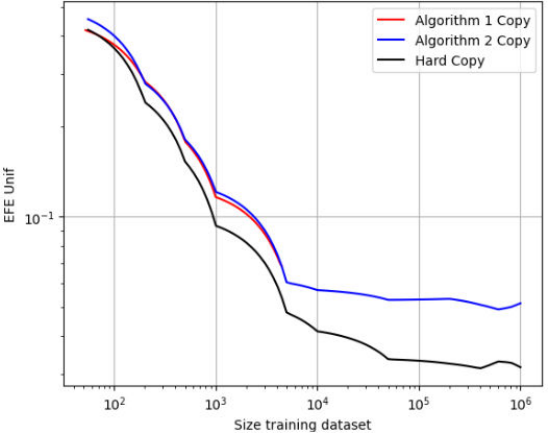}
		\caption{3-NN/GB-$R^{\mathcal{F}}_{emp}$}
	\end{subfigure}
	\hfill
	\begin{subfigure}[t]{0.15\textwidth}
		\includegraphics[width=\linewidth]{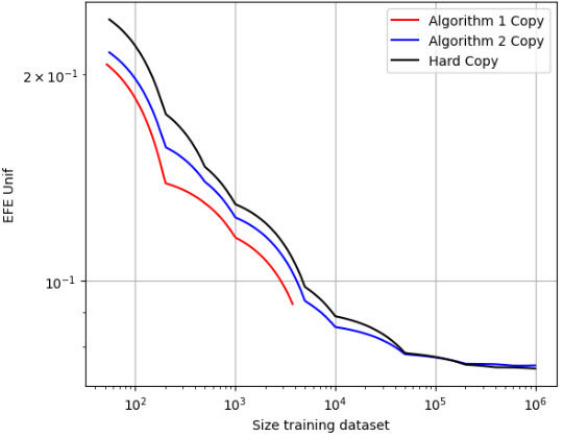}
		\caption{4-NN/GB-$R^{\mathcal{F}}_{emp}$}
	\end{subfigure}
	\hfill
	\begin{subfigure}[t]{0.15\textwidth}
		\includegraphics[width=\linewidth]{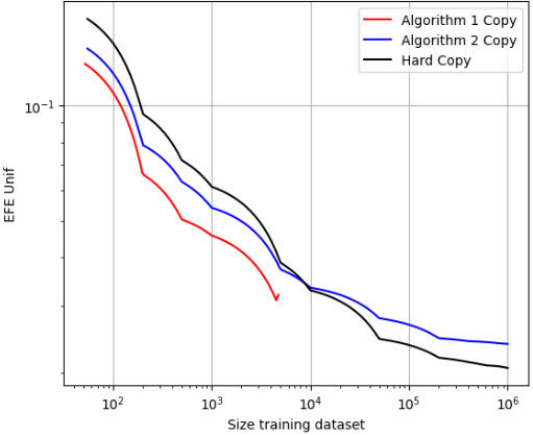}
		\caption{5-NN/GB-$R^{\mathcal{F}}_{emp}$}
	\end{subfigure}
	\hfill
	\begin{subfigure}[t]{0.15\textwidth}
		\includegraphics[width=\linewidth]{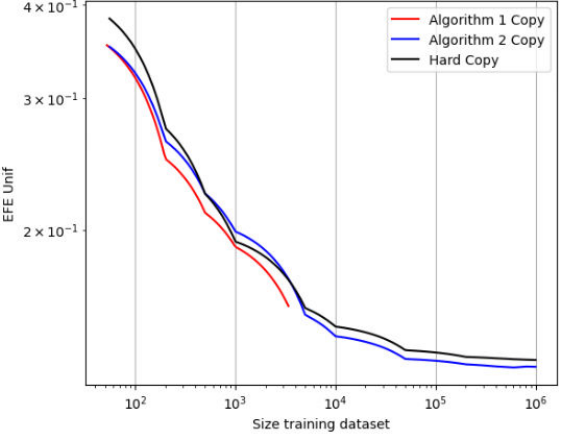}
		\caption{6-NN/GB-$R^{\mathcal{F}}_{emp}$}
	\end{subfigure}
	
	\vspace{0.4mm}
	
	\begin{subfigure}[t]{0.15\textwidth}
		\includegraphics[width=\linewidth]{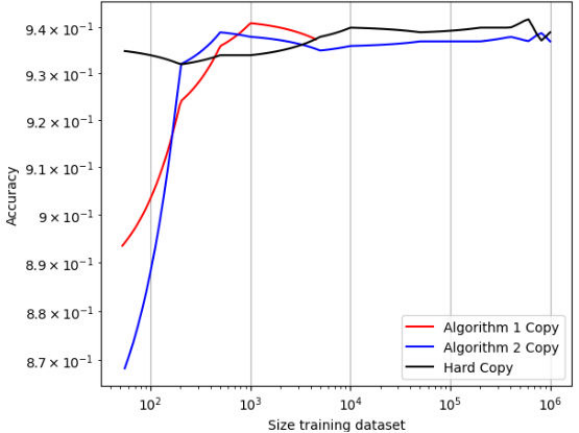}
		\caption{1-NN/GB-$\mathcal{A}_{\mathcal{C}}$}
	\end{subfigure}
	\hfill
	\begin{subfigure}[t]{0.15\textwidth}
		\includegraphics[width=\linewidth]{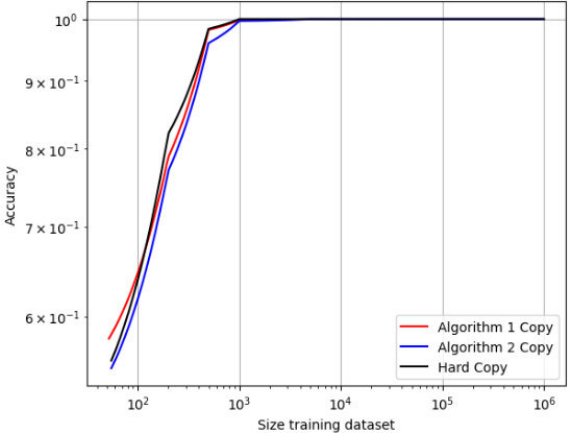}
		\caption{2-NN/GB-$\mathcal{A}_{\mathcal{C}}$}
	\end{subfigure}
	\hfill
	\begin{subfigure}[t]{0.15\textwidth}
		\includegraphics[width=\linewidth]{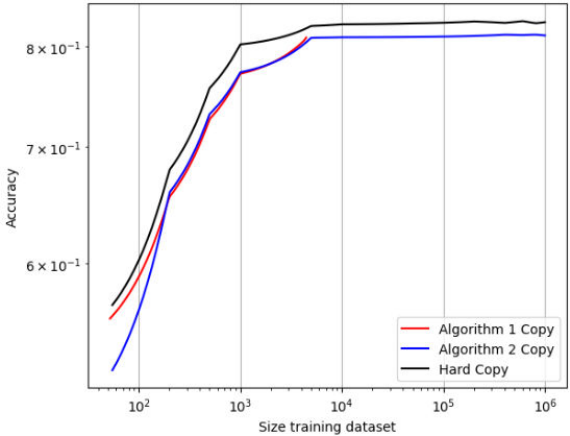}
		\caption{3-NN/GB-$\mathcal{A}_{\mathcal{C}}$}
	\end{subfigure}
	\hfill
	\begin{subfigure}[t]{0.15\textwidth}
		\includegraphics[width=\linewidth]{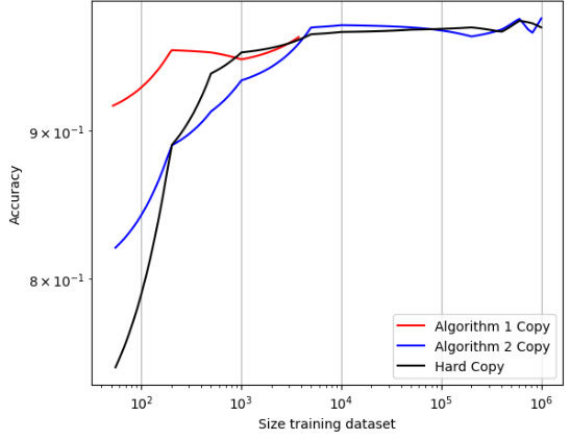}
		\caption{-NN/GB-$\mathcal{A}_{\mathcal{C}}$}
	\end{subfigure}
	\hfill
	\begin{subfigure}[t]{0.15\textwidth}
		\includegraphics[width=\linewidth]{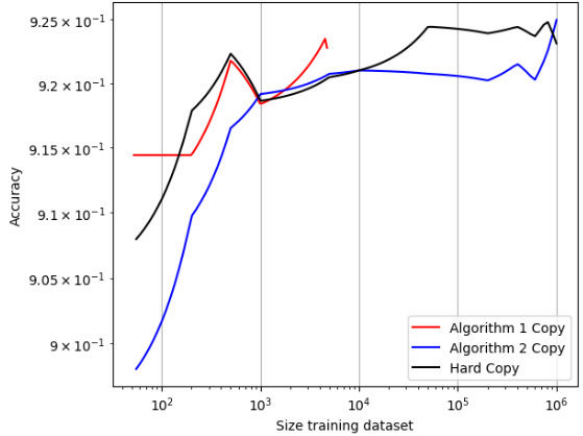}
		\caption{5-NN/GB-$\mathcal{A}_{\mathcal{C}}$}
	\end{subfigure}
	\hfill
	\begin{subfigure}[t]{0.15\textwidth}
		\includegraphics[width=\linewidth]{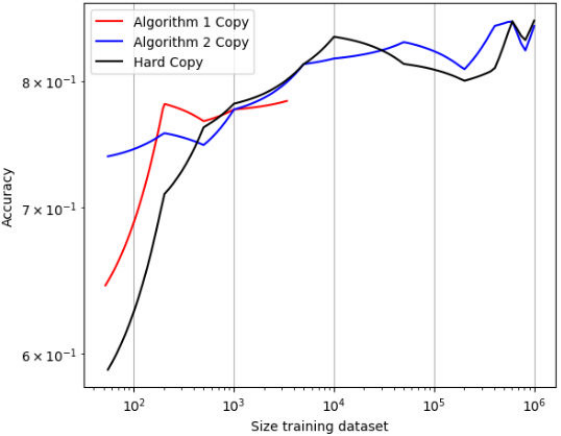}
		\caption{6-NN/GB-$\mathcal{A}_{\mathcal{C}}$}
	\end{subfigure}
\end{figure}

\clearpage
\newpage
\section{Evolution of Fidelity and Accuracy Across Different Values of $\alpha$}
\label{appExp2Plots}

\begin{figure}[!ht]
	\centering
	\begin{subfigure}[t]{0.14\textheight}
		\includegraphics[width=\linewidth]{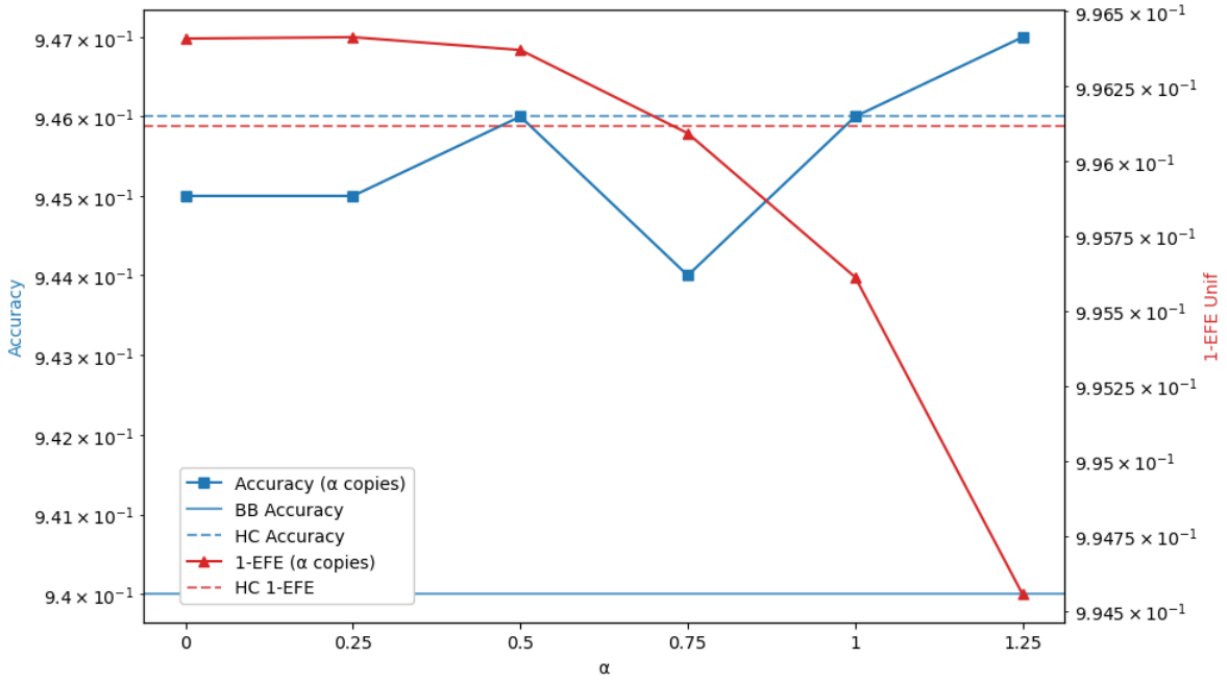}
		\caption{1-RF/SNN}
	\end{subfigure}
	\hfill
	\begin{subfigure}[t]{0.14\textheight}
		\includegraphics[width=\linewidth]{Figures/Figure_6/1-RF-MNN.pdf}
		\caption{1-RF/MNN}
	\end{subfigure}
	\hfill
	\begin{subfigure}[t]{0.14\textheight}
		\includegraphics[width=\linewidth]{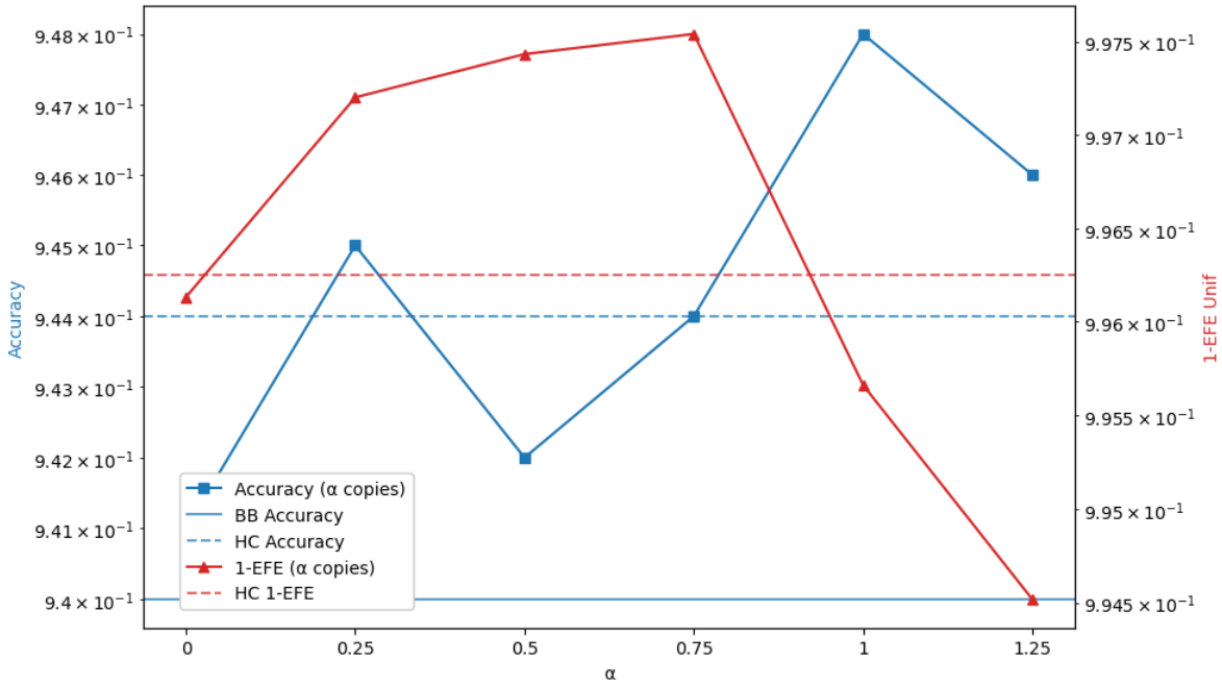}
		\caption{1-RF/LNN}
	\end{subfigure}
	\hfill
	\begin{subfigure}[t]{0.14\textheight}
		\includegraphics[width=\linewidth]{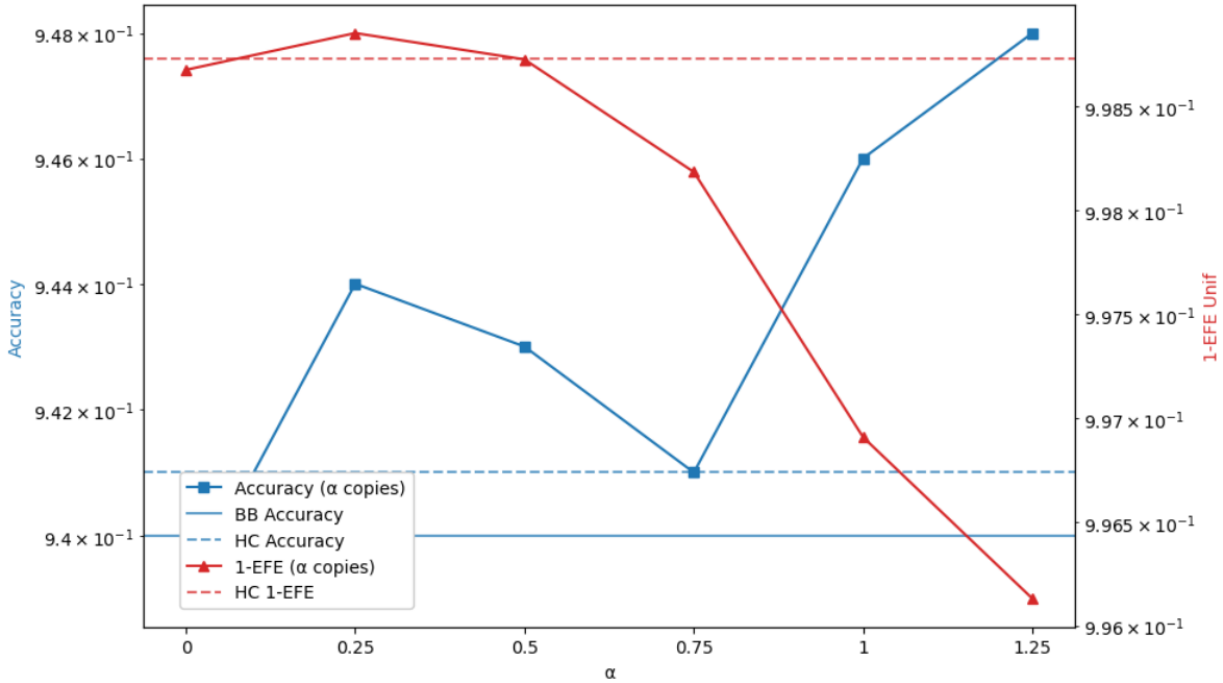}
		\caption{1-RF/GB}
	\end{subfigure}

	\begin{subfigure}[t]{0.14\textheight}
		\includegraphics[width=\linewidth]{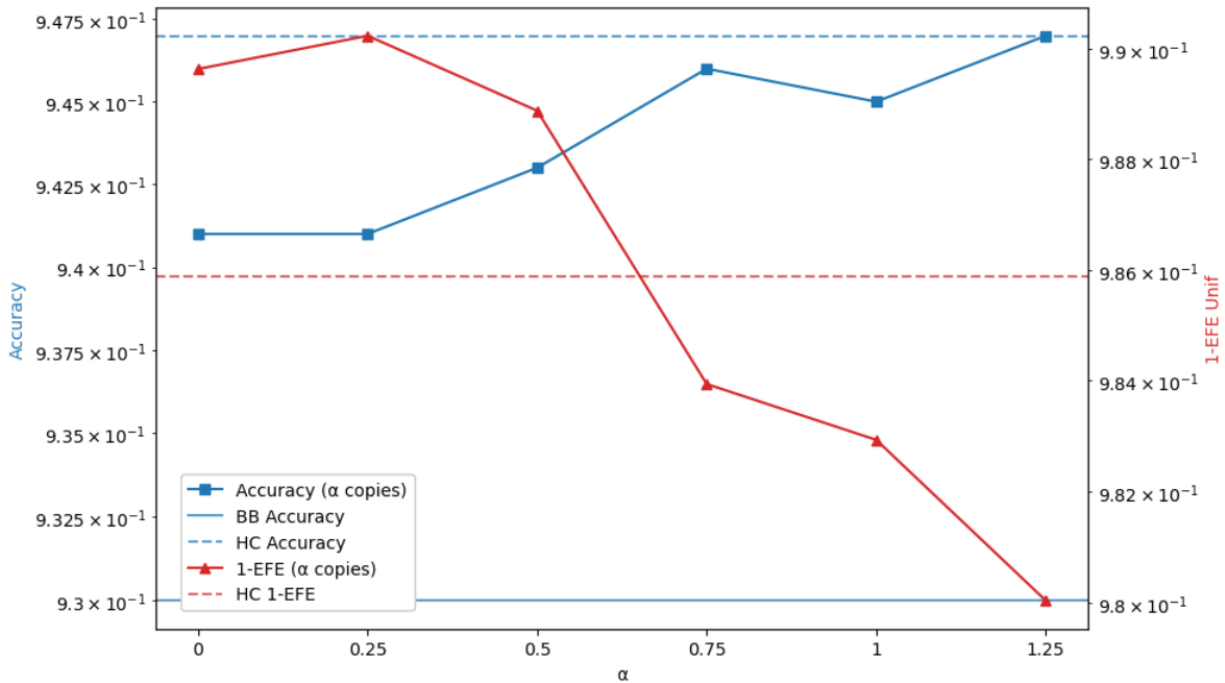}
		\caption{1-GB/SNN}
	\end{subfigure}
	\hfill
	\begin{subfigure}[t]{0.14\textheight}
		\includegraphics[width=\linewidth]{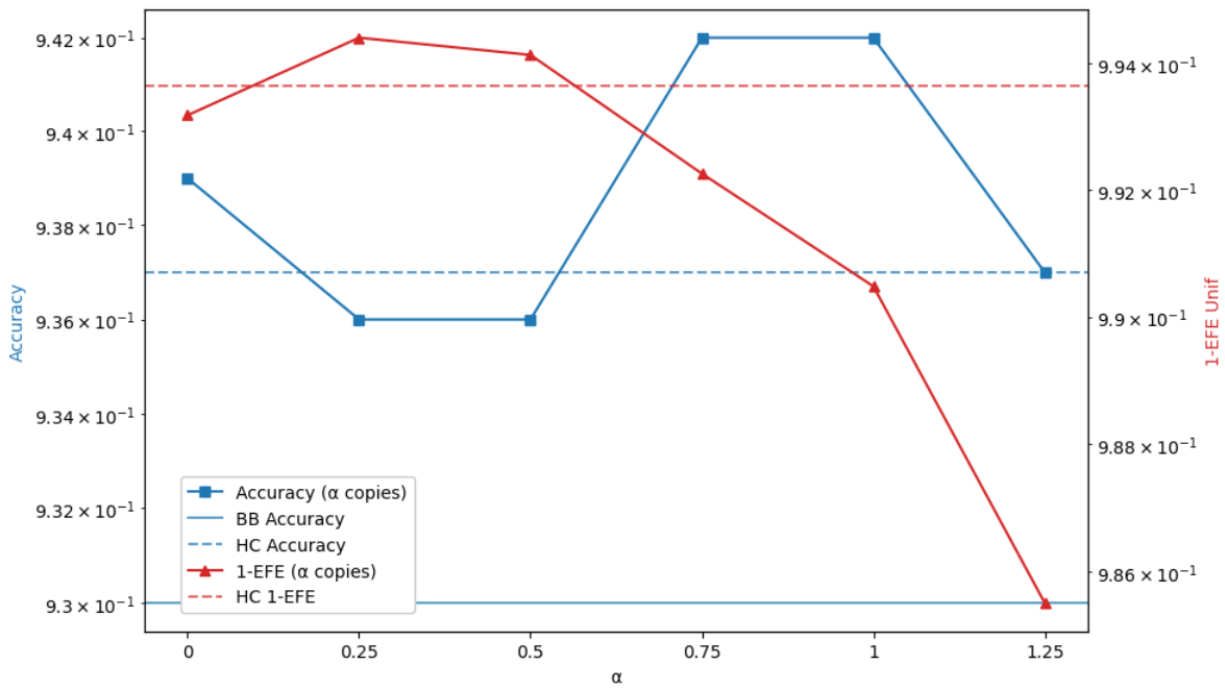}
		\caption{1-GB/MNN}
	\end{subfigure}
	\hfill
	\begin{subfigure}[t]{0.14\textheight}
		\includegraphics[width=\linewidth]{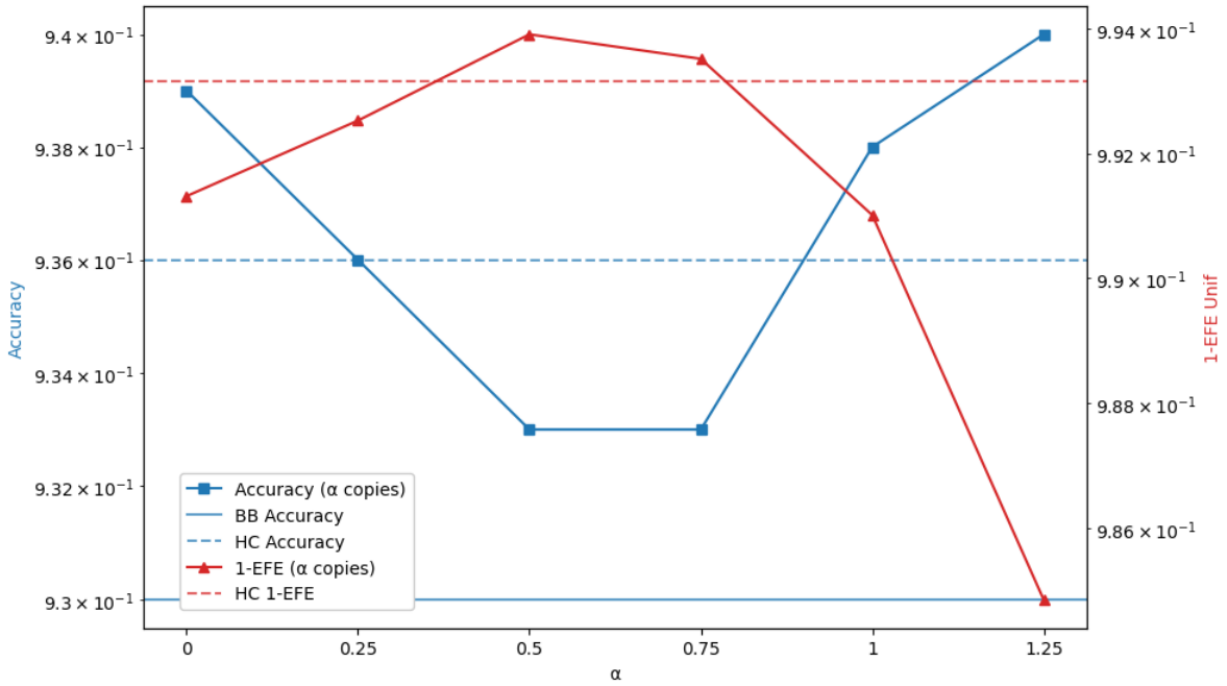}
		\caption{1-GB/LNN}
	\end{subfigure}
	\hfill
	\begin{subfigure}[t]{0.14\textheight}
		\includegraphics[width=\linewidth]{Figures/Figure_6/1-GB-GB.pdf}
		\caption{1-GB/GB}
	\end{subfigure}

	\begin{subfigure}[t]{0.14\textheight}
		\includegraphics[width=\linewidth]{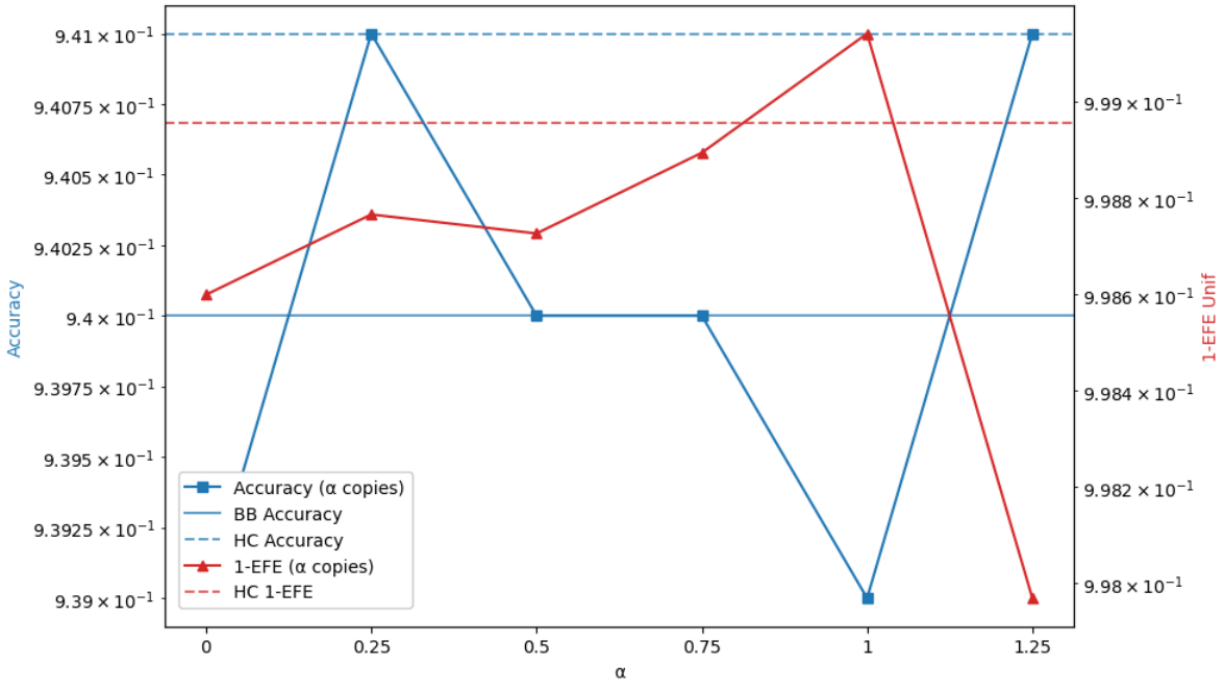}
		\caption{1-NN/SNN}
	\end{subfigure}
	\hfill
	\begin{subfigure}[t]{0.14\textheight}
		\includegraphics[width=\linewidth]{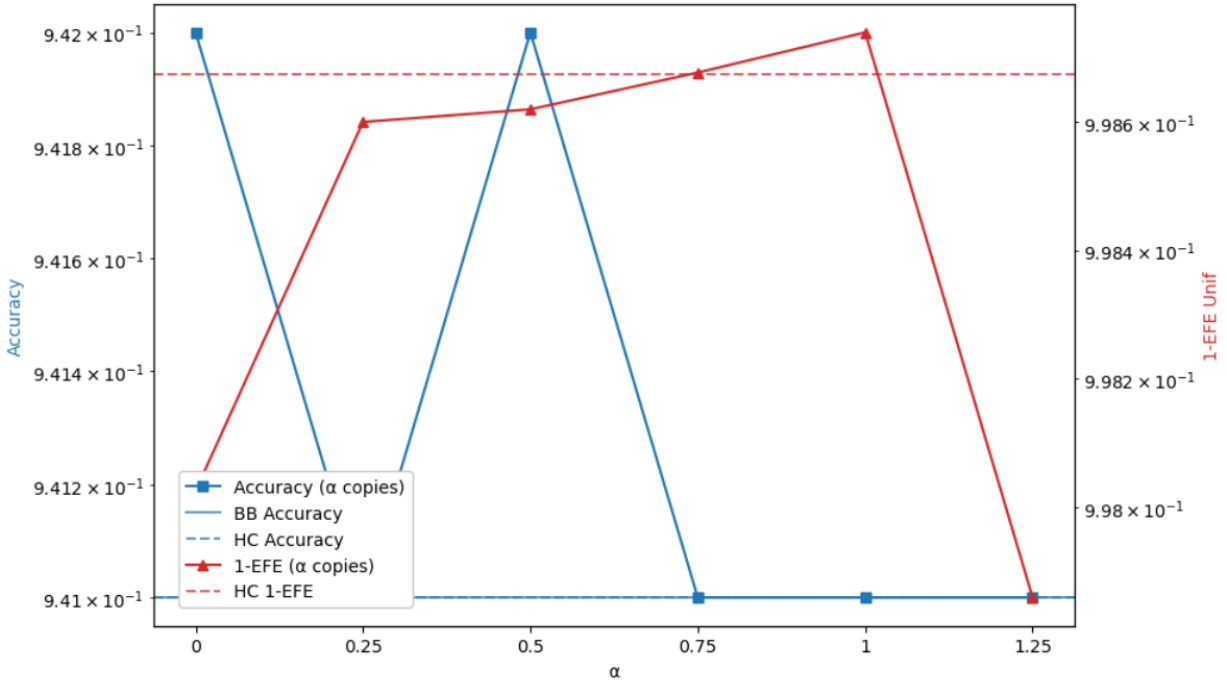}
		\caption{1-NN/MNN}
	\end{subfigure}
	\hfill
	\begin{subfigure}[t]{0.14\textheight}
		\includegraphics[width=\linewidth]{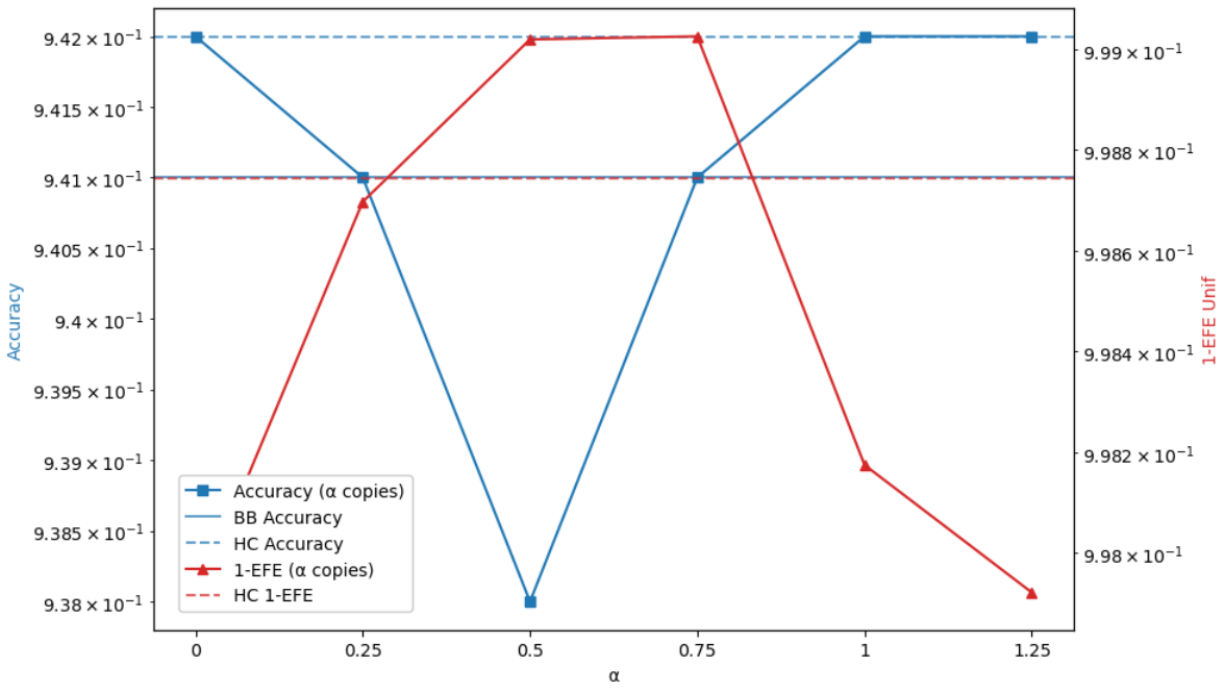}
		\caption{1-NN/LNN}
	\end{subfigure}
	\hfill
	\begin{subfigure}[t]{0.14\textheight}
		\includegraphics[width=\linewidth]{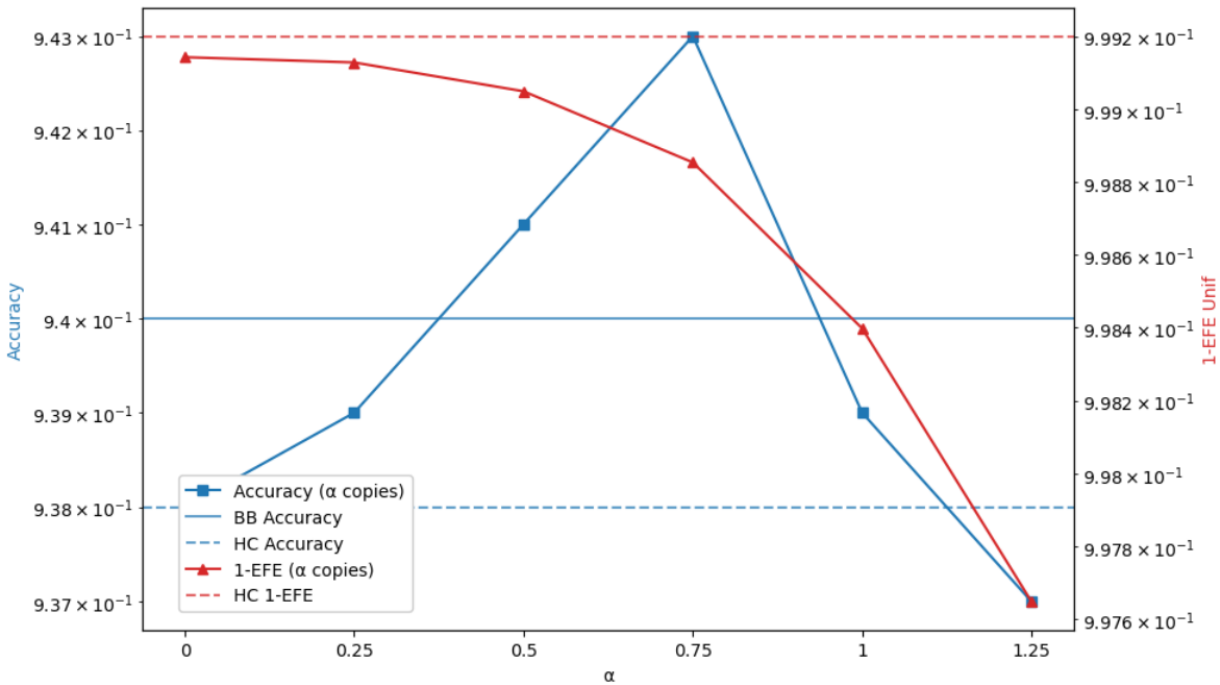}
		\caption{1-NN/GB}
	\end{subfigure}

	\begin{subfigure}[t]{0.14\textheight}
		\includegraphics[width=\linewidth]{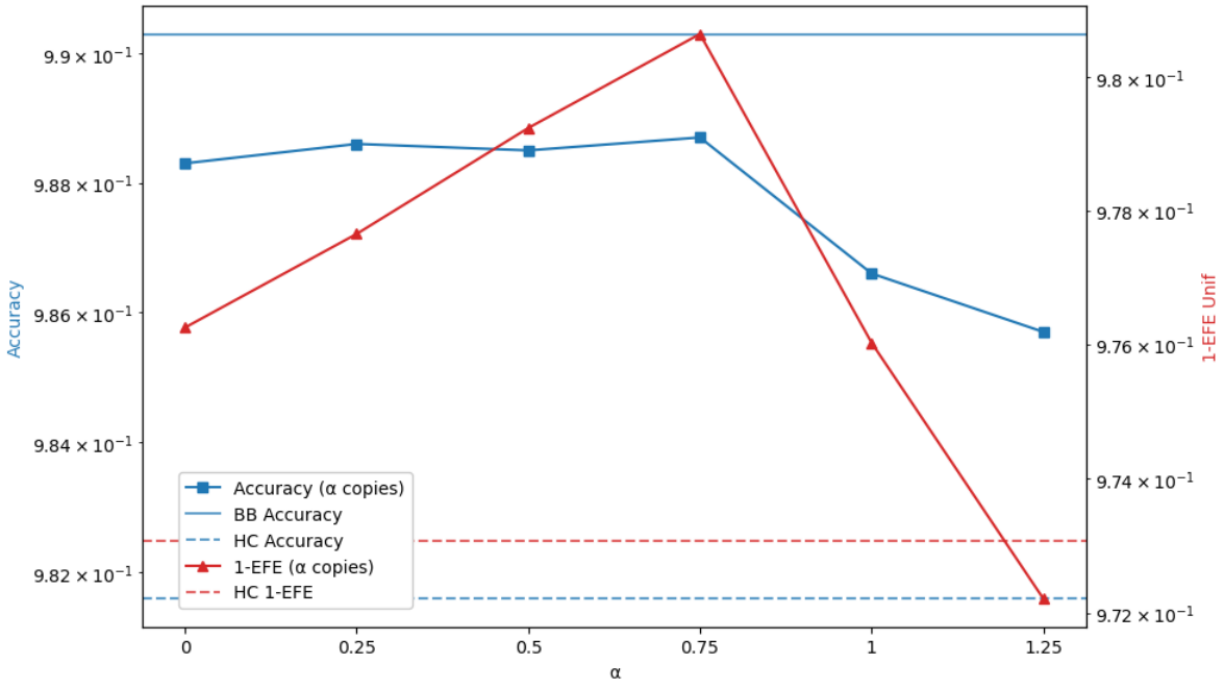}
		\caption{2-RF/SNN}
	\end{subfigure}
	\hfill
	\begin{subfigure}[t]{0.14\textheight}
		\includegraphics[width=\linewidth]{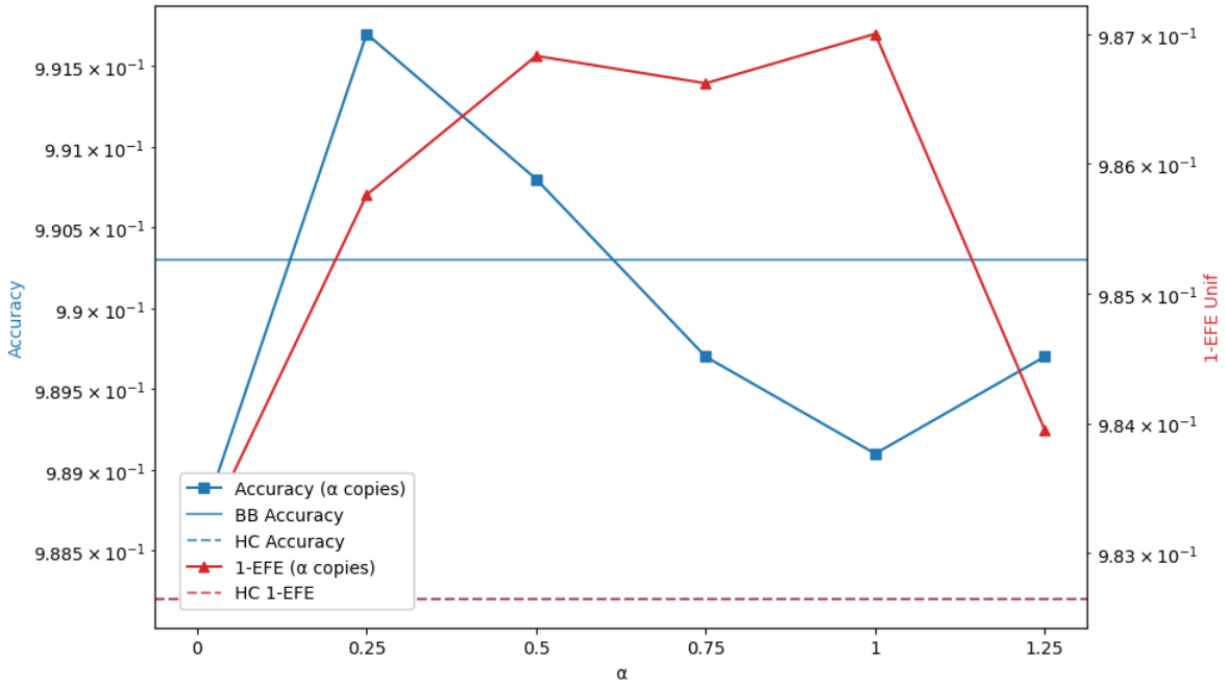}
		\caption{2-RF/MNN}
	\end{subfigure}
	\hfill
	\begin{subfigure}[t]{0.14\textheight}
		\includegraphics[width=\linewidth]{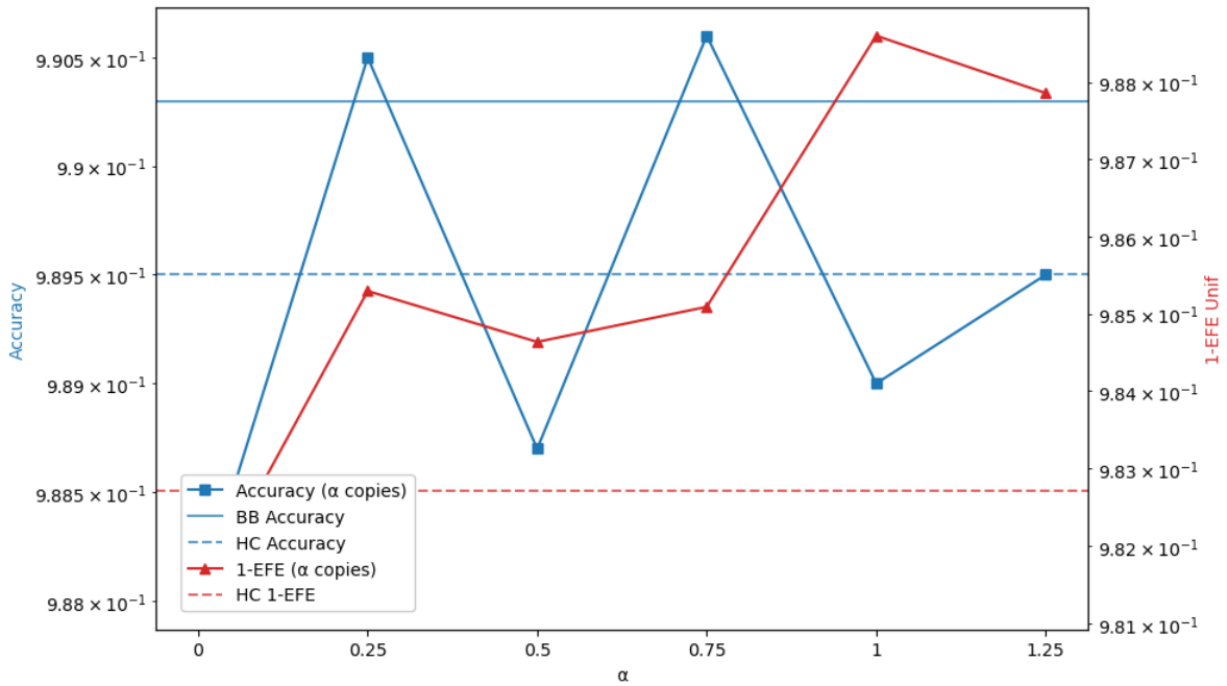}
		\caption{2-RF/LNN}
	\end{subfigure}
	\hfill
	\begin{subfigure}[t]{0.14\textheight}
		\includegraphics[width=\linewidth]{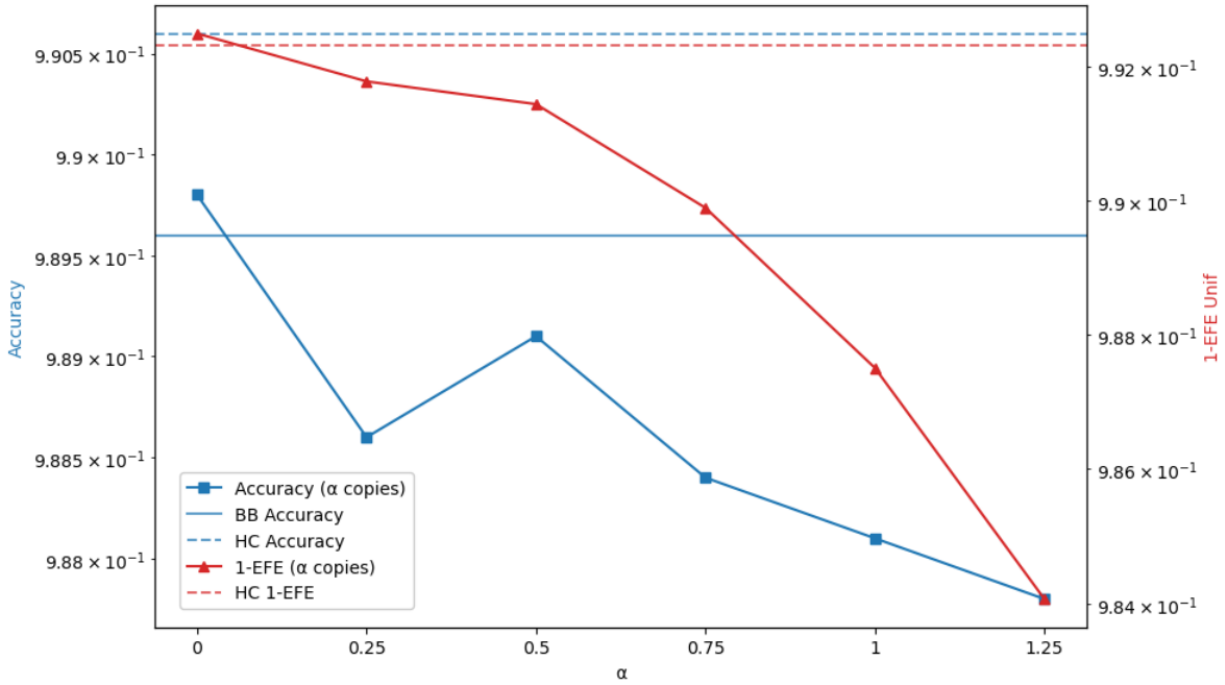}
		\caption{2-RF/GB}
	\end{subfigure}

	\begin{subfigure}[t]{0.14\textheight}
		\includegraphics[width=\linewidth]{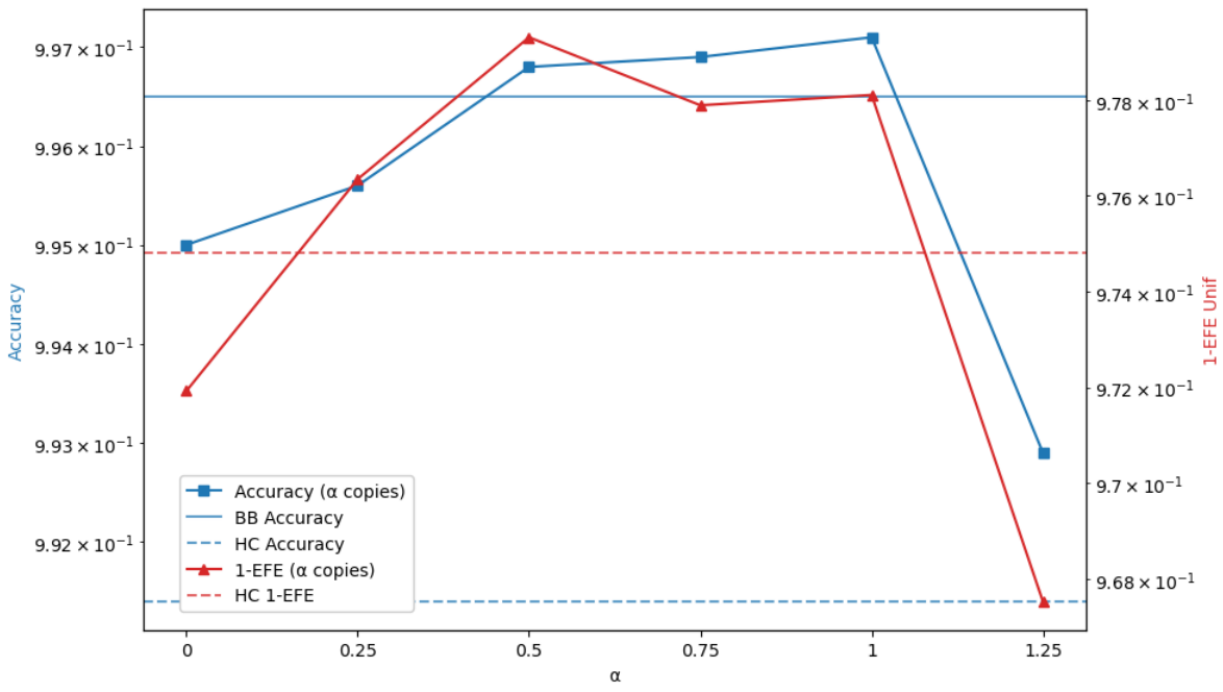}
		\caption{2-GB/SNN}
	\end{subfigure}
	\hfill
	\begin{subfigure}[t]{0.14\textheight}
		\includegraphics[width=\linewidth]{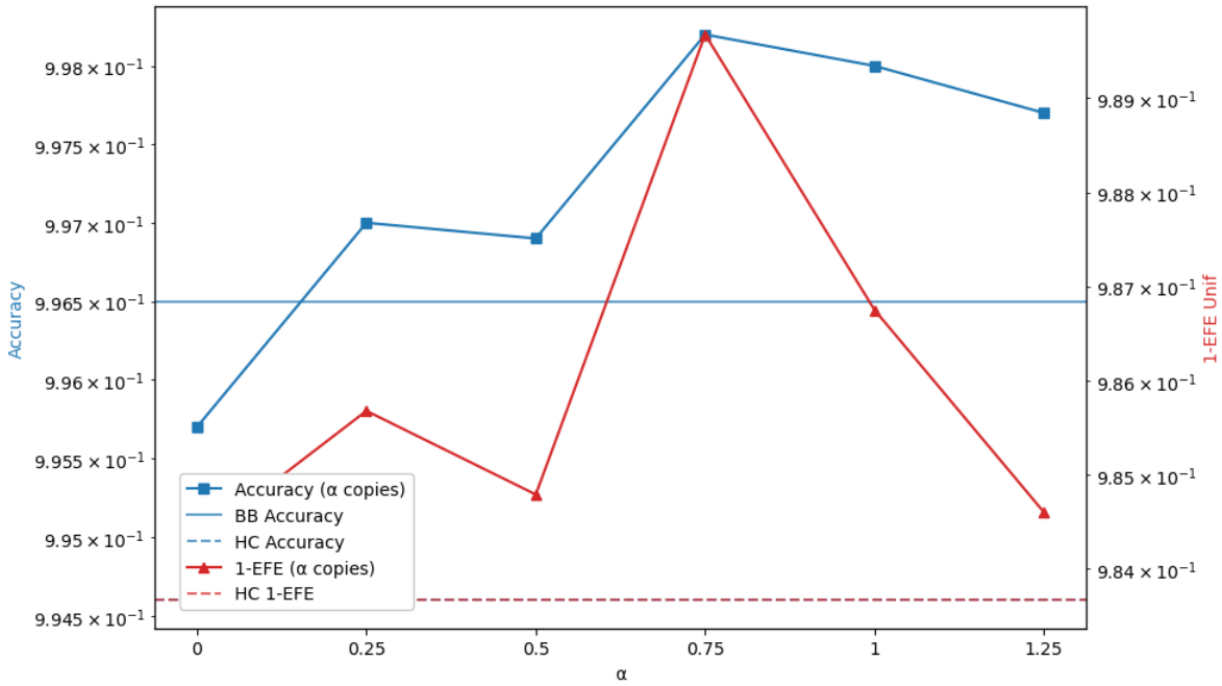}
		\caption{2-GB/MNN}
	\end{subfigure}
	\hfill
	\begin{subfigure}[t]{0.14\textheight}
		\includegraphics[width=\linewidth]{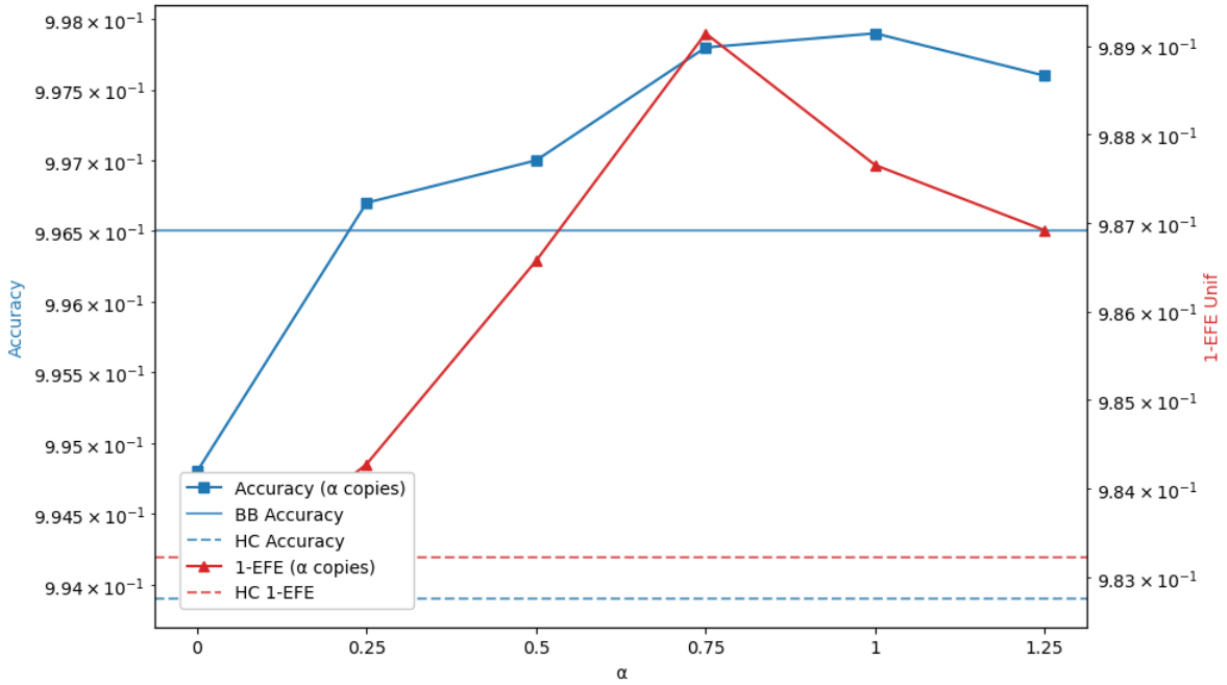}
		\caption{2-GB/LNN}
	\end{subfigure}
	\hfill
	\begin{subfigure}[t]{0.14\textheight}
		\includegraphics[width=\linewidth]{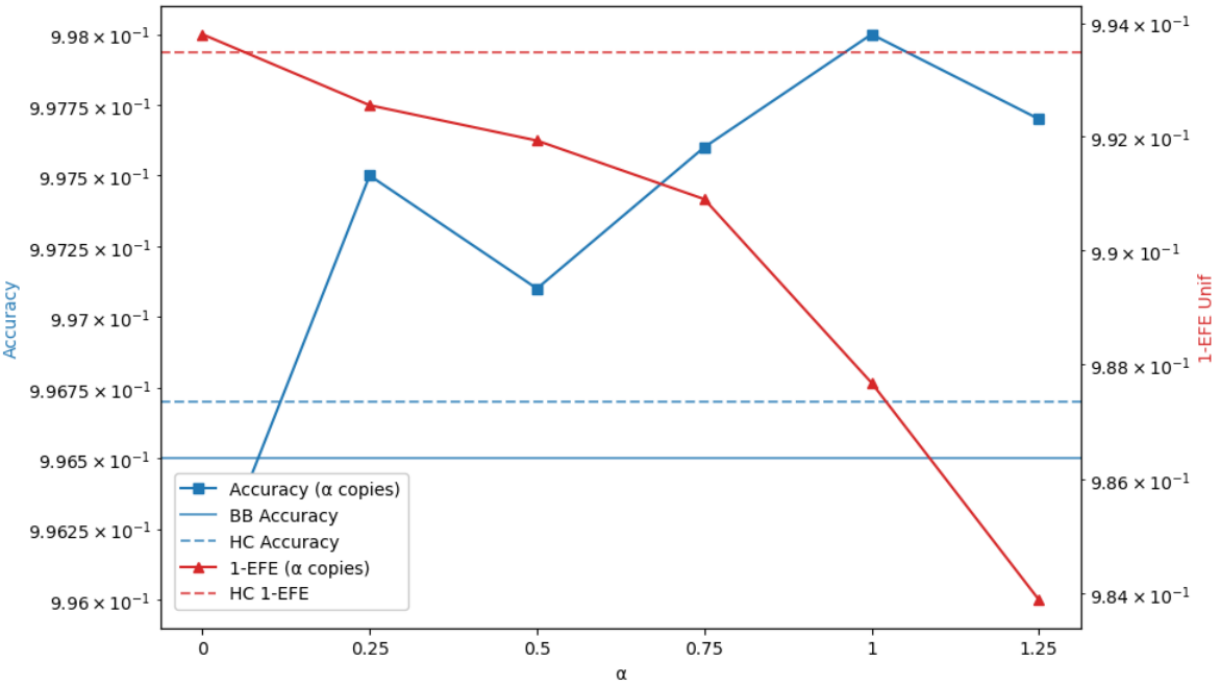}
		\caption{2-GB/GB}
	\end{subfigure}

	\begin{subfigure}[t]{0.14\textheight}
		\includegraphics[width=\linewidth]{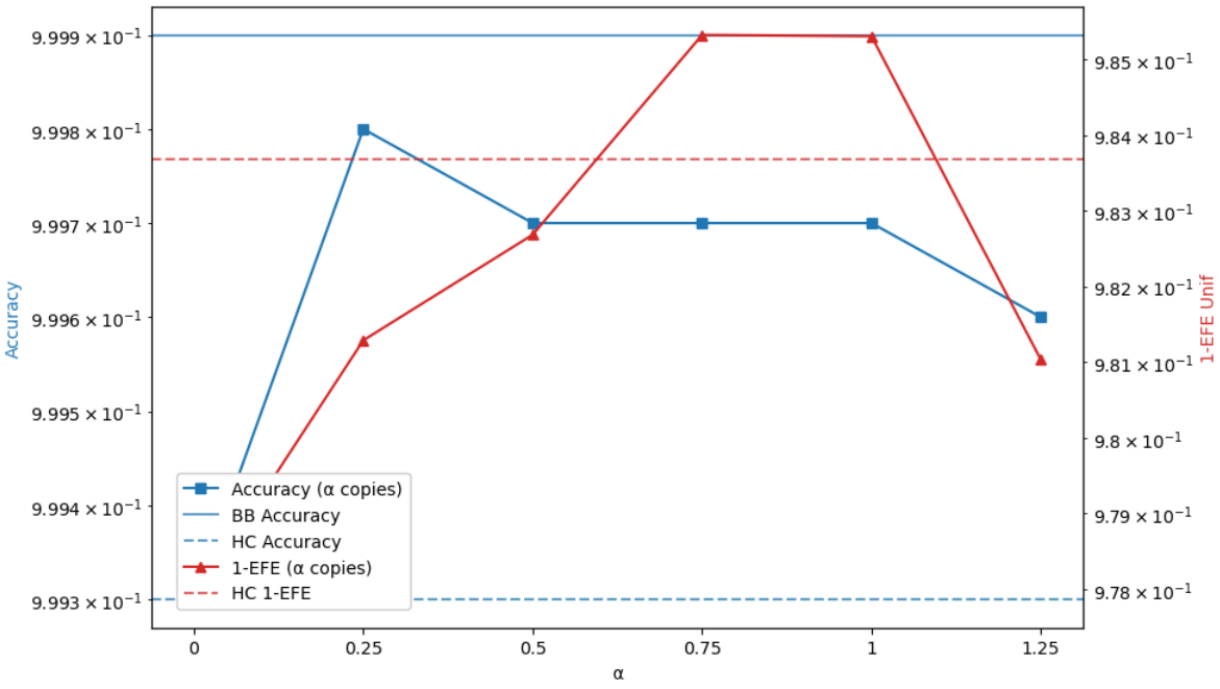}
		\caption{2-NN/SNN}
	\end{subfigure}
	\hfill
	\begin{subfigure}[t]{0.14\textheight}
		\includegraphics[width=\linewidth]{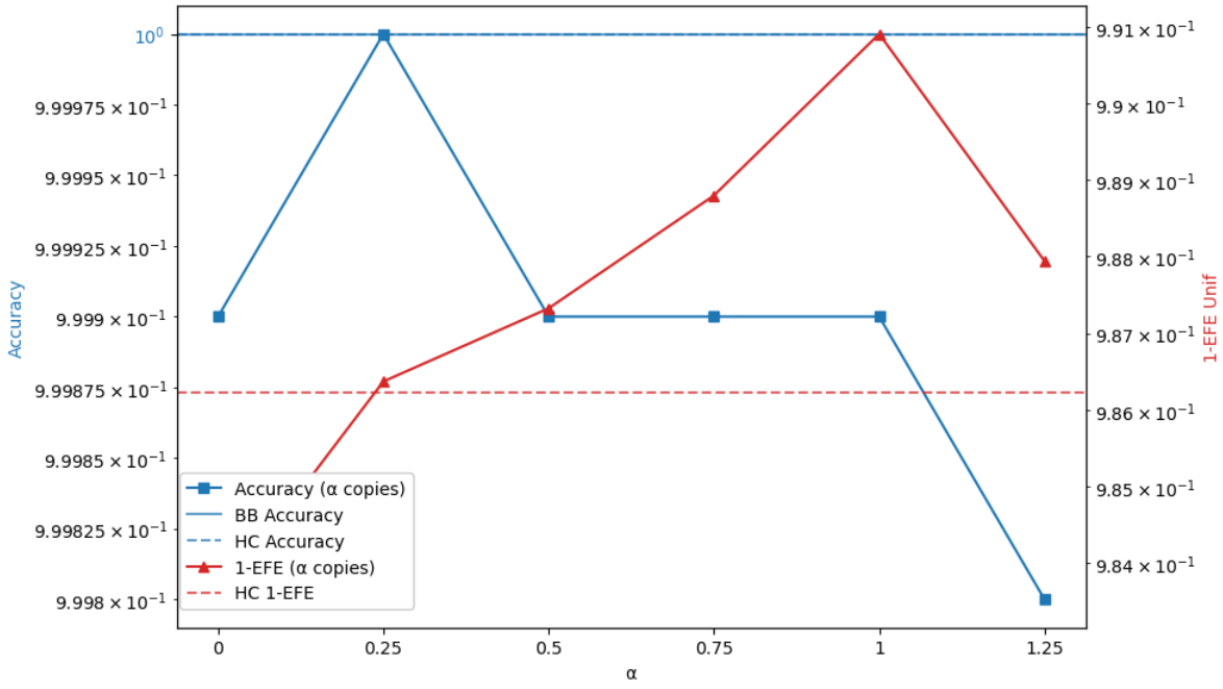}
		\caption{2-NN/MNN}
	\end{subfigure}
	\hfill
	\begin{subfigure}[t]{0.14\textheight}
		\includegraphics[width=\linewidth]{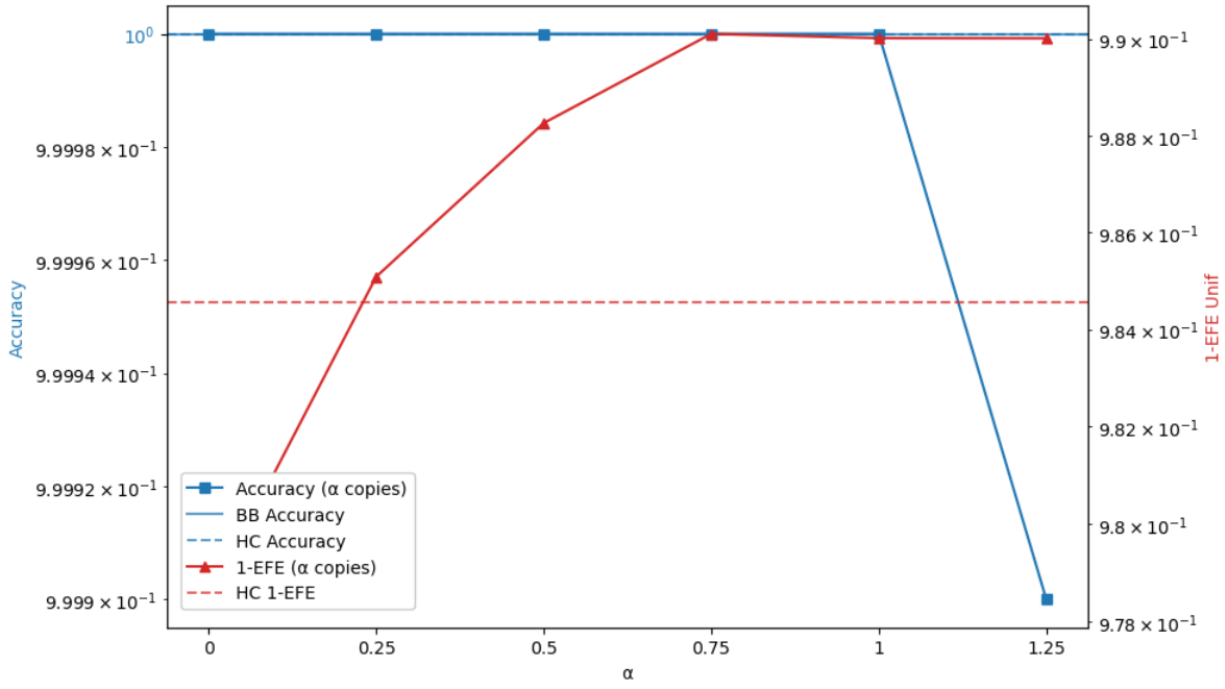}
		\caption{2-NN/LNN}
	\end{subfigure}
	\hfill
	\begin{subfigure}[t]{0.14\textheight}
		\includegraphics[width=\linewidth]{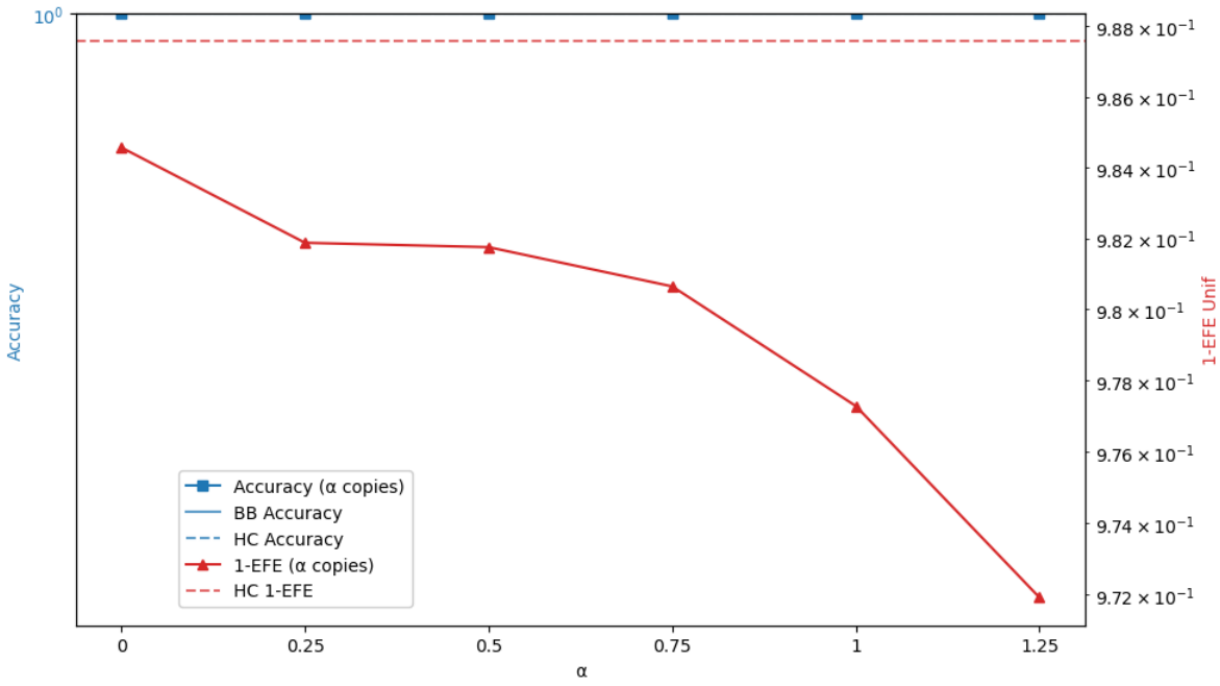}
		\caption{2-NN/GB}
	\end{subfigure}

	\begin{subfigure}[t]{0.14\textheight}
		\includegraphics[width=\linewidth]{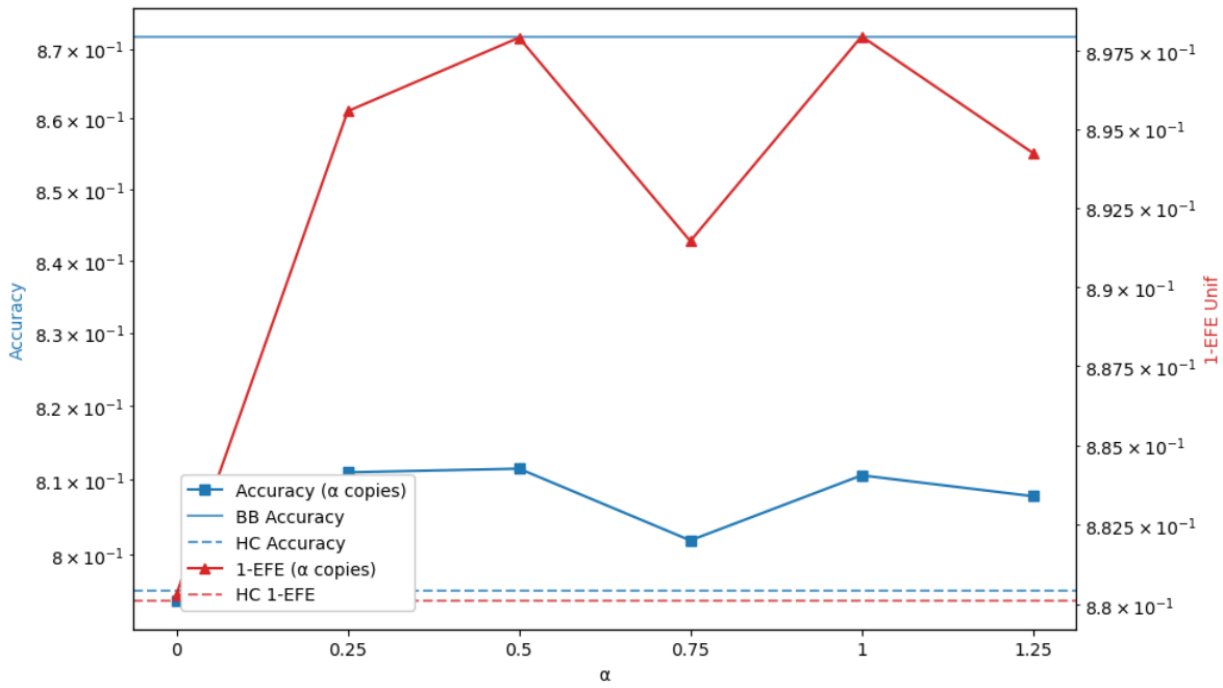}
		\caption{3-RF/SNN}
	\end{subfigure}
	\hfill
	\begin{subfigure}[t]{0.14\textheight}
		\includegraphics[width=\linewidth]{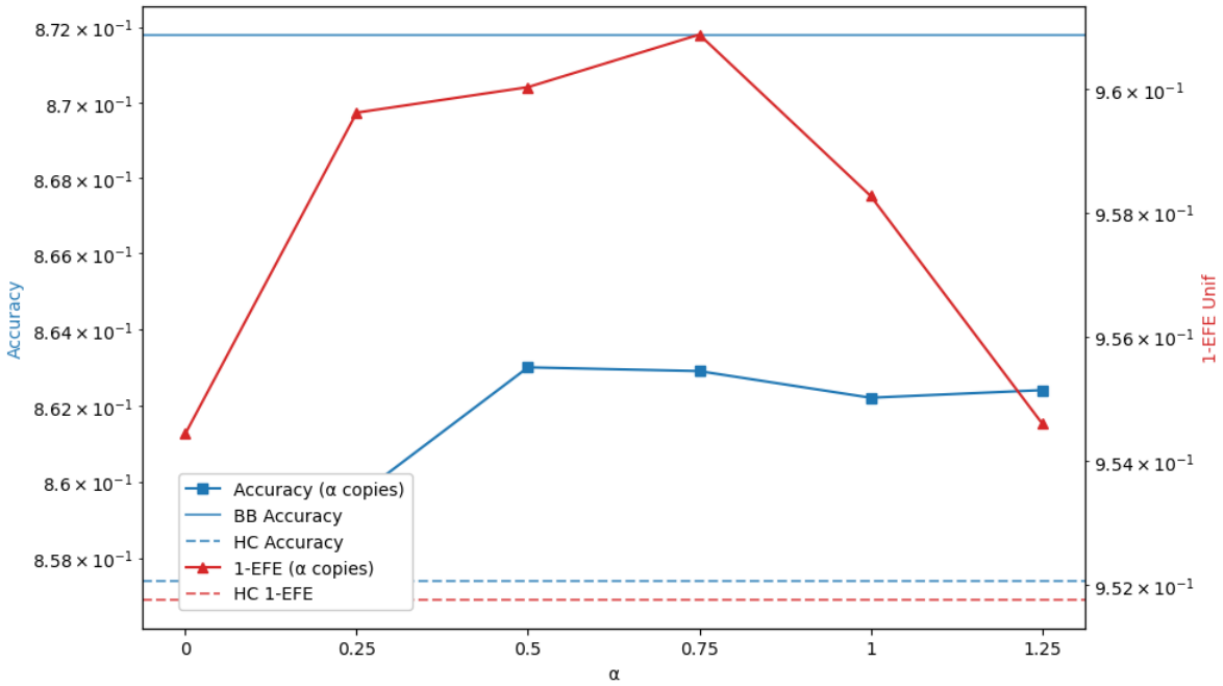}
		\caption{3-RF/MNN}
	\end{subfigure}
	\hfill
	\begin{subfigure}[t]{0.14\textheight}
		\includegraphics[width=\linewidth]{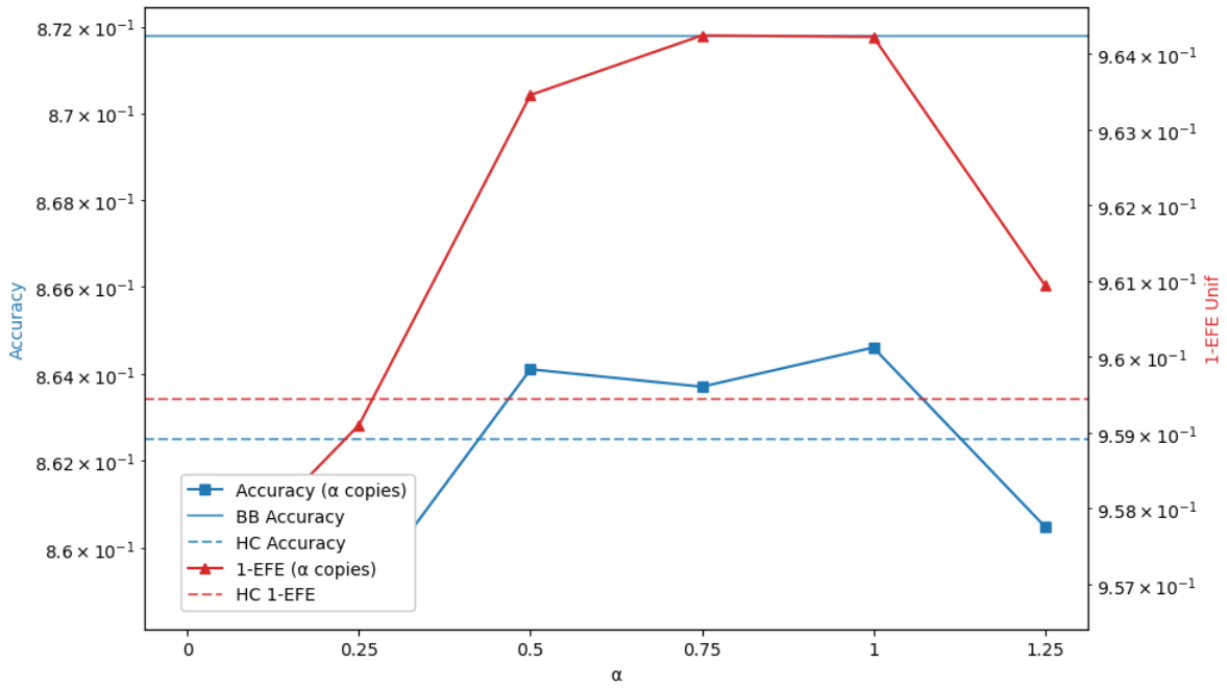}
		\caption{3-RF/LNN}
	\end{subfigure}
	\hfill
	\begin{subfigure}[t]{0.14\textheight}
		\includegraphics[width=\linewidth]{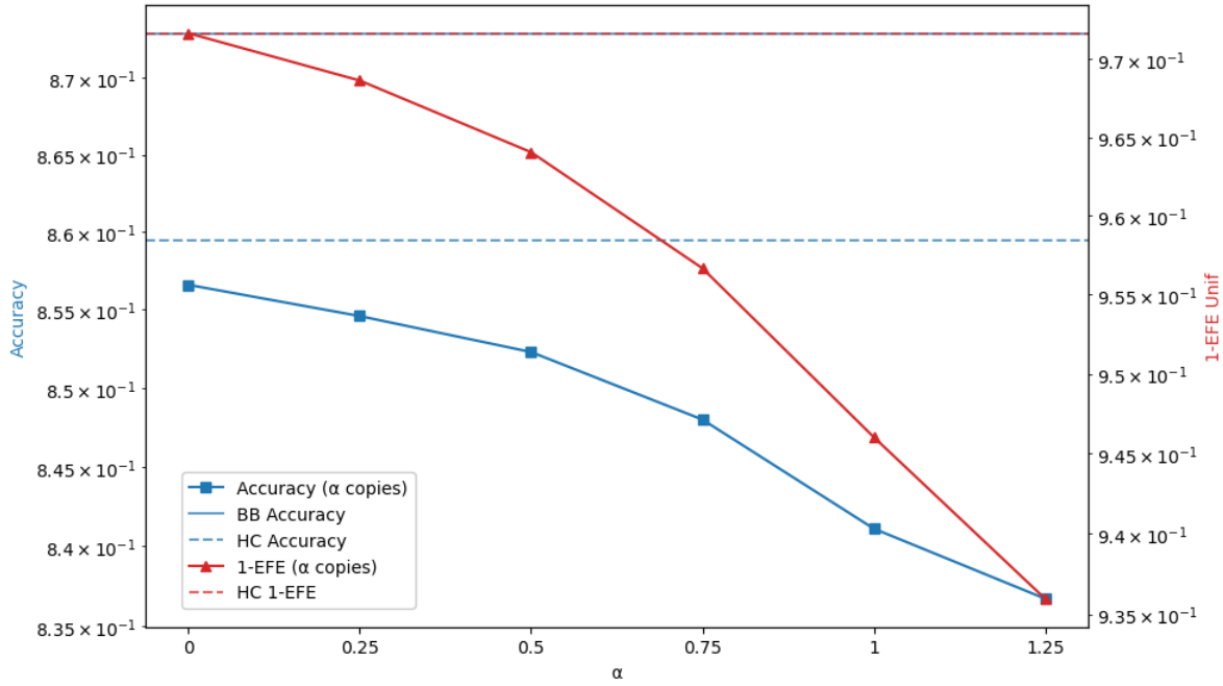}
		\caption{3-RF/GB}
	\end{subfigure}

	\begin{subfigure}[t]{0.14\textheight}
		\includegraphics[width=\linewidth]{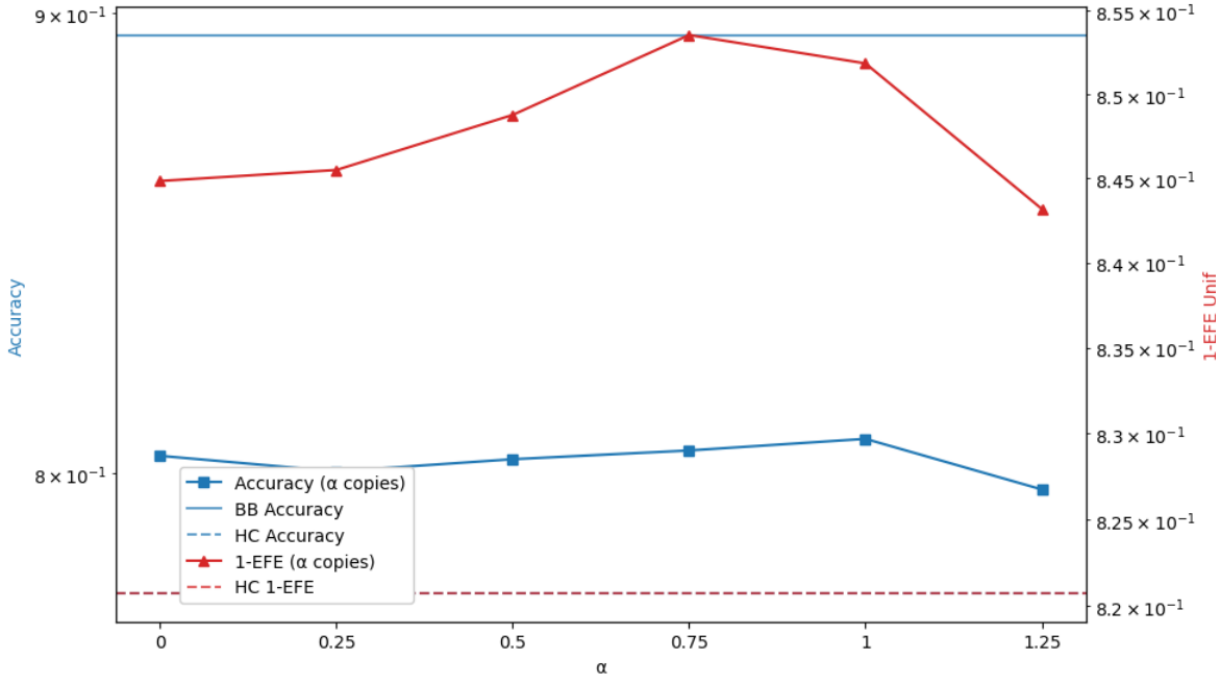}
		\caption{3-GB/SNN}
	\end{subfigure}
	\hfill
	\begin{subfigure}[t]{0.14\textheight}
		\includegraphics[width=\linewidth]{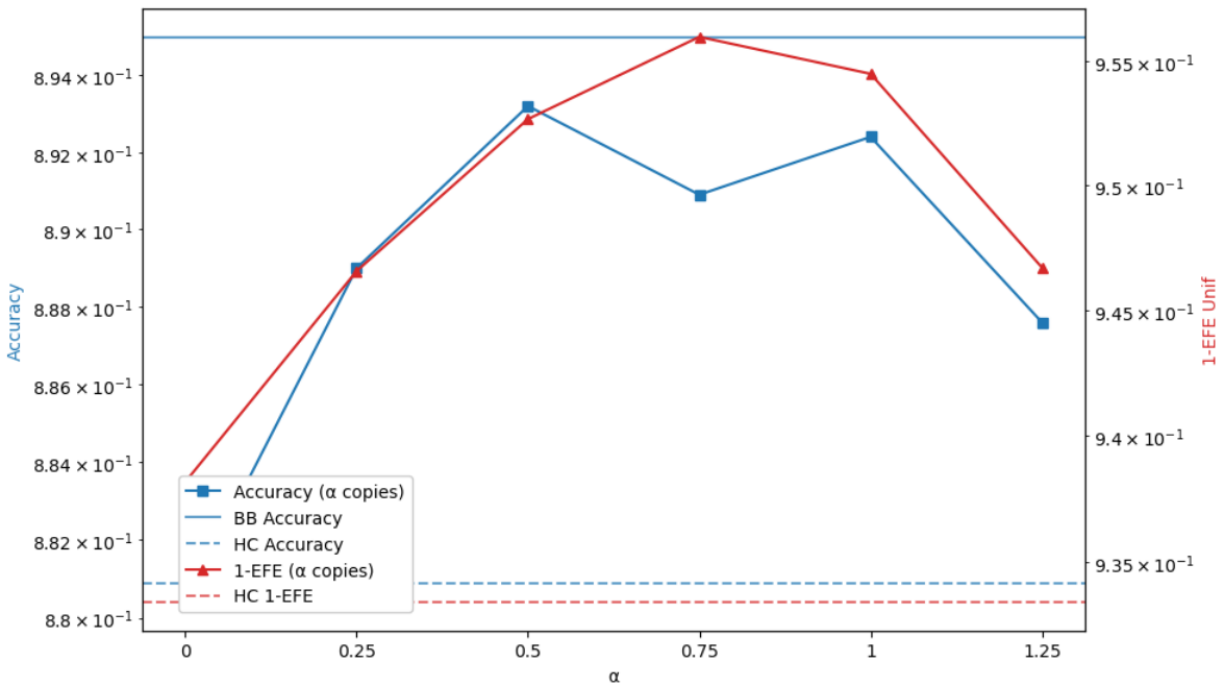}
		\caption{3-GB/MNN}
	\end{subfigure}
	\hfill
	\begin{subfigure}[t]{0.14\textheight}
		\includegraphics[width=\linewidth]{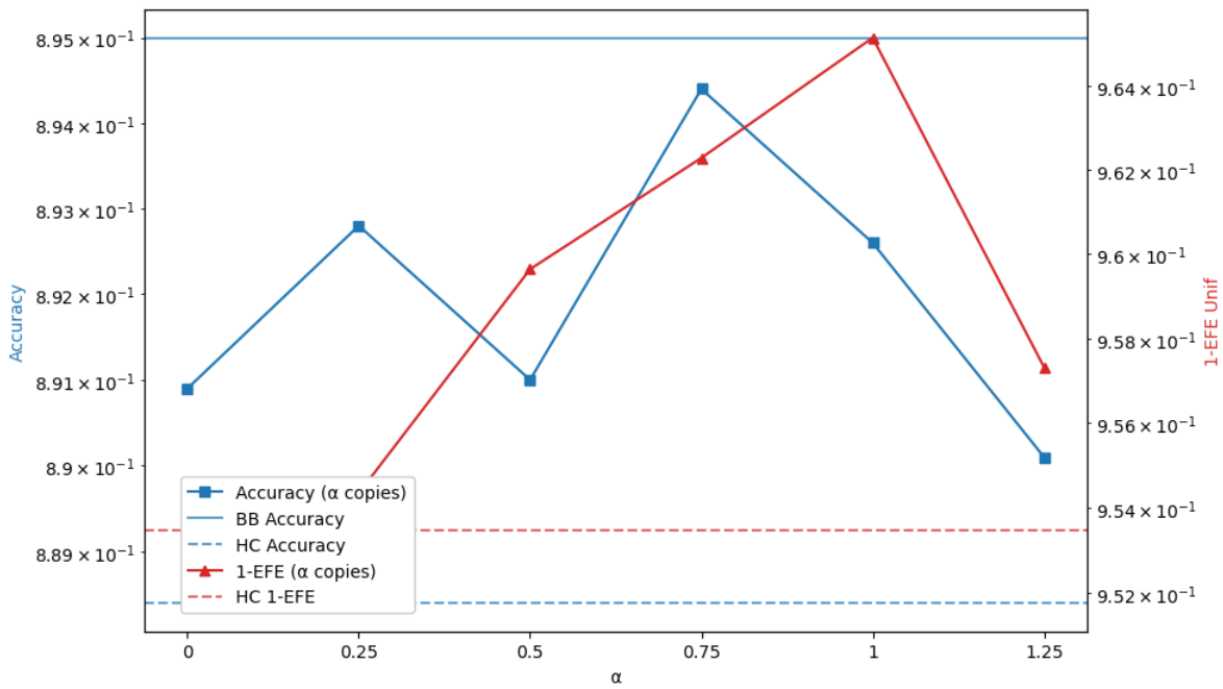}
		\caption{3-GB/LNN}
	\end{subfigure}
	\hfill
	\begin{subfigure}[t]{0.14\textheight}
		\includegraphics[width=\linewidth]{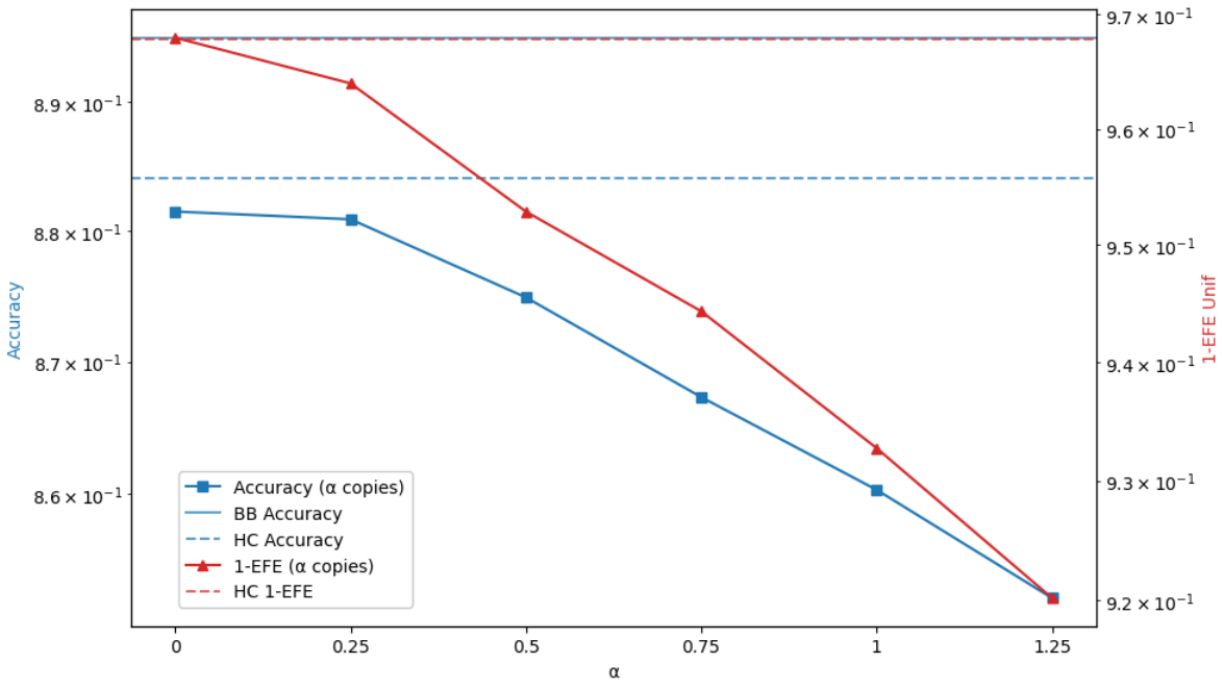}
		\caption{3-GB/GB}
	\end{subfigure}

	\begin{subfigure}[t]{0.14\textheight}
		\includegraphics[width=\linewidth]{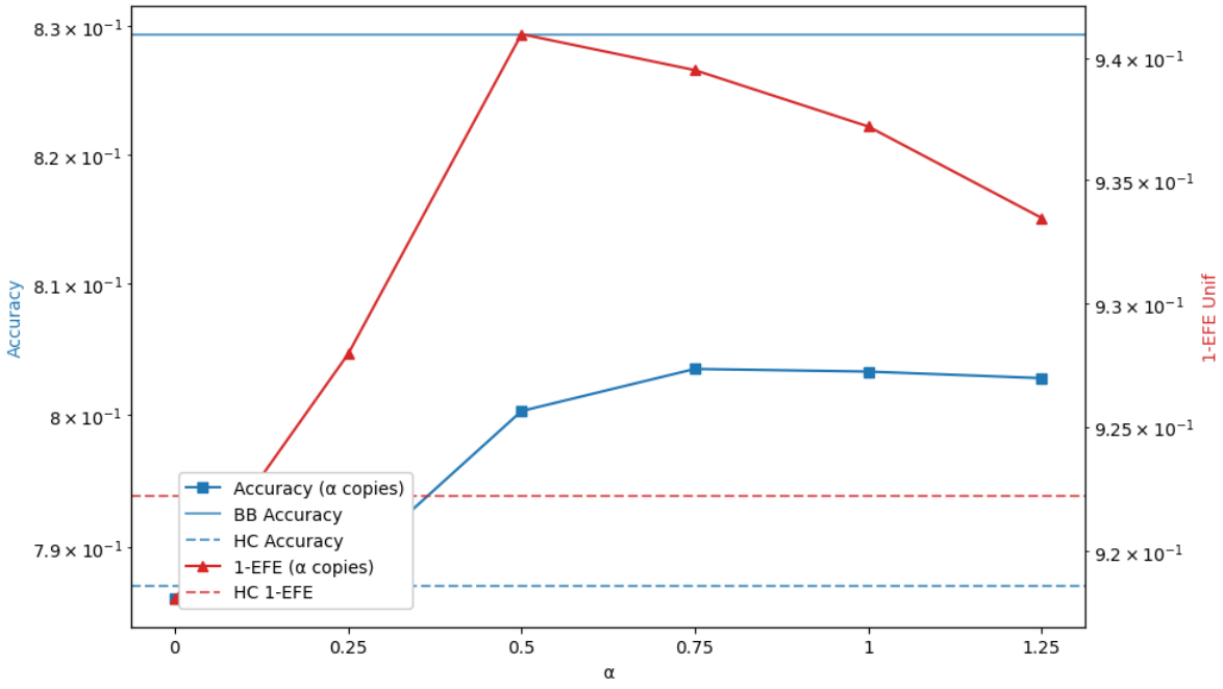}
		\caption{3-NN/SNN}
	\end{subfigure}
	\hfill
	\begin{subfigure}[t]{0.14\textheight}
		\includegraphics[width=\linewidth]{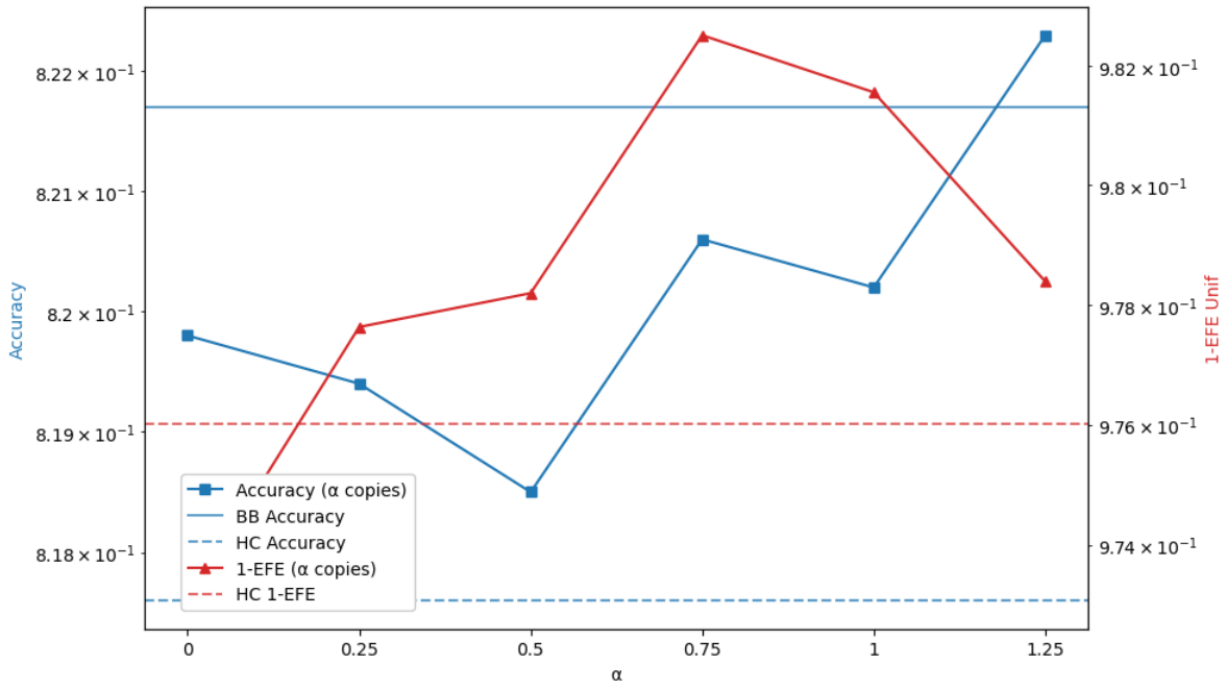}
		\caption{3-NN/MNN}
	\end{subfigure}
	\hfill
	\begin{subfigure}[t]{0.14\textheight}
		\includegraphics[width=\linewidth]{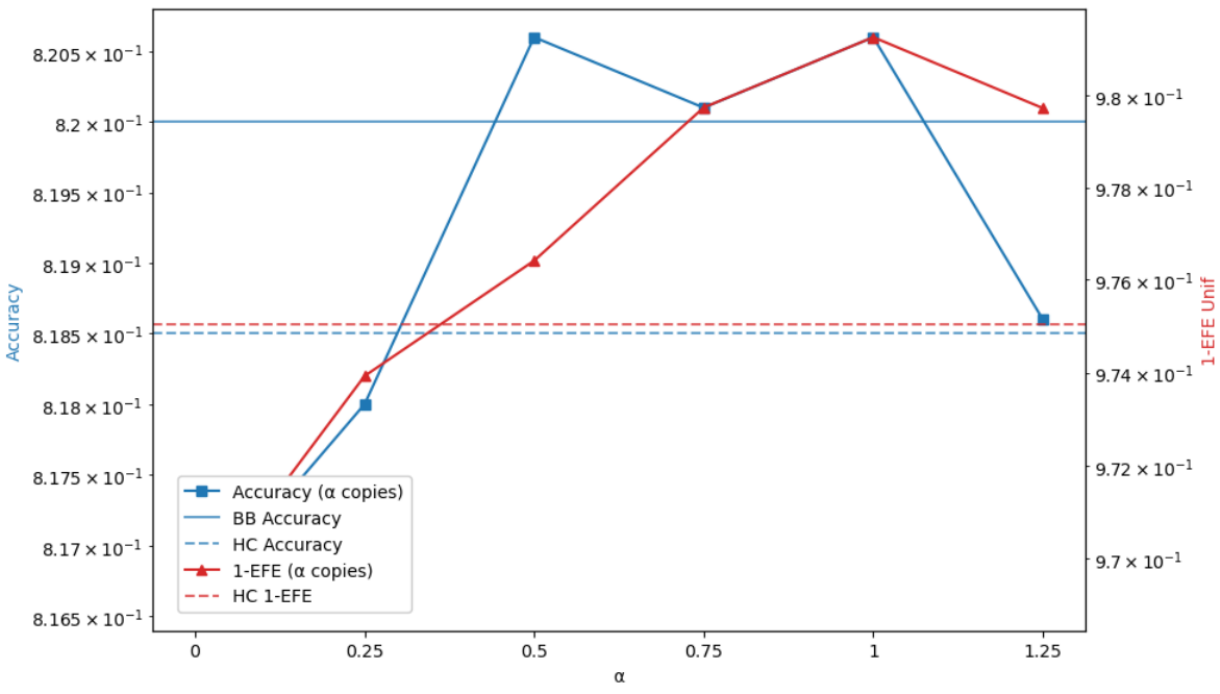}
		\caption{3-NN/LNN}
	\end{subfigure}
	\hfill
	\begin{subfigure}[t]{0.14\textheight}
		\includegraphics[width=\linewidth]{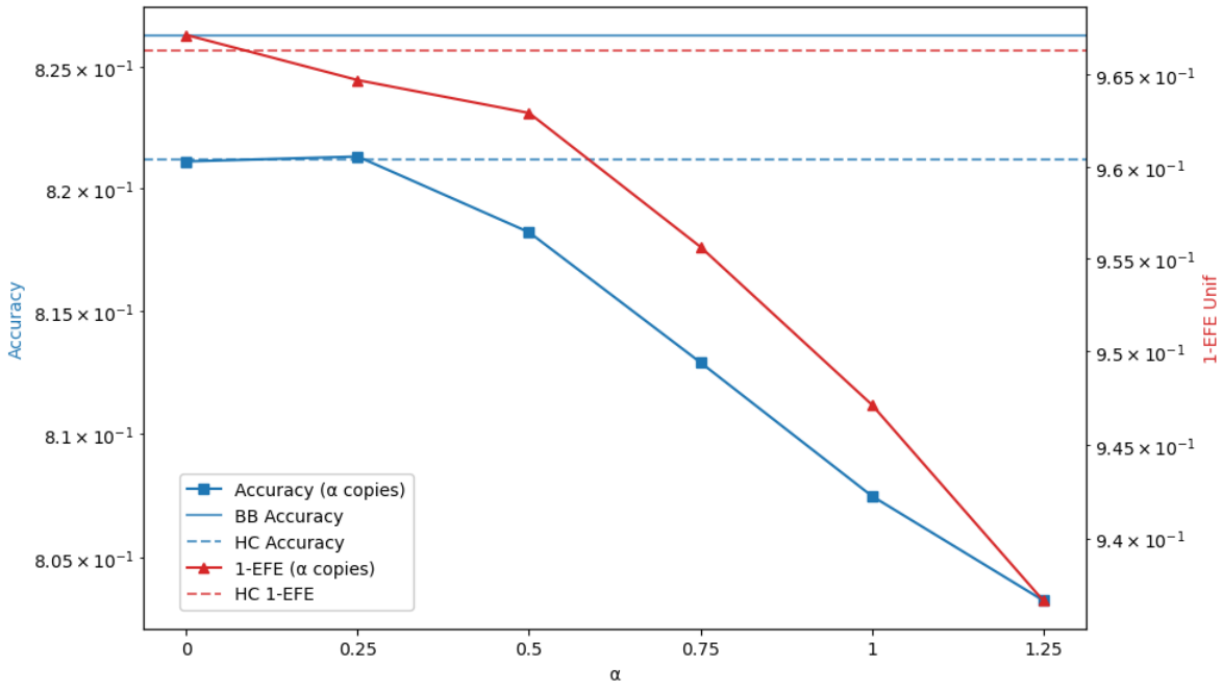}
		\caption{3-NN/GB}
	\end{subfigure}
\end{figure}

\begin{figure}[!ht]
	\centering
	\begin{subfigure}[t]{0.14\textheight}
		\includegraphics[width=\linewidth]{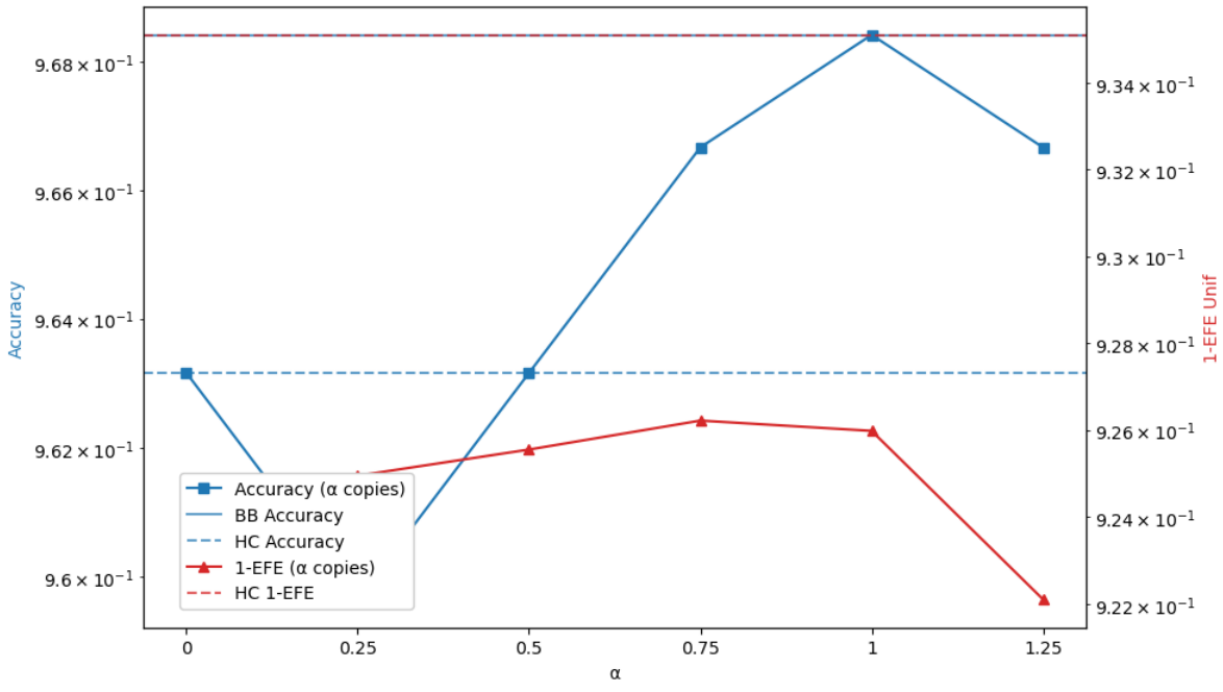}
		\caption{4-RF/SNN}
	\end{subfigure}
	\hfill
	\begin{subfigure}[t]{0.14\textheight}
		\includegraphics[width=\linewidth]{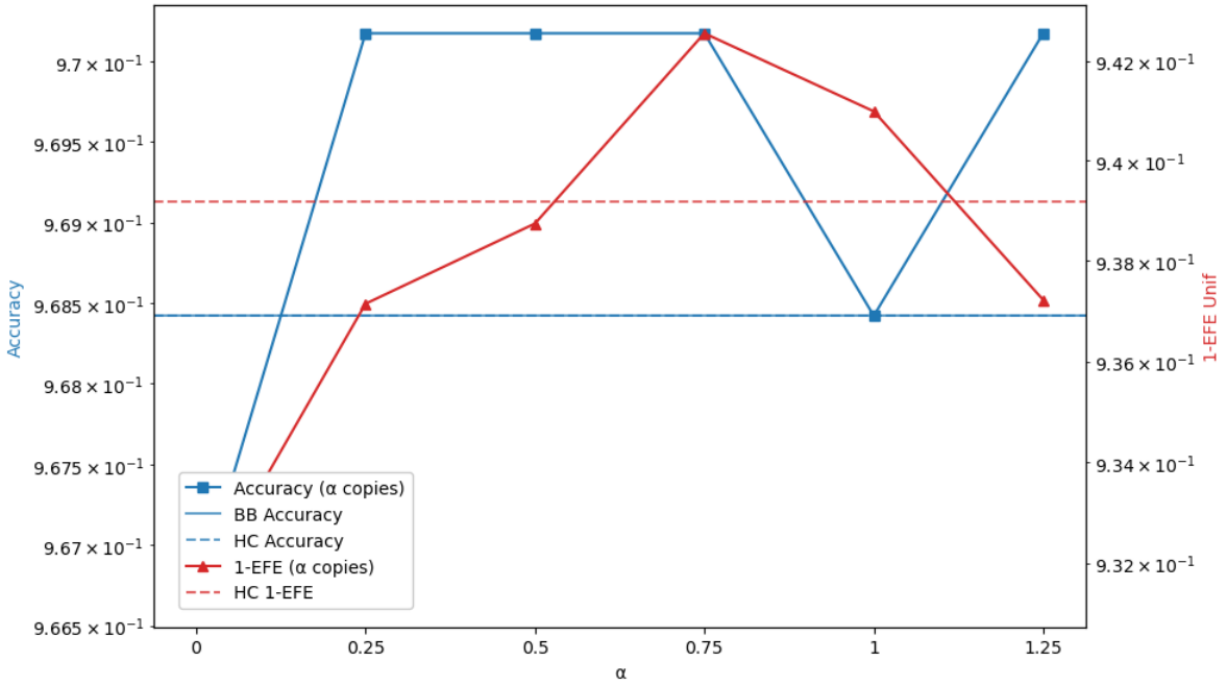}
		\caption{4-RF/MNN}
	\end{subfigure}
	\hfill
	\begin{subfigure}[t]{0.14\textheight}
		\includegraphics[width=\linewidth]{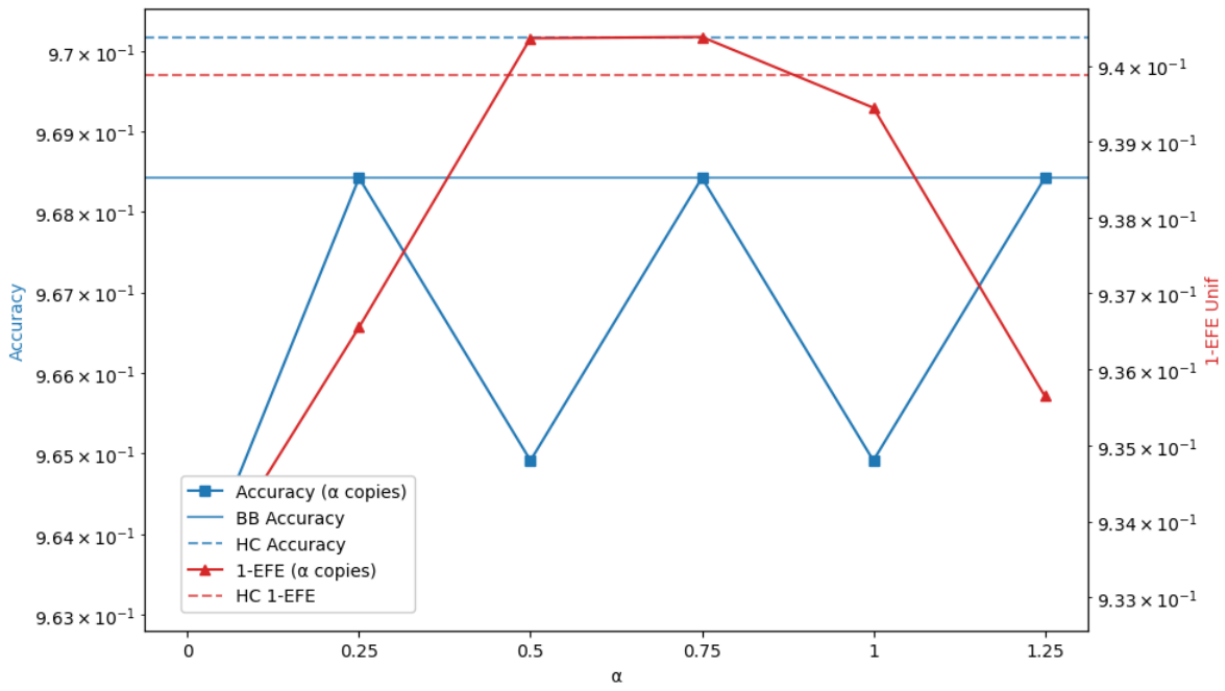}
		\caption{4-RF/LNN}
	\end{subfigure}
	\hfill
	\begin{subfigure}[t]{0.14\textheight}
		\includegraphics[width=\linewidth]{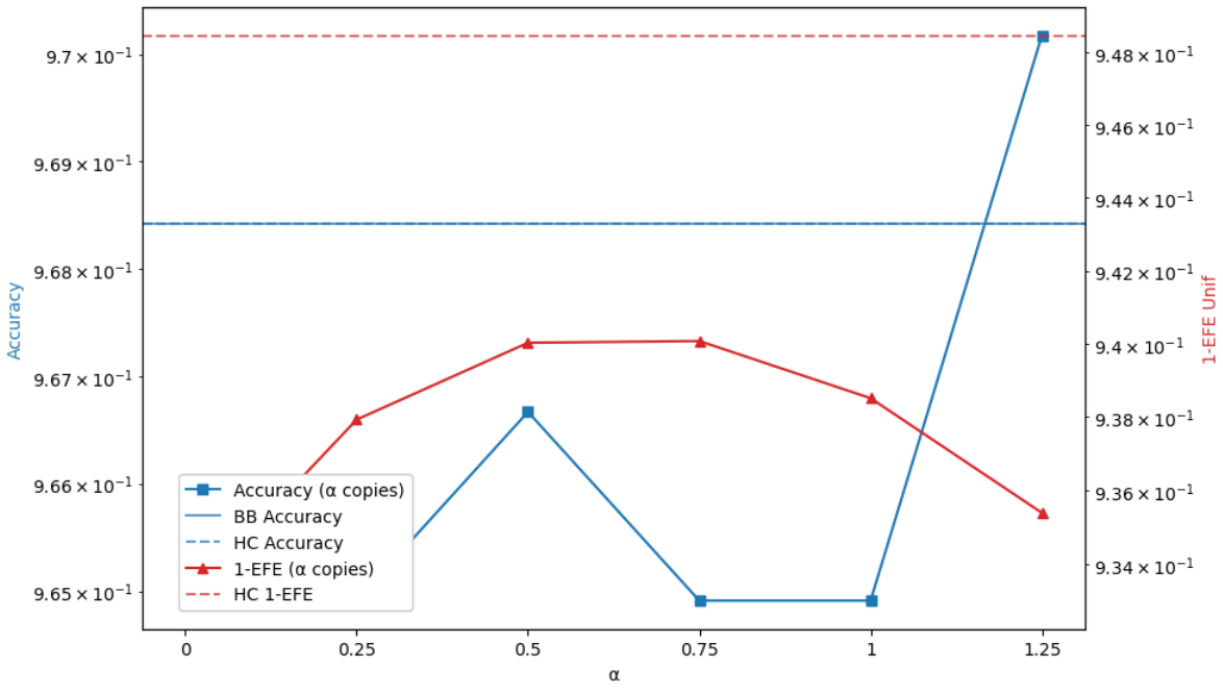}
		\caption{4-RF/GB}
	\end{subfigure}
	
	
	\begin{subfigure}[t]{0.14\textheight}
		\includegraphics[width=\linewidth]{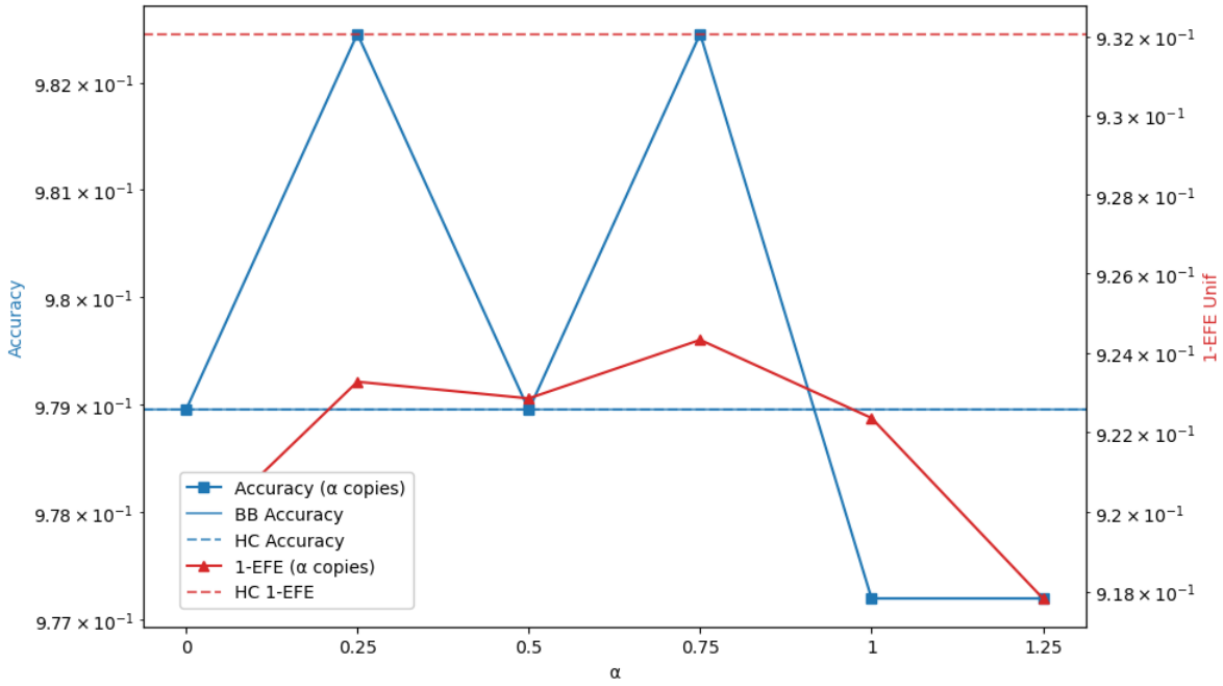}
		\caption{4-GB/SNN}
	\end{subfigure}
	\hfill
	\begin{subfigure}[t]{0.14\textheight}
		\includegraphics[width=\linewidth]{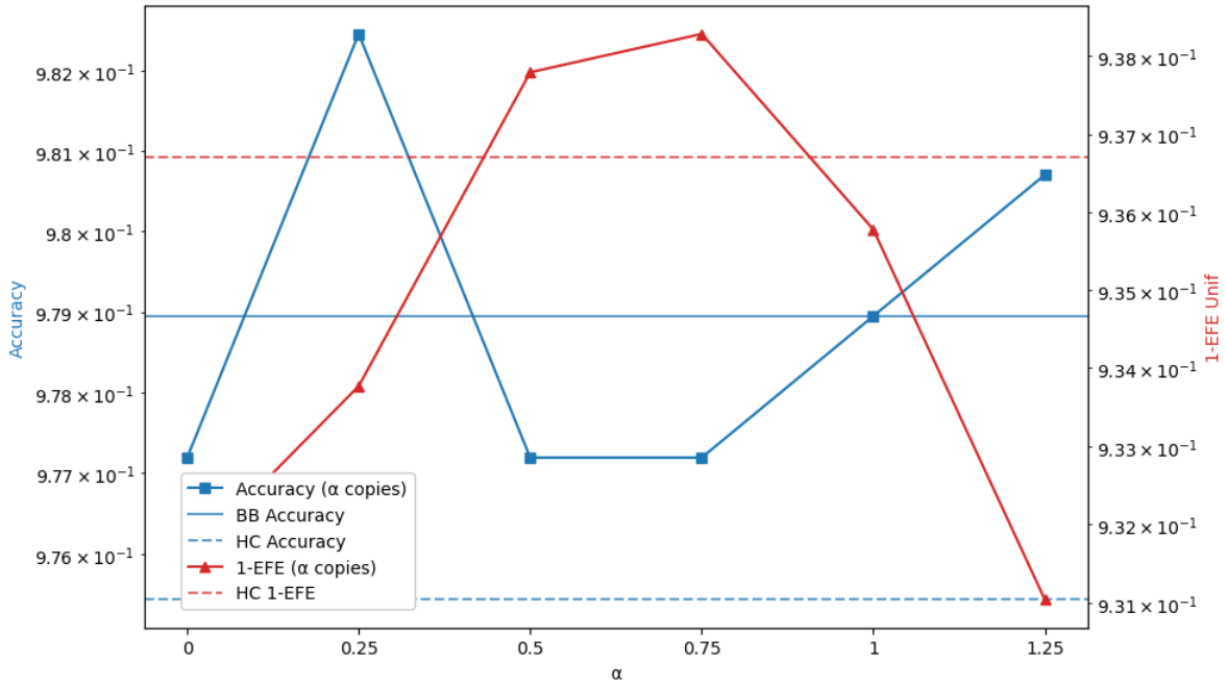}
		\caption{4-GB/MNN}
	\end{subfigure}
	\hfill
	\begin{subfigure}[t]{0.14\textheight}
		\includegraphics[width=\linewidth]{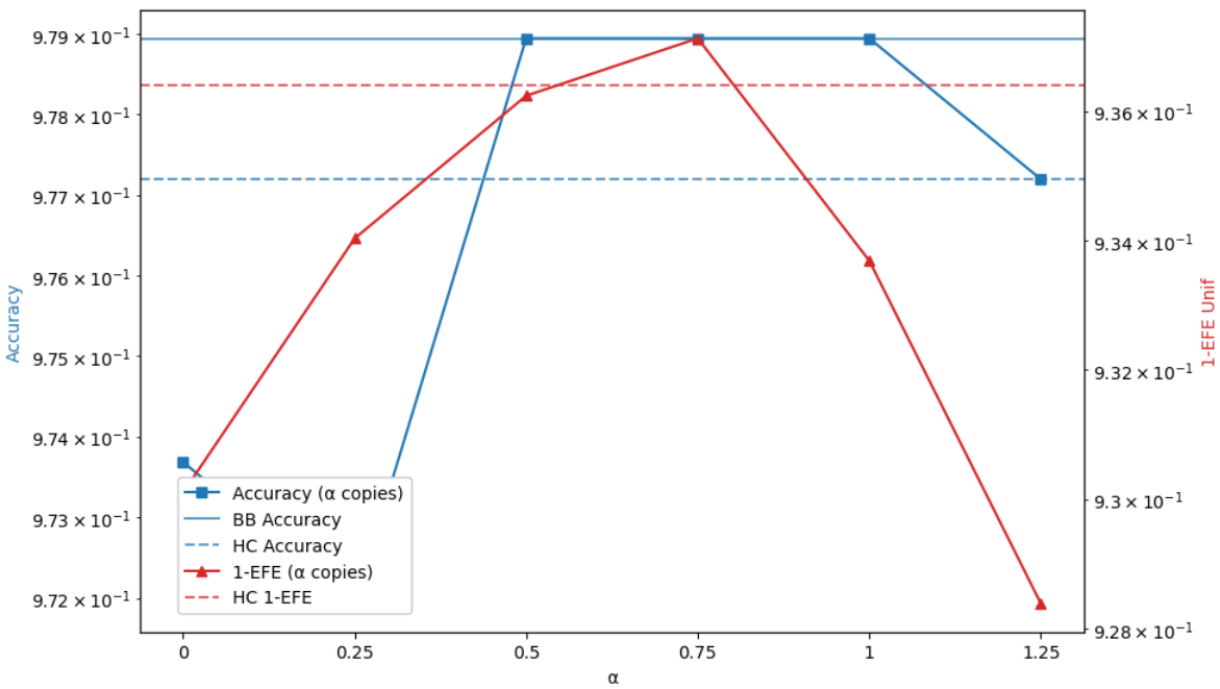}
		\caption{4-GB/LNN}
	\end{subfigure}
	\hfill
	\begin{subfigure}[t]{0.14\textheight}
		\includegraphics[width=\linewidth]{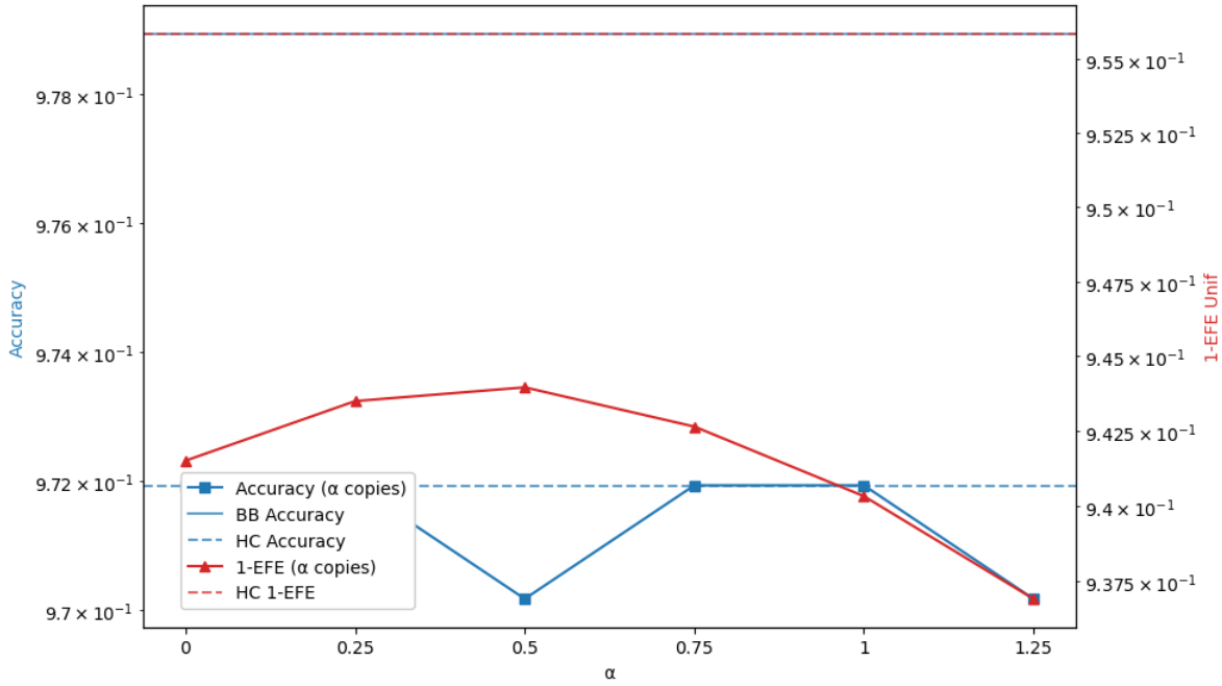}
		\caption{4-GB/GB}
	\end{subfigure}
	
	
	\begin{subfigure}[t]{0.14\textheight}
		\includegraphics[width=\linewidth]{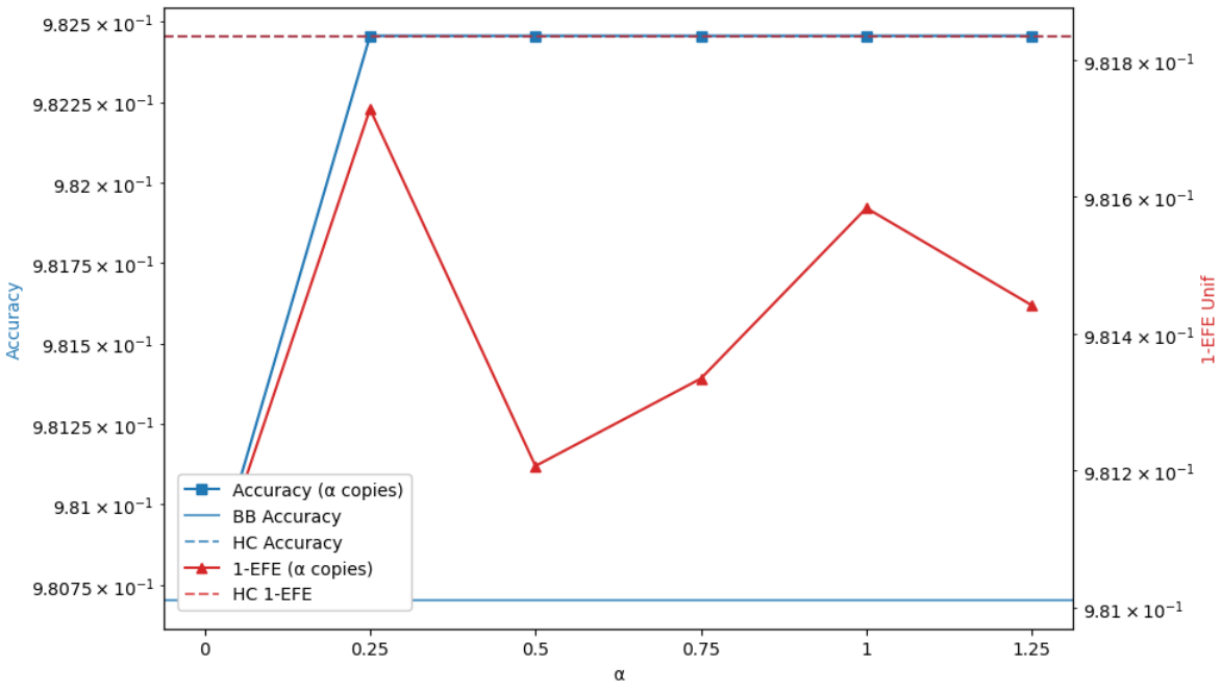}
		\caption{4-NN/SNN}
	\end{subfigure}
	\hfill
	\begin{subfigure}[t]{0.14\textheight}
		\includegraphics[width=\linewidth]{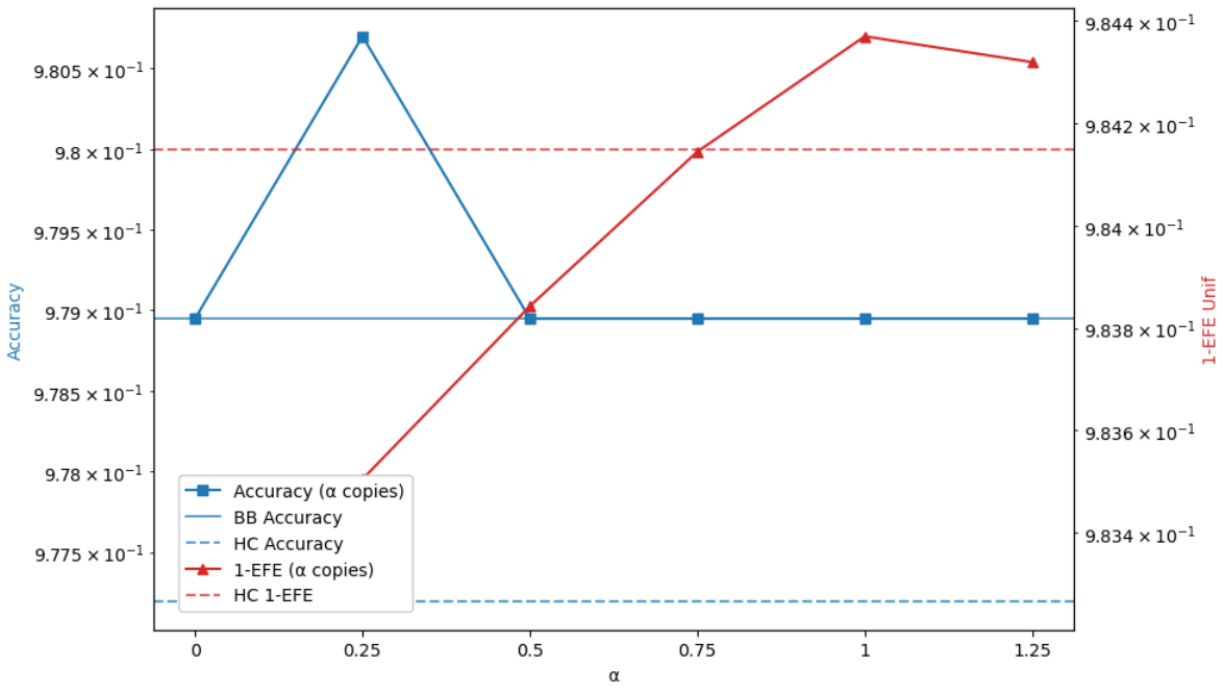}
		\caption{4-NN/MNN}
	\end{subfigure}
	\hfill
	\begin{subfigure}[t]{0.14\textheight}
		\includegraphics[width=\linewidth]{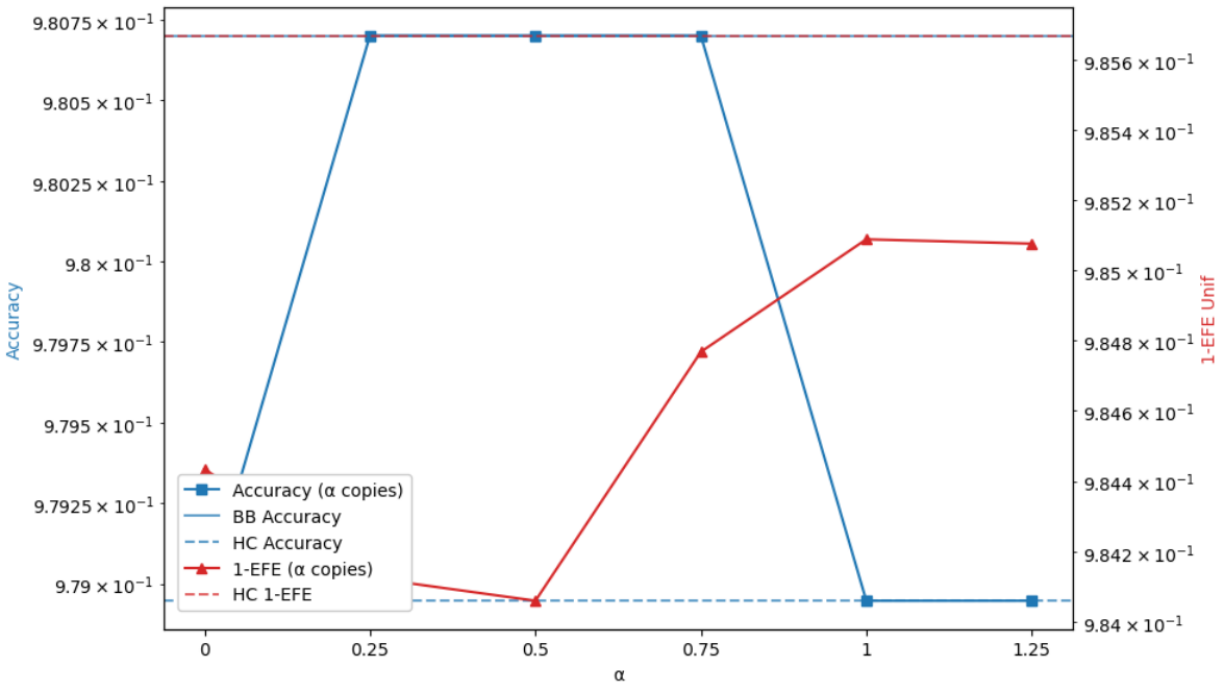}
		\caption{4-NN/LNN}
	\end{subfigure}
	\hfill
	\begin{subfigure}[t]{0.14\textheight}
		\includegraphics[width=\linewidth]{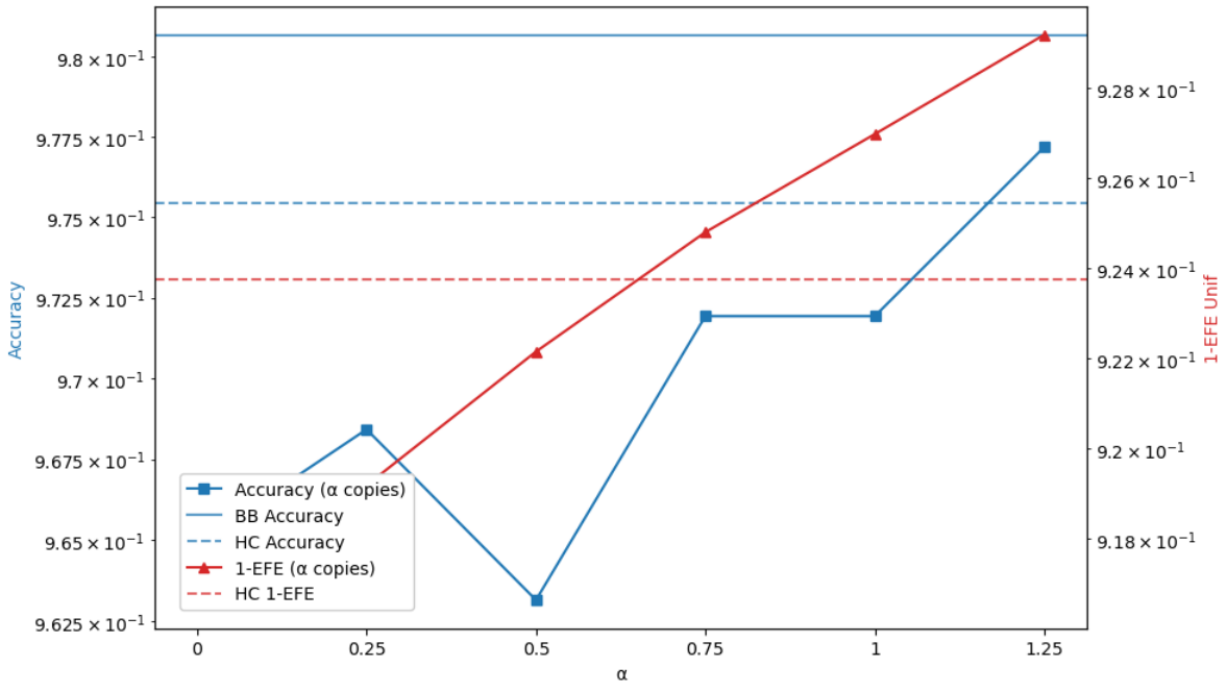}
		\caption{4-NN/GB}
	\end{subfigure}
	
	
	\begin{subfigure}[t]{0.14\textheight}
		\includegraphics[width=\linewidth]{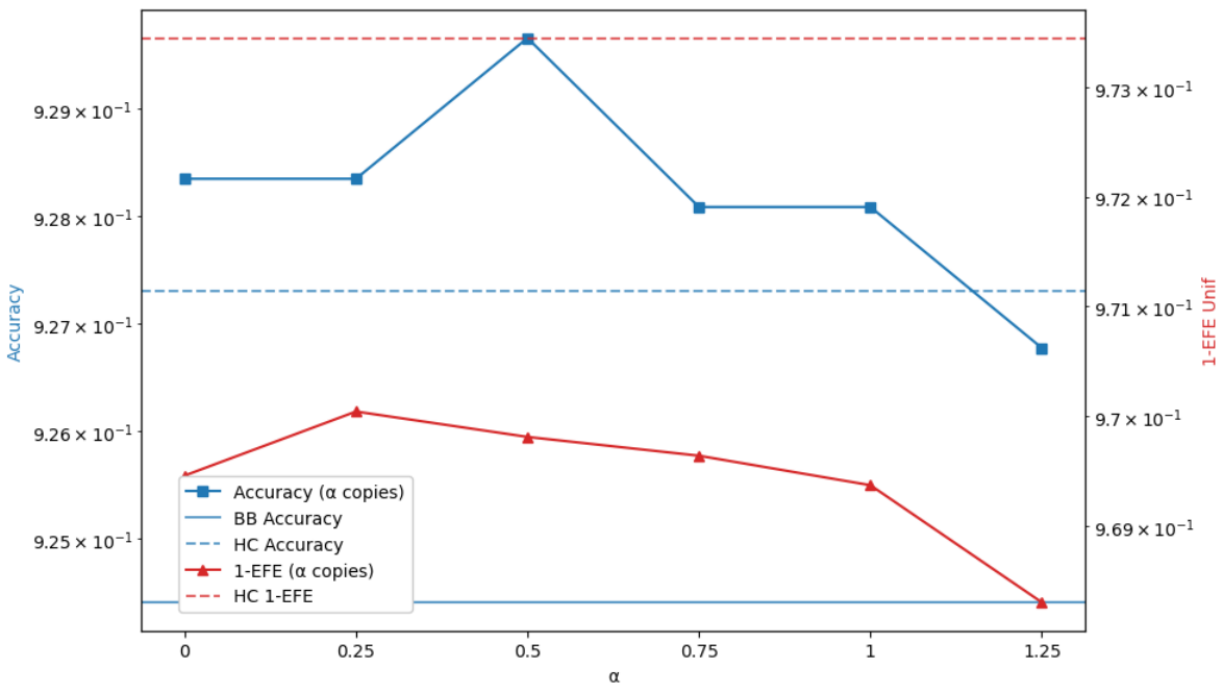}
		\caption{5-RF/SNN}
	\end{subfigure}
	\hfill
	\begin{subfigure}[t]{0.14\textheight}
		\includegraphics[width=\linewidth]{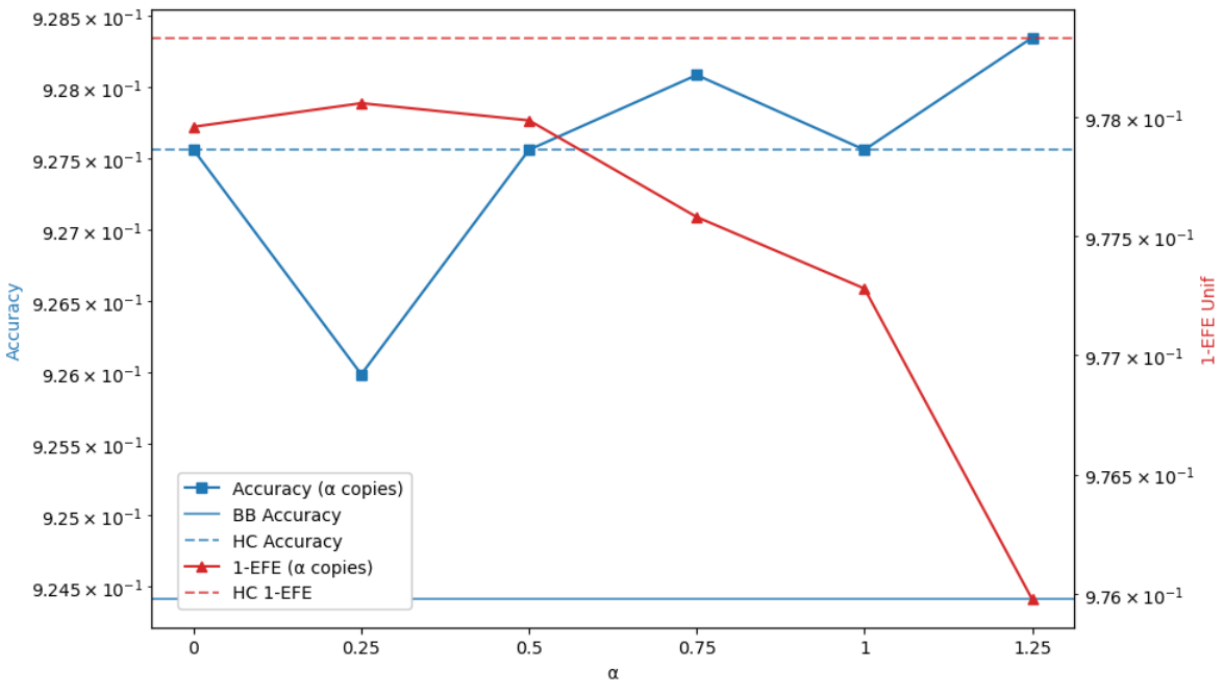}
		\caption{5-RF/MNN}
	\end{subfigure}
	\hfill
	\begin{subfigure}[t]{0.14\textheight}
		\includegraphics[width=\linewidth]{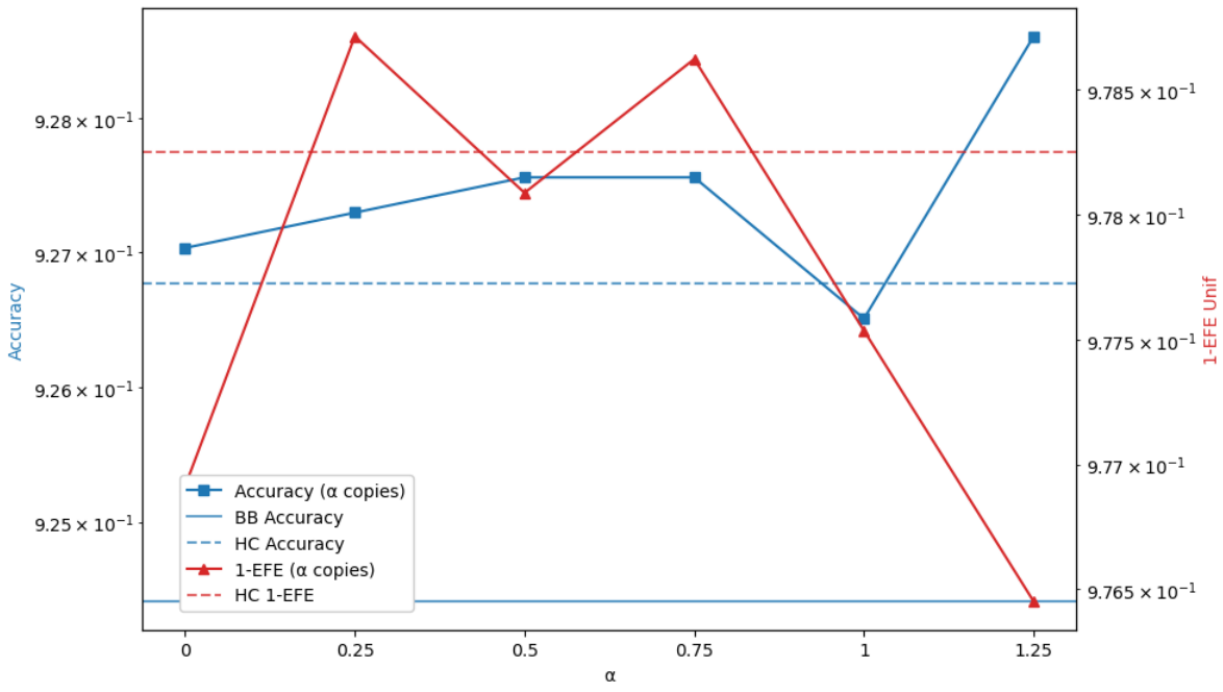}
		\caption{5-RF/LNN}
	\end{subfigure}
	\hfill
	\begin{subfigure}[t]{0.14\textheight}
		\includegraphics[width=\linewidth]{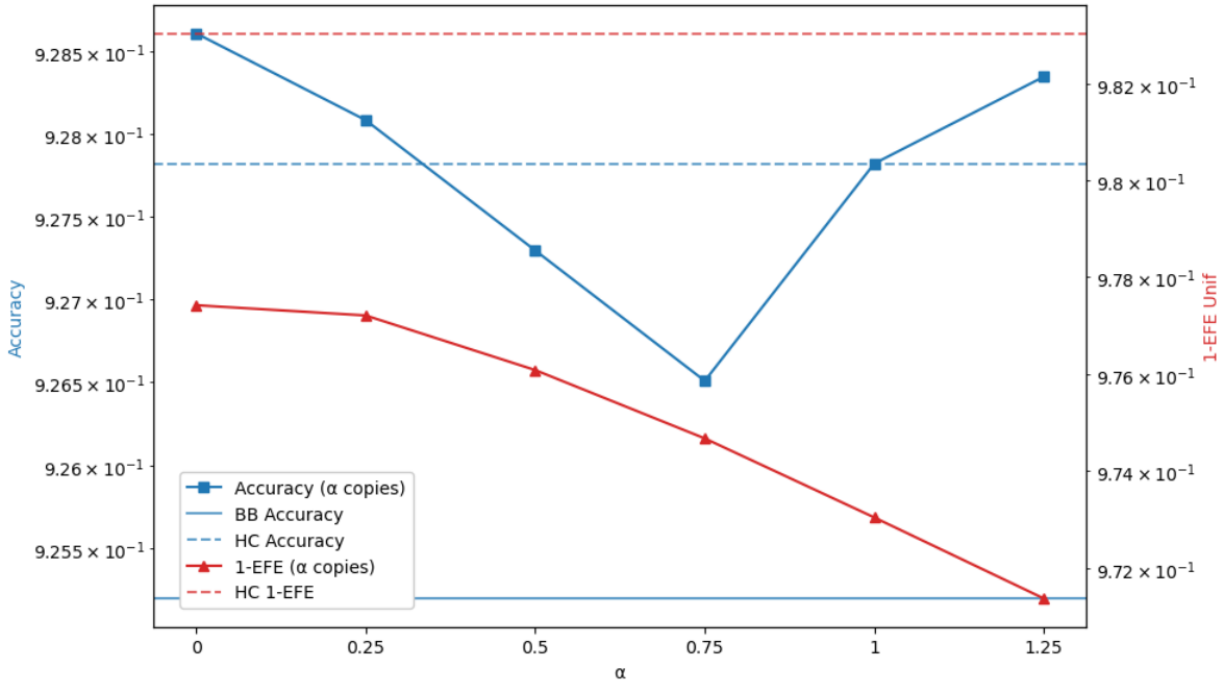}
		\caption{5-RF/GB}
	\end{subfigure}
	
	
	\begin{subfigure}[t]{0.14\textheight}
		\includegraphics[width=\linewidth]{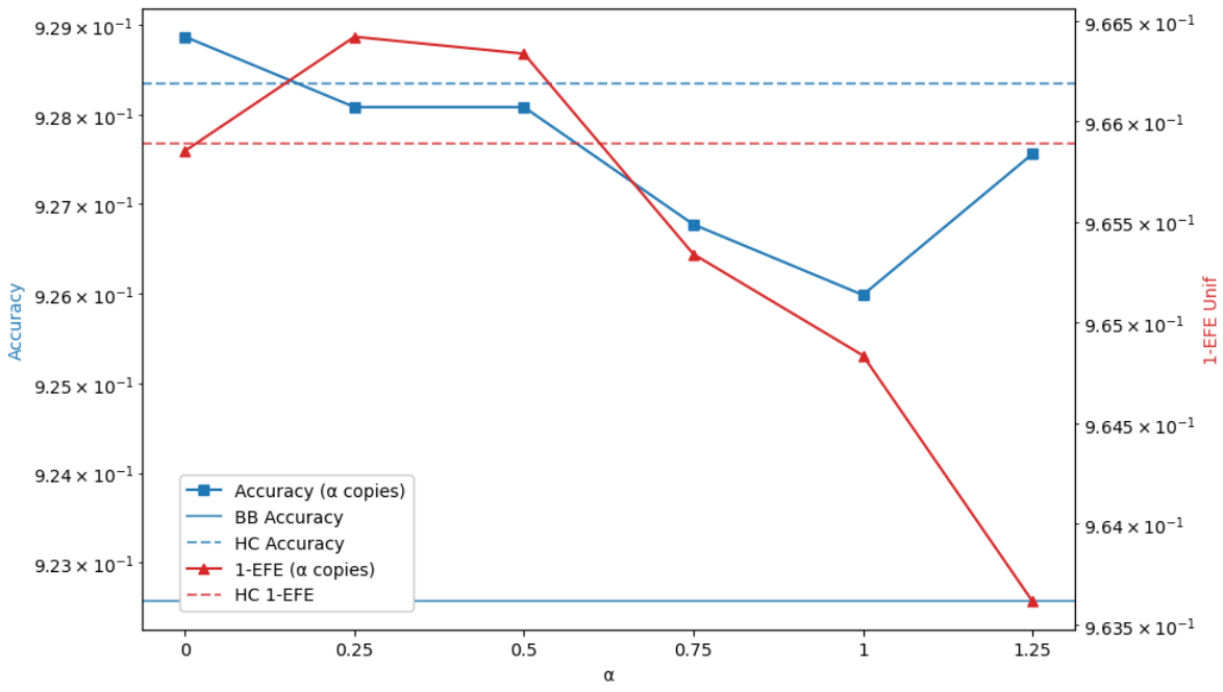}
		\caption{5-GB/SNN}
	\end{subfigure}
	\hfill
	\begin{subfigure}[t]{0.14\textheight}
		\includegraphics[width=\linewidth]{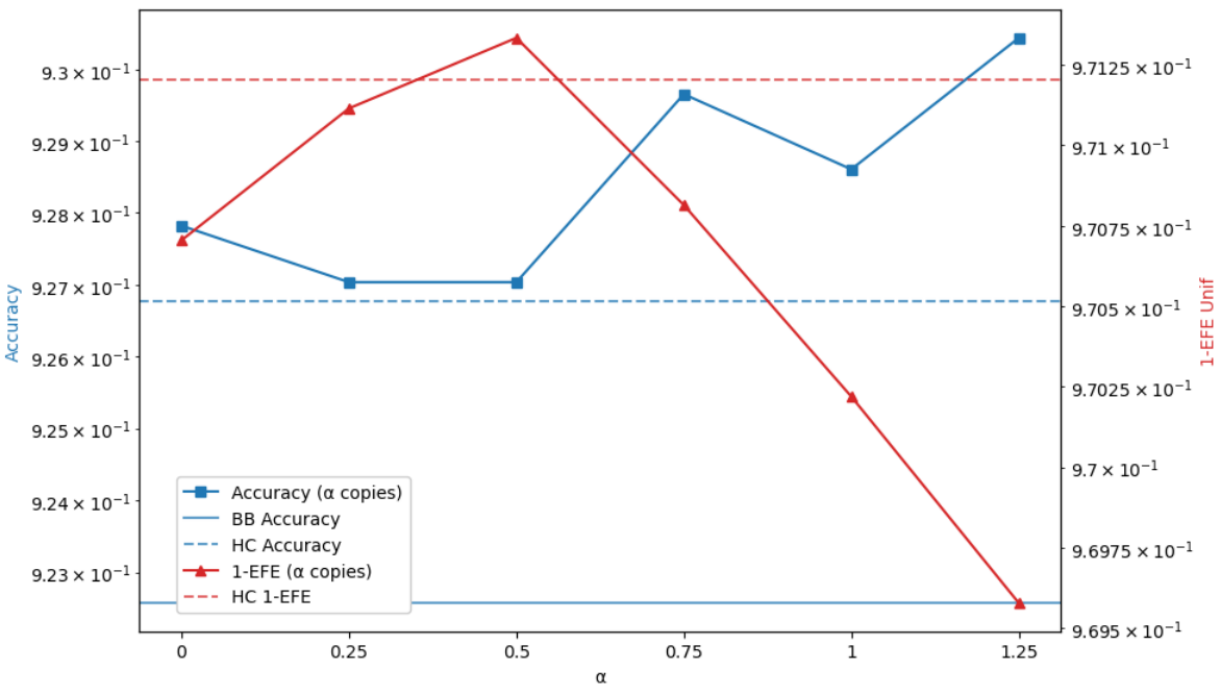}
		\caption{5-GB/MNN}
	\end{subfigure}
	\hfill
	\begin{subfigure}[t]{0.14\textheight}
		\includegraphics[width=\linewidth]{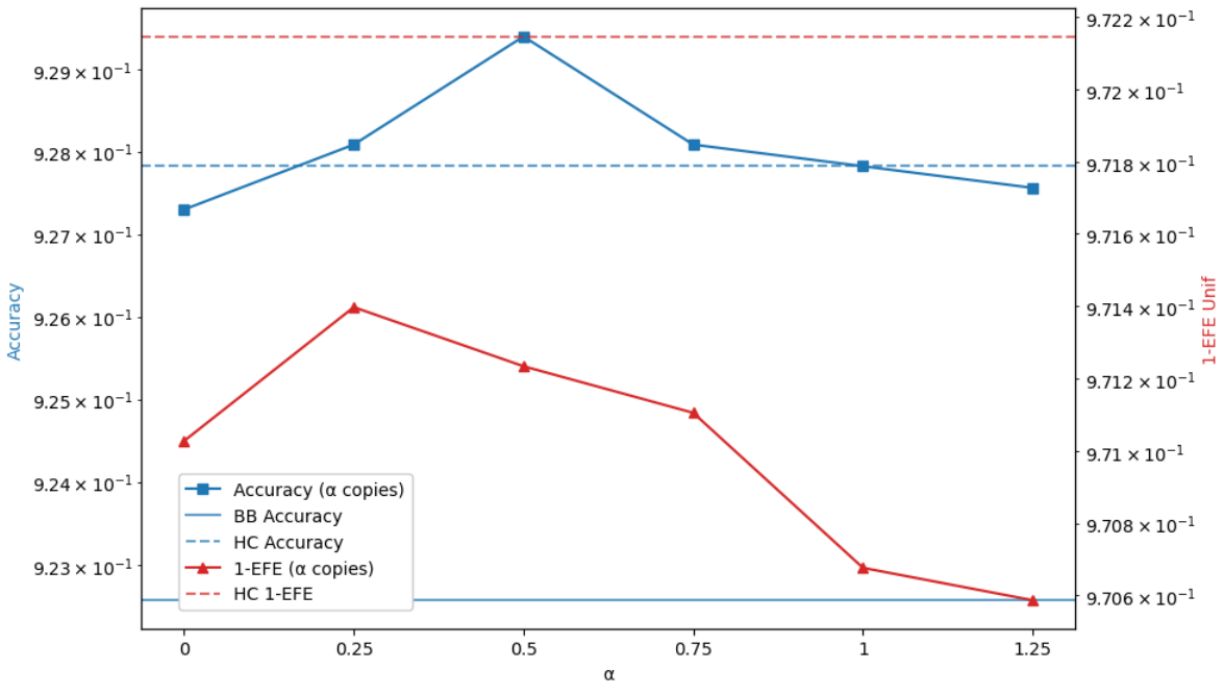}
		\caption{5-GB/LNN}
	\end{subfigure}
	\hfill
	\begin{subfigure}[t]{0.14\textheight}
		\includegraphics[width=\linewidth]{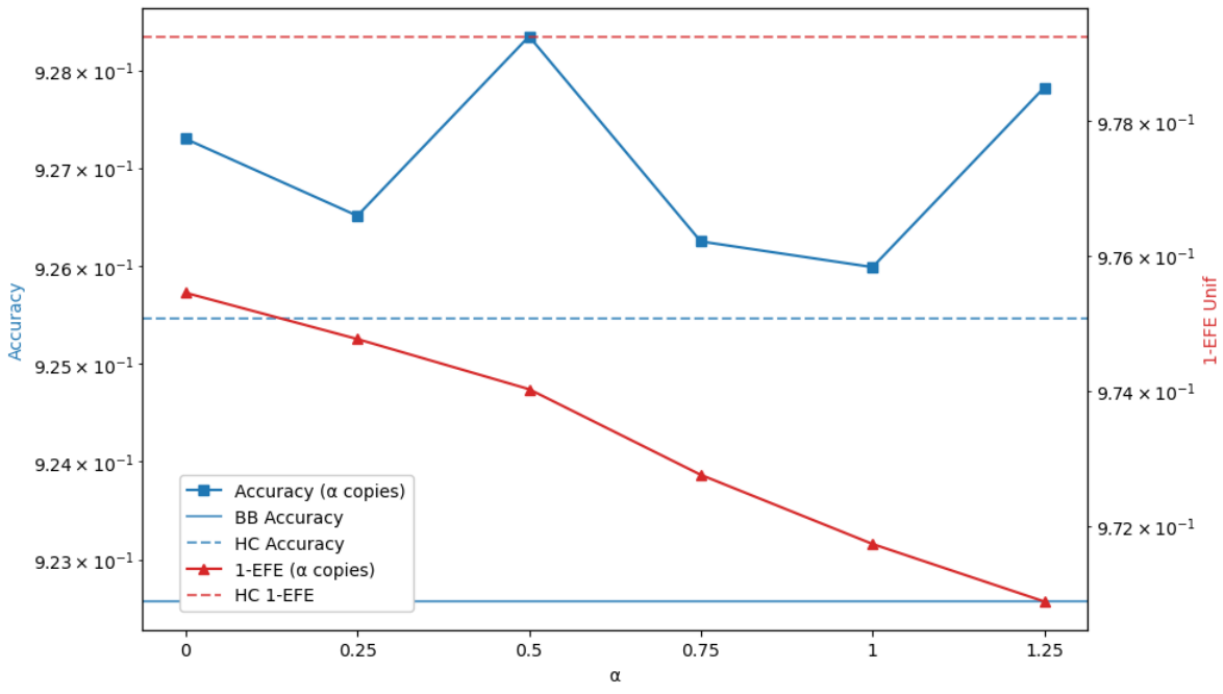}
		\caption{5-GB/GB}
	\end{subfigure}
	
	
	\begin{subfigure}[t]{0.14\textheight}
		\includegraphics[width=\linewidth]{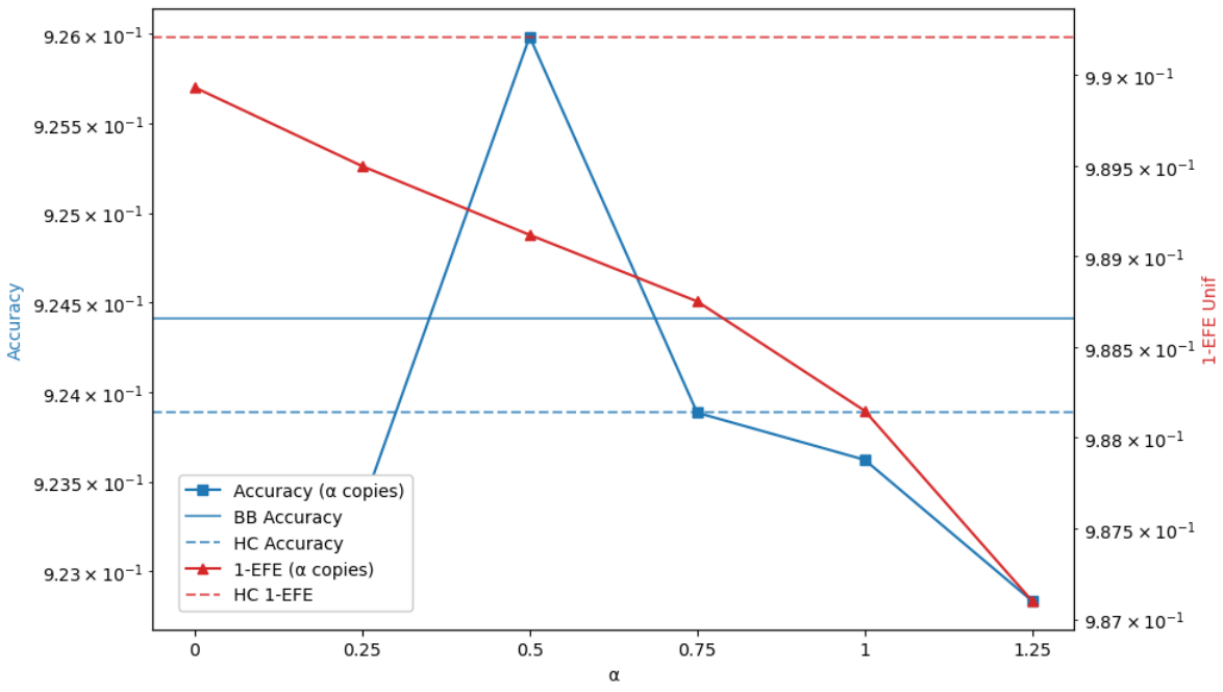}
		\caption{5-NN/SNN}
	\end{subfigure}
	\hfill
	\begin{subfigure}[t]{0.14\textheight}
		\includegraphics[width=\linewidth]{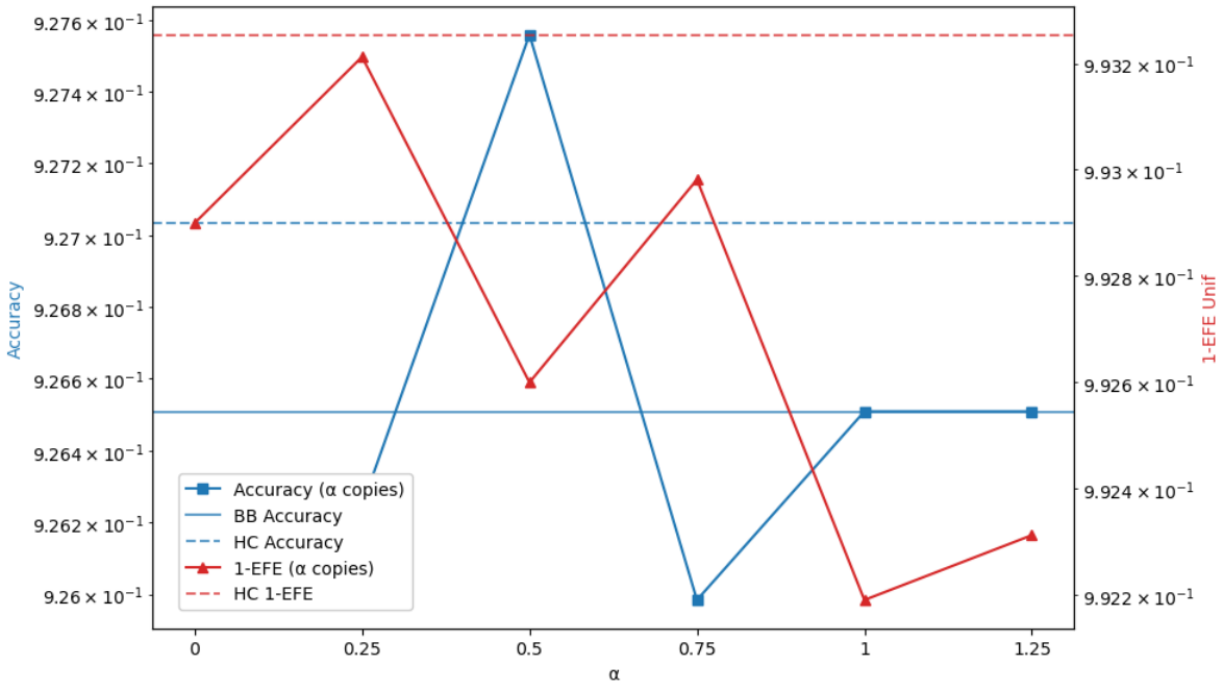}
		\caption{5-NN/MNN}
	\end{subfigure}
	\hfill
	\begin{subfigure}[t]{0.14\textheight}
		\includegraphics[width=\linewidth]{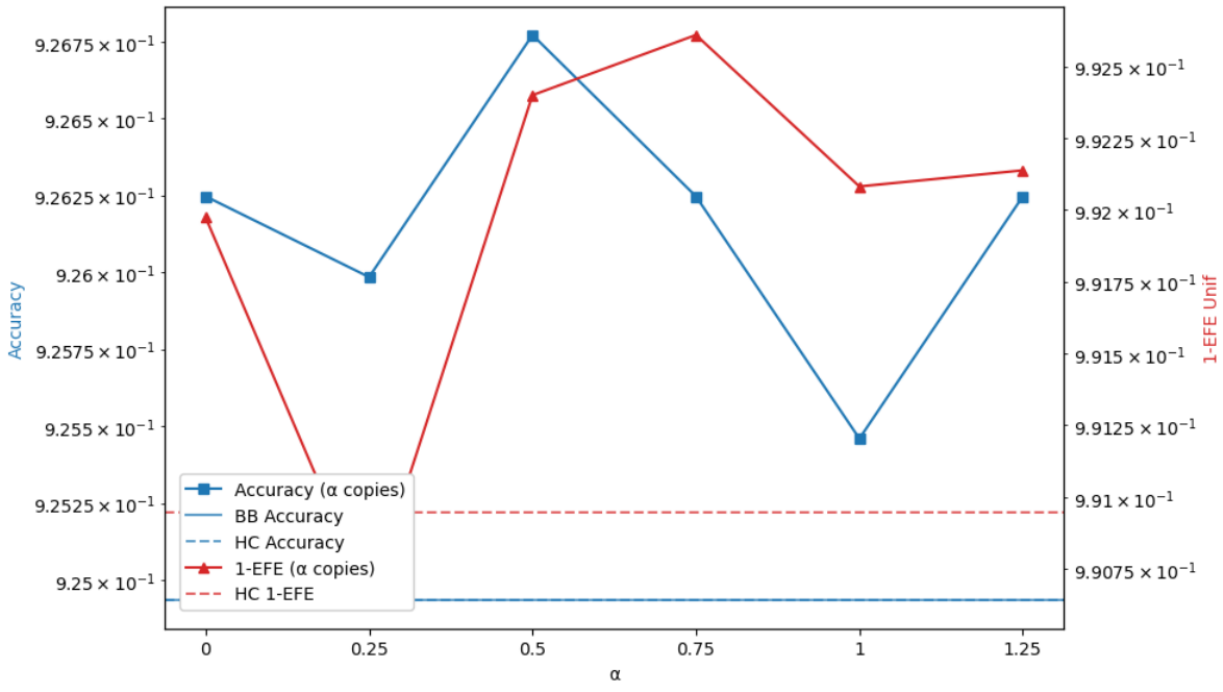}
		\caption{5-NN/LNN}
	\end{subfigure}
	\hfill
	\begin{subfigure}[t]{0.14\textheight}
		\includegraphics[width=\linewidth]{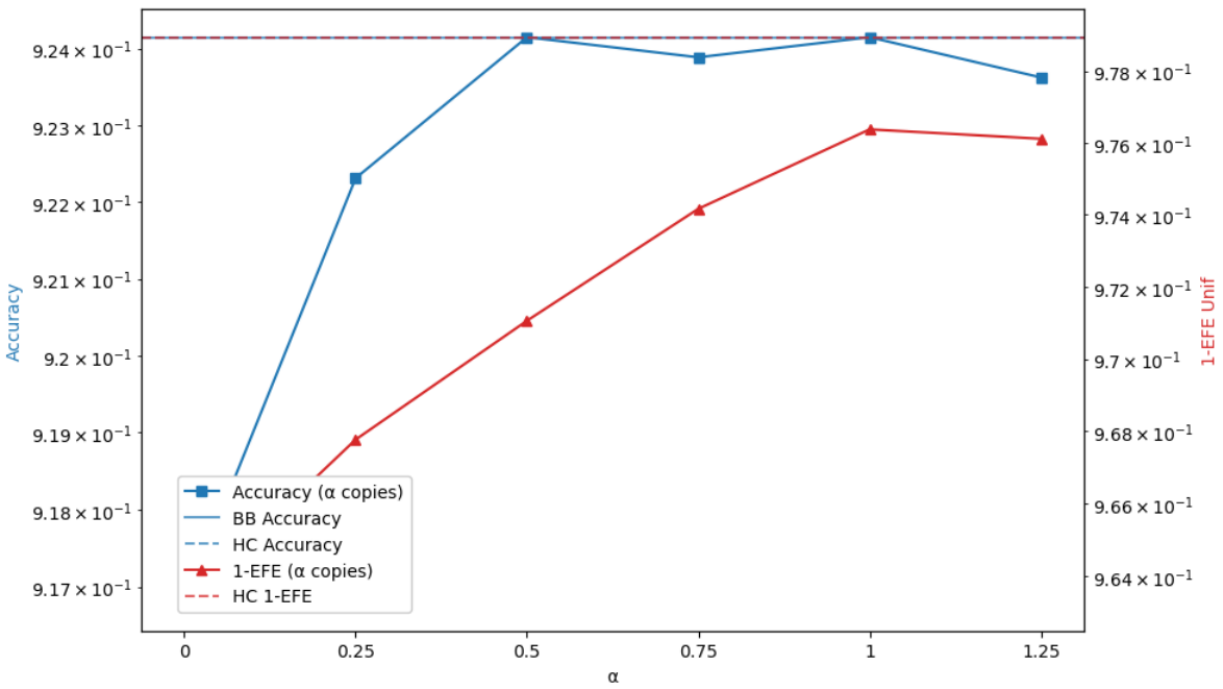}
		\caption{5-NN/GB}
	\end{subfigure}
	
	
	\begin{subfigure}[t]{0.14\textheight}
		\includegraphics[width=\linewidth]{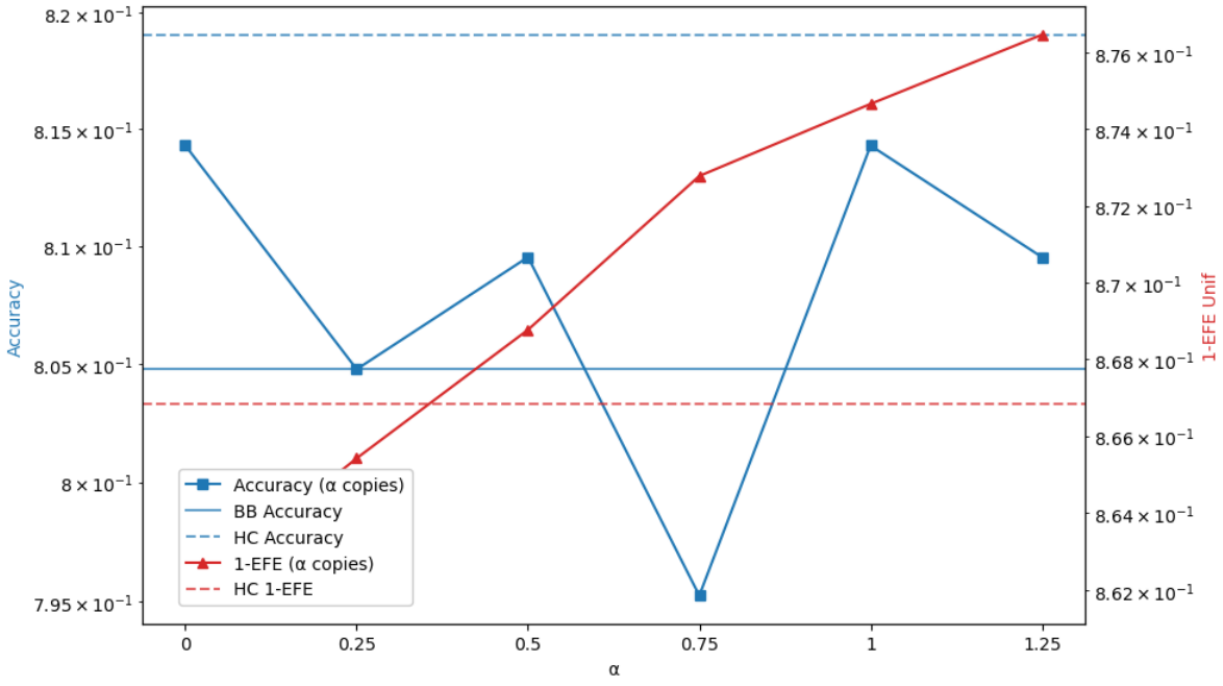}
		\caption{6-RF/SNN}
	\end{subfigure}
	\hfill
	\begin{subfigure}[t]{0.14\textheight}
		\includegraphics[width=\linewidth]{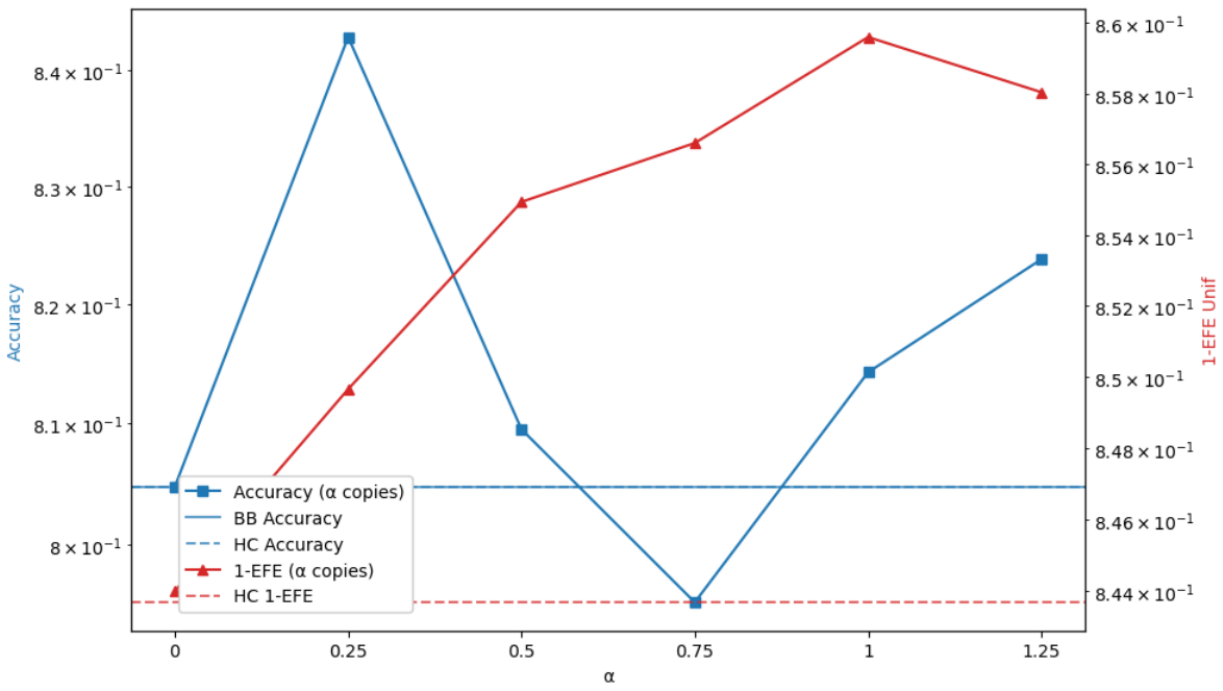}
		\caption{6-RF/MNN}
	\end{subfigure}
	\hfill
	\begin{subfigure}[t]{0.14\textheight}
		\includegraphics[width=\linewidth]{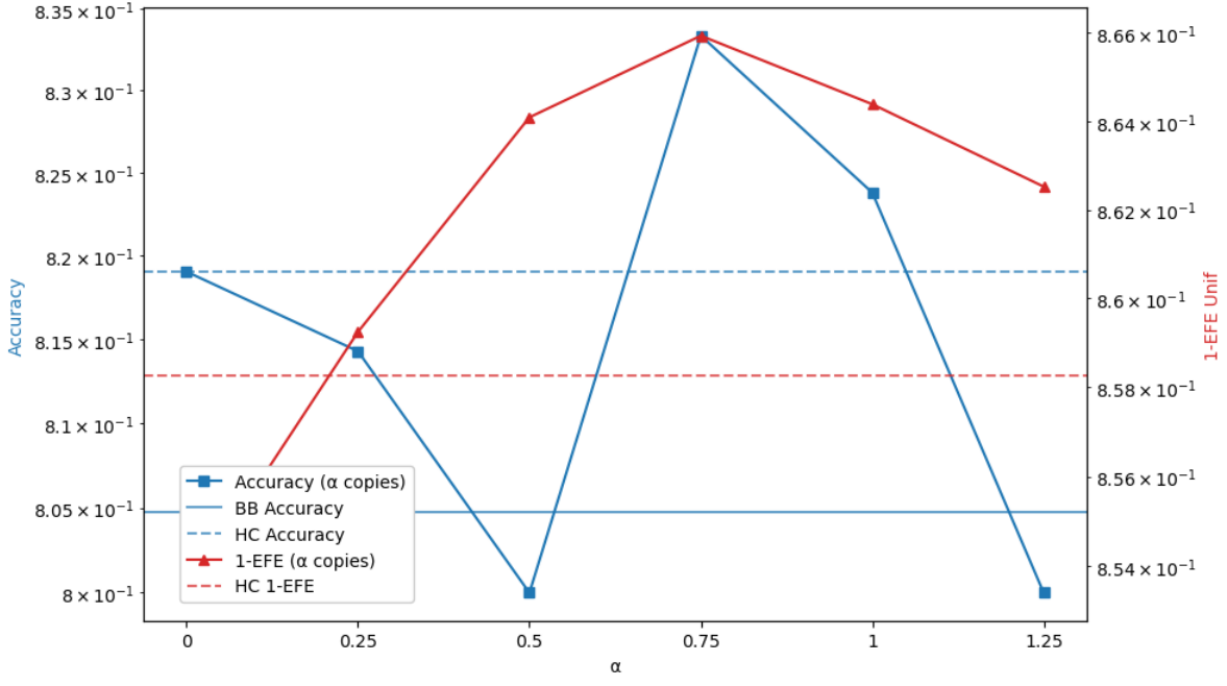}
		\caption{6-RF/LNN}
	\end{subfigure}
	\hfill
	\begin{subfigure}[t]{0.14\textheight}
		\includegraphics[width=\linewidth]{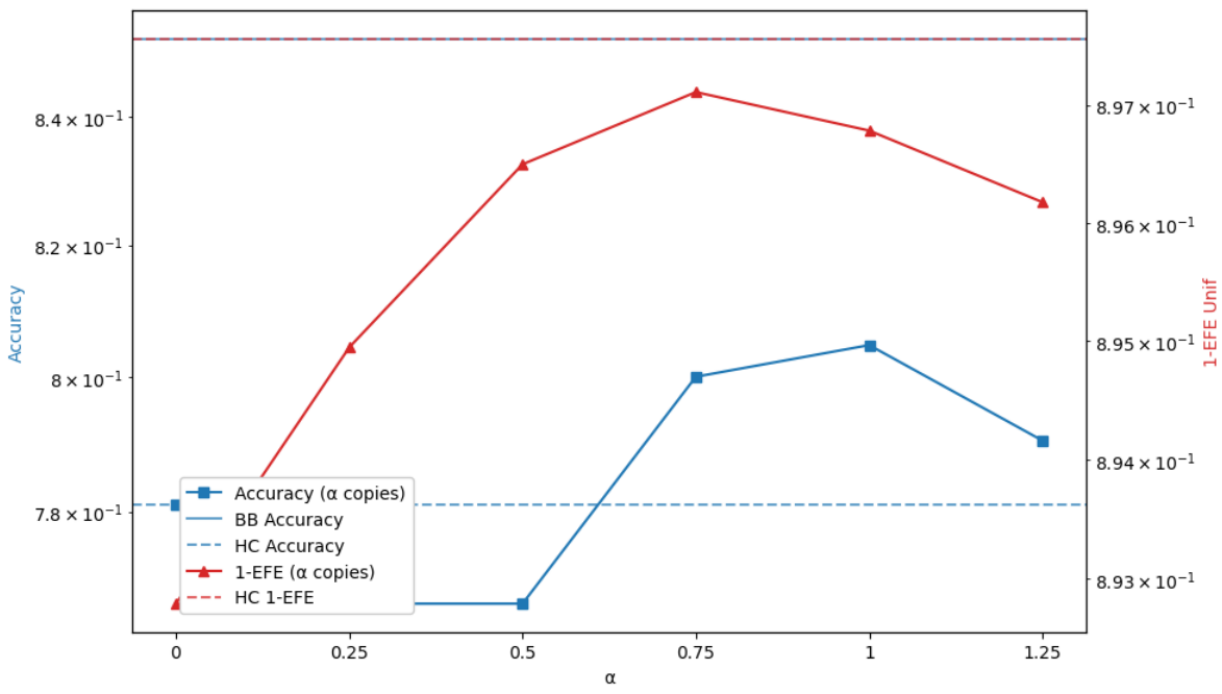}
		\caption{6-RF/GB}
	\end{subfigure}
	
	
	\begin{subfigure}[t]{0.14\textheight}
		\includegraphics[width=\linewidth]{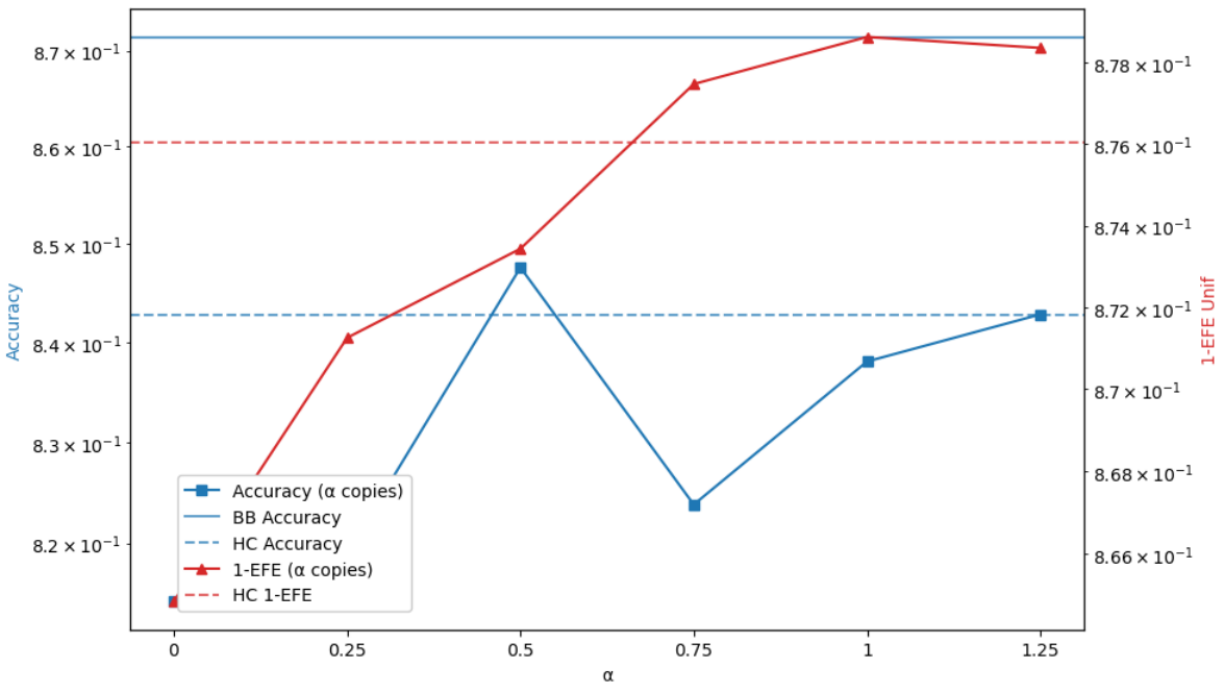}
		\caption{6-GB/SNN}
	\end{subfigure}
	\hfill
	\begin{subfigure}[t]{0.14\textheight}
		\includegraphics[width=\linewidth]{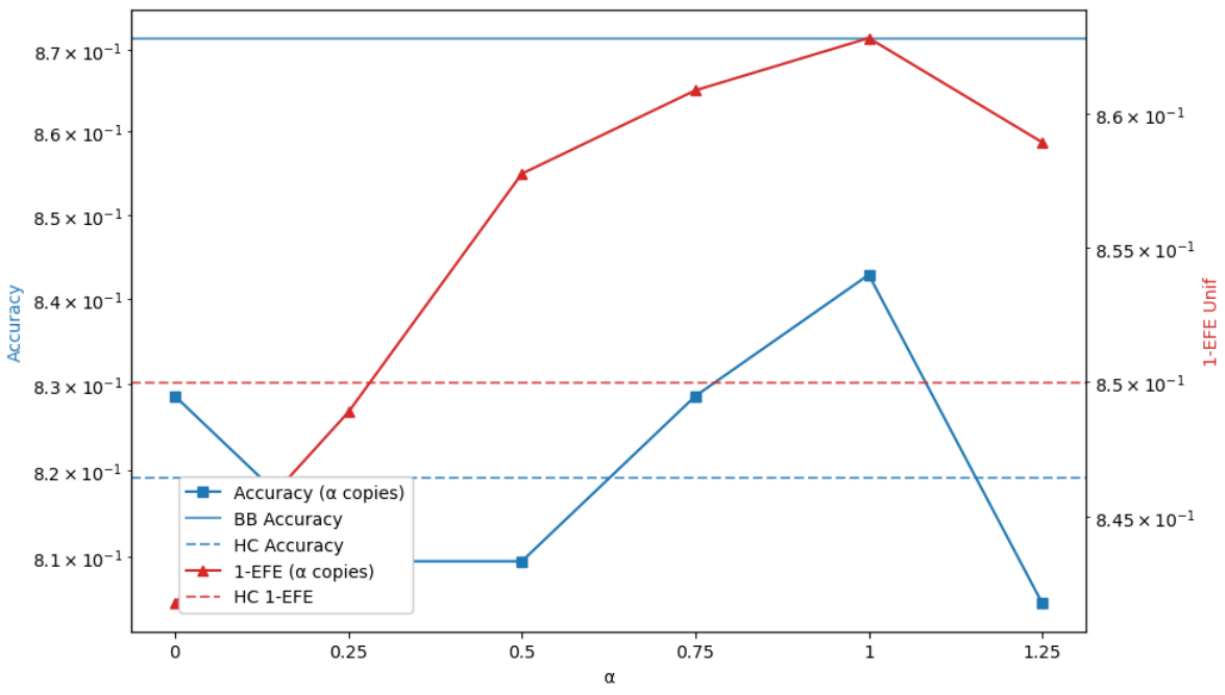}
		\caption{6-GB/MNN}
	\end{subfigure}
	\hfill
	\begin{subfigure}[t]{0.14\textheight}
		\includegraphics[width=\linewidth]{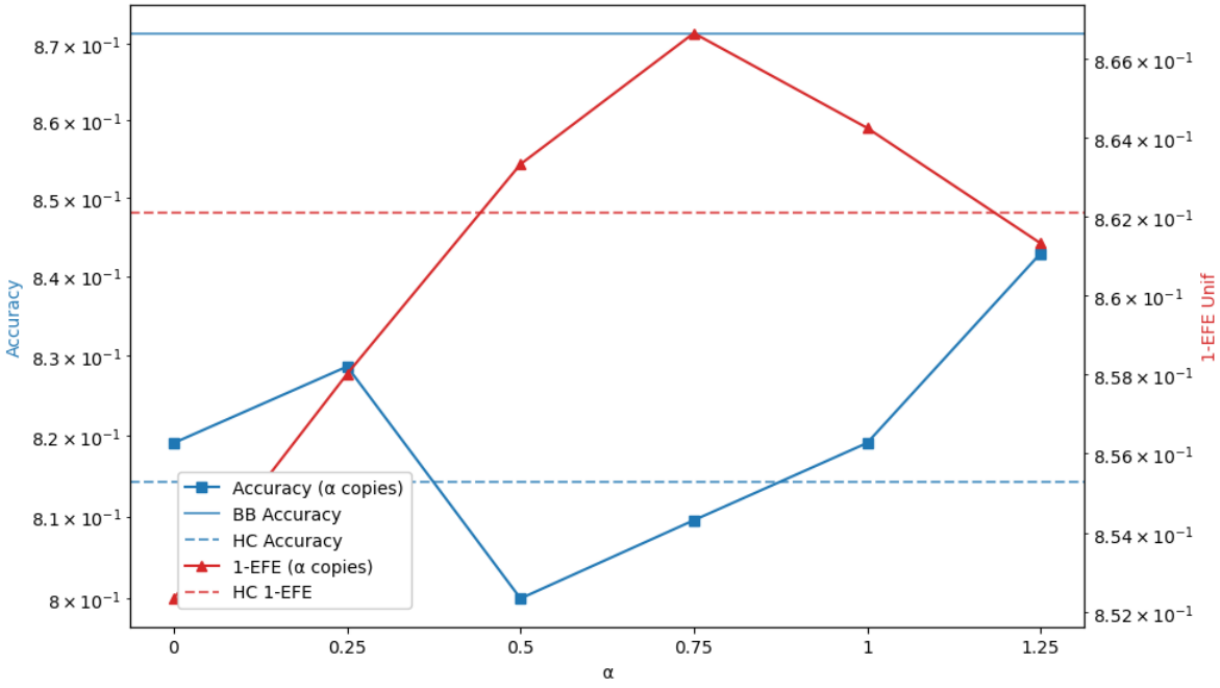}
		\caption{6-GB/LNN}
	\end{subfigure}
	\hfill
	\begin{subfigure}[t]{0.14\textheight}
		\includegraphics[width=\linewidth]{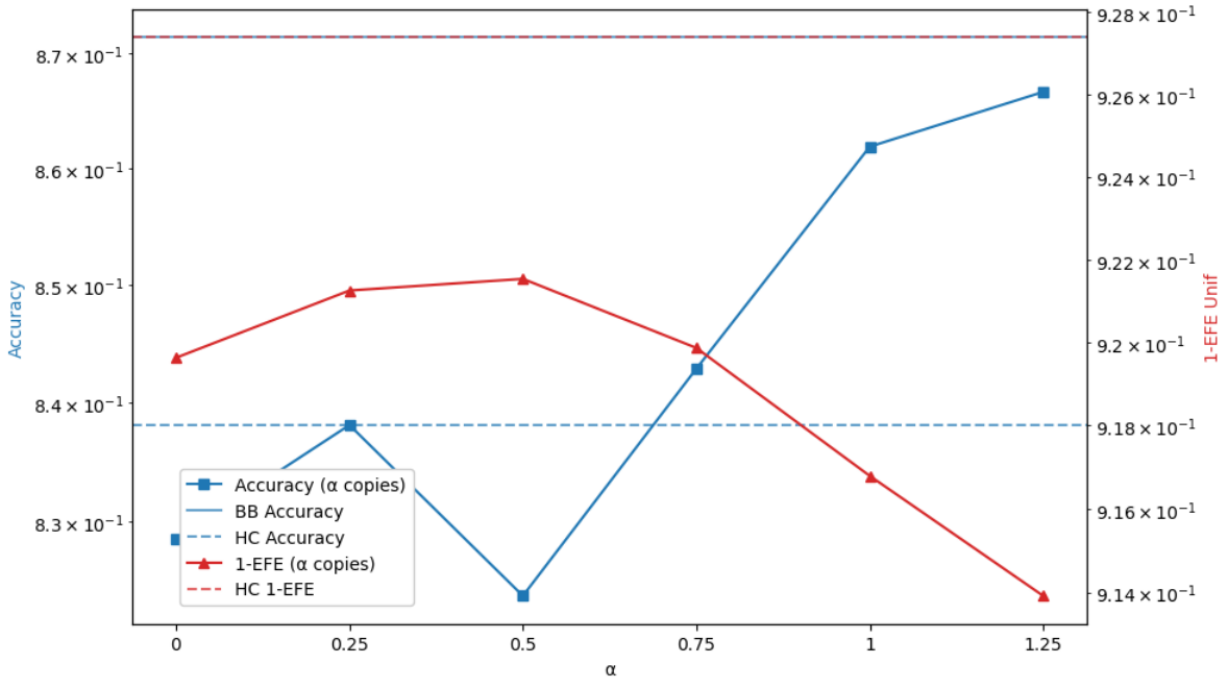}
		\caption{6-GB/GB}
	\end{subfigure}
	
	
	\begin{subfigure}[t]{0.14\textheight}
		\includegraphics[width=\linewidth]{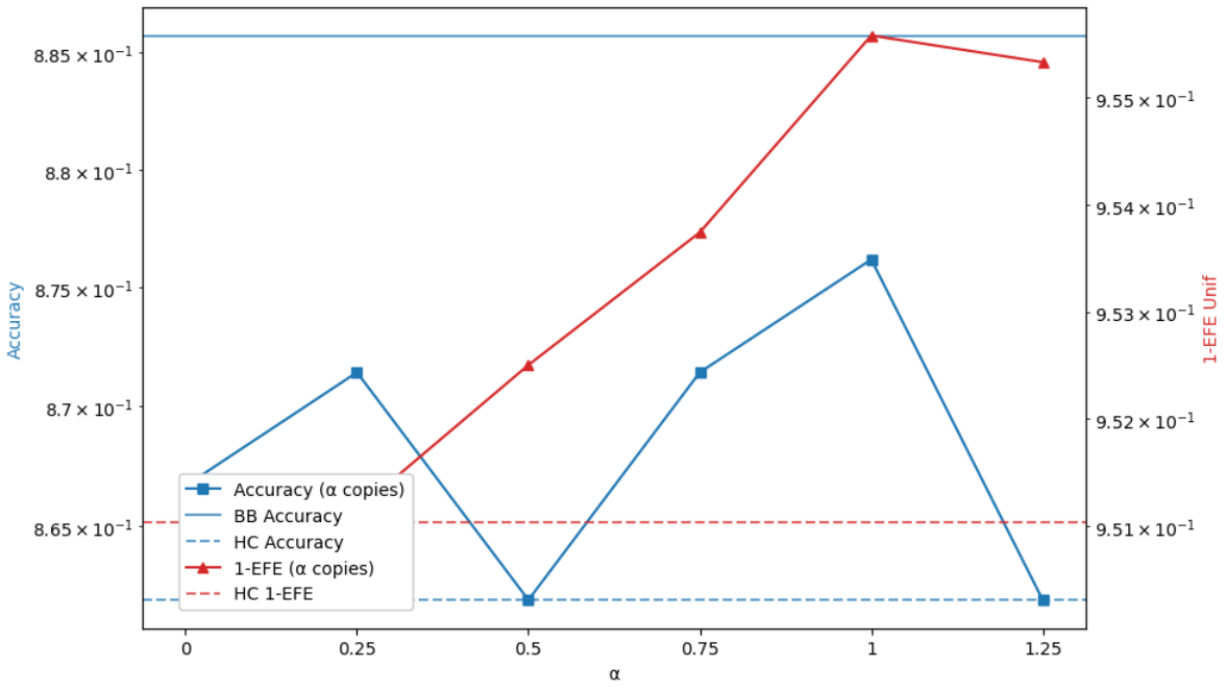}
		\caption{6-NN/SNN}
	\end{subfigure}
	\hfill
	\begin{subfigure}[t]{0.14\textheight}
		\includegraphics[width=\linewidth]{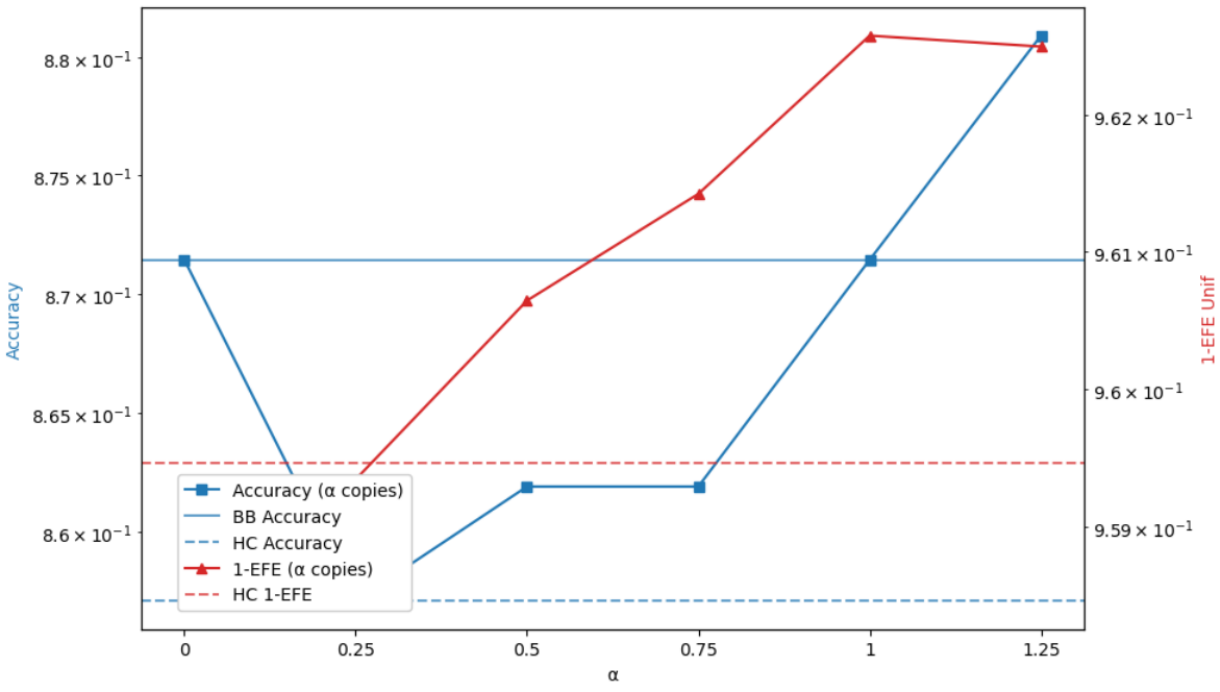}
		\caption{6-NN/MNN}
	\end{subfigure}
	\hfill
	\begin{subfigure}[t]{0.14\textheight}
		\includegraphics[width=\linewidth]{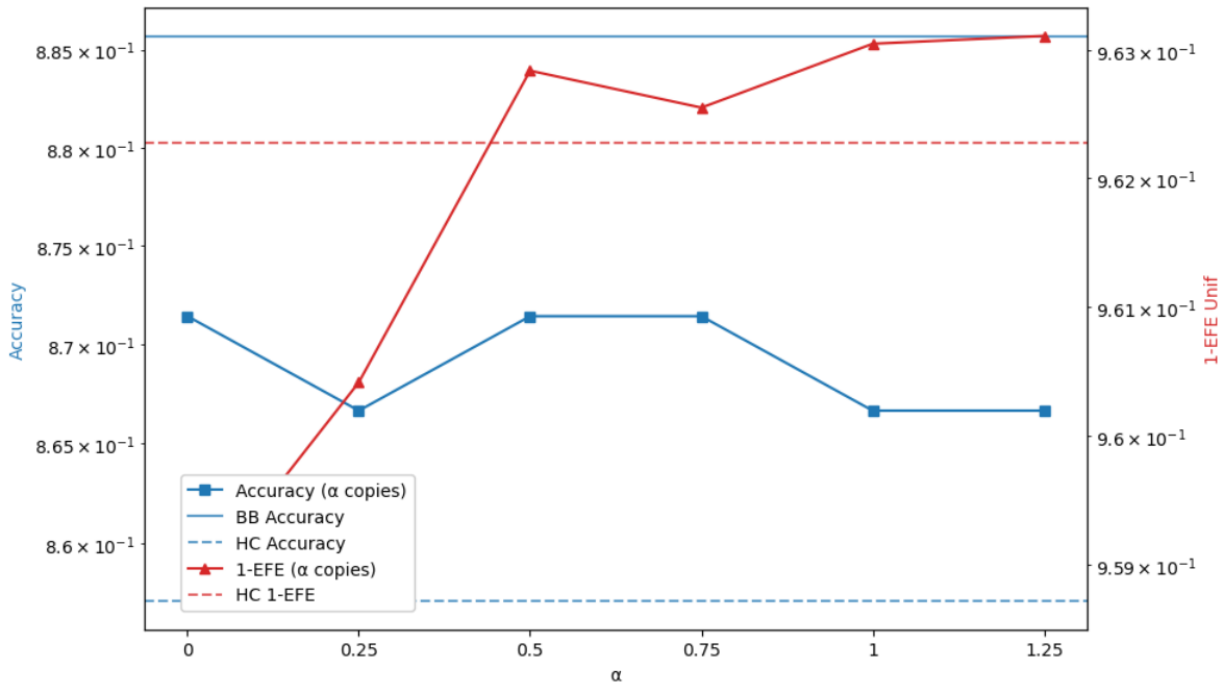}
		\caption{6-NN/LNN}
	\end{subfigure}
	\hfill
	\begin{subfigure}[t]{0.14\textheight}
		\includegraphics[width=\linewidth]{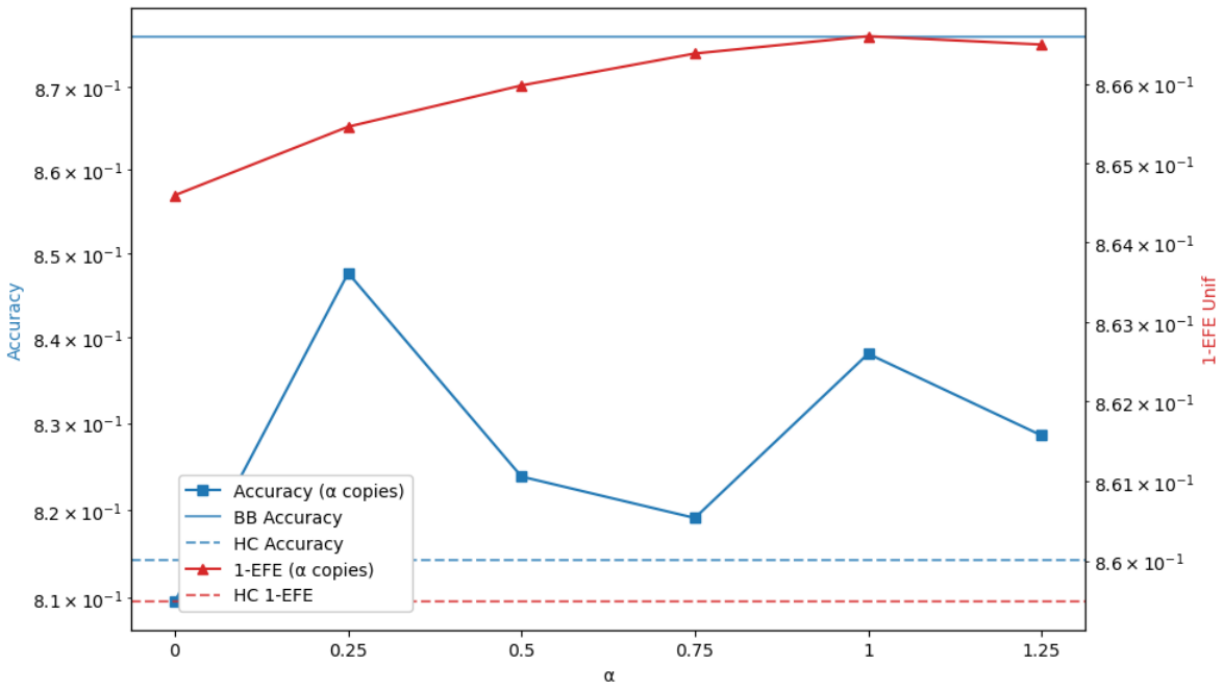}
		\caption{6-NN/GB}
	\end{subfigure}
\end{figure}

\newpage
\section{Tables corresponding to the MAE and RMSE of the approximate distances}
\label{appDistTabs}
This appendix shows the means and standard deviations for the estimated metrics for each algorithm on the different configurations and datasets over an independent synthetic sampling and test data.  

\noindent\textbf{Metric: MAE}
\vspace{3mm}

\begin{table}[!h]
\centering
\tiny
\setlength{\tabcolsep}{1.8pt}
\begin{tabular}{l c c *{6}{c} c c}
\toprule
Copy & $f_{\mathcal{O}}/f_{\mathcal{C}}$ & Metric &
\multicolumn{2}{c}{Dataset 1} &
\multicolumn{2}{c}{Dataset 2} &
\multicolumn{2}{c}{Dataset 3} &
Avg. Dat. 1-3 &
Avg. \\
\cmidrule(lr){4-9}
& & &
$\mathscr{D}_{\mathrm{te}}^{'}$ real data & $\mathcal{S}^{'}$ uniform data &
$\mathscr{D}_{\mathrm{te}}^{'}$ real data & $\mathcal{S}^{'}$ uniform data &
$\mathscr{D}_{\mathrm{te}}^{'}$ real data & $\mathcal{S}^{'}$ uniform data &
\\
\midrule
Alg. 1 copy & RF/SNN & MAE &
0.018$\pm$0.003 & 0.019$\pm$0.002 & 
0.027$\pm$0.004 & 0.024$\pm$0.002 &
0.028$\pm$0.001 & 0.031$\pm$0.003 &
0.024 & 0.163 \\
Alg. 2 copy & RF/SNN & MAE &
0.016$\pm$0.001 & 0.059$\pm$0.002 &
0.009$\pm$0.001 & 0.009$\pm$0.001 &
0.019$\pm$0.002 & 0.019$\pm$0.001 &
0.022 & 0.152\\
\midrule
Alg. 1 copy & GB/SNN & MAE &
0.043$\pm$0.003 & 0.040$\pm$0.004 & 
0.019$\pm$0.003 & 0.019$\pm$0.003 &
0.029$\pm$0.001 & 0.030$\pm$0.001 &
0.030 & 0.159\\
Alg. 2 copy & GB/SNN & MAE &
0.036$\pm$0.002 & 0.062$\pm$0.005 &
0.010$\pm$0.001 & 0.009$\pm$0.001 &
0.019$\pm$0.001 & 0.019$\pm$0.001 &
0.026 & 0.134\\
\midrule
Alg. 1 copy & NN/SNN & MAE &
0.006$\pm$0.001 & 0.007$\pm$0.001 & 
0.016$\pm$0.005 & 0.012$\pm$0.002 &
0.024$\pm$0.002 & 0.027$\pm$0.003 &
0.015 & 0.118\\
Alg. 2 copy & NN/SNN & MAE &
0.011$\pm$0.000 & 0.053$\pm$0.003 &
0.009$\pm$0.001 & 0.009$\pm$0.001 &
0.014$\pm$0.001 & 0.015$\pm$0.002 &
0.018 & 0.164\\
\midrule
Alg. 1 copy & RF/MNN & MAE &
0.017$\pm$0.001 & 0.019$\pm$0.002 & 
0.011$\pm$0.002 & 0.012$\pm$0.003 &
0.020$\pm$0.001 & 0.020$\pm$0.000 &
0.016 & 0.170\\
Alg. 2 copy & RF/MNN & MAE &
0.016$\pm$0.002 & 0.058$\pm$0.003 &
0.008$\pm$0.001 & 0.008$\pm$0.001 &
0.010$\pm$0.001 & 0.011$\pm$0.002 &
0.018 & 0.152\\
\midrule
Alg. 1 copy & GB/MNN & MAE &
0.039$\pm$0.003 & 0.037$\pm$0.004 & 
0.010$\pm$0.001 & 0.010$\pm$0.002 &
0.020$\pm$0.001 & 0.020$\pm$0.002 &
0.023 & 0.168\\
Alg. 2 copy & GB/MNN & MAE &
0.034$\pm$0.001 & 0.062$\pm$0.003 &
0.007$\pm$0.000 & 0.008$\pm$0.000 &
0.009$\pm$0.001 & 0.008$\pm$0.000 &
0.021 & 0.147\\
\midrule
Alg. 1 copy & NN/MNN & MAE &
0.006$\pm$0.002 & 0.008$\pm$0.003 & 
0.008$\pm$0.002 & 0.008$\pm$0.002 &
0.013$\pm$0.001 & 0.012$\pm$0.001 &
0.009 & 0.122\\
Alg. 2 copy & NN/MNN & MAE &
0.013$\pm$0.001 & 0.054$\pm$0.002 &
0.007$\pm$0.001 & 0.008$\pm$0.001 &
0.008$\pm$0.001 & 0.008$\pm$0.000 &
0.016 & 0.163\\
\midrule
Alg. 1 copy & RF/LNN & MAE &
0.017$\pm$0.002 & 0.019$\pm$0.005 & 
0.012$\pm$0.001 & 0.011$\pm$0.001 &
0.020$\pm$0.003 & 0.020$\pm$0.002 &
0.016 & 0.162\\
Alg. 2 copy & RF/LNN & MAE &
0.017$\pm$0.002 & 0.060$\pm$0.002 &
0.007$\pm$0.001 & 0.007$\pm$0.001 &
0.010$\pm$0.001 & 0.010$\pm$0.001 &
0.018 & 0.157\\
\midrule
Alg. 1 copy & GB/LNN & MAE &
0.037$\pm$0.003 & 0.043$\pm$0.010 & 
0.010$\pm$0.001 & 0.010$\pm$0.001 &
0.018$\pm$0.001 & 0.018$\pm$0.001 &
0.023 & 0.155\\
Alg. 2 copy & GB/LNN & MAE &
0.035$\pm$0.002 & 0.061$\pm$0.004 &
0.008$\pm$0.001 & 0.009$\pm$0.001 &
0.008$\pm$0.001 & 0.007$\pm$0.000 &
0.021 & 0.149\\
\midrule
Alg. 1 copy & NN/LNN & MAE &
0.008$\pm$0.005 & 0.009$\pm$0.003 & 
0.007$\pm$0.001 & 0.007$\pm$0.001 &
0.014$\pm$0.003 & 0.013$\pm$0.003 &
0.010 & 0.132\\
Alg. 2 copy & NN/LNN & MAE &
0.013$\pm$0.002 & 0.054$\pm$0.003 &
0.007$\pm$0.001 & 0.007$\pm$0.001 &
0.008$\pm$0.001 & 0.007$\pm$0.001 &
0.016 & 0.166\\
\midrule
Alg. 1 copy & RF/GB & MAE &
0.019$\pm$0.002 & 0.019$\pm$0.002 & 
0.011$\pm$0.000 & 0.013$\pm$0.002 &
0.021$\pm$0.001 & 0.020$\pm$0.002 &
0.017 & 0.151\\
Alg. 2 copy & RF/GB & MAE &
0.016$\pm$0.001 & 0.058$\pm$0.003 &
0.008$\pm$0.000 & 0.008$\pm$0.000 &
0.013$\pm$0.000 & 0.014$\pm$0.001 &
0.020 & 0.159\\
\midrule
Alg. 1 copy & GB/GB & MAE &
0.040$\pm$0.001 & 0.037$\pm$0.004 & 
0.012$\pm$0.001 & 0.012$\pm$0.001 &
0.018$\pm$0.001 & 0.018$\pm$0.001 &
0.023 & 0.131\\
Alg. 2 copy & GB/GB & MAE &
0.035$\pm$0.002 & 0.065$\pm$0.003 &
0.009$\pm$0.000 & 0.008$\pm$0.000 &
0.013$\pm$0.000 & 0.014$\pm$0.001 &
0.024 & 0.155\\
\midrule
Alg. 1 copy & NN/GB & MAE &
0.010$\pm$0.005 & 0.009$\pm$0.001 & 
0.017$\pm$0.001 & 0.012$\pm$0.001 &
0.015$\pm$0.001 & 0.014$\pm$0.001 &
0.013 & 0.211\\
Alg. 2 copy & NN/GB & MAE &
0.012$\pm$0.001 & 0.053$\pm$0.003 &
0.014$\pm$0.001 & 0.010$\pm$0.000 &
0.013$\pm$0.001 & 0.013$\pm$0.001 &
0.019 & 0.208\\
\bottomrule
\end{tabular}
\end{table}

\vspace{10mm}

\begin{table}[!h]
\centering
\tiny
\setlength{\tabcolsep}{1.8pt}
\begin{tabular}{l c c *{6}{c} c c}
\toprule
Copy & $f_{\mathcal{O}}/f_{\mathcal{C}}$ & Metric &
\multicolumn{2}{c}{Dataset 4} &
\multicolumn{2}{c}{Dataset 5} &
\multicolumn{2}{c}{Dataset 6} &
Avg. Dat. 4-6 &
Avg. \\
\cmidrule(lr){4-9}
& & &
$\mathscr{D}_{\mathrm{te}}^{'}$ real data & $\mathcal{S}^{'}$ uniform data &
$\mathscr{D}_{\mathrm{te}}^{'}$ real data & $\mathcal{S}^{'}$ uniform data &
$\mathscr{D}_{\mathrm{te}}^{'}$ real data & $\mathcal{S}^{'}$ uniform data &
\\
\midrule
Alg. 1 copy & RF/SNN & MAE &
0.342$\pm$0.026 & 0.295$\pm$0.023 & 
0.083$\pm$0.007 & 0.087$\pm$0.006 &
0.493$\pm$0.089 & 0.514$\pm$0.038 &
0.302 & 0.163\\
Alg. 2 copy & RF/SNN & MAE &
0.295$\pm$0.049 & 0.240$\pm$0.005 &
0.131$\pm$0.020 & 0.146$\pm$0.027 &
0.529$\pm$0.091 & 0.344$\pm$0.014 &
0.281 & 0.151\\
\midrule
Alg. 1 copy & GB/SNN & MAE &
0.246$\pm$0.030 & 0.299$\pm$0.018 & 
0.121$\pm$0.013 & 0.088$\pm$0.026 &
0.461$\pm$0.090 & 0.510$\pm$0.046 &
0.288 & 0.159\\
Alg. 2 copy & GB/SNN & MAE &
0.239$\pm$0.024 & 0.231$\pm$0.018 &
0.180$\pm$0.022 & 0.139$\pm$0.012 &
0.333$\pm$0.054 & 0.326$\pm$0.034 &
0.241 & 0.134\\
\midrule
Alg. 1 copy & NN/SNN & MAE &
0.185$\pm$0.042 & 0.177$\pm$0.035 & 
0.057$\pm$0.008 & 0.053$\pm$0.003 &
0.486$\pm$0.057 & 0.370$\pm$0.047 &
0.221 & 0.118\\
Alg. 2 copy & NN/SNN & MAE &
0.290$\pm$0.056 & 0.333$\pm$0.074 &
0.151$\pm$0.015 & 0.154$\pm$0.013 &
0.597$\pm$0.065 & 0.328$\pm$0.025 &
0.309 & 0.164\\
\midrule
Alg. 1 copy & RF/MNN & MAE &
0.410$\pm$0.033 & 0.309$\pm$0.019 & 
0.082$\pm$0.017 & 0.085$\pm$0.016 &
0.591$\pm$0.087 & 0.463$\pm$0.035 &
0.323 & 0.170\\
Alg. 2 copy & RF/MNN & MAE &
0.303$\pm$0.030 & 0.238$\pm$0.009 &
0.141$\pm$0.011 & 0.150$\pm$0.009 &
0.461$\pm$0.082 & 0.416$\pm$0.028 &
0.285 & 0.152\\
\midrule
Alg. 1 copy & GB/MNN & MAE &
0.310$\pm$0.067 & 0.327$\pm$0.027 & 
0.106$\pm$0.009 & 0.097$\pm$0.009 &
0.559$\pm$0.072 & 0.474$\pm$0.048 &
0.312 & 0.168\\
Alg. 2 copy & GB/MNN & MAE &
0.261$\pm$0.040 & 0.261$\pm$0.009 &
0.204$\pm$0.018 & 0.166$\pm$0.018 &
0.371$\pm$0.041 & 0.372$\pm$0.040 &
0.272 & 0.147\\
\midrule
Alg. 1 copy & NN/MNN & MAE &
0.171$\pm$0.019 & 0.218$\pm$0.077 & 
0.060$\pm$0.006 & 0.053$\pm$0.005 &
0.542$\pm$0.087 & 0.360$\pm$0.055 &
0.234 & 0.122\\
Alg. 2 copy & NN/MNN & MAE &
0.302$\pm$0.038 & 0.316$\pm$0.036 &
0.167$\pm$0.017 & 0.152$\pm$0.017 &
0.621$\pm$0.046 & 0.303$\pm$0.018 &
0.310 & 0.163\\
\midrule
Alg. 1 copy & RF/LNN & MAE &
0.385$\pm$0.075 & 0.310$\pm$0.028 & 
0.090$\pm$0.030 & 0.091$\pm$0.015 &
0.523$\pm$0.072 & 0.445$\pm$0.037 &
0.307 & 0.162\\
Alg. 2 copy & RF/LNN & MAE &
0.392$\pm$0.041 & 0.240$\pm$0.017 &
0.128$\pm$0.006 & 0.129$\pm$0.014 &
0.504$\pm$0.053 & 0.386$\pm$0.033 &
0.296 & 0.157\\
\midrule
Alg. 1 copy & GB/LNN & MAE &
0.258$\pm$0.027 & 0.311$\pm$0.012 & 
0.121$\pm$0.020 & 0.107$\pm$0.014 &
0.486$\pm$0.071 & 0.430$\pm$0.019 &
0.286 & 0.155\\
Alg. 2 copy & GB/LNN & MAE &
0.270$\pm$0.026 & 0.265$\pm$0.026 &
0.204$\pm$0.019 & 0.149$\pm$0.011 &
0.418$\pm$0.066 & 0.348$\pm$0.027 &
0.276 & 0.149\\
\midrule
Alg. 1 copy & NN/LNN & MAE &
0.241$\pm$0.053 & 0.225$\pm$0.038 & 
0.057$\pm$0.007 & 0.052$\pm$0.005 &
0.583$\pm$0.079 & 0.368$\pm$0.046 &
0.254 & 0.132\\
Alg. 2 copy & NN/LNN & MAE &
0.311$\pm$0.036 & 0.317$\pm$0.023 &
0.156$\pm$0.011 & 0.137$\pm$0.009 &
0.622$\pm$0.070 & 0.356$\pm$0.060 &
0.316 & 0.166\\
\midrule
Alg. 1 copy & RF/GB & MAE &
0.339$\pm$0.043 & 0.224$\pm$0.015 & 
0.086$\pm$0.006 & 0.079$\pm$0.006 &
0.561$\pm$0.078 & 0.418$\pm$0.038 &
0.285 & 0.151\\
Alg. 2 copy & RF/GB & MAE &
0.490$\pm$0.046 & 0.215$\pm$0.013 &
0.119$\pm$0.012 & 0.131$\pm$0.013 &
0.503$\pm$0.063 & 0.330$\pm$0.015 &
0.298 & 0.159\\
\midrule
Alg. 1 copy & GB/GB & MAE &
0.253$\pm$0.024 & 0.221$\pm$0.026 & 
0.111$\pm$0.010 & 0.077$\pm$0.008 &
0.395$\pm$0.068 & 0.370$\pm$0.041 &
0.238 & 0.131\\
Alg. 2 copy & GB/GB & MAE &
0.408$\pm$0.059 & 0.232$\pm$0.011 &
0.205$\pm$0.024 & 0.148$\pm$0.017 &
0.442$\pm$0.031 & 0.276$\pm$0.016 &
0.285 & 0.155\\
\midrule
Alg. 1 copy & NN/GB & MAE &
0.359$\pm$0.077 & 0.382$\pm$0.040 & 
0.075$\pm$0.010 & 0.071$\pm$0.005 &
0.943$\pm$0.125 & 0.626$\pm$0.042 &
0.409 & 0.211\\
Alg. 2 copy & NN/GB & MAE &
0.350$\pm$0.042 & 0.317$\pm$0.023 &
0.150$\pm$0.025 & 0.119$\pm$0.011 &
0.856$\pm$0.069 & 0.587$\pm$0.041 &
0.396 & 0.208\\
\bottomrule
\end{tabular}
\end{table}
\newpage

\vspace{3mm}
\noindent\textbf{Metric: RMSE}
\vspace{3mm}

\begin{table}[!h]
\centering
\tiny
\setlength{\tabcolsep}{1.8pt}
\begin{tabular}{l c c *{6}{c} c c}
\toprule
Copy & $f_{\mathcal{O}}/f_{\mathcal{C}}$ & Metric &
\multicolumn{2}{c}{Dataset 1} &
\multicolumn{2}{c}{Dataset 2} &
\multicolumn{2}{c}{Dataset 3} &
Avg. Dat. 1-3 &
Avg. \\
\cmidrule(lr){4-9}
& & &
$\mathscr{D}_{\mathrm{te}}^{'}$ real data & $\mathcal{S}^{'}$ uniform data &
$\mathscr{D}_{\mathrm{te}}^{'}$ real data & $\mathcal{S}^{'}$ uniform data &
$\mathscr{D}_{\mathrm{te}}^{'}$ real data & $\mathcal{S}^{'}$ uniform data &
\\
\midrule
Alg. 1 copy & RF/SNN & RMSE &
0.024$\pm$0.003 & 0.026$\pm$0.004 & 
0.035$\pm$0.005 & 0.033$\pm$0.003 &
0.037$\pm$0.001 & 0.039$\pm$0.004 &
0.032 & 0.211 \\
Alg. 2 copy & RF/SNN & RMSE &
0.024$\pm$0.002 & 0.118$\pm$0.001 &
0.011$\pm$0.001 & 0.012$\pm$0.001 &
0.024$\pm$0.002 & 0.025$\pm$0.001 &
0.036 & 0.211 \\
\midrule
Alg. 1 copy & GB/SNN & RMSE &
0.058$\pm$0.004 & 0.055$\pm$0.004 & 
0.025$\pm$0.004 & 0.026$\pm$0.003 &
0.037$\pm$0.001 & 0.038$\pm$0.001 &
0.040 & 0.205 \\
Alg. 2 copy & GB/SNN & RMSE &
0.050$\pm$0.003 & 0.110$\pm$0.007 &
0.012$\pm$0.001 & 0.012$\pm$0.001 &
0.024$\pm$0.001 & 0.024$\pm$0.002 &
0.039 & 0.174 \\
\midrule
Alg. 1 copy & NN/SNN & RMSE &
0.009$\pm$0.001 & 0.009$\pm$0.001 & 
0.021$\pm$0.005 & 0.017$\pm$0.003 &
0.032$\pm$0.002 & 0.034$\pm$0.003 &
0.020 & 0.165 \\
Alg. 2 copy & NN/SNN & RMSE &
0.017$\pm$0.001 & 0.111$\pm$0.005 &
0.011$\pm$0.001 & 0.011$\pm$0.001 &
0.018$\pm$0.001 & 0.019$\pm$0.002 &
0.031 & 0.249 \\
\midrule
Alg. 1 copy & RF/MNN & RMSE &
0.023$\pm$0.002 & 0.025$\pm$0.002 & 
0.015$\pm$0.003 & 0.017$\pm$0.005 &
0.027$\pm$0.001 & 0.026$\pm$0.001 &
0.022 & 0.218 \\
Alg. 2 copy & RF/MNN & RMSE &
0.024$\pm$0.002 & 0.118$\pm$0.003 &
0.011$\pm$0.001 & 0.011$\pm$0.002 &
0.013$\pm$0.001 & 0.015$\pm$0.004 &
0.032 & 0.212 \\
\midrule
Alg. 1 copy & GB/MNN & RMSE &
0.054$\pm$0.003 & 0.053$\pm$0.005 & 
0.013$\pm$0.001 & 0.013$\pm$0.002 &
0.026$\pm$0.002 & 0.025$\pm$0.002 &
0.031 & 0.215 \\
Alg. 2 copy & GB/MNN & RMSE &
0.049$\pm$0.003 & 0.111$\pm$0.005 &
0.009$\pm$0.000 & 0.010$\pm$0.000 &
0.012$\pm$0.001 & 0.011$\pm$0.001 &
0.034 & 0.192 \\
\midrule
Alg. 1 copy & NN/MNN & RMSE &
0.008$\pm$0.002 & 0.012$\pm$0.003 & 
0.010$\pm$0.002 & 0.010$\pm$0.002 &
0.017$\pm$0.002 & 0.017$\pm$0.002 &
0.012 & 0.180 \\
Alg. 2 copy & NN/MNN & RMSE &
0.018$\pm$0.002 & 0.112$\pm$0.003 &
0.008$\pm$0.001 & 0.009$\pm$0.001 &
0.010$\pm$0.001 & 0.010$\pm$0.000 &
0.028 & 0.246 \\
\midrule
Alg. 1 copy & RF/LNN & RMSE &
0.023$\pm$0.003 & 0.025$\pm$0.005 & 
0.014$\pm$0.001 & 0.015$\pm$0.001 &
0.026$\pm$0.004 & 0.026$\pm$0.004 &
0.022 & 0.210 \\
Alg. 2 copy & RF/LNN & RMSE &
0.025$\pm$0.003 & 0.121$\pm$0.004 &
0.008$\pm$0.001 & 0.009$\pm$0.001 &
0.013$\pm$0.001 & 0.014$\pm$0.002 &
0.032 & 0.227 \\
\midrule
Alg. 1 copy & GB/LNN & RMSE &
0.054$\pm$0.005 & 0.059$\pm$0.010 & 
0.013$\pm$0.001 & 0.013$\pm$0.001 &
0.024$\pm$0.001 & 0.023$\pm$0.002 &
0.031 & 0.200 \\
Alg. 2 copy & GB/LNN & RMSE &
0.051$\pm$0.003 & 0.108$\pm$0.005 &
0.009$\pm$0.001 & 0.011$\pm$0.001 &
0.010$\pm$0.001 & 0.009$\pm$0.001 &
0.033 & 0.196 \\
\midrule
Alg. 1 copy & NN/LNN & RMSE &
0.012$\pm$0.005 & 0.013$\pm$0.004 & 
0.009$\pm$0.001 & 0.009$\pm$0.001 &
0.019$\pm$0.003 & 0.018$\pm$0.005 &
0.013 & 0.190 \\
Alg. 2 copy & NN/LNN & RMSE &
0.018$\pm$0.001 & 0.111$\pm$0.003 &
0.008$\pm$0.000 & 0.008$\pm$0.001 &
0.010$\pm$0.001 & 0.009$\pm$0.001 &
0.027 & 0.256 \\
\midrule
Alg. 1 copy & RF/GB & RMSE &
0.027$\pm$0.003 & 0.027$\pm$0.003 & 
0.016$\pm$0.001 & 0.018$\pm$0.003 &
0.028$\pm$0.001 & 0.027$\pm$0.003 &
0.024 & 0.219 \\
Alg. 2 copy & RF/GB & RMSE &
0.025$\pm$0.001 & 0.115$\pm$0.004 &
0.010$\pm$0.001 & 0.010$\pm$0.000 &
0.016$\pm$0.000 & 0.018$\pm$0.002 &
0.032 & 0.206 \\
\midrule
Alg. 1 copy & GB/GB & RMSE &
0.059$\pm$0.004 & 0.051$\pm$0.007 & 
0.016$\pm$0.001 & 0.017$\pm$0.001 &
0.023$\pm$0.001 & 0.023$\pm$0.001 &
0.032 & 0.182 \\
Alg. 2 copy & GB/GB & RMSE &
0.051$\pm$0.003 & 0.115$\pm$0.005 &
0.011$\pm$0.000 & 0.010$\pm$0.001 &
0.017$\pm$0.000 & 0.018$\pm$0.000 &
0.037 & 0.193 \\
\midrule
Alg. 1 copy & NN/GB & RMSE &
0.013$\pm$0.003 & 0.012$\pm$0.001 & 
0.021$\pm$0.001 & 0.016$\pm$0.001 &
0.020$\pm$0.001 & 0.018$\pm$0.001 &
0.017 & 0.291 \\
Alg. 2 copy & NN/GB & RMSE &
0.018$\pm$0.001 & 0.110$\pm$0.005 &
0.017$\pm$0.001 & 0.013$\pm$0.001 &
0.017$\pm$0.001 & 0.016$\pm$0.001 &
0.032 & 0.273 \\
\bottomrule
\end{tabular}
\end{table}

\vspace{15mm}

\begin{table}[!h]
\centering
\tiny
\setlength{\tabcolsep}{1.8pt}
\begin{tabular}{l c c *{6}{c} c c}
\toprule
Copy & $f_{\mathcal{O}}/f_{\mathcal{C}}$ & Metric &
\multicolumn{2}{c}{Dataset 4} &
\multicolumn{2}{c}{Dataset 5} &
\multicolumn{2}{c}{Dataset 6} &
Avg. Dat. 4-6 &
Avg. \\
\cmidrule(lr){4-9}
& & &
$\mathscr{D}_{\mathrm{te}}^{'}$ real data & $\mathcal{S}^{'}$ uniform data &
$\mathscr{D}_{\mathrm{te}}^{'}$ real data & $\mathcal{S}^{'}$ uniform data &
$\mathscr{D}_{\mathrm{te}}^{'}$ real data & $\mathcal{S}^{'}$ uniform data &
\\
\midrule
Alg. 1 copy & RF/SNN & RMSE &
0.420$\pm$0.034 & 0.379$\pm$0.023 & 
0.101$\pm$0.007 & 0.112$\pm$0.008 &
0.669$\pm$0.134 & 0.662$\pm$0.064 &
0.390 & 0.211 \\
Alg. 2 copy & RF/SNN & RMSE &
0.424$\pm$0.081 & 0.302$\pm$0.013 &
0.150$\pm$0.018 & 0.172$\pm$0.026 &
0.837$\pm$0.187 & 0.430$\pm$0.021 &
0.386 & 0.211 \\
\midrule
Alg. 1 copy & GB/SNN & RMSE &
0.310$\pm$0.030 & 0.383$\pm$0.030 & 
0.151$\pm$0.016 & 0.113$\pm$0.003 &
0.622$\pm$0.117 & 0.643$\pm$0.050 &
0.370 & 0.205 \\
Alg. 2 copy & GB/SNN & RMSE &
0.296$\pm$0.023 & 0.290$\pm$0.023 &
0.224$\pm$0.026 & 0.165$\pm$0.013 &
0.458$\pm$0.090 & 0.415$\pm$0.045 &
0.308 & 0.174 \\
\midrule
Alg. 1 copy & NN/SNN & RMSE &
0.268$\pm$0.095 & 0.280$\pm$0.120 & 
0.076$\pm$0.012 & 0.069$\pm$0.007 &
0.621$\pm$0.079 & 0.543$\pm$0.108 &
0.310 & 0.165 \\
Alg. 2 copy & NN/SNN & RMSE &
0.425$\pm$0.169 & 0.571$\pm$0.228 &
0.164$\pm$0.015 & 0.169$\pm$0.015 &
0.925$\pm$0.130 & 0.545$\pm$0.107 &
0.466 & 0.249 \\
\midrule
Alg. 1 copy & RF/MNN & RMSE &
0.484$\pm$0.044 & 0.394$\pm$0.026 & 
0.107$\pm$0.023 & 0.109$\pm$0.017 &
0.766$\pm$0.128 & 0.615$\pm$0.039 &
0.413 & 0.218 \\
Alg. 2 copy & RF/MNN & RMSE &
0.496$\pm$0.090 & 0.289$\pm$0.012 &
0.159$\pm$0.011 & 0.167$\pm$0.010 &
0.712$\pm$0.145 & 0.525$\pm$0.030 &
0.391 & 0.212 \\
\midrule
Alg. 1 copy & GB/MNN & RMSE &
0.388$\pm$0.071 & 0.424$\pm$0.034 & 
0.139$\pm$0.010 & 0.122$\pm$0.007 &
0.700$\pm$0.076 & 0.619$\pm$0.077 &
0.399 & 0.215 \\
Alg. 2 copy & GB/MNN & RMSE &
0.350$\pm$0.068 & 0.316$\pm$0.014 &
0.246$\pm$0.020 & 0.192$\pm$0.019 &
0.511$\pm$0.072 & 0.478$\pm$0.043 &
0.349 & 0.192 \\
\midrule
Alg. 1 copy & NN/MNN & RMSE &
0.238$\pm$0.028 & 0.367$\pm$0.193 & 
0.077$\pm$0.007 & 0.072$\pm$0.009 &
0.738$\pm$0.169 & 0.594$\pm$0.221 &
0.348 & 0.180 \\
Alg. 2 copy & NN/MNN & RMSE &
0.443$\pm$0.079 & 0.508$\pm$0.168 &
0.182$\pm$0.017 & 0.166$\pm$0.017 &
1.016$\pm$0.091 & 0.469$\pm$0.042 &
0.464 & 0.246 \\
\midrule
Alg. 1 copy & RF/LNN & RMSE &
0.486$\pm$0.095 & 0.388$\pm$0.036 & 
0.116$\pm$0.038 & 0.118$\pm$0.019 &
0.691$\pm$0.102 & 0.584$\pm$0.035 &
0.397 & 0.210 \\
Alg. 2 copy & RF/LNN & RMSE &
0.653$\pm$0.112 & 0.292$\pm$0.022 &
0.147$\pm$0.005 & 0.148$\pm$0.014 &
0.802$\pm$0.135 & 0.486$\pm$0.043 &
0.421 & 0.227 \\
\midrule
Alg. 1 copy & GB/LNN & RMSE &
0.341$\pm$0.039 & 0.396$\pm$0.016 & 
0.160$\pm$0.023 & 0.143$\pm$0.016 &
0.606$\pm$0.074 & 0.561$\pm$0.029 &
0.368 & 0.200 \\
Alg. 2 copy & GB/LNN & RMSE &
0.372$\pm$0.047 & 0.328$\pm$0.027 &
0.243$\pm$0.022 & 0.171$\pm$0.010 &
0.583$\pm$0.119 & 0.454$\pm$0.047 &
0.358 & 0.196 \\
\midrule
Alg. 1 copy & NN/LNN & RMSE &
0.323$\pm$0.082 & 0.378$\pm$0.129 & 
0.073$\pm$0.010 & 0.071$\pm$0.009 &
0.816$\pm$0.129 & 0.536$\pm$0.092 &
0.366 & 0.190 \\
Alg. 2 copy & NN/LNN & RMSE &
0.558$\pm$0.109 & 0.468$\pm$0.058 &
0.169$\pm$0.010 & 0.150$\pm$0.010 &
1.031$\pm$0.202 & 0.525$\pm$0.121 &
0.484 & 0.256 \\
\midrule
Alg. 1 copy & RF/GB & RMSE &
0.619$\pm$0.067 & 0.296$\pm$0.020 & 
0.110$\pm$0.006 & 0.105$\pm$0.012 &
0.765$\pm$0.074 & 0.580$\pm$0.045 &
0.413 & 0.219 \\
Alg. 2 copy & RF/GB & RMSE &
0.610$\pm$0.047 & 0.269$\pm$0.017 &
0.139$\pm$0.012 & 0.154$\pm$0.015 &
0.661$\pm$0.120 & 0.443$\pm$0.009 &
0.379 & 0.206 \\
\midrule
Alg. 1 copy & GB/GB & RMSE &
0.382$\pm$0.039 & 0.297$\pm$0.034 & 
0.142$\pm$0.010 & 0.104$\pm$0.009 &
0.555$\pm$0.091 & 0.514$\pm$0.055 &
0.332 & 0.182 \\
Alg. 2 copy & GB/GB & RMSE &
0.473$\pm$0.062 & 0.289$\pm$0.017 &
0.250$\pm$0.022 & 0.178$\pm$0.016 &
0.536$\pm$0.022 & 0.366$\pm$0.028 &
0.349 & 0.193 \\
\midrule
Alg. 1 copy & NN/GB & RMSE &
0.656$\pm$0.234 & 0.487$\pm$0.056 & 
0.097$\pm$0.013 & 0.090$\pm$0.005 &
1.260$\pm$0.252 & 0.799$\pm$0.045 &
0.565 & 0.291 \\
Alg. 2 copy & NN/GB & RMSE &
0.453$\pm$0.041 & 0.444$\pm$0.038 &
0.171$\pm$0.028 & 0.140$\pm$0.012 &
1.122$\pm$0.137 & 0.745$\pm$0.052 &
0.513 & 0.273 \\
\bottomrule
\end{tabular}
\end{table}

\newpage
\section{Scatter plots for estimated distances {\small(Dataset \# - Original /Copy model(Distance algorithm) - test/ uniform data)}}
\label{appDistScatter}

\begin{figure}[!ht]
	\centering
	\begin{subfigure}[t]{0.137\textwidth}
		\includegraphics[width=\linewidth]{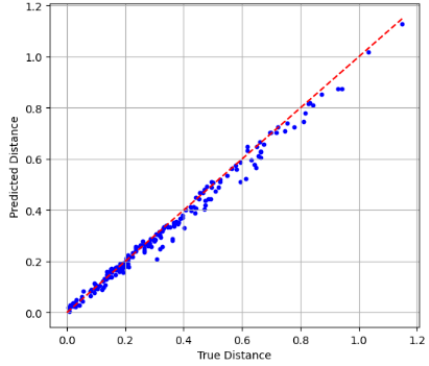}
		\caption{1-RF/SNN-1-$\mathscr{D}^{'}_{\mathrm{te}}$}
	\end{subfigure}
	\hfill
	\begin{subfigure}[t]{0.137\textwidth}
		\includegraphics[width=\linewidth]{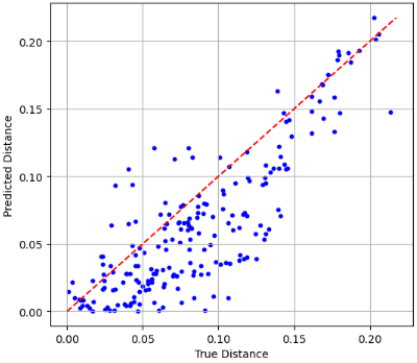}
		\caption{2-RF/SNN-1-$\mathscr{D}^{'}_{\mathrm{te}}$}
	\end{subfigure}
	\hfill
	\begin{subfigure}[t]{0.137\textwidth}
		\includegraphics[width=\linewidth]{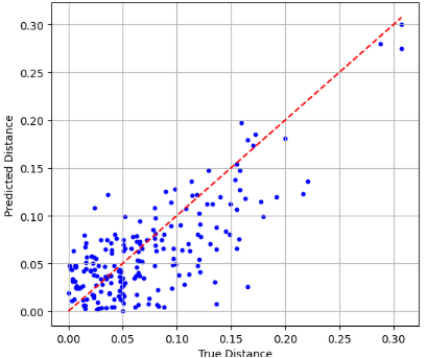}
		\caption{3-RF/SNN-1-$\mathscr{D}^{'}_{\mathrm{te}}$}
	\end{subfigure}
	\hfill
	\begin{subfigure}[t]{0.137\textwidth}
		\includegraphics[width=\linewidth]{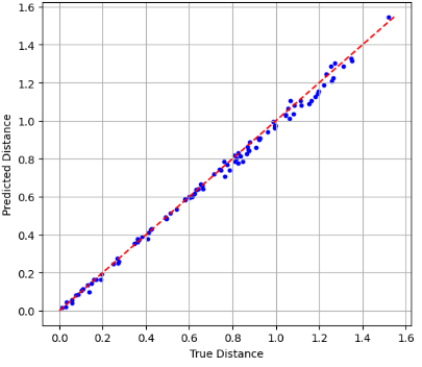}
		\caption{1-RF/SNN-1-$\mathcal{S}^{'}$}
	\end{subfigure}
	\hfill
	\begin{subfigure}[t]{0.137\textwidth}
		\includegraphics[width=\linewidth]{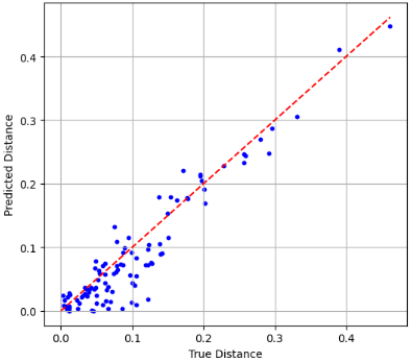}
		\caption{2-RF/SNN-1-$\mathcal{S}^{'}$}
	\end{subfigure}
	\hfill
	\begin{subfigure}[t]{0.137\textwidth}
		\includegraphics[width=\linewidth]{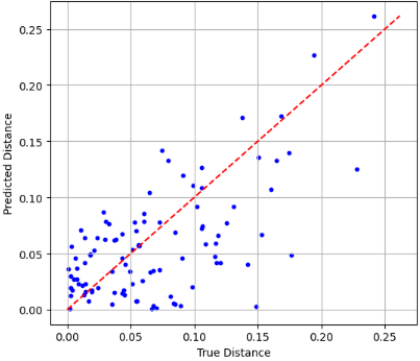}
		\caption{3-RF/SNN-1-$\mathcal{S}^{'}$}
	\end{subfigure}
	
	
	\begin{subfigure}[t]{0.137\textwidth}
		\includegraphics[width=\linewidth]{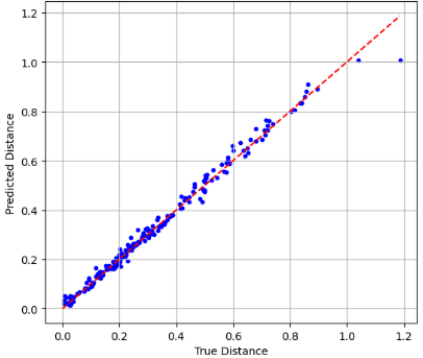}
		\caption{1-RF/SNN-2-$\mathscr{D}^{'}_{\mathrm{te}}$}
	\end{subfigure}
	\hfill
	\begin{subfigure}[t]{0.137\textwidth}
		\includegraphics[width=\linewidth]{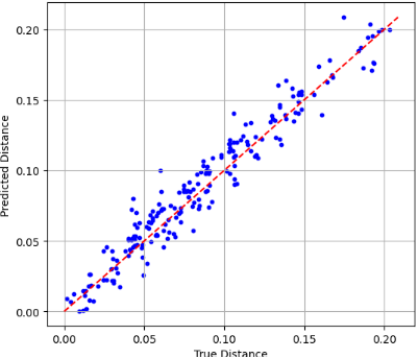}
		\caption{2-RF/SNN-2-$\mathscr{D}^{'}_{\mathrm{te}}$}
	\end{subfigure}
	\hfill
	\begin{subfigure}[t]{0.137\textwidth}
		\includegraphics[width=\linewidth]{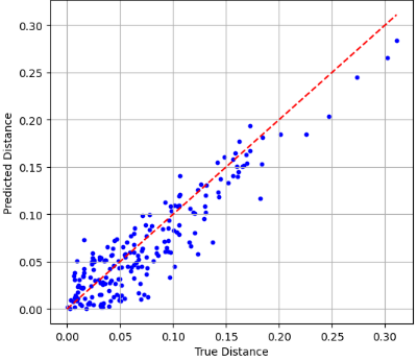}
		\caption{3-RF/SNN-2-$\mathscr{D}^{'}_{\mathrm{te}}$}
	\end{subfigure}
	\hfill
	\begin{subfigure}[t]{0.137\textwidth}
		\includegraphics[width=\linewidth]{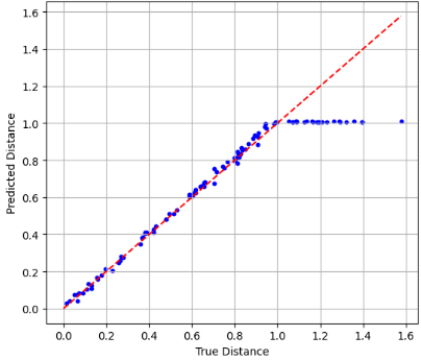}
		\caption{1-RF/SNN-2-$\mathcal{S}^{'}$}
	\end{subfigure}
	\hfill
	\begin{subfigure}[t]{0.137\textwidth}
		\includegraphics[width=\linewidth]{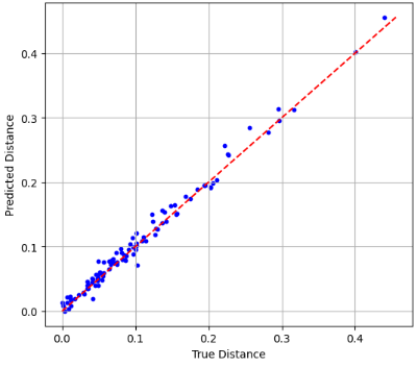}
		\caption{2-RF/SNN-2-$\mathcal{S}^{'}$}
	\end{subfigure}
	\hfill
	\begin{subfigure}[t]{0.137\textwidth}
		\includegraphics[width=\linewidth]{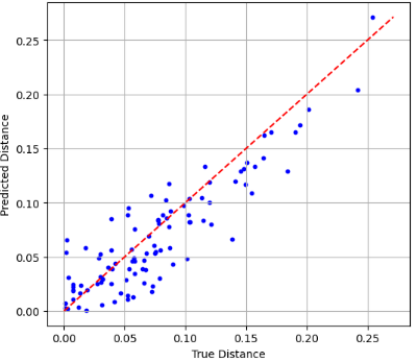}
		\caption{3-RF/SNN-2-$\mathcal{S}^{'}$}
	\end{subfigure}
	
	
	\begin{subfigure}[t]{0.137\textwidth}
		\includegraphics[width=\linewidth]{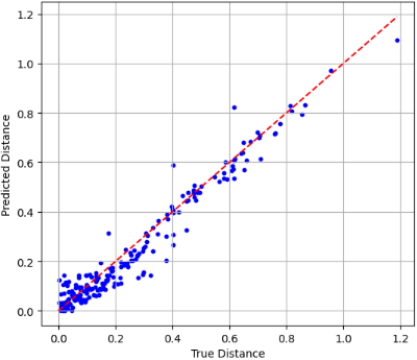}
		\caption{\scalebox{0.95}{1-GB/SNN-1-$\mathscr{D}^{'}_{\mathrm{te}}$}}
	\end{subfigure}
	\hfill
	\begin{subfigure}[t]{0.137\textwidth}
		\includegraphics[width=\linewidth]{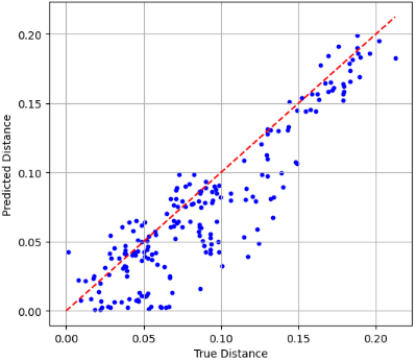}
		\caption{\scalebox{0.95}{2-GB/SNN-1-$\mathscr{D}^{'}_{\mathrm{te}}$}}
	\end{subfigure}
	\hfill
	\begin{subfigure}[t]{0.137\textwidth}
		\includegraphics[width=\linewidth]{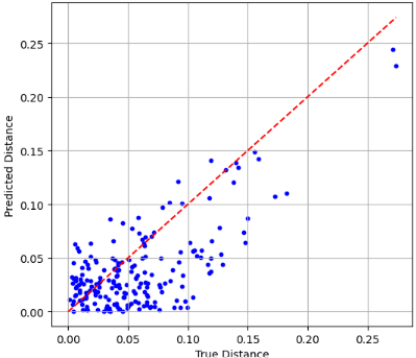}
		\caption{\scalebox{0.95}{3-GB/SNN-1-$\mathscr{D}^{'}_{\mathrm{te}}$}}
	\end{subfigure}
	\hfill
	\begin{subfigure}[t]{0.137\textwidth}
		\includegraphics[width=\linewidth]{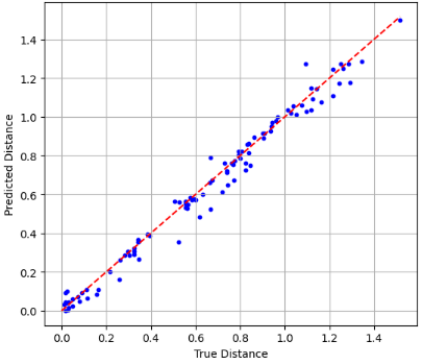}
		\caption{1-GB/SNN-1-$\mathcal{S}^{'}$}
	\end{subfigure}
	\hfill
	\begin{subfigure}[t]{0.137\textwidth}
		\includegraphics[width=\linewidth]{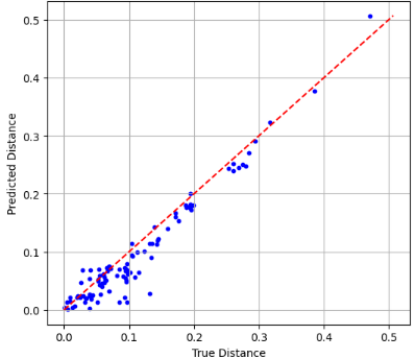}
		\caption{2-GB/SNN-1-$\mathcal{S}^{'}$}
	\end{subfigure}
	\hfill
	\begin{subfigure}[t]{0.137\textwidth}
		\includegraphics[width=\linewidth]{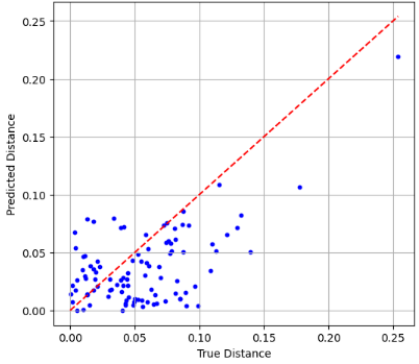}
		\caption{3-GB/SNN-1-$\mathcal{S}^{'}$}
	\end{subfigure}
	
	
	\begin{subfigure}[t]{0.137\textwidth}
		\includegraphics[width=\linewidth]{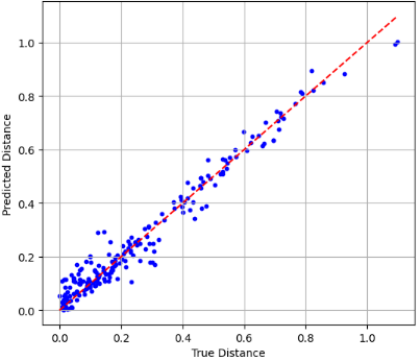}
		\caption{\scalebox{0.95}{1-GB/SNN-2-$\mathscr{D}^{'}_{\mathrm{te}}$}}
	\end{subfigure}
	\hfill
	\begin{subfigure}[t]{0.137\textwidth}
		\includegraphics[width=\linewidth]{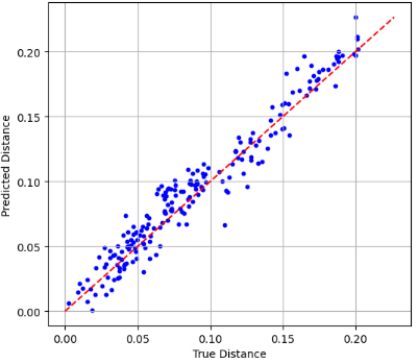}
		\caption{\scalebox{0.95}{2-GB/SNN-2-$\mathscr{D}^{'}_{\mathrm{te}}$}}
	\end{subfigure}
	\hfill
	\begin{subfigure}[t]{0.137\textwidth}
		\includegraphics[width=\linewidth]{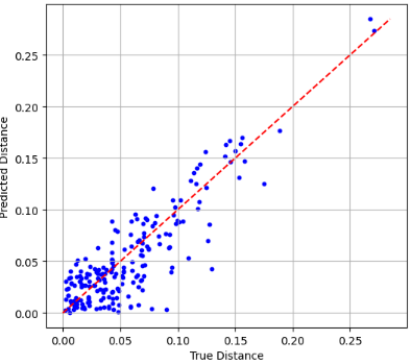}
		\caption{\scalebox{0.95}{3-GB/SNN-2-$\mathscr{D}^{'}_{\mathrm{te}}$}}
	\end{subfigure}
	\hfill
	\begin{subfigure}[t]{0.137\textwidth}
		\includegraphics[width=\linewidth]{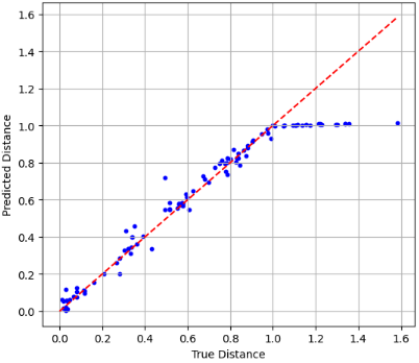}
		\caption{1-GB/SNN-2-$\mathcal{S}^{'}$}
	\end{subfigure}
	\hfill
	\begin{subfigure}[t]{0.137\textwidth}
		\includegraphics[width=\linewidth]{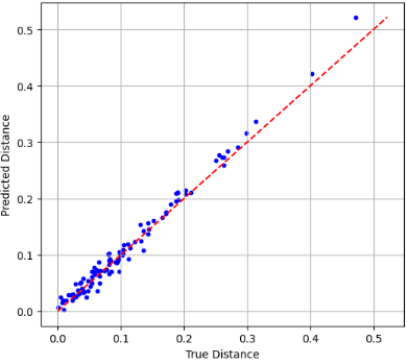}
		\caption{2-GB/SNN-2-$\mathcal{S}^{'}$}
	\end{subfigure}
	\hfill
	\begin{subfigure}[t]{0.137\textwidth}
		\includegraphics[width=\linewidth]{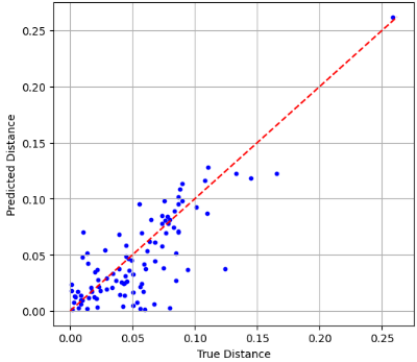}
		\caption{3-GB/SNN-2-$\mathcal{S}^{'}$}
	\end{subfigure}
	
	
	\begin{subfigure}[t]{0.137\textwidth}
		\includegraphics[width=\linewidth]{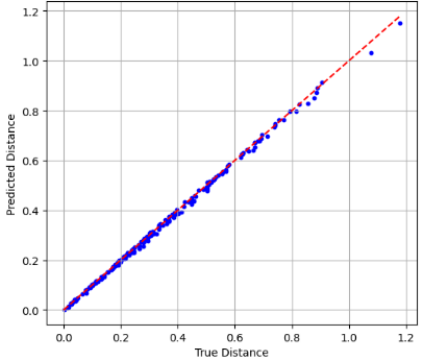}
		\caption{\scalebox{0.95}{1-NN/SNN-1-$\mathscr{D}^{'}_{\mathrm{te}}$}}
	\end{subfigure}
	\hfill
	\begin{subfigure}[t]{0.137\textwidth}
		\includegraphics[width=\linewidth]{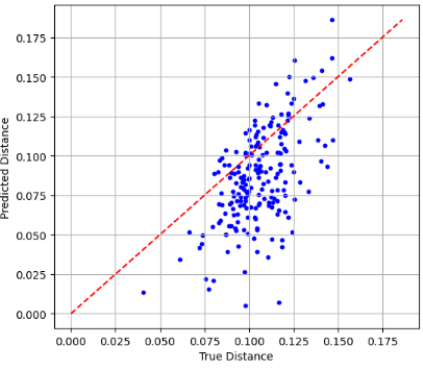}
		\caption{\scalebox{0.95}{2-NN/SNN-1-$\mathscr{D}^{'}_{\mathrm{te}}$}}
	\end{subfigure}
	\hfill
	\begin{subfigure}[t]{0.137\textwidth}
		\includegraphics[width=\linewidth]{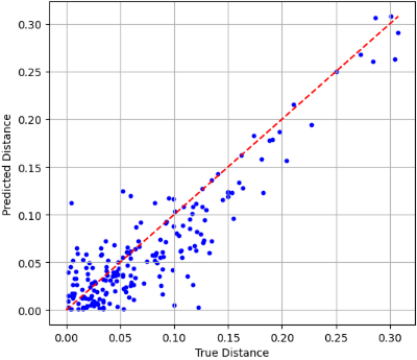}
		\caption{\scalebox{0.95}{3-NN/SNN-1-$\mathscr{D}^{'}_{\mathrm{te}}$}}
	\end{subfigure}
	\hfill
	\begin{subfigure}[t]{0.137\textwidth}
		\includegraphics[width=\linewidth]{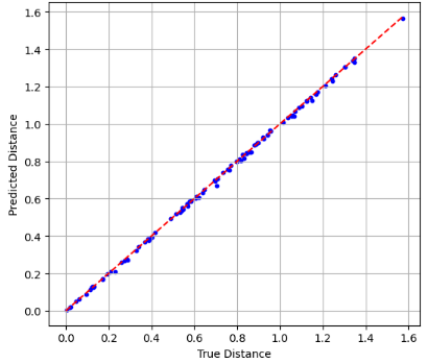}
		\caption{1-NN/SNN-1-$\mathcal{S}^{'}$}
	\end{subfigure}
	\hfill
	\begin{subfigure}[t]{0.137\textwidth}
		\includegraphics[width=\linewidth]{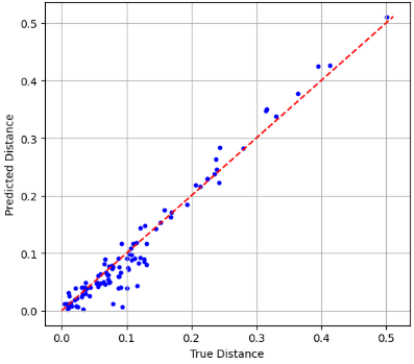}
		\caption{2-NN/SNN-1-$\mathcal{S}^{'}$}
	\end{subfigure}
	\hfill
	\begin{subfigure}[t]{0.137\textwidth}
		\includegraphics[width=\linewidth]{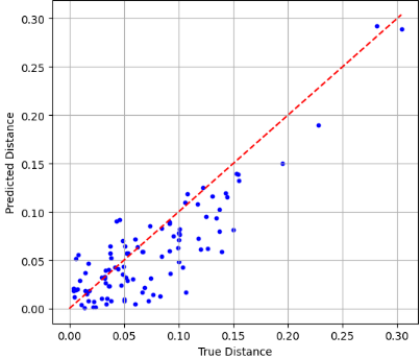}
		\caption{3-NN/SNN-1-$\mathcal{S}^{'}$}
	\end{subfigure}
	
	
	\begin{subfigure}[t]{0.137\textwidth}
		\includegraphics[width=\linewidth]{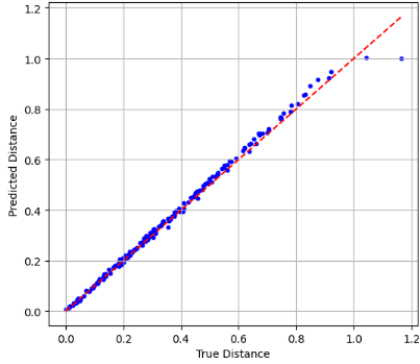}
		\caption{\scalebox{0.95}{1-NN/SNN-2-$\mathscr{D}^{'}_{\mathrm{te}}$}}
	\end{subfigure}
	\hfill
	\begin{subfigure}[t]{0.137\textwidth}
		\includegraphics[width=\linewidth]{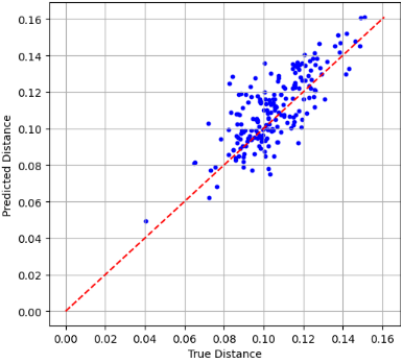}
		\caption{\scalebox{0.95}{2-NN/SNN-2-$\mathscr{D}^{'}_{\mathrm{te}}$}}
	\end{subfigure}
	\hfill
	\begin{subfigure}[t]{0.137\textwidth}
		\includegraphics[width=\linewidth]{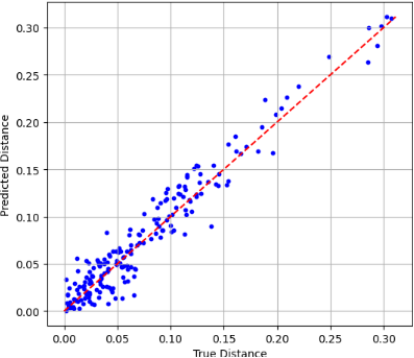}
		\caption{\scalebox{0.95}{3-NN/SNN-2-$\mathscr{D}^{'}_{\mathrm{te}}$}}
	\end{subfigure}
	\hfill
	\begin{subfigure}[t]{0.137\textwidth}
		\includegraphics[width=\linewidth]{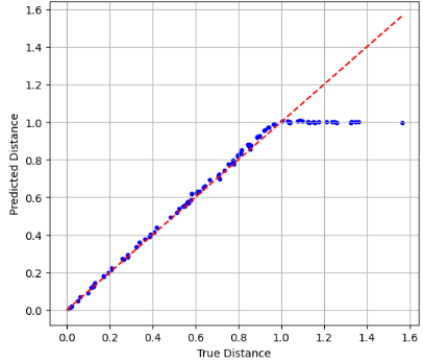}
		\caption{1-NN/SNN-2-$\mathcal{S}^{'}$}
	\end{subfigure}
	\hfill
	\begin{subfigure}[t]{0.137\textwidth}
		\includegraphics[width=\linewidth]{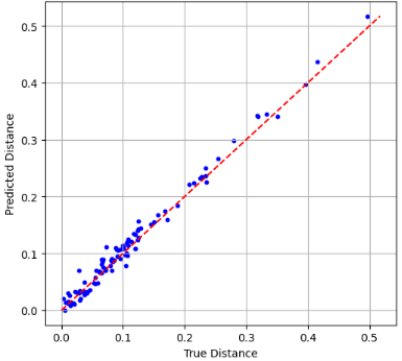}
		\caption{2-NN/SNN-2-$\mathcal{S}^{'}$}
	\end{subfigure}
	\hfill
	\begin{subfigure}[t]{0.137\textwidth}
		\includegraphics[width=\linewidth]{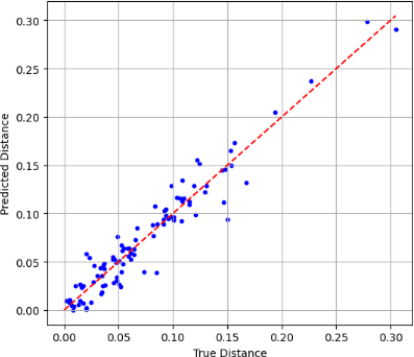}
		\caption{3-NN/SNN-2-$\mathcal{S}^{'}$}
	\end{subfigure}
	
	
	\begin{subfigure}[t]{0.137\textwidth}
		\includegraphics[width=\linewidth]{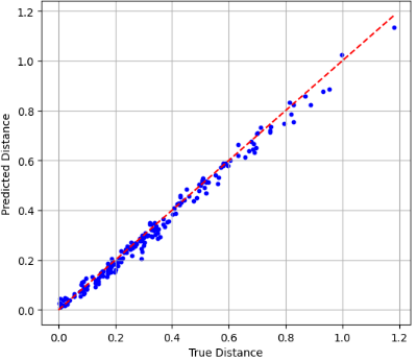}
		\caption{\scalebox{0.95}{1-RF/MNN-1-$\mathscr{D}^{'}_{\mathrm{te}}$}}
	\end{subfigure}
	\hfill
	\begin{subfigure}[t]{0.137\textwidth}
		\includegraphics[width=\linewidth]{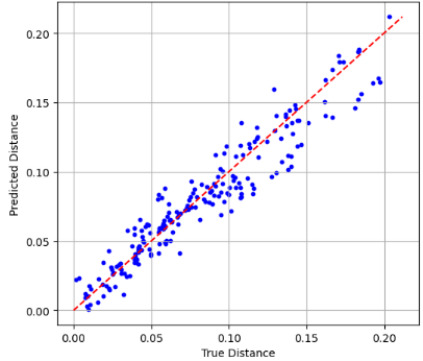}
		\caption{\scalebox{0.95}{2-RF/MNN-1-$\mathscr{D}^{'}_{\mathrm{te}}$}}
	\end{subfigure}
	\hfill
	\begin{subfigure}[t]{0.137\textwidth}
		\includegraphics[width=\linewidth]{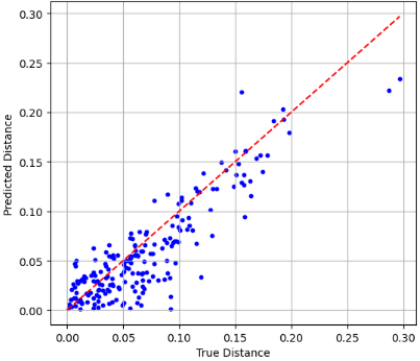}
		\caption{\scalebox{0.95}{3-RF/MNN-1-$\mathscr{D}^{'}_{\mathrm{te}}$}}
	\end{subfigure}
	\hfill
	\begin{subfigure}[t]{0.137\textwidth}
		\includegraphics[width=\linewidth]{Figures/Figure_9/1-RF-MNN-1-S.pdf}
		\caption{1-RF/MNN-1-$\mathcal{S}^{'}$}
	\end{subfigure}
	\hfill
	\begin{subfigure}[t]{0.137\textwidth}
		\includegraphics[width=\linewidth]{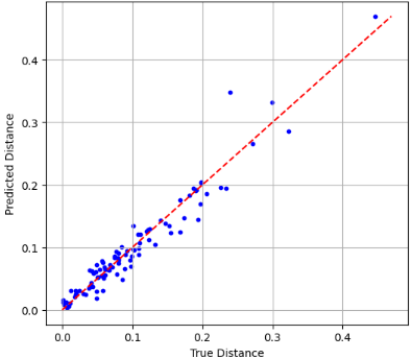}
		\caption{2-RF/MNN-1-$\mathcal{S}^{'}$}
	\end{subfigure}
	\hfill
	\begin{subfigure}[t]{0.137\textwidth}
		\includegraphics[width=\linewidth]{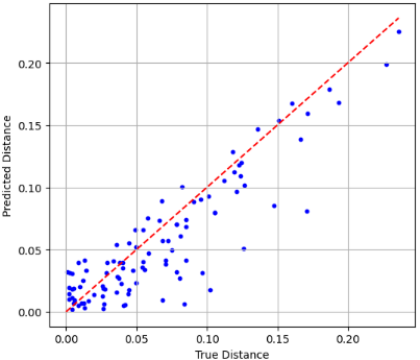}
		\caption{3-RF/MNN-1-$\mathcal{S}^{'}$}
	\end{subfigure}
	
	
	\begin{subfigure}[t]{0.137\textwidth}
		\includegraphics[width=\linewidth]{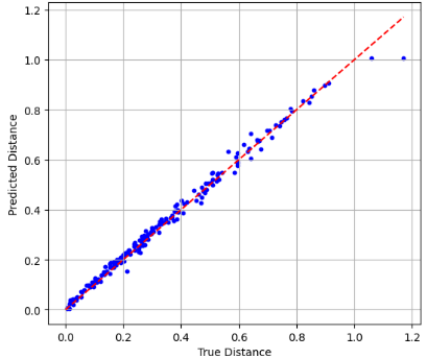}
		\caption{\scalebox{0.95}{1-RF/MNN-2-$\mathscr{D}^{'}_{\mathrm{te}}$}}
	\end{subfigure}
	\hfill
	\begin{subfigure}[t]{0.137\textwidth}
		\includegraphics[width=\linewidth]{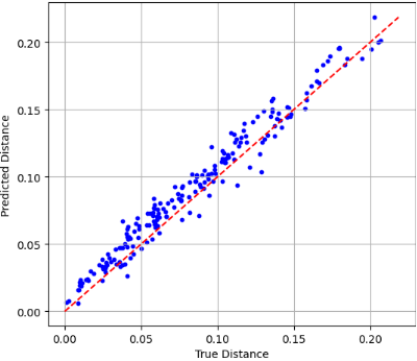}
		\caption{\scalebox{0.95}{2-RF/MNN-2-$\mathscr{D}^{'}_{\mathrm{te}}$}}
	\end{subfigure}
	\hfill
	\begin{subfigure}[t]{0.137\textwidth}
		\includegraphics[width=\linewidth]{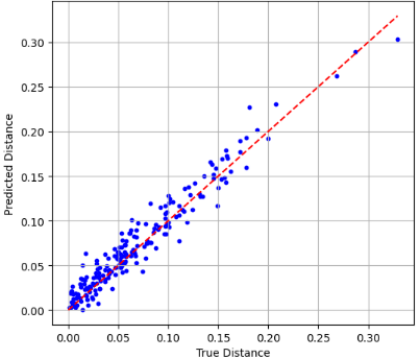}
		\caption{\scalebox{0.95}{3-RF/MNN-2-$\mathscr{D}^{'}_{\mathrm{te}}$}}
	\end{subfigure}
	\hfill
	\begin{subfigure}[t]{0.137\textwidth}
		\includegraphics[width=\linewidth]{Figures/Figure_9/1-RF-MNN-2-S.pdf}
		\caption{1-RF/MNN-2-$\mathcal{S}^{'}$}
	\end{subfigure}
	\hfill
	\begin{subfigure}[t]{0.137\textwidth}
		\includegraphics[width=\linewidth]{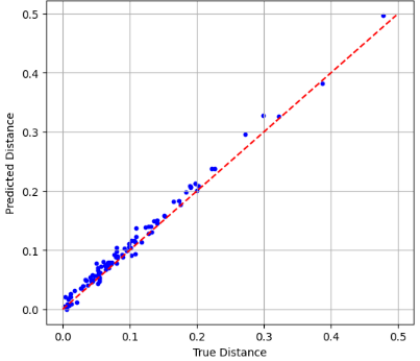}
		\caption{2-RF/MNN-2-$\mathcal{S}^{'}$}
	\end{subfigure}
	\hfill
	\begin{subfigure}[t]{0.137\textwidth}
		\includegraphics[width=\linewidth]{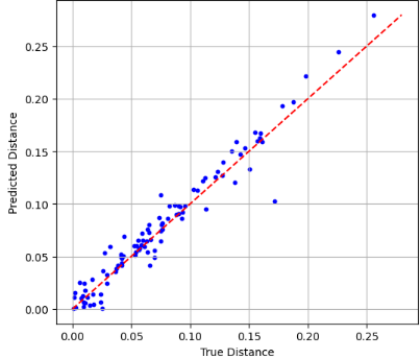}
		\caption{3-RF/MNN-2-$\mathcal{S}^{'}$}
	\end{subfigure}
\end{figure}

\begin{figure}[!ht]
	\centering
	\begin{subfigure}[t]{0.14\textwidth}
		\includegraphics[width=\linewidth]{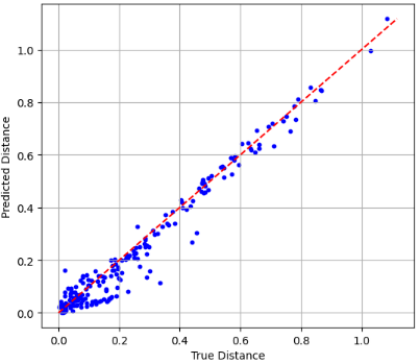}
		\caption{\scalebox{0.95}{1-GB/MNN-1-$\mathscr{D}^{'}_{\mathrm{te}}$}}
	\end{subfigure}
	\hfill
	\begin{subfigure}[t]{0.14\textwidth}
		\includegraphics[width=\linewidth]{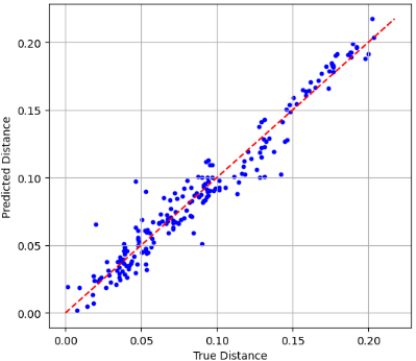}
		\caption{\scalebox{0.95}{2-GB/MNN-1-$\mathscr{D}^{'}_{\mathrm{te}}$}}
	\end{subfigure}
	\hfill
	\begin{subfigure}[t]{0.14\textwidth}
		\includegraphics[width=\linewidth]{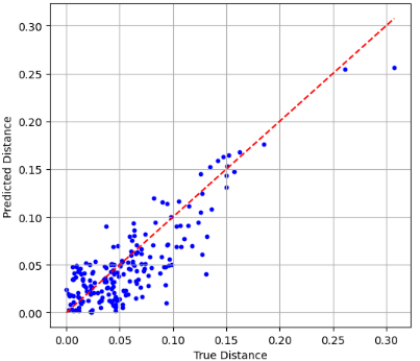}
		\caption{\scalebox{0.95}{3-GB/MNN-1-$\mathscr{D}^{'}_{\mathrm{te}}$}}
	\end{subfigure}
	\hfill
	\begin{subfigure}[t]{0.14\textwidth}
		\includegraphics[width=\linewidth]{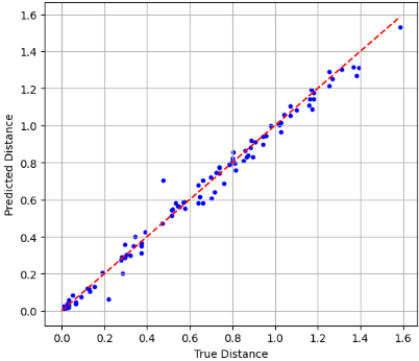}
		\caption{1-GB/MNN-1-$\mathcal{S}^{'}$}
	\end{subfigure}
	\hfill
	\begin{subfigure}[t]{0.14\textwidth}
		\includegraphics[width=\linewidth]{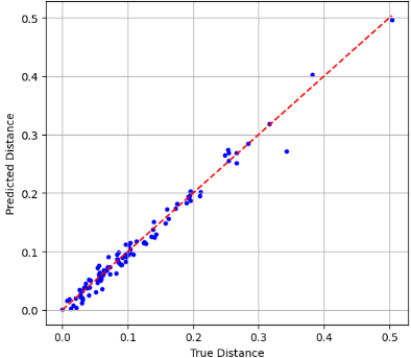}
		\caption{2-GB/MNN-1-$\mathcal{S}^{'}$}
	\end{subfigure}
	\hfill
	\begin{subfigure}[t]{0.14\textwidth}
		\includegraphics[width=\linewidth]{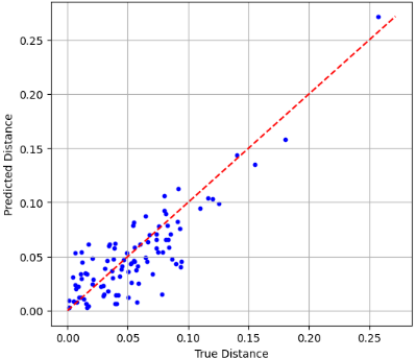}
		\caption{3-GB/MNN-1-$\mathcal{S}^{'}$}
	\end{subfigure}
	
	
	\begin{subfigure}[t]{0.14\textwidth}
		\includegraphics[width=\linewidth]{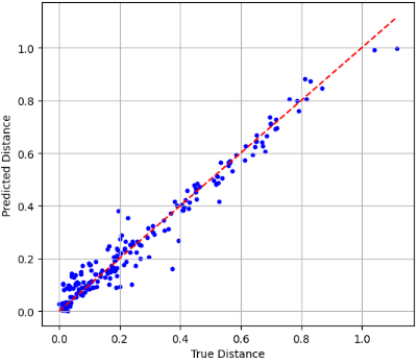}
		\caption{\scalebox{0.95}{1-GB/MNN-2-$\mathscr{D}^{'}_{\mathrm{te}}$}}
	\end{subfigure}
	\hfill
	\begin{subfigure}[t]{0.14\textwidth}
		\includegraphics[width=\linewidth]{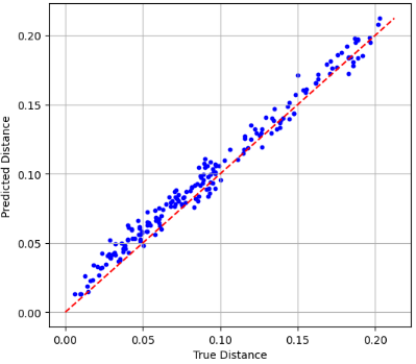}
		\caption{\scalebox{0.95}{2-GB/MNN-2-$\mathscr{D}^{'}_{\mathrm{te}}$}}
	\end{subfigure}
	\hfill
	\begin{subfigure}[t]{0.14\textwidth}
		\includegraphics[width=\linewidth]{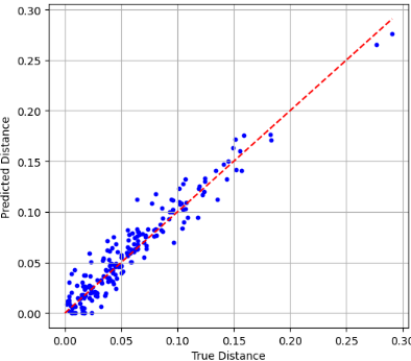}
		\caption{\scalebox{0.95}{3-GB/MNN-2-$\mathscr{D}^{'}_{\mathrm{te}}$}}
	\end{subfigure}
	\hfill
	\begin{subfigure}[t]{0.14\textwidth}
		\includegraphics[width=\linewidth]{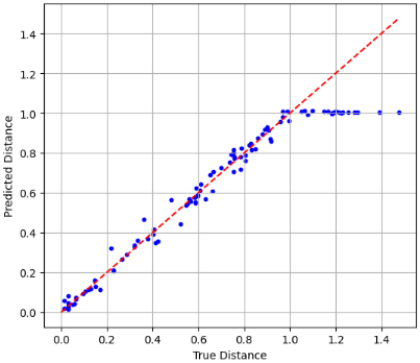}
		\caption{1-GB/MNN-2-$\mathcal{S}^{'}$}
	\end{subfigure}
	\hfill
	\begin{subfigure}[t]{0.14\textwidth}
		\includegraphics[width=\linewidth]{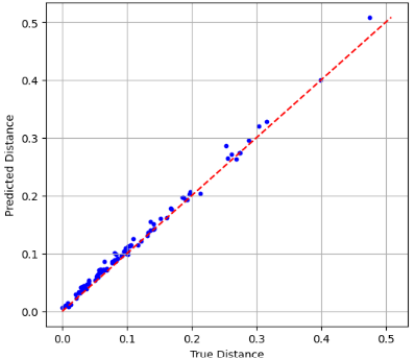}
		\caption{2-GB/MNN-2-$\mathcal{S}^{'}$}
	\end{subfigure}
	\hfill
	\begin{subfigure}[t]{0.14\textwidth}
		\includegraphics[width=\linewidth]{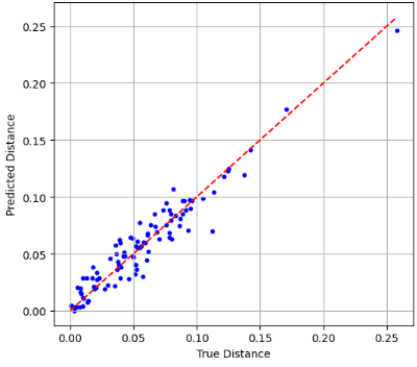}
		\caption{3-GB/MNN-2-$\mathcal{S}^{'}$}
	\end{subfigure}
	
	
	\begin{subfigure}[t]{0.14\textwidth}
		\includegraphics[width=\linewidth]{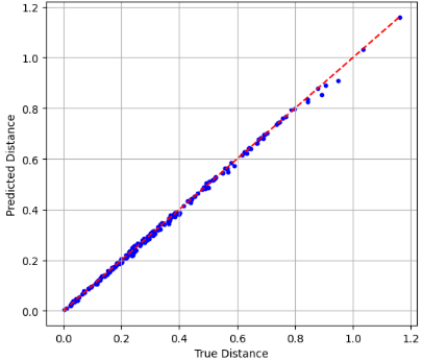}
		\caption{\scalebox{0.95}{1-NN/MNN-1-$\mathscr{D}^{'}_{\mathrm{te}}$}}
	\end{subfigure}
	\hfill
	\begin{subfigure}[t]{0.14\textwidth}
		\includegraphics[width=\linewidth]{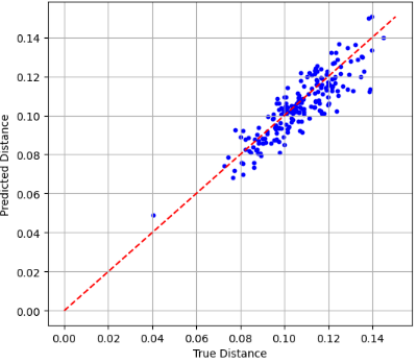}
		\caption{\scalebox{0.95}{2-NN/MNN-1-$\mathscr{D}^{'}_{\mathrm{te}}$}}
	\end{subfigure}
	\hfill
	\begin{subfigure}[t]{0.14\textwidth}
		\includegraphics[width=\linewidth]{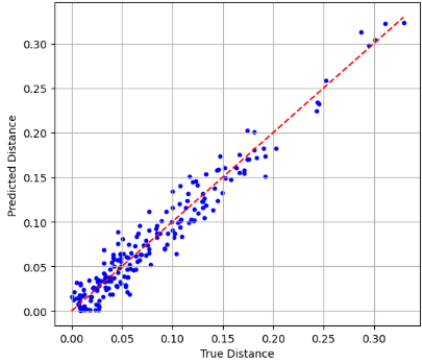}
		\caption{\scalebox{0.95}{3-NN/MNN-1-$\mathscr{D}^{'}_{\mathrm{te}}$}}
	\end{subfigure}
	\hfill
	\begin{subfigure}[t]{0.14\textwidth}
		\includegraphics[width=\linewidth]{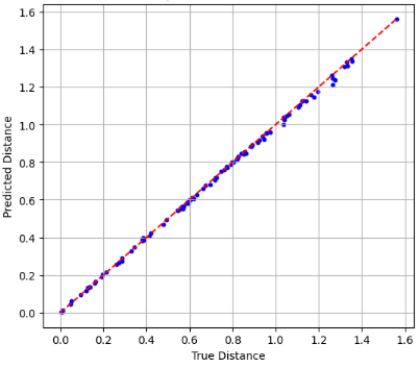}
		\caption{1-NN/MNN-1-$\mathcal{S}^{'}$}
	\end{subfigure}
	\hfill
	\begin{subfigure}[t]{0.14\textwidth}
		\includegraphics[width=\linewidth]{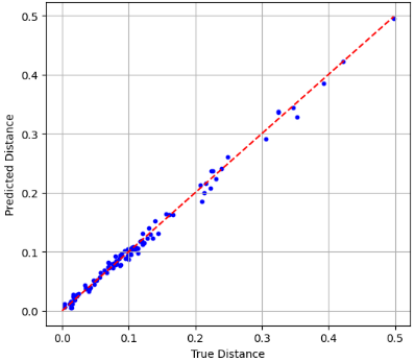}
		\caption{2-NN/MNN-1-$\mathcal{S}^{'}$}
	\end{subfigure}
	\hfill
	\begin{subfigure}[t]{0.14\textwidth}
		\includegraphics[width=\linewidth]{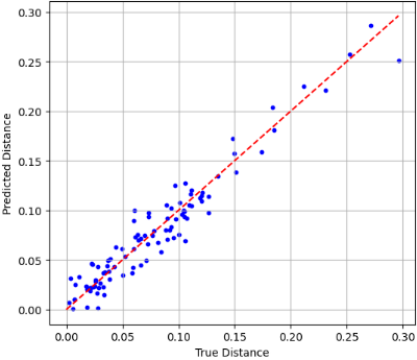}
		\caption{3-NN/MNN-1-$\mathcal{S}^{'}$}
	\end{subfigure}
	
	
	\begin{subfigure}[t]{0.14\textwidth}
		\includegraphics[width=\linewidth]{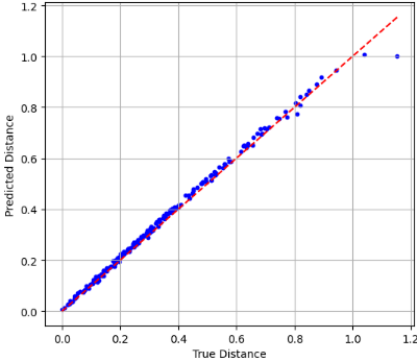}
		\caption{\scalebox{0.95}{1-NN/MNN-2-$\mathscr{D}^{'}_{\mathrm{te}}$}}
	\end{subfigure}
	\hfill
	\begin{subfigure}[t]{0.14\textwidth}
		\includegraphics[width=\linewidth]{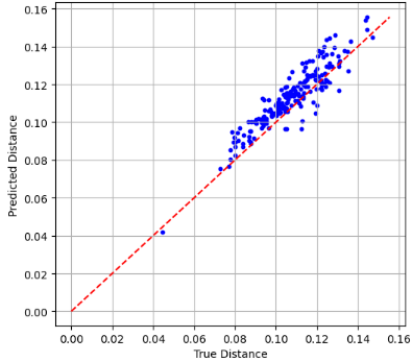}
		\caption{\scalebox{0.95}{2-NN/MNN-2-$\mathscr{D}^{'}_{\mathrm{te}}$}}
	\end{subfigure}
	\hfill
	\begin{subfigure}[t]{0.14\textwidth}
		\includegraphics[width=\linewidth]{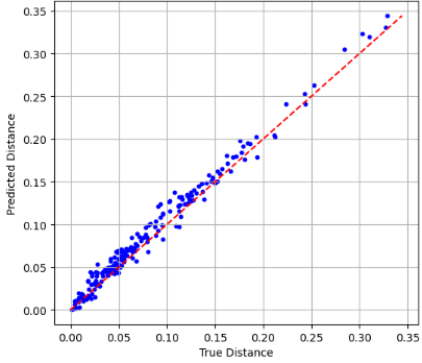}
		\caption{\scalebox{0.95}{3-NN/MNN-2-$\mathscr{D}^{'}_{\mathrm{te}}$}}
	\end{subfigure}
	\hfill
	\begin{subfigure}[t]{0.14\textwidth}
		\includegraphics[width=\linewidth]{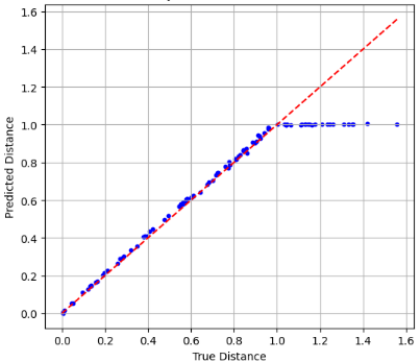}
		\caption{1-NN/MNN-2-$\mathcal{S}^{'}$}
	\end{subfigure}
	\hfill
	\begin{subfigure}[t]{0.14\textwidth}
		\includegraphics[width=\linewidth]{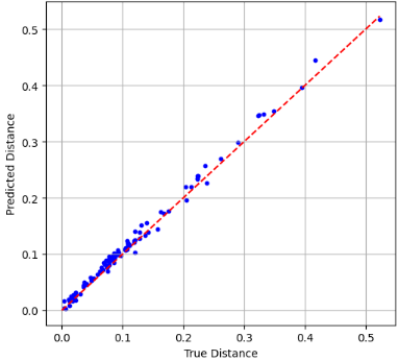}
		\caption{2-NN/MNN-2-$\mathcal{S}^{'}$}
	\end{subfigure}
	\hfill
	\begin{subfigure}[t]{0.14\textwidth}
		\includegraphics[width=\linewidth]{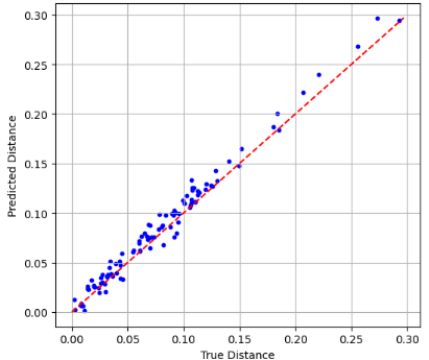}
		\caption{3-NN/MNN-2-$\mathcal{S}^{'}$}
	\end{subfigure}
	
	
	\begin{subfigure}[t]{0.14\textwidth}
		\includegraphics[width=\linewidth]{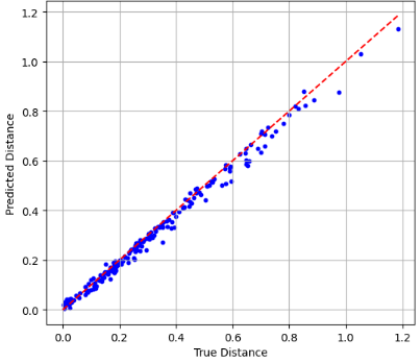}
		\caption{1-RF/LNN-1-$\mathscr{D}^{'}_{\mathrm{te}}$}
	\end{subfigure}
	\hfill
	\begin{subfigure}[t]{0.14\textwidth}
		\includegraphics[width=\linewidth]{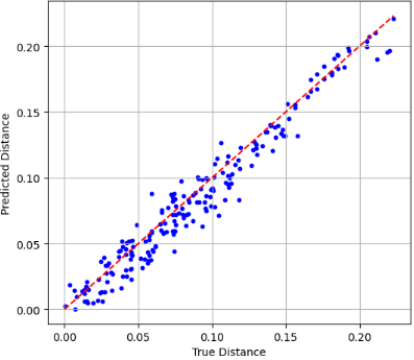}
		\caption{2-RF/LNN-1-$\mathscr{D}^{'}_{\mathrm{te}}$}
	\end{subfigure}
	\hfill
	\begin{subfigure}[t]{0.14\textwidth}
		\includegraphics[width=\linewidth]{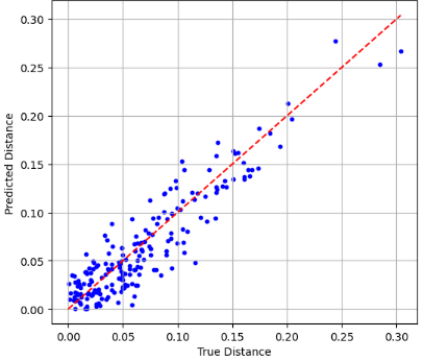}
		\caption{3-RF/LNN-1-$\mathscr{D}^{'}_{\mathrm{te}}$}
	\end{subfigure}
	\hfill
	\begin{subfigure}[t]{0.14\textwidth}
		\includegraphics[width=\linewidth]{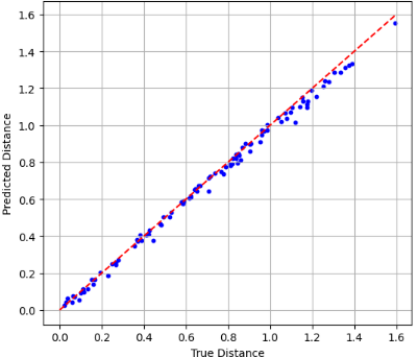}
		\caption{1-RF/LNN-1-$\mathcal{S}^{'}$}
	\end{subfigure}
	\hfill
	\begin{subfigure}[t]{0.14\textwidth}
		\includegraphics[width=\linewidth]{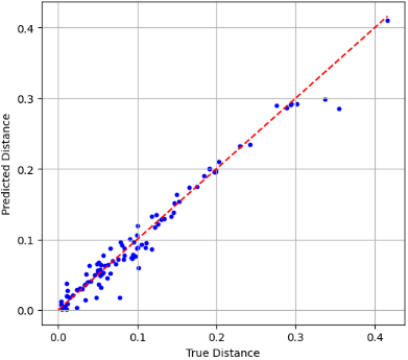}
		\caption{2-RF/LNN-1-$\mathcal{S}^{'}$}
	\end{subfigure}
	\hfill
	\begin{subfigure}[t]{0.14\textwidth}
		\includegraphics[width=\linewidth]{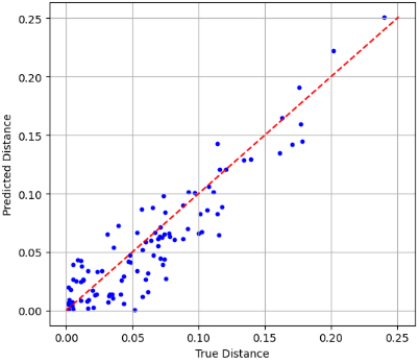}
		\caption{3-RF/LNN-1-$\mathcal{S}^{'}$}
	\end{subfigure}
	
	
	\begin{subfigure}[t]{0.14\textwidth}
		\includegraphics[width=\linewidth]{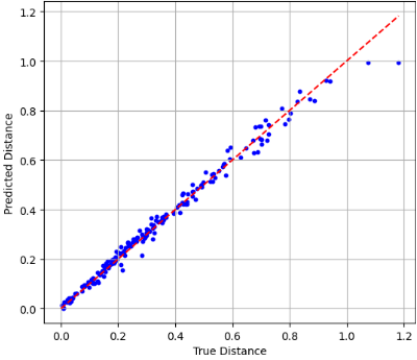}
		\caption{1-RF/LNN-2-$\mathscr{D}^{'}_{\mathrm{te}}$}
	\end{subfigure}
	\hfill
	\begin{subfigure}[t]{0.14\textwidth}
		\includegraphics[width=\linewidth]{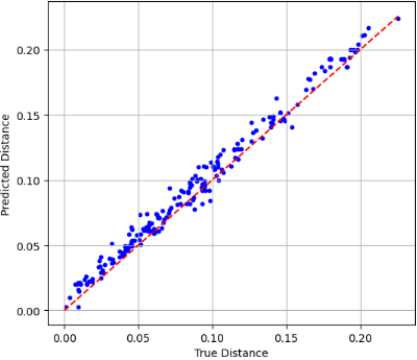}
		\caption{2-RF/LNN-2-$\mathscr{D}^{'}_{\mathrm{te}}$}
	\end{subfigure}
	\hfill
	\begin{subfigure}[t]{0.14\textwidth}
		\includegraphics[width=\linewidth]{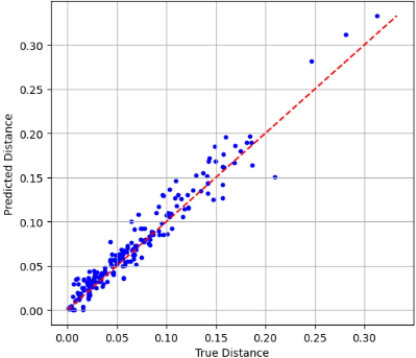}
		\caption{3-RF/LNN-2-$\mathscr{D}^{'}_{\mathrm{te}}$}
	\end{subfigure}
	\hfill
	\begin{subfigure}[t]{0.14\textwidth}
		\includegraphics[width=\linewidth]{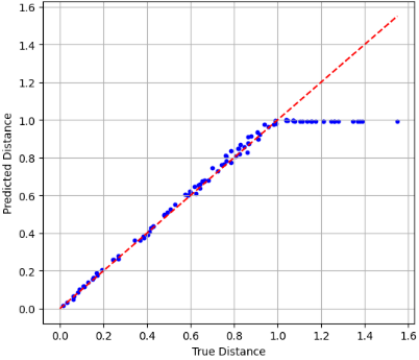}
		\caption{1-RF/LNN-2-$\mathcal{S}^{'}$}
	\end{subfigure}
	\hfill
	\begin{subfigure}[t]{0.14\textwidth}
		\includegraphics[width=\linewidth]{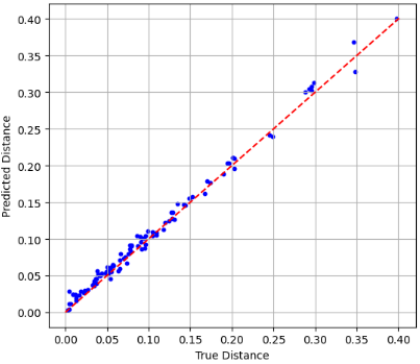}
		\caption{2-RF/LNN-2-$\mathcal{S}^{'}$}
	\end{subfigure}
	\hfill
	\begin{subfigure}[t]{0.14\textwidth}
		\includegraphics[width=\linewidth]{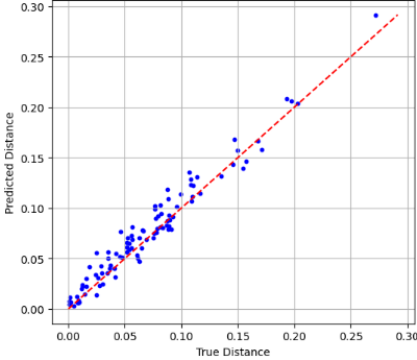}
		\caption{3-RF/LNN-2-$\mathcal{S}^{'}$}
	\end{subfigure}
		
	
	\begin{subfigure}[t]{0.14\textwidth}
		\includegraphics[width=\linewidth]{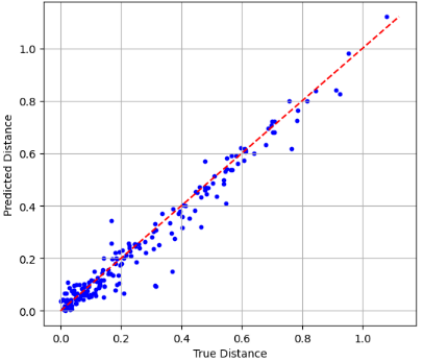}
		\caption{\scalebox{0.95}{1-GB/LNN-1-$\mathscr{D}^{'}_{\mathrm{te}}$}}
	\end{subfigure}
	\hfill
	\begin{subfigure}[t]{0.14\textwidth}
		\includegraphics[width=\linewidth]{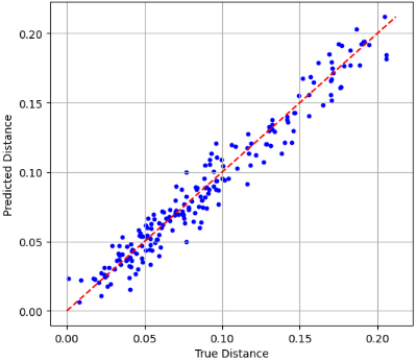}
		\caption{2-GB/LNN-1-$\mathscr{D}^{'}_{\mathrm{te}}$}
	\end{subfigure}
	\hfill
	\begin{subfigure}[t]{0.14\textwidth}
		\includegraphics[width=\linewidth]{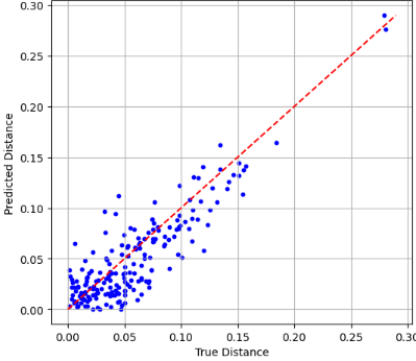}
		\caption{3-GB/LNN-1-$\mathscr{D}^{'}_{\mathrm{te}}$}
	\end{subfigure}
	\hfill
	\begin{subfigure}[t]{0.14\textwidth}
		\includegraphics[width=\linewidth]{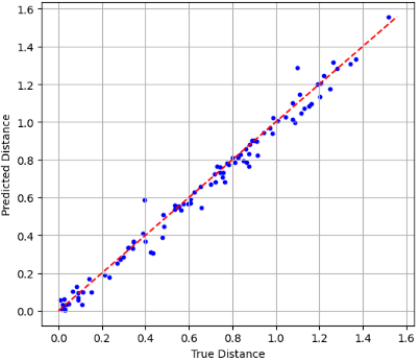}
		\caption{1-GB/LNN-1-$\mathcal{S}^{'}$}
	\end{subfigure}
	\hfill
	\begin{subfigure}[t]{0.14\textwidth}
		\includegraphics[width=\linewidth]{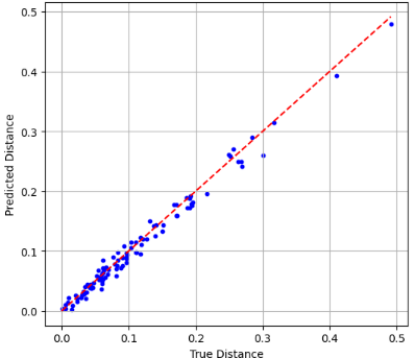}
		\caption{2-GB/LNN-1-$\mathcal{S}^{'}$}
	\end{subfigure}
	\hfill
	\begin{subfigure}[t]{0.14\textwidth}
		\includegraphics[width=\linewidth]{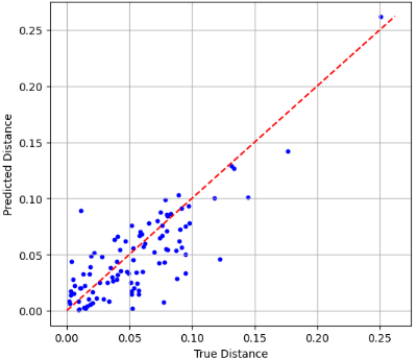}
		\caption{3-GB/LNN-1-$\mathcal{S}^{'}$}
	\end{subfigure}
	
	
	\begin{subfigure}[t]{0.14\textwidth}
		\includegraphics[width=\linewidth]{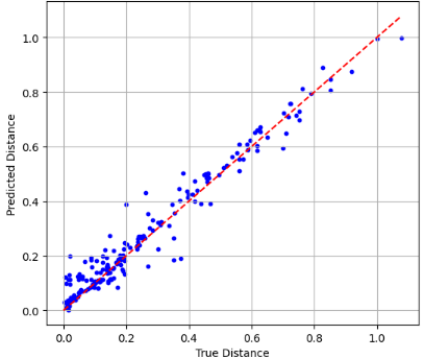}
		\caption{\scalebox{0.95}{1-GB/LNN-2-$\mathscr{D}^{'}_{\mathrm{te}}$}}
	\end{subfigure}
	\hfill
	\begin{subfigure}[t]{0.14\textwidth}
		\includegraphics[width=\linewidth]{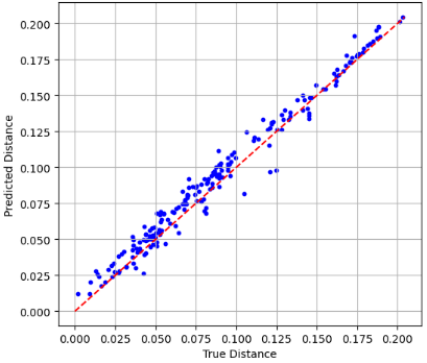}
		\caption{2-GB/LNN-2-$\mathscr{D}^{'}_{\mathrm{te}}$}
	\end{subfigure}
	\hfill
	\begin{subfigure}[t]{0.14\textwidth}
		\includegraphics[width=\linewidth]{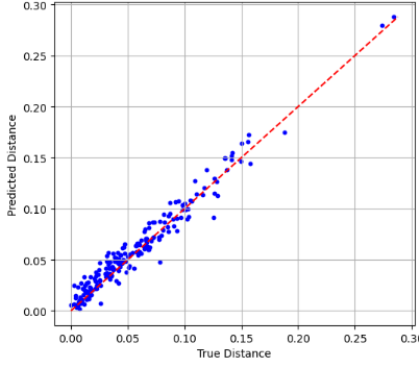}
		\caption{3-GB/LNN-2-$\mathscr{D}^{'}_{\mathrm{te}}$}
	\end{subfigure}
	\hfill
	\begin{subfigure}[t]{0.14\textwidth}
		\includegraphics[width=\linewidth]{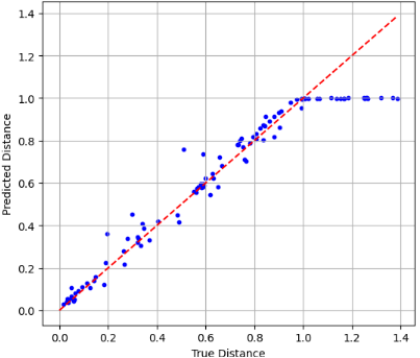}
		\caption{1-GB/LNN-2-$\mathcal{S}^{'}$}
	\end{subfigure}
	\hfill
	\begin{subfigure}[t]{0.14\textwidth}
		\includegraphics[width=\linewidth]{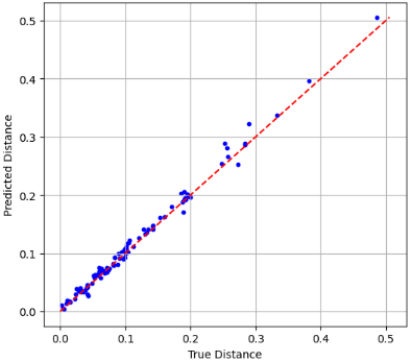}
		\caption{2-GB/LNN-2-$\mathcal{S}^{'}$}
	\end{subfigure}
	\hfill
	\begin{subfigure}[t]{0.14\textwidth}
		\includegraphics[width=\linewidth]{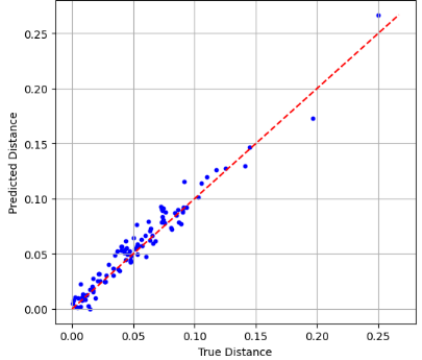}
		\caption{3-GB/LNN-2-$\mathcal{S}^{'}$}
	\end{subfigure}
\end{figure}

\begin{figure}[!ht]
	\centering
	\begin{subfigure}[t]{0.14\textwidth}
		\includegraphics[width=\linewidth]{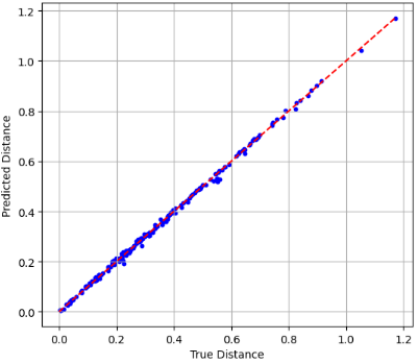}
		\caption{\scalebox{0.95}{1-NN/LNN-1-$\mathscr{D}^{'}_{\mathrm{te}}$}}
	\end{subfigure}
	\hfill
	\begin{subfigure}[t]{0.14\textwidth}
		\includegraphics[width=\linewidth]{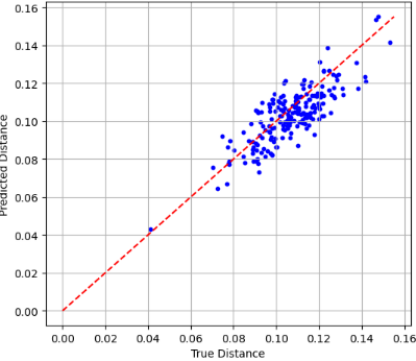}
		\caption{\scalebox{0.95}{2-NN/LNN-1-$\mathscr{D}^{'}_{\mathrm{te}}$}}
	\end{subfigure}
	\hfill
	\begin{subfigure}[t]{0.14\textwidth}
		\includegraphics[width=\linewidth]{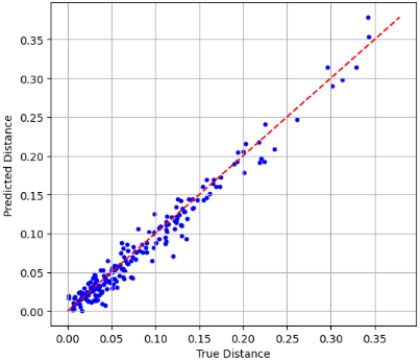}
		\caption{\scalebox{0.95}{3-NN/LNN-1-$\mathscr{D}^{'}_{\mathrm{te}}$}}
	\end{subfigure}
	\hfill
	\begin{subfigure}[t]{0.14\textwidth}
		\includegraphics[width=\linewidth]{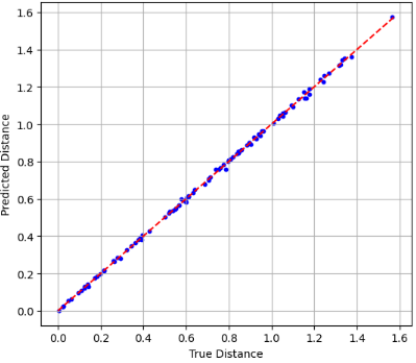}
		\caption{1-NN/LNN-1-$\mathcal{S}^{'}$}
	\end{subfigure}
	\hfill
	\begin{subfigure}[t]{0.14\textwidth}
		\includegraphics[width=\linewidth]{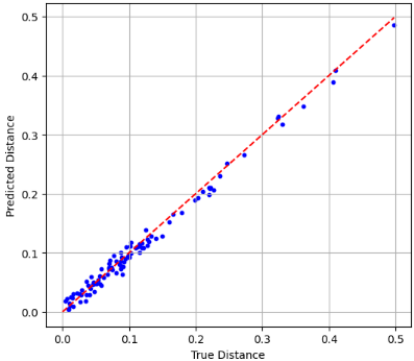}
		\caption{2-NN/LNN-1-$\mathcal{S}^{'}$}
	\end{subfigure}
	\hfill
	\begin{subfigure}[t]{0.14\textwidth}
		\includegraphics[width=\linewidth]{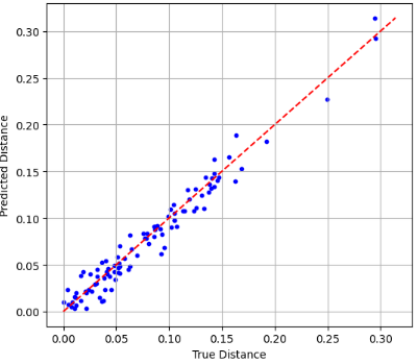}
		\caption{3-NN/LNN-1-$\mathcal{S}^{'}$}
	\end{subfigure}
	
	
	\begin{subfigure}[t]{0.14\textwidth}
		\includegraphics[width=\linewidth]{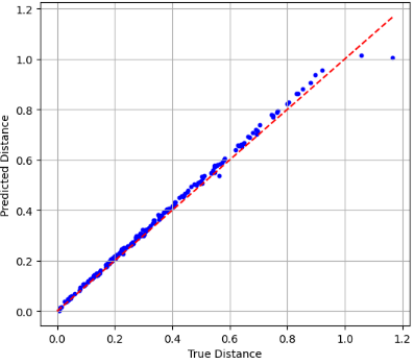}
		\caption{\scalebox{0.95}{1-NN/LNN-2-$\mathscr{D}^{'}_{\mathrm{te}}$}}
	\end{subfigure}
	\hfill
	\begin{subfigure}[t]{0.14\textwidth}
		\includegraphics[width=\linewidth]{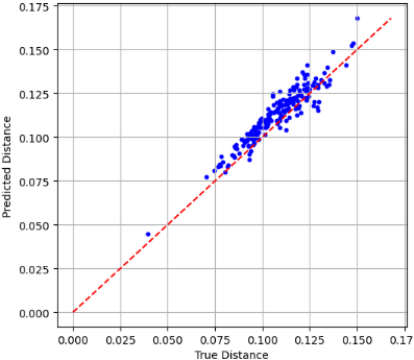}
		\caption{\scalebox{0.95}{2-NN/LNN-2-$\mathscr{D}^{'}_{\mathrm{te}}$}}
	\end{subfigure}
	\hfill
	\begin{subfigure}[t]{0.14\textwidth}
		\includegraphics[width=\linewidth]{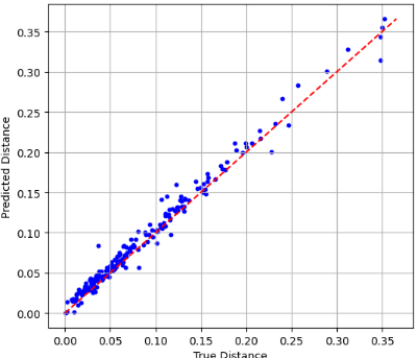}
		\caption{\scalebox{0.95}{3-NN/LNN-2-$\mathscr{D}^{'}_{\mathrm{te}}$}}
	\end{subfigure}
	\hfill
	\begin{subfigure}[t]{0.14\textwidth}
		\includegraphics[width=\linewidth]{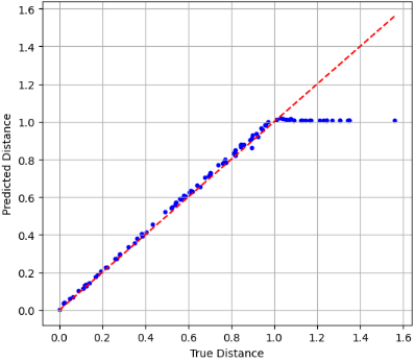}
		\caption{1-NN/LNN-2-$\mathcal{S}^{'}$}
	\end{subfigure}
	\hfill
	\begin{subfigure}[t]{0.14\textwidth}
		\includegraphics[width=\linewidth]{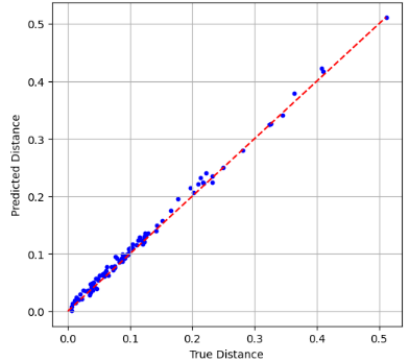}
		\caption{2-NN/LNN-2-$\mathcal{S}^{'}$}
	\end{subfigure}
	\hfill
	\begin{subfigure}[t]{0.14\textwidth}
		\includegraphics[width=\linewidth]{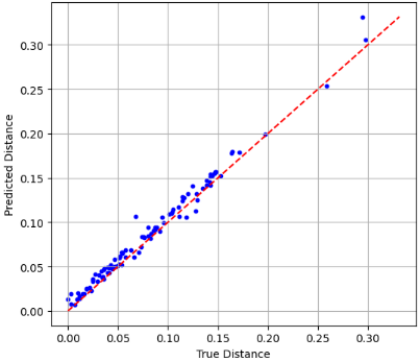}
		\caption{3-NN/LNN-2-$\mathcal{S}^{'}$}
	\end{subfigure}
	
	
	\begin{subfigure}[t]{0.14\textwidth}
		\includegraphics[width=\linewidth]{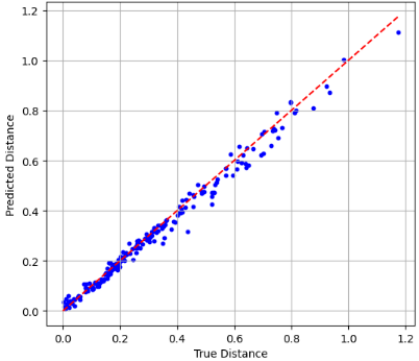}
		\caption{1-RF/GB-1-$\mathscr{D}^{'}_{\mathrm{te}}$}
	\end{subfigure}
	\hfill
	\begin{subfigure}[t]{0.14\textwidth}
		\includegraphics[width=\linewidth]{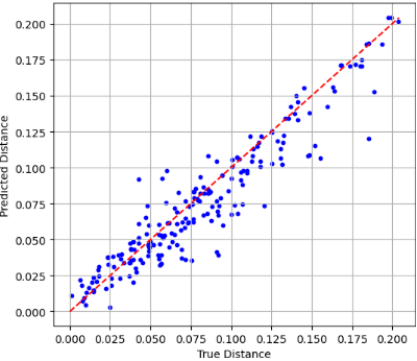}
		\caption{2-RF/GB-1-$\mathscr{D}^{'}_{\mathrm{te}}$}
	\end{subfigure}
	\hfill
	\begin{subfigure}[t]{0.14\textwidth}
		\includegraphics[width=\linewidth]{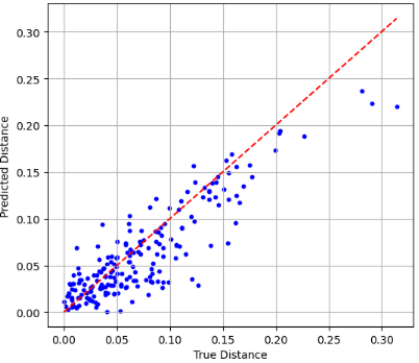}
		\caption{3-RF/GB-1-$\mathscr{D}^{'}_{\mathrm{te}}$}
	\end{subfigure}
	\hfill
	\begin{subfigure}[t]{0.14\textwidth}
		\includegraphics[width=\linewidth]{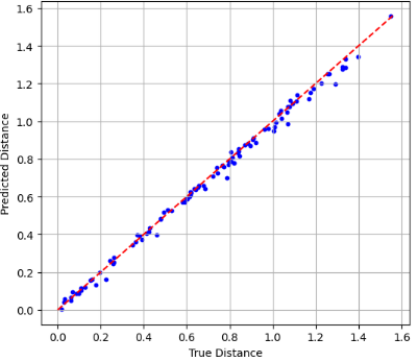}
		\caption{1-RF/GB-1-$\mathcal{S}^{'}$}
	\end{subfigure}
	\hfill
	\begin{subfigure}[t]{0.14\textwidth}
		\includegraphics[width=\linewidth]{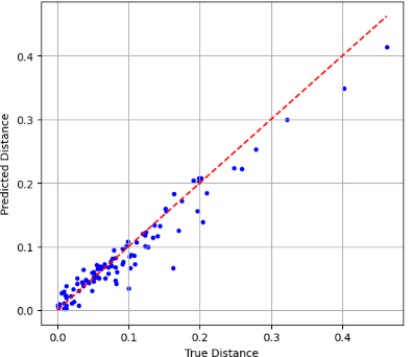}
		\caption{2-RF/GB-1-$\mathcal{S}^{'}$}
	\end{subfigure}
	\hfill
	\begin{subfigure}[t]{0.14\textwidth}
		\includegraphics[width=\linewidth]{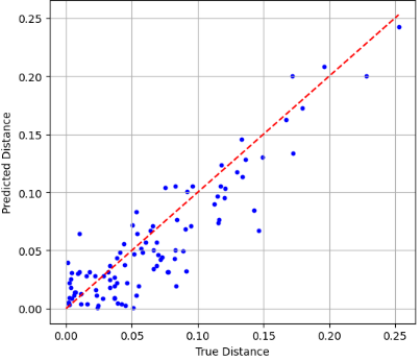}
		\caption{3-RF/GB-1-$\mathcal{S}^{'}$}
	\end{subfigure}
	
	
	\begin{subfigure}[t]{0.14\textwidth}
		\includegraphics[width=\linewidth]{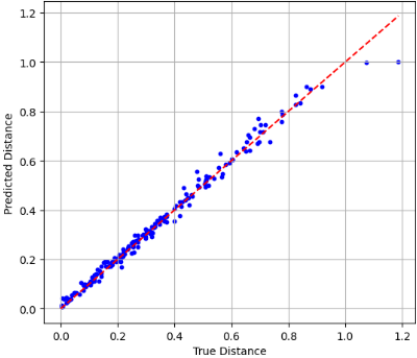}
		\caption{1-RF/GB-2-$\mathscr{D}^{'}_{\mathrm{te}}$}
	\end{subfigure}
	\hfill
	\begin{subfigure}[t]{0.14\textwidth}
		\includegraphics[width=\linewidth]{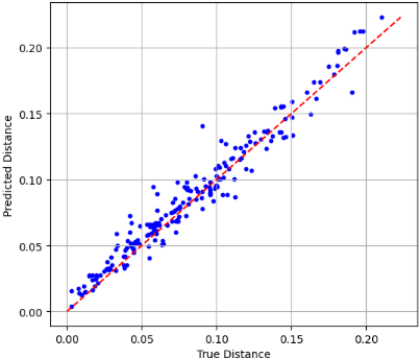}
		\caption{2-RF/GB-2-$\mathscr{D}^{'}_{\mathrm{te}}$}
	\end{subfigure}
	\hfill
	\begin{subfigure}[t]{0.14\textwidth}
		\includegraphics[width=\linewidth]{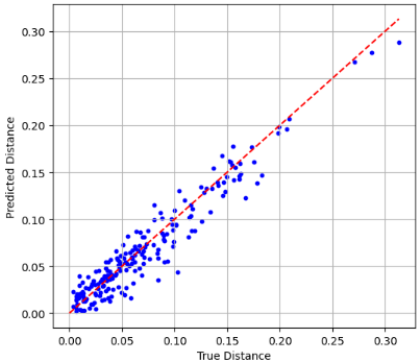}
		\caption{3-RF/GB-2-$\mathscr{D}^{'}_{\mathrm{te}}$}
	\end{subfigure}
	\hfill
	\begin{subfigure}[t]{0.14\textwidth}
		\includegraphics[width=\linewidth]{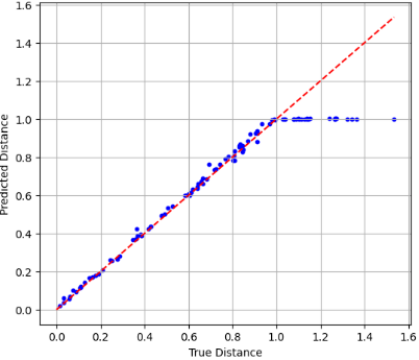}
		\caption{1-RF/GB-2-$\mathcal{S}^{'}$}
	\end{subfigure}
	\hfill
	\begin{subfigure}[t]{0.14\textwidth}
		\includegraphics[width=\linewidth]{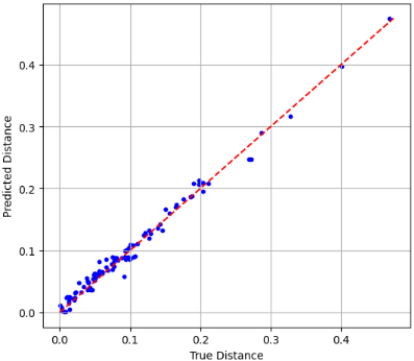}
		\caption{2-RF/GB-2-$\mathcal{S}^{'}$}
	\end{subfigure}
	\hfill
	\begin{subfigure}[t]{0.14\textwidth}
		\includegraphics[width=\linewidth]{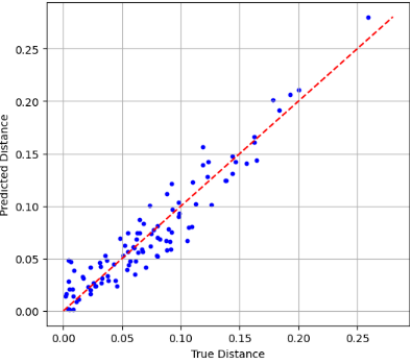}
		\caption{3-RF/GB-2-$\mathcal{S}^{'}$}
	\end{subfigure}
	
	
	\begin{subfigure}[t]{0.14\textwidth}
		\includegraphics[width=\linewidth]{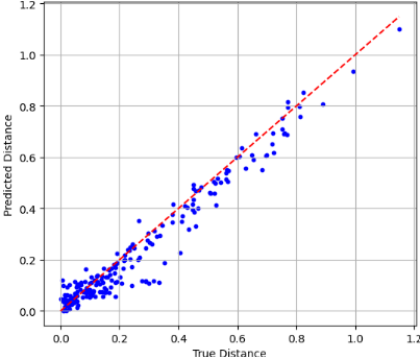}
		\caption{1-GB/GB-1-$\mathscr{D}^{'}_{\mathrm{te}}$}
	\end{subfigure}
	\hfill
	\begin{subfigure}[t]{0.14\textwidth}
		\includegraphics[width=\linewidth]{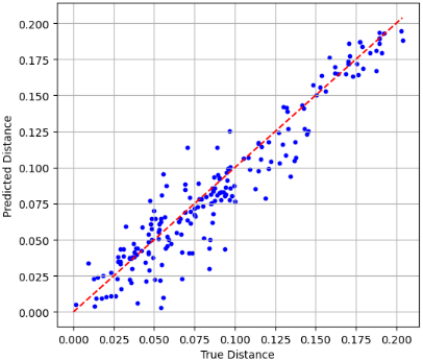}
		\caption{2-GB/GB-1-$\mathscr{D}^{'}_{\mathrm{te}}$}
	\end{subfigure}
	\hfill
	\begin{subfigure}[t]{0.14\textwidth}
		\includegraphics[width=\linewidth]{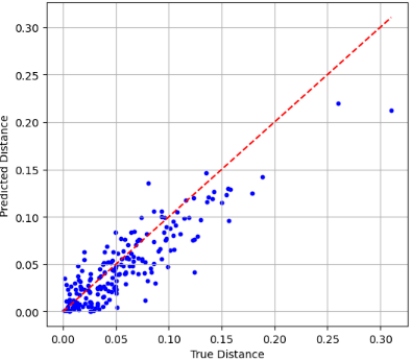}
		\caption{3-GB/GB-1-$\mathscr{D}^{'}_{\mathrm{te}}$}
	\end{subfigure}
	\hfill
	\begin{subfigure}[t]{0.14\textwidth}
		\includegraphics[width=\linewidth]{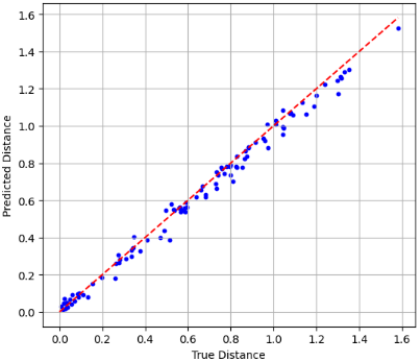}
		\caption{1-GB/GB-1-$\mathcal{S}^{'}$}
	\end{subfigure}
	\hfill
	\begin{subfigure}[t]{0.14\textwidth}
		\includegraphics[width=\linewidth]{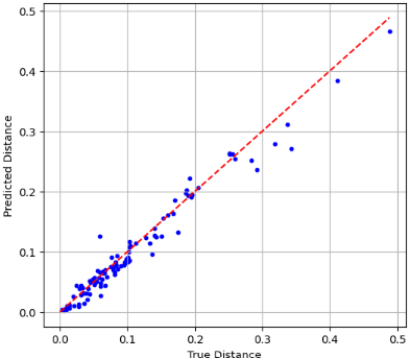}
		\caption{2-GB/GB-1-$\mathcal{S}^{'}$}
	\end{subfigure}
	\hfill
	\begin{subfigure}[t]{0.14\textwidth}
		\includegraphics[width=\linewidth]{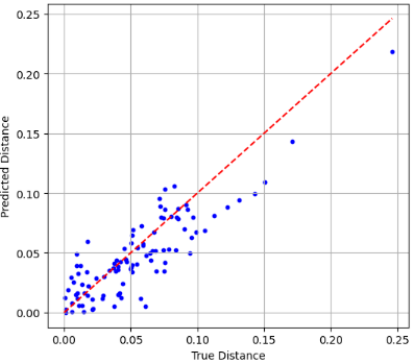}
		\caption{3-GB/GB-1-$\mathcal{S}^{'}$}
	\end{subfigure}
	
	
	\begin{subfigure}[t]{0.14\textwidth}
		\includegraphics[width=\linewidth]{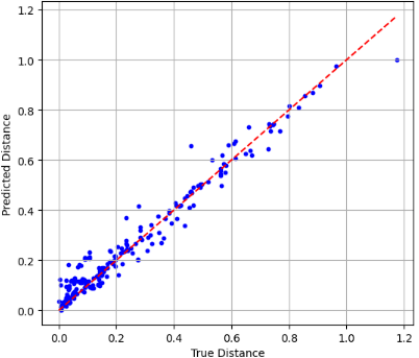}
		\caption{1-GB/GB-2-$\mathscr{D}^{'}_{\mathrm{te}}$}
	\end{subfigure}
	\hfill
	\begin{subfigure}[t]{0.14\textwidth}
		\includegraphics[width=\linewidth]{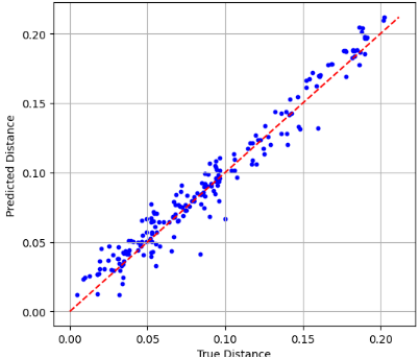}
		\caption{2-GB/GB-2-$\mathscr{D}^{'}_{\mathrm{te}}$}
	\end{subfigure}
	\hfill
	\begin{subfigure}[t]{0.14\textwidth}
		\includegraphics[width=\linewidth]{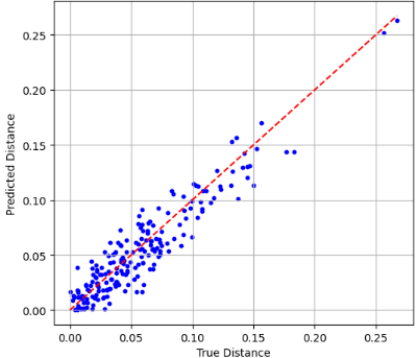}
		\caption{3-GB/GB-2-$\mathscr{D}^{'}_{\mathrm{te}}$}
	\end{subfigure}
	\hfill
	\begin{subfigure}[t]{0.14\textwidth}
		\includegraphics[width=\linewidth]{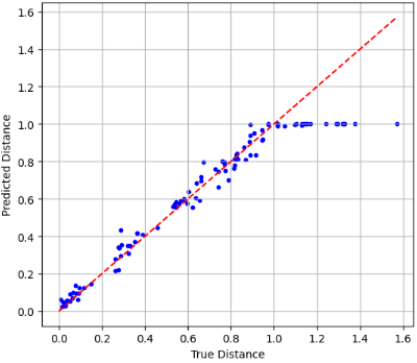}
		\caption{1-GB/GB-2-$\mathcal{S}^{'}$}
	\end{subfigure}
	\hfill
	\begin{subfigure}[t]{0.14\textwidth}
		\includegraphics[width=\linewidth]{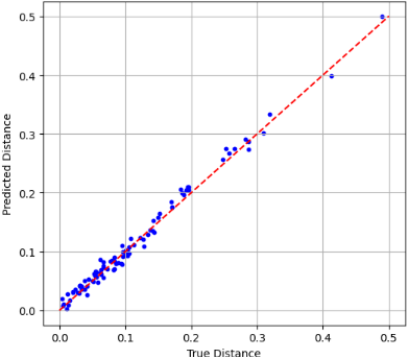}
		\caption{2-GB/GB-2-$\mathcal{S}^{'}$}
	\end{subfigure}
	\hfill
	\begin{subfigure}[t]{0.14\textwidth}
		\includegraphics[width=\linewidth]{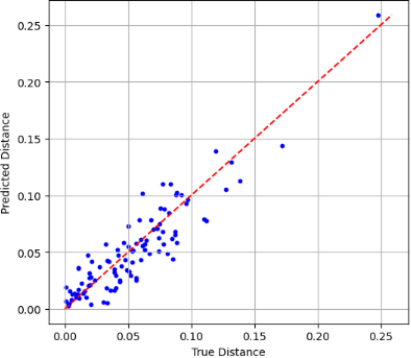}
		\caption{3-GB/GB-2-$\mathcal{S}^{'}$}
	\end{subfigure}
	
	
	\begin{subfigure}[t]{0.14\textwidth}
		\includegraphics[width=\linewidth]{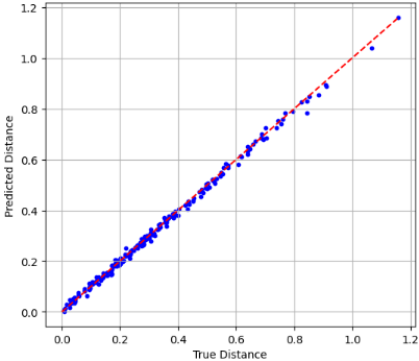}
		\caption{1-NN/GB-1-$\mathscr{D}^{'}_{\mathrm{te}}$}
	\end{subfigure}
	\hfill
	\begin{subfigure}[t]{0.14\textwidth}
		\includegraphics[width=\linewidth]{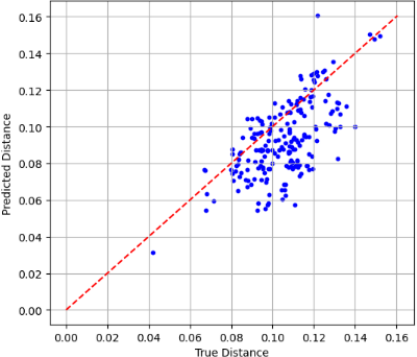}
		\caption{2-NN/GB-1-$\mathscr{D}^{'}_{\mathrm{te}}$}
	\end{subfigure}
	\hfill
	\begin{subfigure}[t]{0.14\textwidth}
		\includegraphics[width=\linewidth]{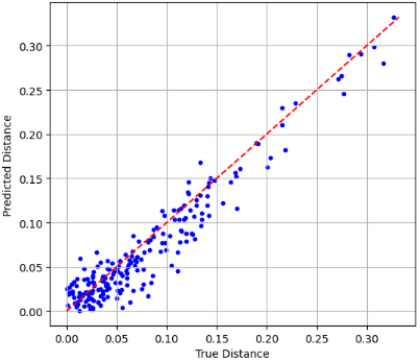}
		\caption{3-NN/GB-1-$\mathscr{D}^{'}_{\mathrm{te}}$}
	\end{subfigure}
	\hfill
	\begin{subfigure}[t]{0.14\textwidth}
		\includegraphics[width=\linewidth]{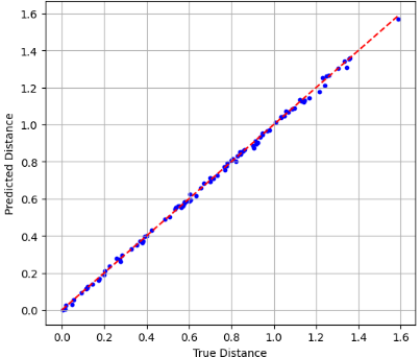}
		\caption{1-NN/GB-1-$\mathcal{S}^{'}$}
	\end{subfigure}
	\hfill
	\begin{subfigure}[t]{0.14\textwidth}
		\includegraphics[width=\linewidth]{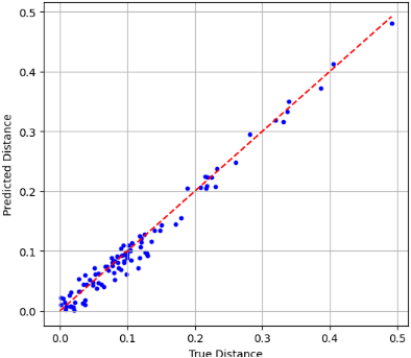}
		\caption{2-NN/GB-1-$\mathcal{S}^{'}$}
	\end{subfigure}
	\hfill
	\begin{subfigure}[t]{0.14\textwidth}
		\includegraphics[width=\linewidth]{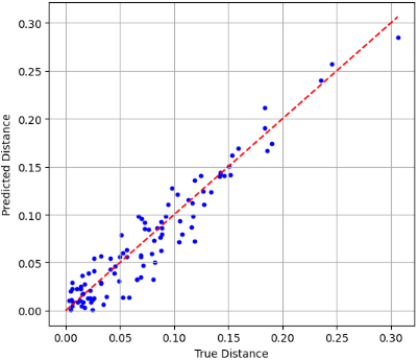}
		\caption{3-NN/GB-1-$\mathcal{S}^{'}$}
	\end{subfigure}
	
	
	\begin{subfigure}[t]{0.14\textwidth}
		\includegraphics[width=\linewidth]{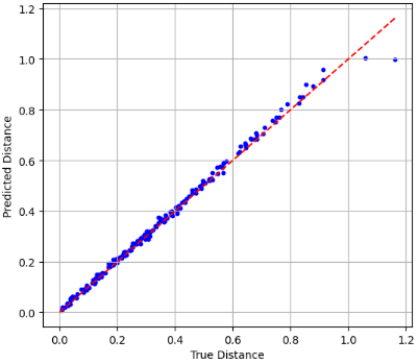}
		\caption{1-NN/GB-2-$\mathscr{D}^{'}_{\mathrm{te}}$}
	\end{subfigure}
	\hfill
	\begin{subfigure}[t]{0.14\textwidth}
		\includegraphics[width=\linewidth]{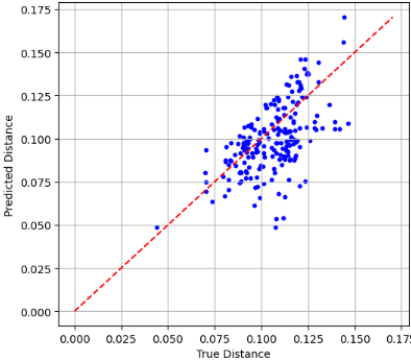}
		\caption{2-NN/GB-2-$\mathscr{D}^{'}_{\mathrm{te}}$}
	\end{subfigure}
	\hfill
	\begin{subfigure}[t]{0.14\textwidth}
		\includegraphics[width=\linewidth]{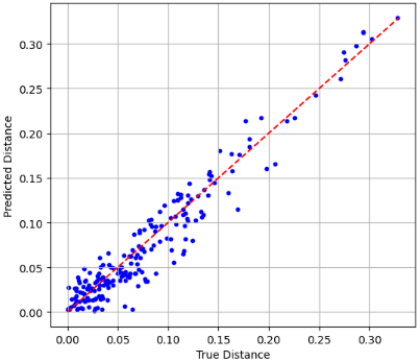}
		\caption{3-NN/GB-2-$\mathscr{D}^{'}_{\mathrm{te}}$}
	\end{subfigure}
	\hfill
	\begin{subfigure}[t]{0.14\textwidth}
		\includegraphics[width=\linewidth]{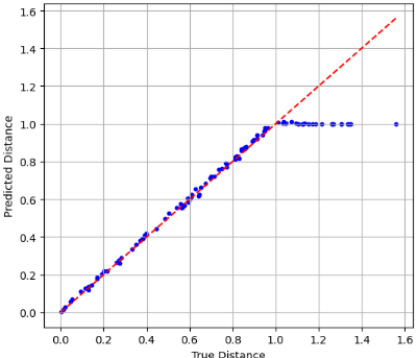}
		\caption{1-NN/GB-2-$\mathcal{S}^{'}$}
	\end{subfigure}
	\hfill
	\begin{subfigure}[t]{0.14\textwidth}
		\includegraphics[width=\linewidth]{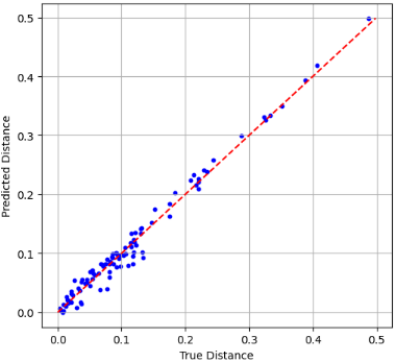}
		\caption{2-NN/GB-2-$\mathcal{S}^{'}$}
	\end{subfigure}
	\hfill
	\begin{subfigure}[t]{0.14\textwidth}
		\includegraphics[width=\linewidth]{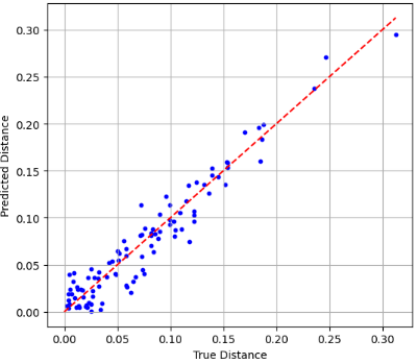}
		\caption{3-NN/GB-2-$\mathcal{S}^{'}$}
	\end{subfigure}
\end{figure}


\begin{figure}[!ht]
	\centering
	\begin{subfigure}[t]{0.14\textwidth}
		\includegraphics[width=\linewidth]{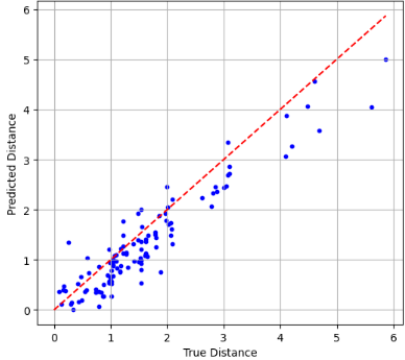}
		\caption{4-RF/SNN-1-$\mathscr{D}^{'}_{\mathrm{te}}$}
	\end{subfigure}
	\hfill
	\begin{subfigure}[t]{0.14\textwidth}
		\includegraphics[width=\linewidth]{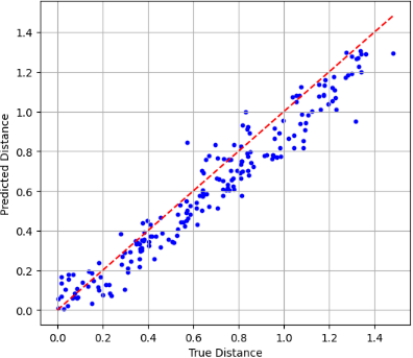}
		\caption{5-RF/SNN-1-$\mathscr{D}^{'}_{\mathrm{te}}$}
	\end{subfigure}
	\hfill
	\begin{subfigure}[t]{0.14\textwidth}
		\includegraphics[width=\linewidth]{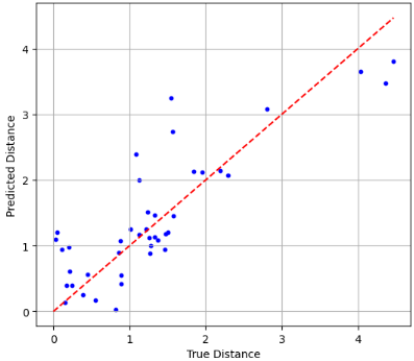}
		\caption{6-RF/SNN-1-$\mathscr{D}^{'}_{\mathrm{te}}$}
	\end{subfigure}
	\hfill
	\begin{subfigure}[t]{0.14\textwidth}
		\includegraphics[width=\linewidth]{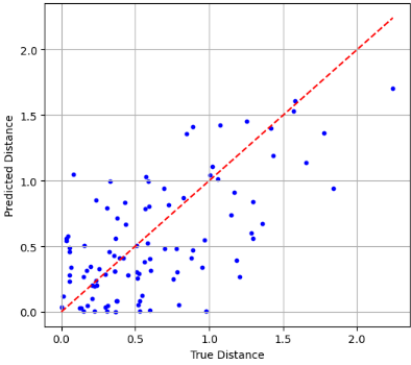}
		\caption{4-RF/SNN-1-$\mathcal{S}^{'}$}
	\end{subfigure}
	\hfill
	\begin{subfigure}[t]{0.14\textwidth}
		\includegraphics[width=\linewidth]{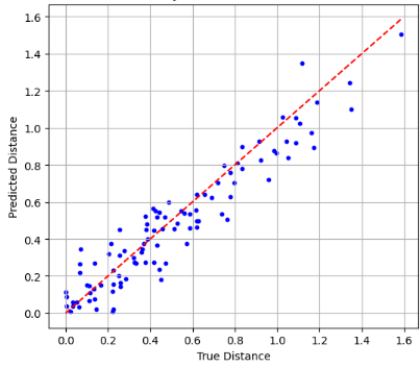}
		\caption{5-RF/SNN-1-$\mathcal{S}^{'}$}
	\end{subfigure}
	\hfill
	\begin{subfigure}[t]{0.14\textwidth}
		\includegraphics[width=\linewidth]{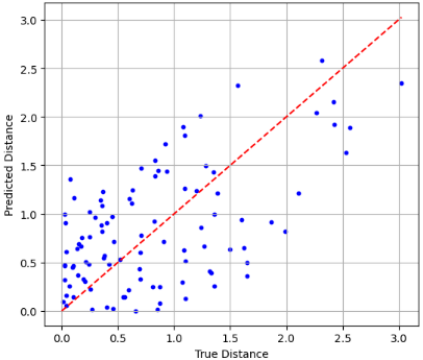}
		\caption{6-RF/SNN-1-$\mathcal{S}^{'}$}
	\end{subfigure}
	
	
	\begin{subfigure}[t]{0.14\textwidth}
		\includegraphics[width=\linewidth]{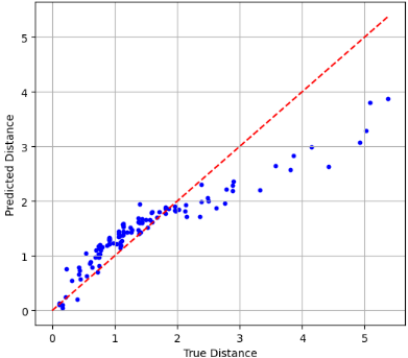}
		\caption{4-RF/SNN-2-$\mathscr{D}^{'}_{\mathrm{te}}$}
	\end{subfigure}
	\hfill
	\begin{subfigure}[t]{0.14\textwidth}
		\includegraphics[width=\linewidth]{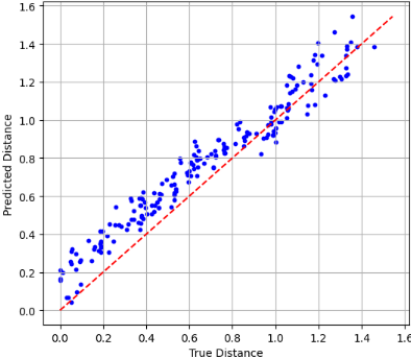}
		\caption{5-RF/SNN-2-$\mathscr{D}^{'}_{\mathrm{te}}$}
	\end{subfigure}
	\hfill
	\begin{subfigure}[t]{0.14\textwidth}
		\includegraphics[width=\linewidth]{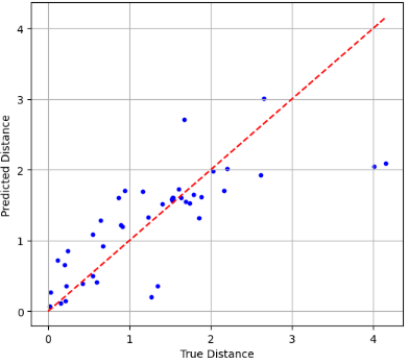}
		\caption{6-RF/SNN-2-$\mathscr{D}^{'}_{\mathrm{te}}$}
	\end{subfigure}
	\hfill
	\begin{subfigure}[t]{0.14\textwidth}
		\includegraphics[width=\linewidth]{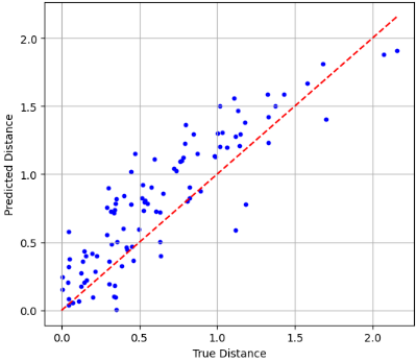}
		\caption{4-RF/SNN-2-$\mathcal{S}^{'}$}
	\end{subfigure}
	\hfill
	\begin{subfigure}[t]{0.14\textwidth}
		\includegraphics[width=\linewidth]{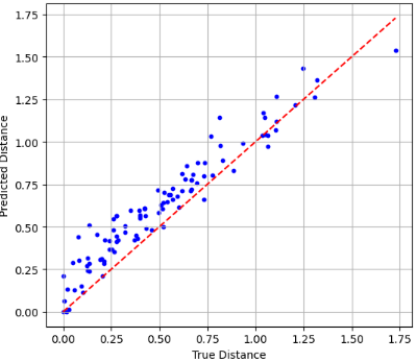}
		\caption{5-RF/SNN-2-$\mathcal{S}^{'}$}
	\end{subfigure}
	\hfill
	\begin{subfigure}[t]{0.14\textwidth}
		\includegraphics[width=\linewidth]{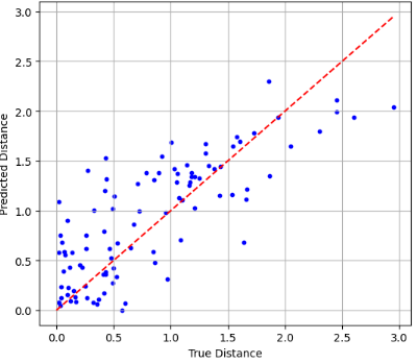}
		\caption{6-RF/SNN-2-$\mathcal{S}^{'}$}
	\end{subfigure}
	
	
	\begin{subfigure}[t]{0.14\textwidth}
		\includegraphics[width=\linewidth]{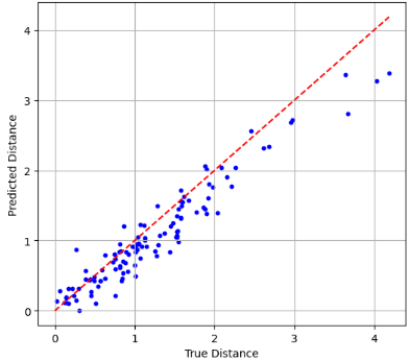}
		\caption{4-GB/SNN-1-$\mathscr{D}^{'}_{\mathrm{te}}$}
	\end{subfigure}
	\hfill
	\begin{subfigure}[t]{0.14\textwidth}
		\includegraphics[width=\linewidth]{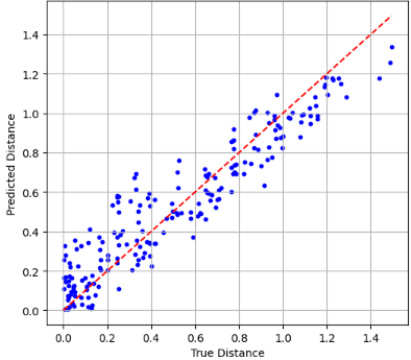}
		\caption{5-GB/SNN-1-$\mathscr{D}^{'}_{\mathrm{te}}$}
	\end{subfigure}
	\hfill
	\begin{subfigure}[t]{0.14\textwidth}
		\includegraphics[width=\linewidth]{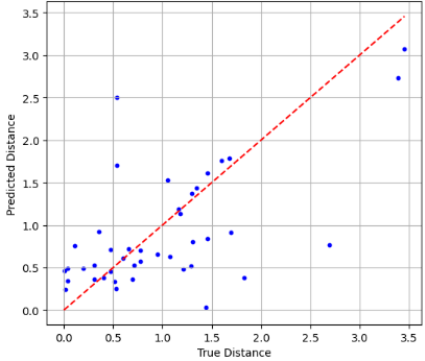}
		\caption{6-GB/SNN-1-$\mathscr{D}^{'}_{\mathrm{te}}$}
	\end{subfigure}
	\hfill
	\begin{subfigure}[t]{0.14\textwidth}
		\includegraphics[width=\linewidth]{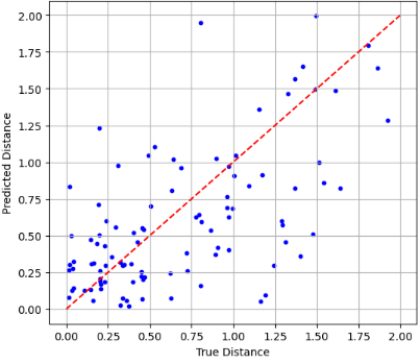}
		\caption{4-GB/SNN-1-$\mathcal{S}^{'}$}
	\end{subfigure}
	\hfill
	\begin{subfigure}[t]{0.14\textwidth}
		\includegraphics[width=\linewidth]{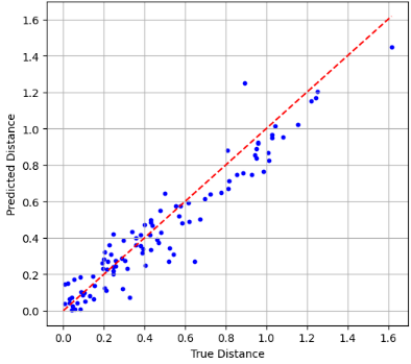}
		\caption{5-GB/SNN-1-$\mathcal{S}^{'}$}
	\end{subfigure}
	\hfill
	\begin{subfigure}[t]{0.14\textwidth}
		\includegraphics[width=\linewidth]{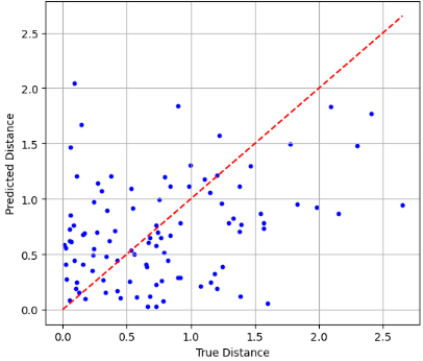}
		\caption{6-GB/SNN-1-$\mathcal{S}^{'}$}
	\end{subfigure}
	
	
	\begin{subfigure}[t]{0.14\textwidth}
		\includegraphics[width=\linewidth]{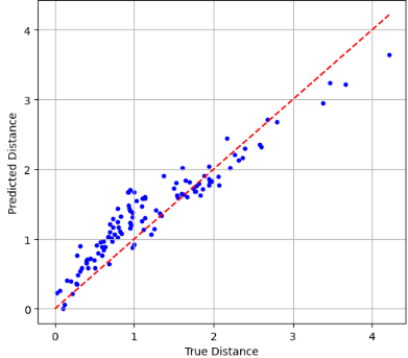}
		\caption{4-GB/SNN-2-$\mathscr{D}^{'}_{\mathrm{te}}$}
	\end{subfigure}
	\hfill
	\begin{subfigure}[t]{0.14\textwidth}
		\includegraphics[width=\linewidth]{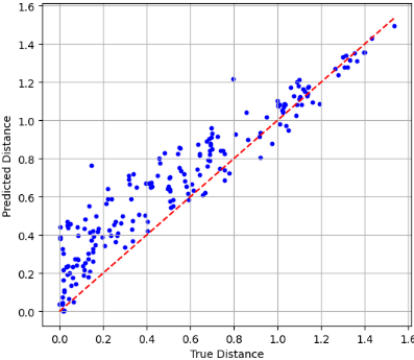}
		\caption{5-GB/SNN-2-$\mathscr{D}^{'}_{\mathrm{te}}$}
	\end{subfigure}
	\hfill
	\begin{subfigure}[t]{0.14\textwidth}
		\includegraphics[width=\linewidth]{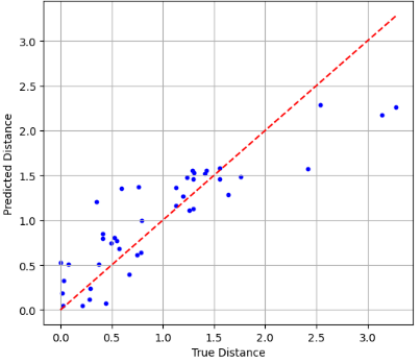}
		\caption{6-GB/SNN-2-$\mathscr{D}^{'}_{\mathrm{te}}$}
	\end{subfigure}
	\hfill
	\begin{subfigure}[t]{0.14\textwidth}
		\includegraphics[width=\linewidth]{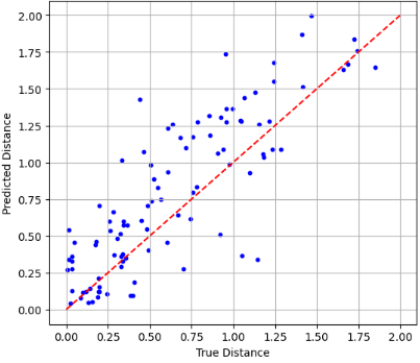}
		\caption{4-GB/SNN-2-$\mathcal{S}^{'}$}
	\end{subfigure}
	\hfill
	\begin{subfigure}[t]{0.14\textwidth}
		\includegraphics[width=\linewidth]{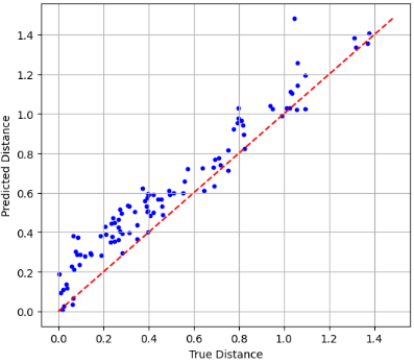}
		\caption{5-GB/SNN-2-$\mathcal{S}^{'}$}
	\end{subfigure}
	\hfill
	\begin{subfigure}[t]{0.14\textwidth}
		\includegraphics[width=\linewidth]{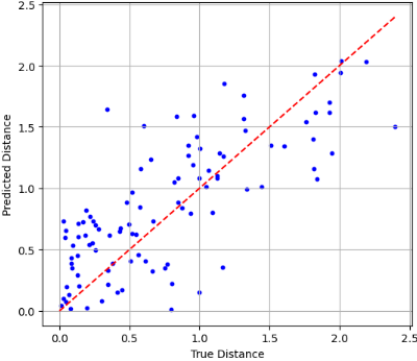}
		\caption{6-GB/SNN-2-$\mathcal{S}^{'}$}
	\end{subfigure}
	
	
	\begin{subfigure}[t]{0.14\textwidth}
		\includegraphics[width=\linewidth]{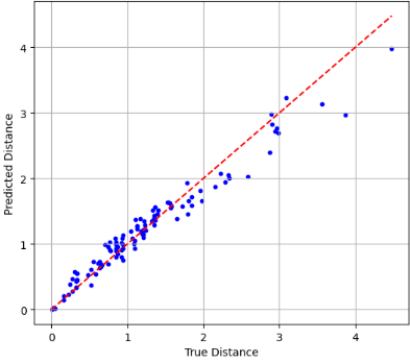}
		\caption{4-NN/SNN-1-$\mathscr{D}^{'}_{\mathrm{te}}$}
	\end{subfigure}
	\hfill
	\begin{subfigure}[t]{0.14\textwidth}
		\includegraphics[width=\linewidth]{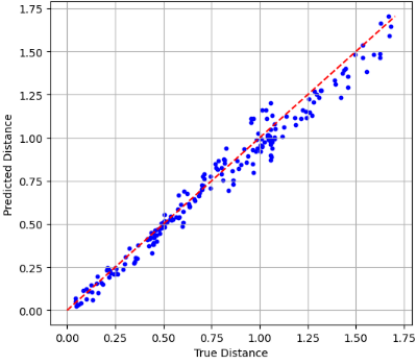}
		\caption{5-NN/SNN-1-$\mathscr{D}^{'}_{\mathrm{te}}$}
	\end{subfigure}
	\hfill
	\begin{subfigure}[t]{0.14\textwidth}
		\includegraphics[width=\linewidth]{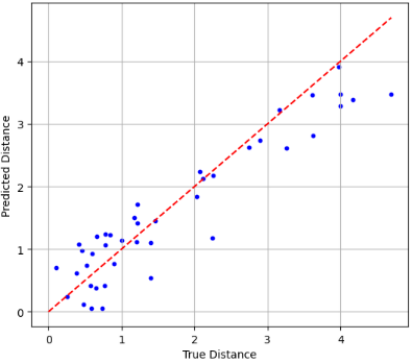}
		\caption{6-NN/SNN-1-$\mathscr{D}^{'}_{\mathrm{te}}$}
	\end{subfigure}
	\hfill
	\begin{subfigure}[t]{0.14\textwidth}
		\includegraphics[width=\linewidth]{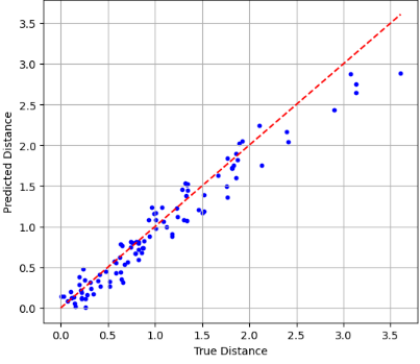}
		\caption{4-NN/SNN-1-$\mathcal{S}^{'}$}
	\end{subfigure}
	\hfill
	\begin{subfigure}[t]{0.14\textwidth}
		\includegraphics[width=\linewidth]{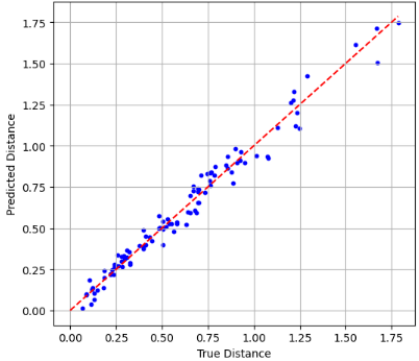}
		\caption{5-NN/SNN-1-$\mathcal{S}^{'}$}
	\end{subfigure}
	\hfill
	\begin{subfigure}[t]{0.14\textwidth}
		\includegraphics[width=\linewidth]{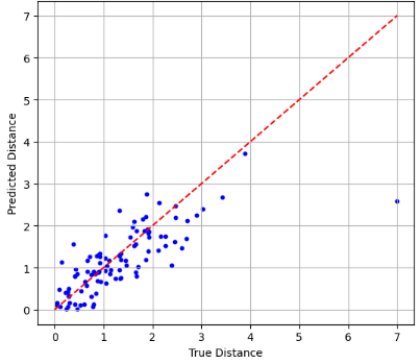}
		\caption{6-NN/SNN-1-$\mathcal{S}^{'}$}
	\end{subfigure}
	
	
	\begin{subfigure}[t]{0.14\textwidth}
		\includegraphics[width=\linewidth]{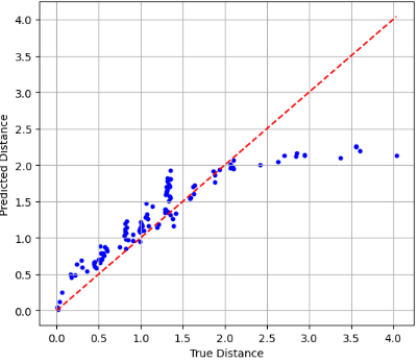}
		\caption{4-NN/SNN-2-$\mathscr{D}^{'}_{\mathrm{te}}$}
	\end{subfigure}
	\hfill
	\begin{subfigure}[t]{0.14\textwidth}
		\includegraphics[width=\linewidth]{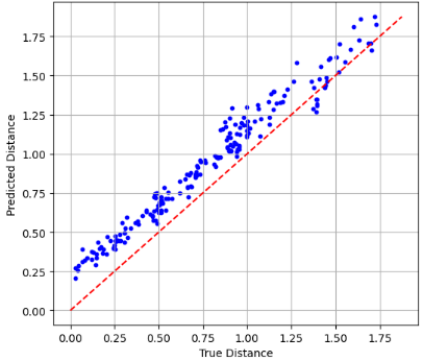}
		\caption{5-NN/SNN-2-$\mathscr{D}^{'}_{\mathrm{te}}$}
	\end{subfigure}
	\hfill
	\begin{subfigure}[t]{0.14\textwidth}
		\includegraphics[width=\linewidth]{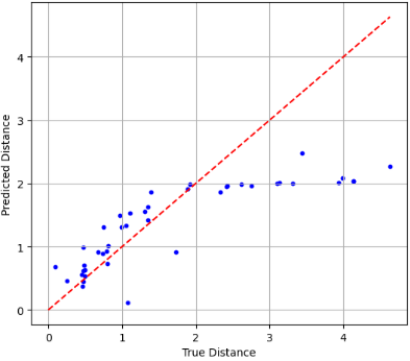}
		\caption{6-NN/SNN-2-$\mathscr{D}^{'}_{\mathrm{te}}$}
	\end{subfigure}
	\hfill
	\begin{subfigure}[t]{0.14\textwidth}
		\includegraphics[width=\linewidth]{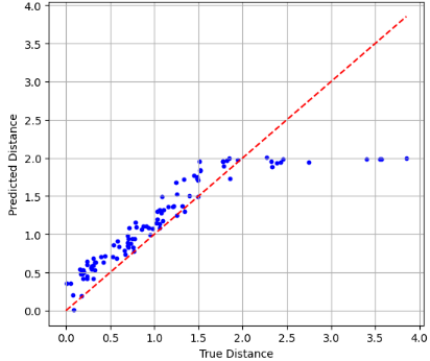}
		\caption{4-NN/SNN-2-$\mathcal{S}^{'}$}
	\end{subfigure}
	\hfill
	\begin{subfigure}[t]{0.14\textwidth}
		\includegraphics[width=\linewidth]{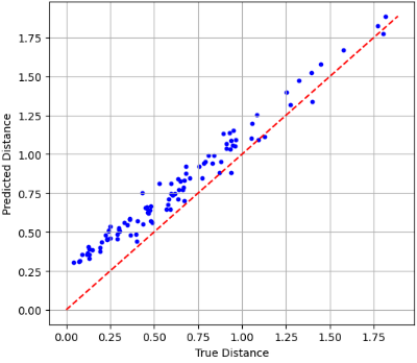}
		\caption{5-NN/SNN-2-$\mathcal{S}^{'}$}
	\end{subfigure}
	\hfill
	\begin{subfigure}[t]{0.14\textwidth}
		\includegraphics[width=\linewidth]{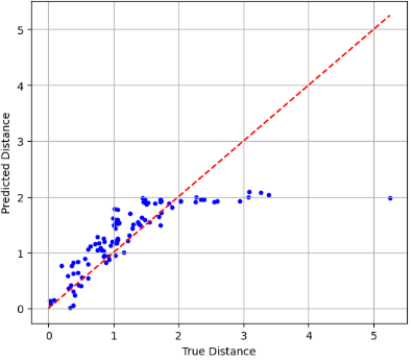}
		\caption{6-NN/SNN-2-$\mathcal{S}^{'}$}
	\end{subfigure}
	
	
	\begin{subfigure}[t]{0.14\textwidth}
		\includegraphics[width=\linewidth]{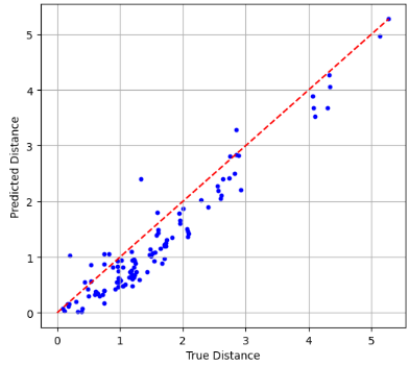}
		\caption{\scalebox{0.95}{4-RF/MNN-1-$\mathscr{D}^{'}_{\mathrm{te}}$}}
	\end{subfigure}
	\hfill
	\begin{subfigure}[t]{0.14\textwidth}
		\includegraphics[width=\linewidth]{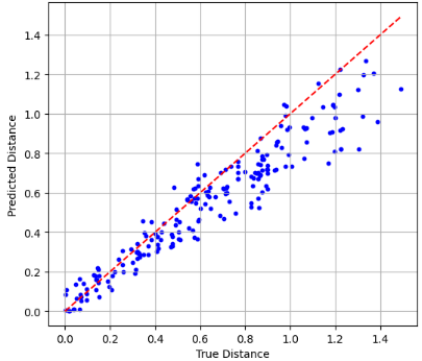}
		\caption{\scalebox{0.95}{5-RF/MNN-1-$\mathscr{D}^{'}_{\mathrm{te}}$}}
	\end{subfigure}
	\hfill
	\begin{subfigure}[t]{0.14\textwidth}
		\includegraphics[width=\linewidth]{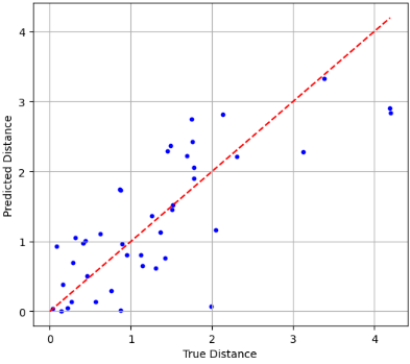}
		\caption{\scalebox{0.95}{6-RF/MNN-1-$\mathscr{D}^{'}_{\mathrm{te}}$}}
	\end{subfigure}
	\hfill
	\begin{subfigure}[t]{0.14\textwidth}
		\includegraphics[width=\linewidth]{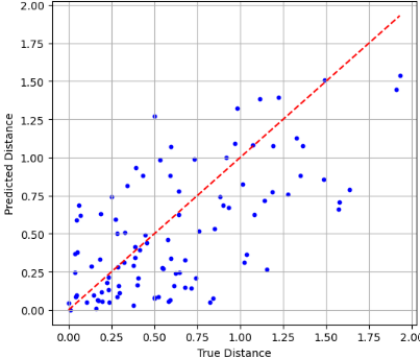}
		\caption{4-RF/MNN-1-$\mathcal{S}^{'}$}
	\end{subfigure}
	\hfill
	\begin{subfigure}[t]{0.14\textwidth}
		\includegraphics[width=\linewidth]{Figures/Figure_9/5-RF-MNN-1-S.pdf}
		\caption{5-RF/MNN-1-$\mathcal{S}^{'}$}
	\end{subfigure}
	\hfill
	\begin{subfigure}[t]{0.14\textwidth}
		\includegraphics[width=\linewidth]{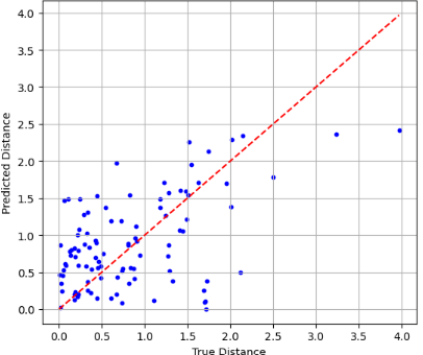}
		\caption{6-RF/MNN-1-$\mathcal{S}^{'}$}
	\end{subfigure}
	
	
	\begin{subfigure}[t]{0.14\textwidth}
		\includegraphics[width=\linewidth]{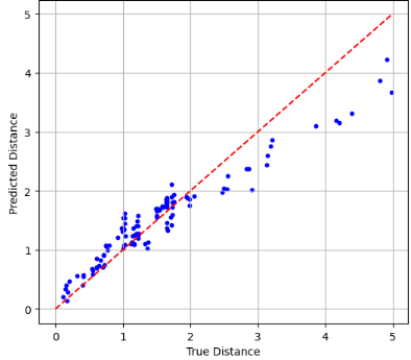}
		\caption{\scalebox{0.95}{4-RF/MNN-2-$\mathscr{D}^{'}_{\mathrm{te}}$}}
	\end{subfigure}
	\hfill
	\begin{subfigure}[t]{0.14\textwidth}
		\includegraphics[width=\linewidth]{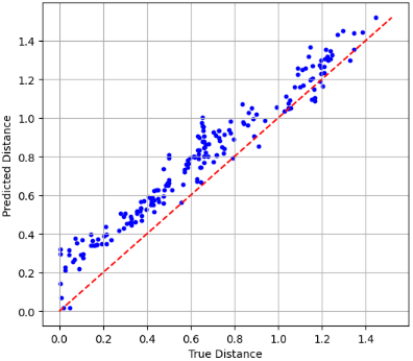}
		\caption{\scalebox{0.95}{5-RF/MNN-2-$\mathscr{D}^{'}_{\mathrm{te}}$}}
	\end{subfigure}
	\hfill
	\begin{subfigure}[t]{0.14\textwidth}
		\includegraphics[width=\linewidth]{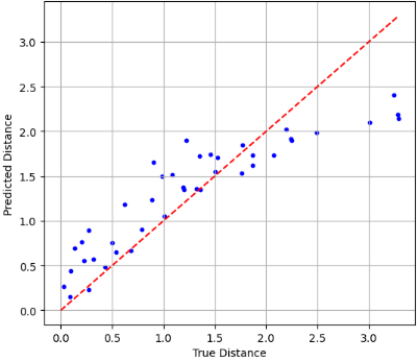}
		\caption{\scalebox{0.95}{6-RF/MNN-2-$\mathscr{D}^{'}_{\mathrm{te}}$}}
	\end{subfigure}
	\hfill
	\begin{subfigure}[t]{0.14\textwidth}
		\includegraphics[width=\linewidth]{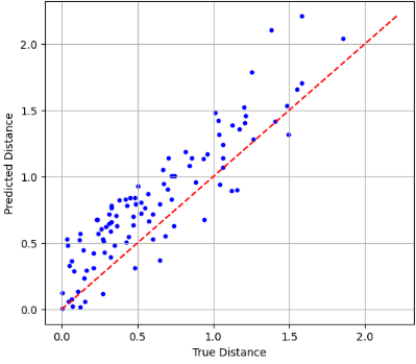}
		\caption{4-RF/MNN-2-$\mathcal{S}^{'}$}
	\end{subfigure}
	\hfill
	\begin{subfigure}[t]{0.14\textwidth}
		\includegraphics[width=\linewidth]{Figures/Figure_9/5-RF-MNN-2-S.pdf}
		\caption{5-RF/MNN-2-$\mathcal{S}^{'}$}
	\end{subfigure}
	\hfill
	\begin{subfigure}[t]{0.14\textwidth}
		\includegraphics[width=\linewidth]{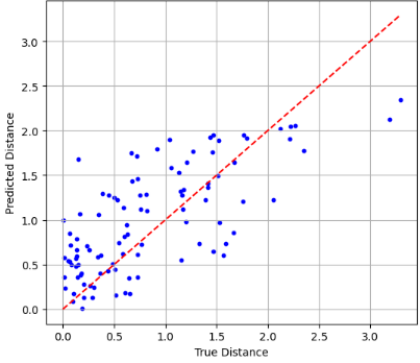}
		\caption{6-RF/MNN-2-$\mathcal{S}^{'}$}
	\end{subfigure}
\end{figure}

\begin{figure}[!ht]
	\centering
	\begin{subfigure}[t]{0.14\textwidth}
		\includegraphics[width=\linewidth]{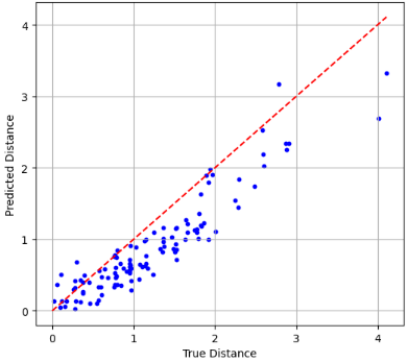}
		\caption{\scalebox{0.95}{4-GB/MNN-1-$\mathscr{D}^{'}_{\mathrm{te}}$}}
	\end{subfigure}
	\hfill
	\begin{subfigure}[t]{0.14\textwidth}
		\includegraphics[width=\linewidth]{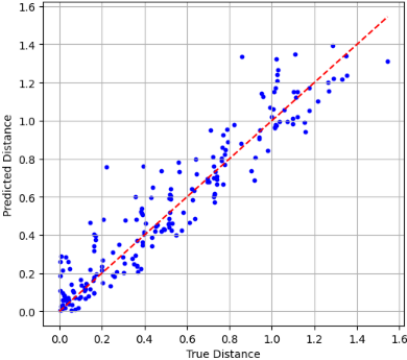}
		\caption{\scalebox{0.95}{5-GB/MNN-1-$\mathscr{D}^{'}_{\mathrm{te}}$}}
	\end{subfigure}
	\hfill
	\begin{subfigure}[t]{0.14\textwidth}
		\includegraphics[width=\linewidth]{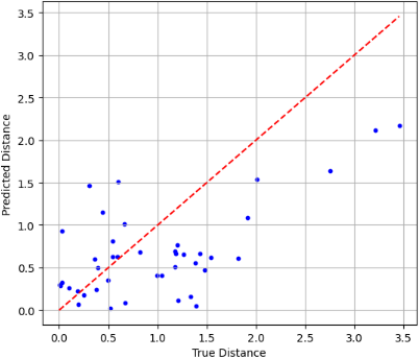}
		\caption{\scalebox{0.95}{6-GB/MNN-1-$\mathscr{D}^{'}_{\mathrm{te}}$}}
	\end{subfigure}
	\hfill
	\begin{subfigure}[t]{0.14\textwidth}
		\includegraphics[width=\linewidth]{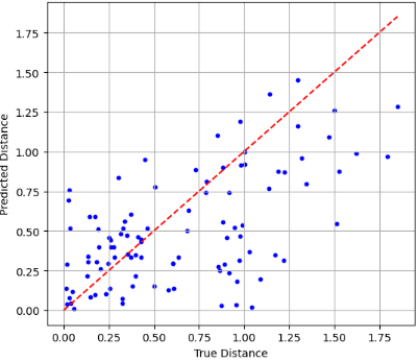}
		\caption{4-GB/MNN-1-$\mathcal{S}^{'}$}
	\end{subfigure}
	\hfill
	\begin{subfigure}[t]{0.14\textwidth}
		\includegraphics[width=\linewidth]{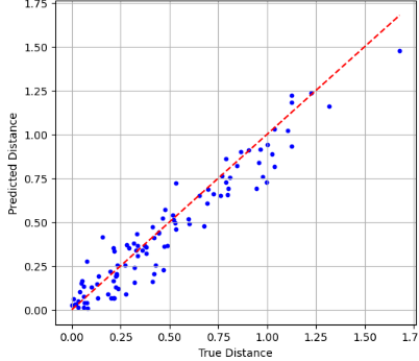}
		\caption{5-GB/MNN-1-$\mathcal{S}^{'}$}
	\end{subfigure}
	\hfill
	\begin{subfigure}[t]{0.14\textwidth}
		\includegraphics[width=\linewidth]{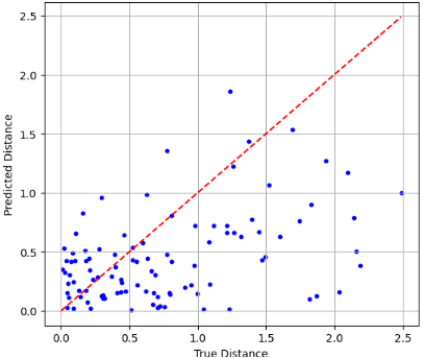}
		\caption{6-GB/MNN-1-$\mathcal{S}^{'}$}
	\end{subfigure}
	
	
	\begin{subfigure}[t]{0.14\textwidth}
		\includegraphics[width=\linewidth]{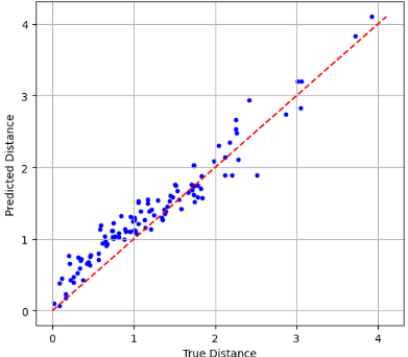}
		\caption{\scalebox{0.95}{4-GB/MNN-2-$\mathscr{D}^{'}_{\mathrm{te}}$}}
	\end{subfigure}
	\hfill
	\begin{subfigure}[t]{0.14\textwidth}
		\includegraphics[width=\linewidth]{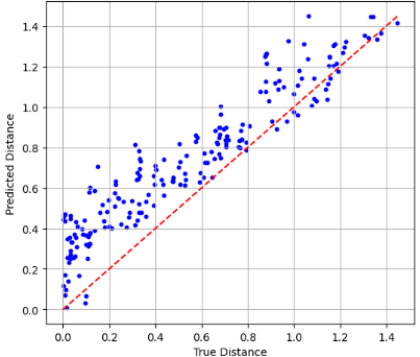}
		\caption{\scalebox{0.95}{5-GB/MNN-2-$\mathscr{D}^{'}_{\mathrm{te}}$}}
	\end{subfigure}
	\hfill
	\begin{subfigure}[t]{0.14\textwidth}
		\includegraphics[width=\linewidth]{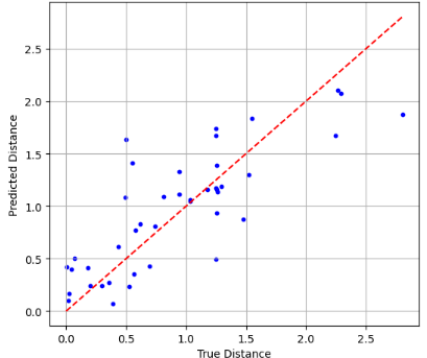}
		\caption{\scalebox{0.95}{6-GB/MNN-2-$\mathscr{D}^{'}_{\mathrm{te}}$}}
	\end{subfigure}
	\hfill
	\begin{subfigure}[t]{0.14\textwidth}
		\includegraphics[width=\linewidth]{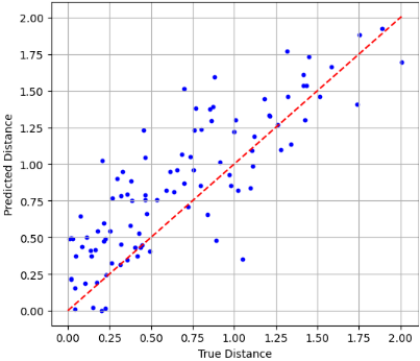}
		\caption{4-GB/MNN-2-$\mathcal{S}^{'}$}
	\end{subfigure}
	\hfill
	\begin{subfigure}[t]{0.14\textwidth}
		\includegraphics[width=\linewidth]{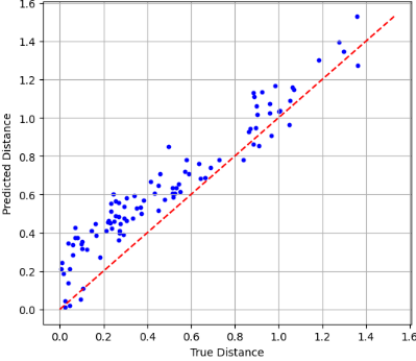}
		\caption{5-GB/MNN-2-$\mathcal{S}^{'}$}
	\end{subfigure}
	\hfill
	\begin{subfigure}[t]{0.14\textwidth}
		\includegraphics[width=\linewidth]{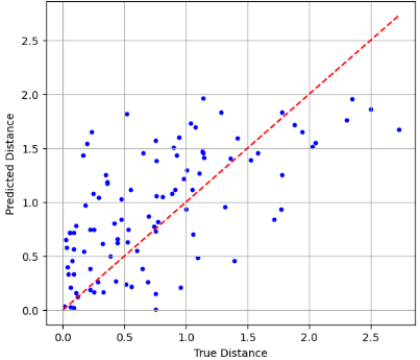}
		\caption{6-GB/MNN-2-$\mathcal{S}^{'}$}
	\end{subfigure}
	
	
	\begin{subfigure}[t]{0.14\textwidth}
		\includegraphics[width=\linewidth]{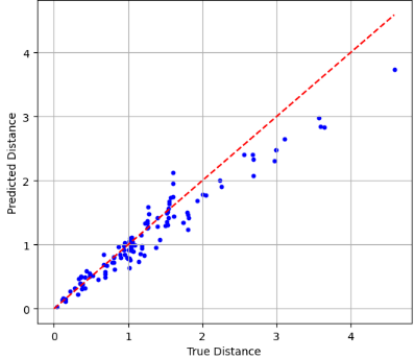}
		\caption{\scalebox{0.95}{4-NN/MNN-1-$\mathscr{D}^{'}_{\mathrm{te}}$}}
	\end{subfigure}
	\hfill
	\begin{subfigure}[t]{0.14\textwidth}
		\includegraphics[width=\linewidth]{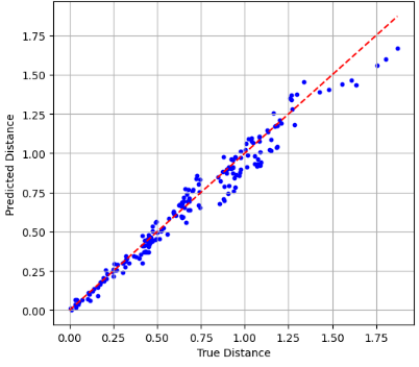}
		\caption{\scalebox{0.95}{5-NN/MNN-1-$\mathscr{D}^{'}_{\mathrm{te}}$}}
	\end{subfigure}
	\hfill
	\begin{subfigure}[t]{0.14\textwidth}
		\includegraphics[width=\linewidth]{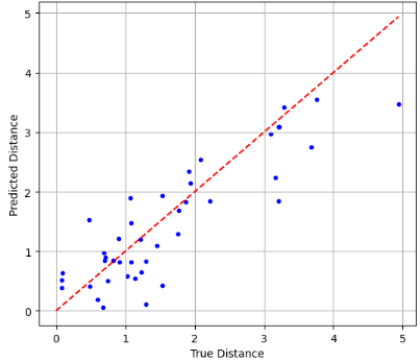}
		\caption{\scalebox{0.95}{6-NN/MNN-1-$\mathscr{D}^{'}_{\mathrm{te}}$}}
	\end{subfigure}
	\hfill
	\begin{subfigure}[t]{0.14\textwidth}
		\includegraphics[width=\linewidth]{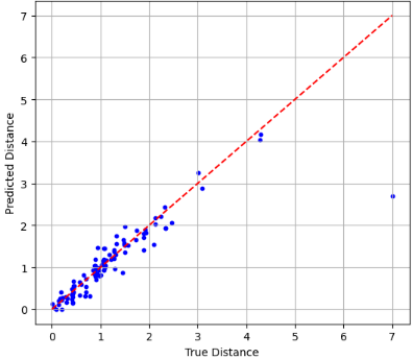}
		\caption{4-NN/MNN-1-$\mathcal{S}^{'}$}
	\end{subfigure}
	\hfill
	\begin{subfigure}[t]{0.14\textwidth}
		\includegraphics[width=\linewidth]{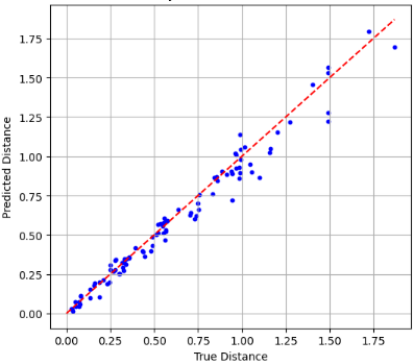}
		\caption{5-NN/MNN-1-$\mathcal{S}^{'}$}
	\end{subfigure}
	\hfill
	\begin{subfigure}[t]{0.14\textwidth}
		\includegraphics[width=\linewidth]{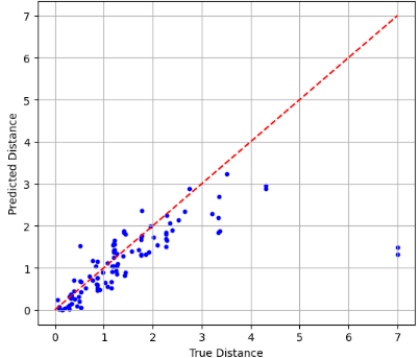}
		\caption{6-NN/MNN-1-$\mathcal{S}^{'}$}
	\end{subfigure}
	
	
	\begin{subfigure}[t]{0.14\textwidth}
		\includegraphics[width=\linewidth]{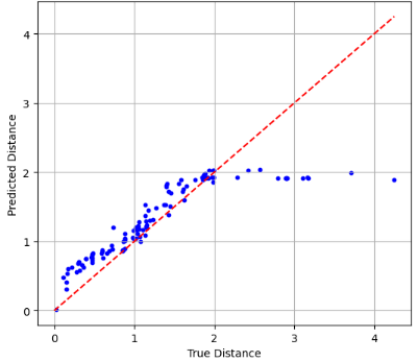}
		\caption{\scalebox{0.95}{4-NN/MNN-2-$\mathscr{D}^{'}_{\mathrm{te}}$}}
	\end{subfigure}
	\hfill
	\begin{subfigure}[t]{0.14\textwidth}
		\includegraphics[width=\linewidth]{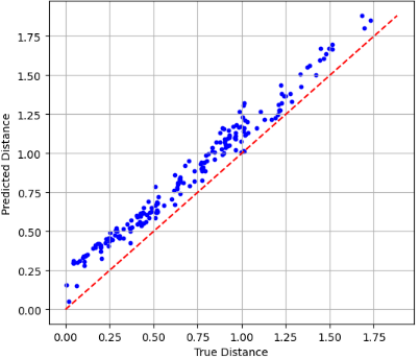}
		\caption{\scalebox{0.95}{5-NN/MNN-2-$\mathscr{D}^{'}_{\mathrm{te}}$}}
	\end{subfigure}
	\hfill
	\begin{subfigure}[t]{0.14\textwidth}
		\includegraphics[width=\linewidth]{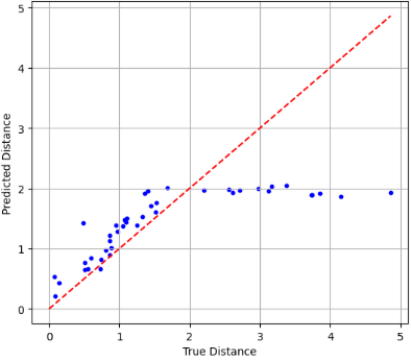}
		\caption{\scalebox{0.95}{6-NN/MNN-2-$\mathscr{D}^{'}_{\mathrm{te}}$}}
	\end{subfigure}
	\hfill
	\begin{subfigure}[t]{0.14\textwidth}
		\includegraphics[width=\linewidth]{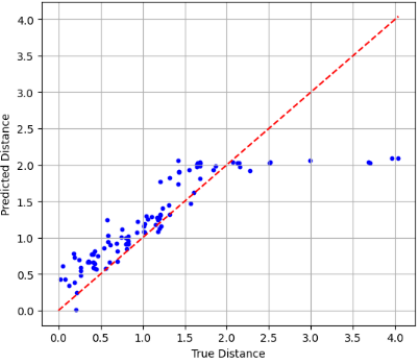}
		\caption{4-NN/MNN-2-$\mathcal{S}^{'}$}
	\end{subfigure}
	\hfill
	\begin{subfigure}[t]{0.14\textwidth}
		\includegraphics[width=\linewidth]{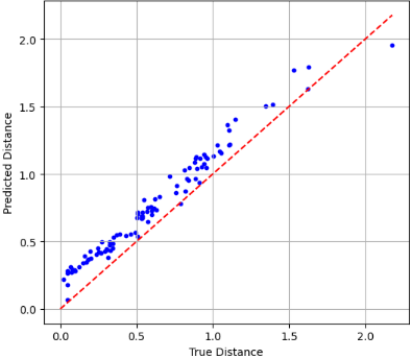}
		\caption{5-NN/MNN-2-$\mathcal{S}^{'}$}
	\end{subfigure}
	\hfill
	\begin{subfigure}[t]{0.14\textwidth}
		\includegraphics[width=\linewidth]{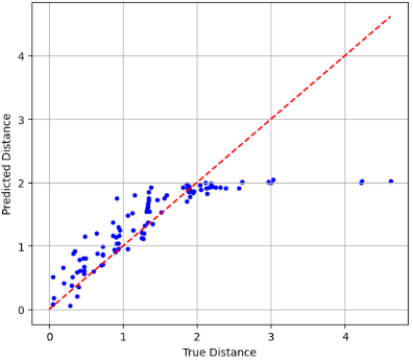}
		\caption{6-NN/MNN-2-$\mathcal{S}^{'}$}
	\end{subfigure}
	
	
	\begin{subfigure}[t]{0.14\textwidth}
		\includegraphics[width=\linewidth]{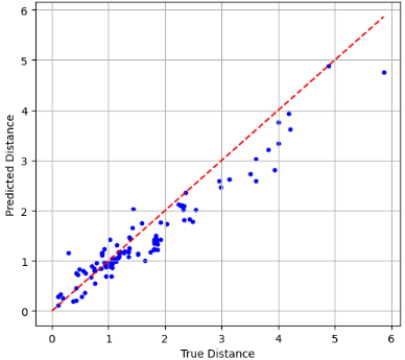}
		\caption{4-RF/LNN-1-$\mathscr{D}^{'}_{\mathrm{te}}$}
	\end{subfigure}
	\hfill
	\begin{subfigure}[t]{0.14\textwidth}
		\includegraphics[width=\linewidth]{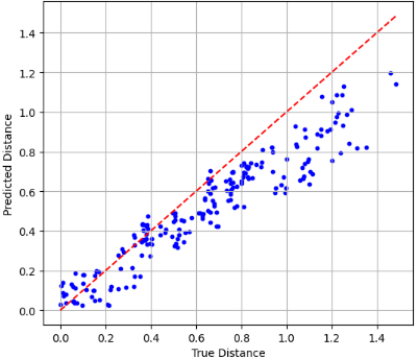}
		\caption{5-RF/LNN-1-$\mathscr{D}^{'}_{\mathrm{te}}$}
	\end{subfigure}
	\hfill
	\begin{subfigure}[t]{0.14\textwidth}
		\includegraphics[width=\linewidth]{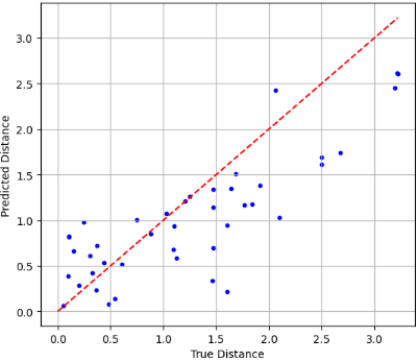}
		\caption{6-RF/LNN-1-$\mathscr{D}^{'}_{\mathrm{te}}$}
	\end{subfigure}
	\hfill
	\begin{subfigure}[t]{0.14\textwidth}
		\includegraphics[width=\linewidth]{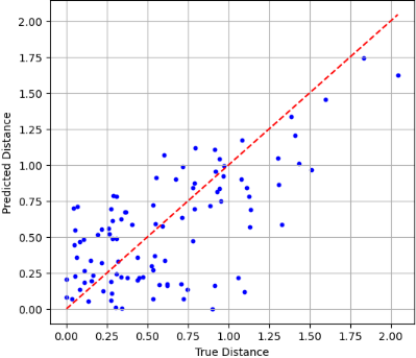}
		\caption{4-RF/LNN-1-$\mathcal{S}^{'}$}
	\end{subfigure}
	\hfill
	\begin{subfigure}[t]{0.14\textwidth}
		\includegraphics[width=\linewidth]{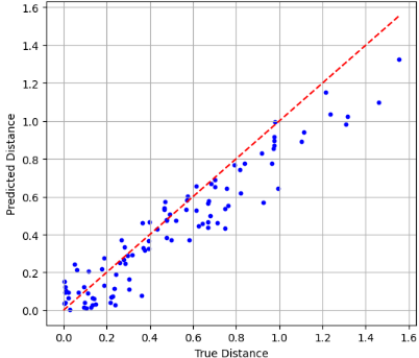}
		\caption{5-RF/LNN-1-$\mathcal{S}^{'}$}
	\end{subfigure}
	\hfill
	\begin{subfigure}[t]{0.14\textwidth}
		\includegraphics[width=\linewidth]{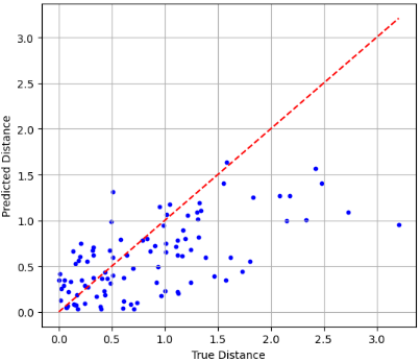}
		\caption{6-RF/LNN-1-$\mathcal{S}^{'}$}
	\end{subfigure}
	
	
	\begin{subfigure}[t]{0.14\textwidth}
		\includegraphics[width=\linewidth]{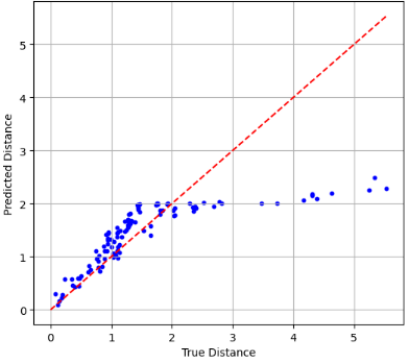}
		\caption{4-RF/LNN-2-$\mathscr{D}^{'}_{\mathrm{te}}$}
	\end{subfigure}
	\hfill
	\begin{subfigure}[t]{0.14\textwidth}
		\includegraphics[width=\linewidth]{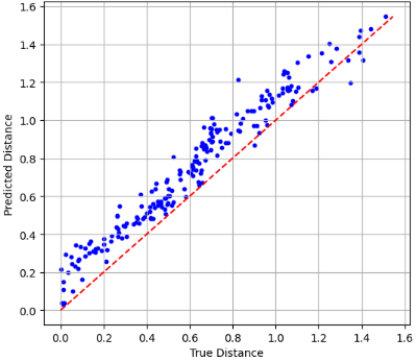}
		\caption{5-RF/LNN-2-$\mathscr{D}^{'}_{\mathrm{te}}$}
	\end{subfigure}
	\hfill
	\begin{subfigure}[t]{0.14\textwidth}
		\includegraphics[width=\linewidth]{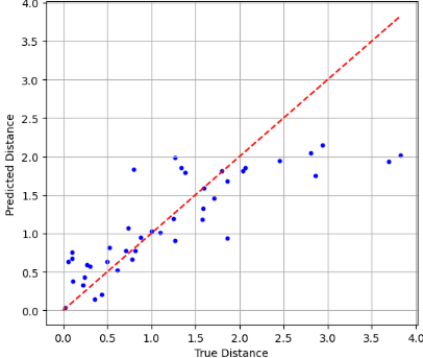}
		\caption{6-RF/LNN-2-$\mathscr{D}^{'}_{\mathrm{te}}$}
	\end{subfigure}
	\hfill
	\begin{subfigure}[t]{0.14\textwidth}
		\includegraphics[width=\linewidth]{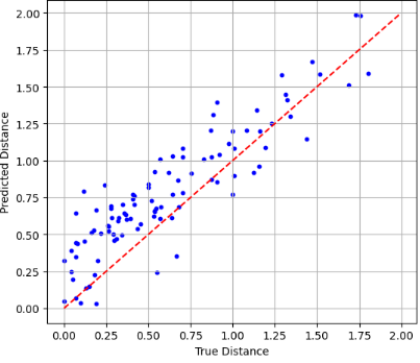}
		\caption{4-RF/LNN-2-$\mathcal{S}^{'}$}
	\end{subfigure}
	\hfill
	\begin{subfigure}[t]{0.14\textwidth}
		\includegraphics[width=\linewidth]{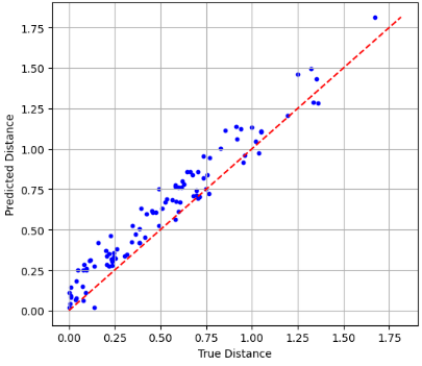}
		\caption{5-RF/LNN-2-$\mathcal{S}^{'}$}
	\end{subfigure}
	\hfill
	\begin{subfigure}[t]{0.14\textwidth}
		\includegraphics[width=\linewidth]{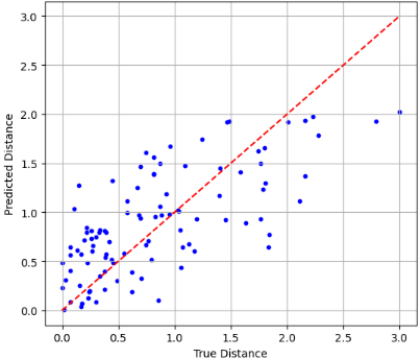}
		\caption{6-RF/LNN-2-$\mathcal{S}^{'}$}
	\end{subfigure}
	
	
	\begin{subfigure}[t]{0.14\textwidth}
		\includegraphics[width=\linewidth]{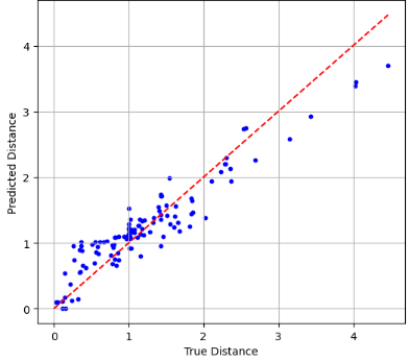}
		\caption{4-GB/LNN-1-$\mathscr{D}^{'}_{\mathrm{te}}$}
	\end{subfigure}
	\hfill
	\begin{subfigure}[t]{0.14\textwidth}
		\includegraphics[width=\linewidth]{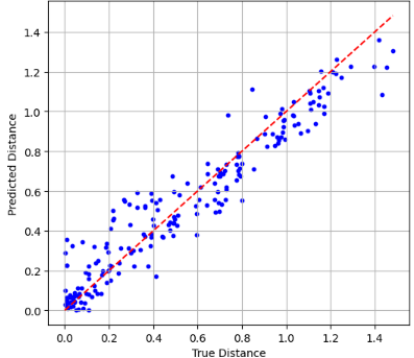}
		\caption{5-GB/LNN-1-$\mathscr{D}^{'}_{\mathrm{te}}$}
	\end{subfigure}
	\hfill
	\begin{subfigure}[t]{0.14\textwidth}
		\includegraphics[width=\linewidth]{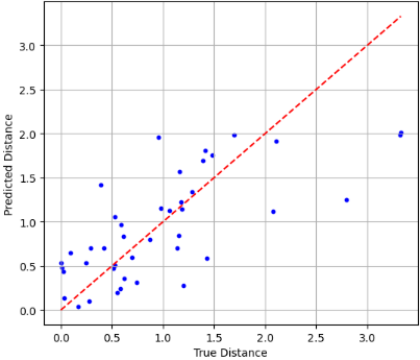}
		\caption{6-GB/LNN-1-$\mathscr{D}^{'}_{\mathrm{te}}$}
	\end{subfigure}
	\hfill
	\begin{subfigure}[t]{0.14\textwidth}
		\includegraphics[width=\linewidth]{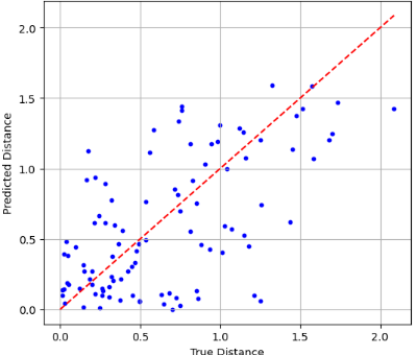}
		\caption{4-GB/LNN-1-$\mathcal{S}^{'}$}
	\end{subfigure}
	\hfill
	\begin{subfigure}[t]{0.14\textwidth}
		\includegraphics[width=\linewidth]{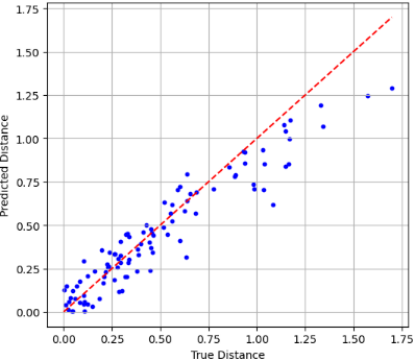}
		\caption{5-GB/LNN-1-$\mathcal{S}^{'}$}
	\end{subfigure}
	\hfill
	\begin{subfigure}[t]{0.14\textwidth}
		\includegraphics[width=\linewidth]{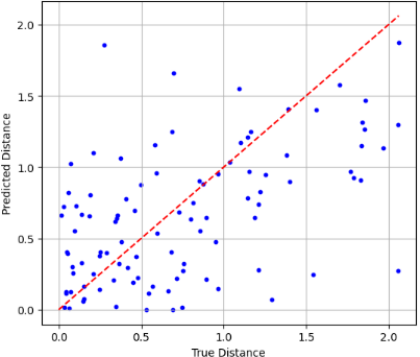}
		\caption{6-GB/LNN-1-$\mathcal{S}^{'}$}
	\end{subfigure}
	
	
	\begin{subfigure}[t]{0.14\textwidth}
		\includegraphics[width=\linewidth]{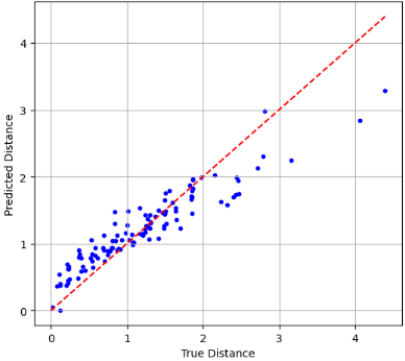}
		\caption{4-GB/LNN-2-$\mathscr{D}^{'}_{\mathrm{te}}$}
	\end{subfigure}
	\hfill
	\begin{subfigure}[t]{0.14\textwidth}
		\includegraphics[width=\linewidth]{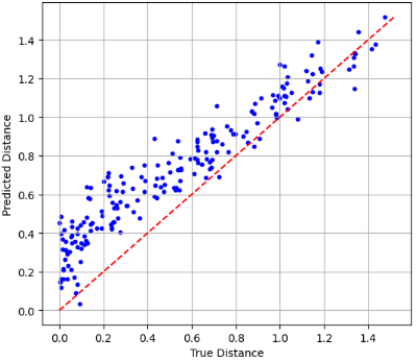}
		\caption{5-GB/LNN-2-$\mathscr{D}^{'}_{\mathrm{te}}$}
	\end{subfigure}
	\hfill
	\begin{subfigure}[t]{0.14\textwidth}
		\includegraphics[width=\linewidth]{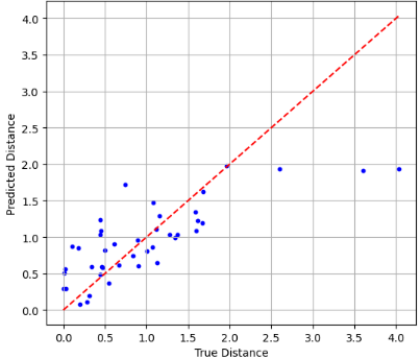}
		\caption{6-GB/LNN-2-$\mathscr{D}^{'}_{\mathrm{te}}$}
	\end{subfigure}
	\hfill
	\begin{subfigure}[t]{0.14\textwidth}
		\includegraphics[width=\linewidth]{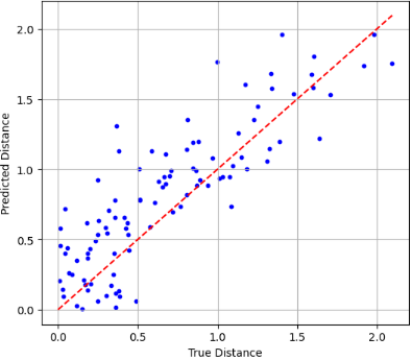}
		\caption{4-GB/LNN-2-$\mathcal{S}^{'}$}
	\end{subfigure}
	\hfill
	\begin{subfigure}[t]{0.14\textwidth}
		\includegraphics[width=\linewidth]{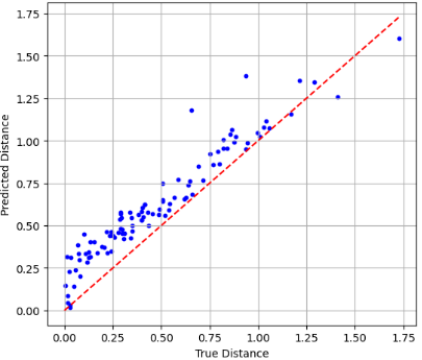}
		\caption{5-GB/LNN-2-$\mathcal{S}^{'}$}
	\end{subfigure}
	\hfill
	\begin{subfigure}[t]{0.14\textwidth}
		\includegraphics[width=\linewidth]{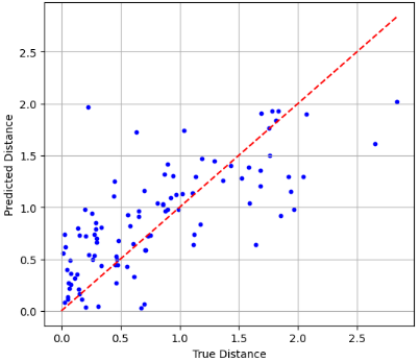}
		\caption{6-GB/LNN-2-$\mathcal{S}^{'}$}
	\end{subfigure}
\end{figure}

\begin{figure}[!ht]
	\centering
	\begin{subfigure}[t]{0.14\textwidth}
		\includegraphics[width=\linewidth]{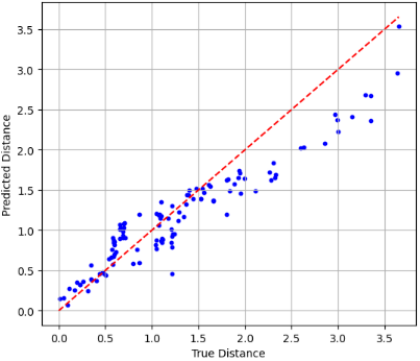}
		\caption{\scalebox{0.95}{4-NN/LNN-1-$\mathscr{D}^{'}_{\mathrm{te}}$}}
	\end{subfigure}
	\hfill
	\begin{subfigure}[t]{0.14\textwidth}
		\includegraphics[width=\linewidth]{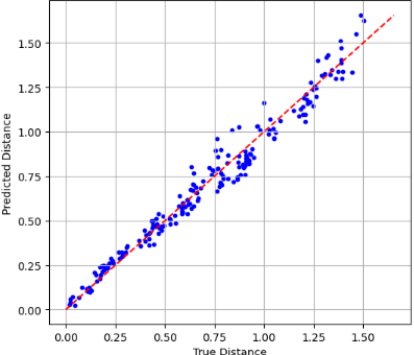}
		\caption{\scalebox{0.95}{5-NN/LNN-1-$\mathscr{D}^{'}_{\mathrm{te}}$}}
	\end{subfigure}
	\hfill
	\begin{subfigure}[t]{0.14\textwidth}
		\includegraphics[width=\linewidth]{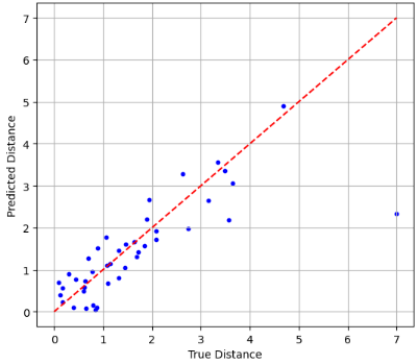}
		\caption{\scalebox{0.95}{6-NN/LNN-1-$\mathscr{D}^{'}_{\mathrm{te}}$}}
	\end{subfigure}
	\hfill
	\begin{subfigure}[t]{0.14\textwidth}
		\includegraphics[width=\linewidth]{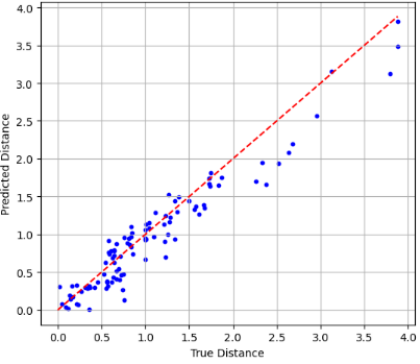}
		\caption{4-NN/LNN-1-$\mathcal{S}^{'}$}
	\end{subfigure}
	\hfill
	\begin{subfigure}[t]{0.14\textwidth}
		\includegraphics[width=\linewidth]{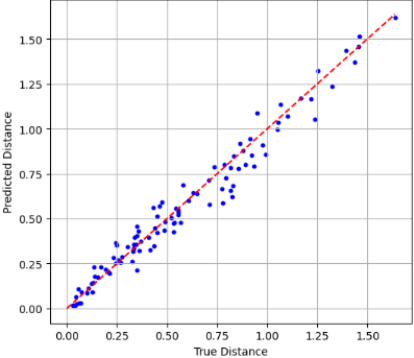}
		\caption{5-NN/LNN-1-$\mathcal{S}^{'}$}
	\end{subfigure}
	\hfill
	\begin{subfigure}[t]{0.14\textwidth}
		\includegraphics[width=\linewidth]{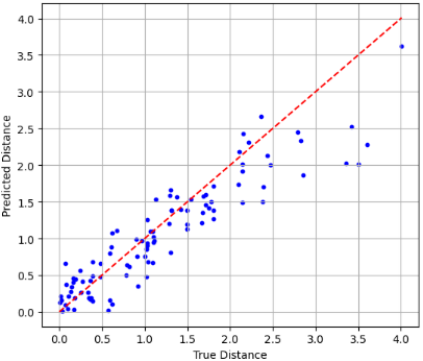}
		\caption{6-NN/LNN-1-$\mathcal{S}^{'}$}
	\end{subfigure}
	
	
	\begin{subfigure}[t]{0.14\textwidth}
		\includegraphics[width=\linewidth]{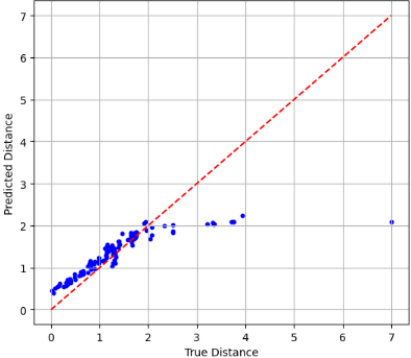}
		\caption{\scalebox{0.95}{4-NN/LNN-2-$\mathscr{D}^{'}_{\mathrm{te}}$}}
	\end{subfigure}
	\hfill
	\begin{subfigure}[t]{0.14\textwidth}
		\includegraphics[width=\linewidth]{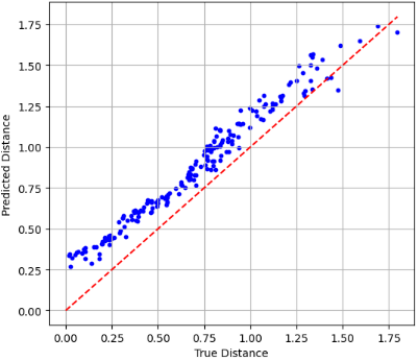}
		\caption{\scalebox{0.95}{5-NN/LNN-2-$\mathscr{D}^{'}_{\mathrm{te}}$}}
	\end{subfigure}
	\hfill
	\begin{subfigure}[t]{0.14\textwidth}
		\includegraphics[width=\linewidth]{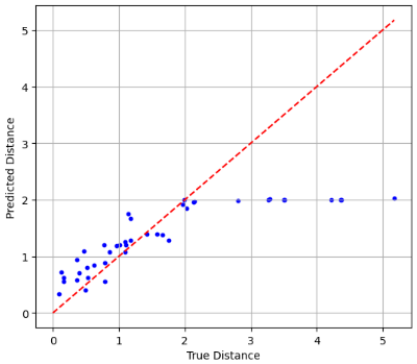}
		\caption{\scalebox{0.95}{6-NN/LNN-2-$\mathscr{D}^{'}_{\mathrm{te}}$}}
	\end{subfigure}
	\hfill
	\begin{subfigure}[t]{0.14\textwidth}
		\includegraphics[width=\linewidth]{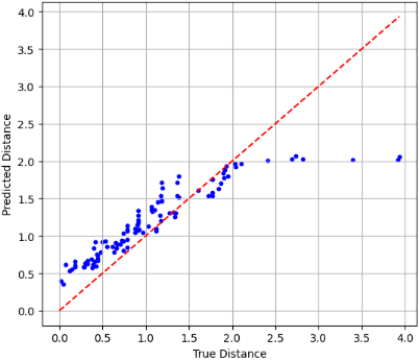}
		\caption{4-NN/LNN-2-$\mathcal{S}^{'}$}
	\end{subfigure}
	\hfill
	\begin{subfigure}[t]{0.14\textwidth}
		\includegraphics[width=\linewidth]{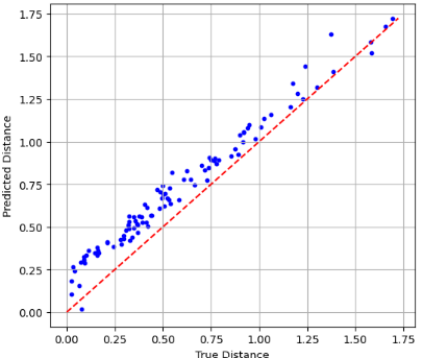}
		\caption{5-NN/LNN-2-$\mathcal{S}^{'}$}
	\end{subfigure}
	\hfill
	\begin{subfigure}[t]{0.14\textwidth}
		\includegraphics[width=\linewidth]{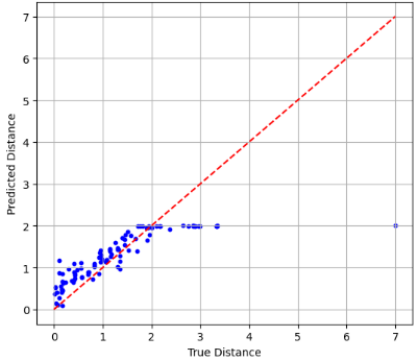}
		\caption{6-NN/LNN-2-$\mathcal{S}^{'}$}
	\end{subfigure}
	
	
	\begin{subfigure}[t]{0.14\textwidth}
		\includegraphics[width=\linewidth]{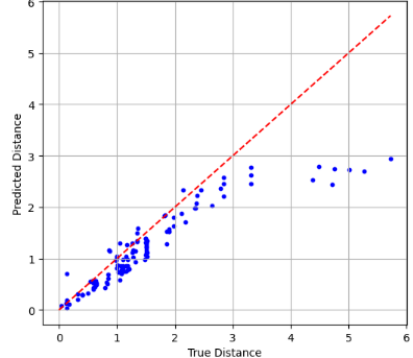}
		\caption{4-RF/GB-1-$\mathscr{D}^{'}_{\mathrm{te}}$}
	\end{subfigure}
	\hfill
	\begin{subfigure}[t]{0.14\textwidth}
		\includegraphics[width=\linewidth]{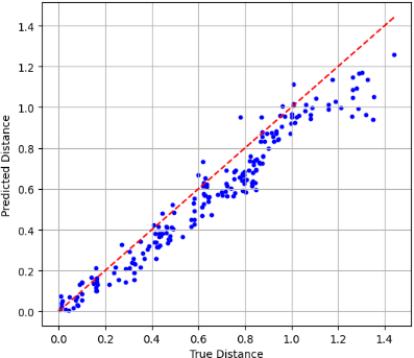}
		\caption{5-RF/GB-1-$\mathscr{D}^{'}_{\mathrm{te}}$}
	\end{subfigure}
	\hfill
	\begin{subfigure}[t]{0.14\textwidth}
		\includegraphics[width=\linewidth]{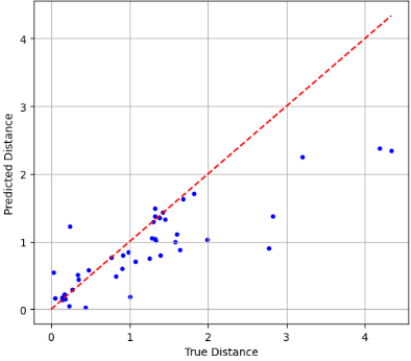}
		\caption{6-RF/GB-1-$\mathscr{D}^{'}_{\mathrm{te}}$}
	\end{subfigure}
	\hfill
	\begin{subfigure}[t]{0.14\textwidth}
		\includegraphics[width=\linewidth]{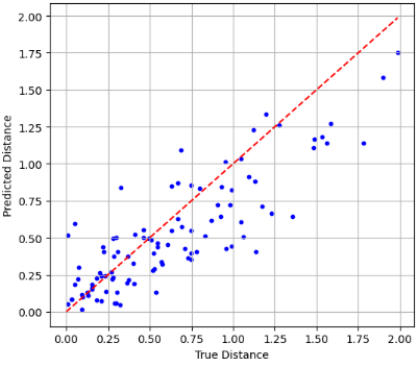}
		\caption{4-RF/GB-1-$\mathcal{S}^{'}$}
	\end{subfigure}
	\hfill
	\begin{subfigure}[t]{0.14\textwidth}
		\includegraphics[width=\linewidth]{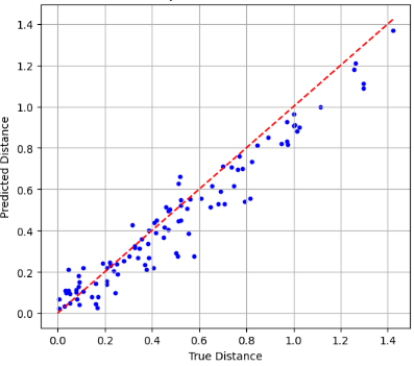}
		\caption{5-RF/GB-1-$\mathcal{S}^{'}$}
	\end{subfigure}
	\hfill
	\begin{subfigure}[t]{0.14\textwidth}
		\includegraphics[width=\linewidth]{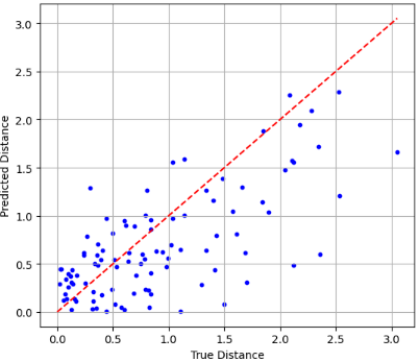}
		\caption{6-RF/GB-1-$\mathcal{S}^{'}$}
	\end{subfigure}
	
	
	\begin{subfigure}[t]{0.14\textwidth}
		\includegraphics[width=\linewidth]{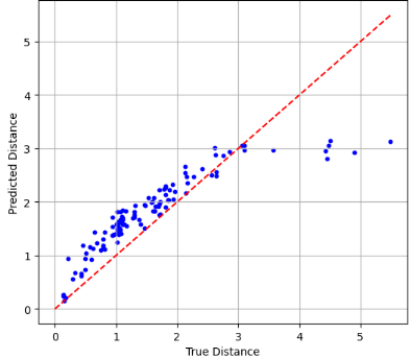}
		\caption{4-RF/GB-2-$\mathscr{D}^{'}_{\mathrm{te}}$}
	\end{subfigure}
	\hfill
	\begin{subfigure}[t]{0.14\textwidth}
		\includegraphics[width=\linewidth]{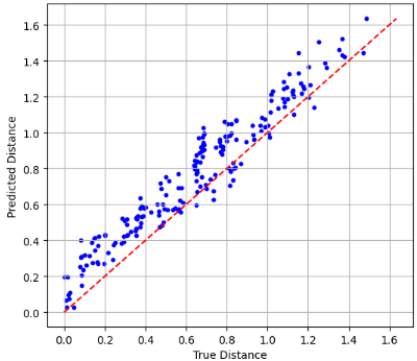}
		\caption{5-RF/GB-2-$\mathscr{D}^{'}_{\mathrm{te}}$}
	\end{subfigure}
	\hfill
	\begin{subfigure}[t]{0.14\textwidth}
		\includegraphics[width=\linewidth]{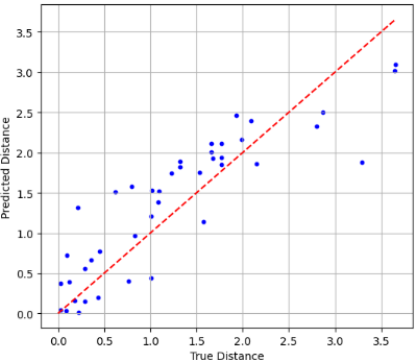}
		\caption{6-RF/GB-2-$\mathscr{D}^{'}_{\mathrm{te}}$}
	\end{subfigure}
	\hfill
	\begin{subfigure}[t]{0.14\textwidth}
		\includegraphics[width=\linewidth]{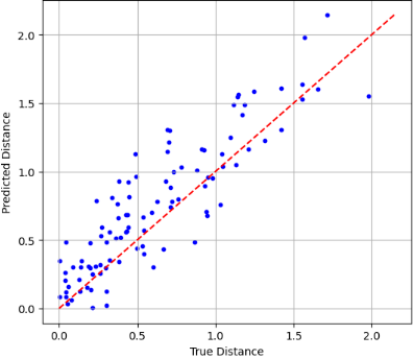}
		\caption{4-RF/GB-2-$\mathcal{S}^{'}$}
	\end{subfigure}
	\hfill
	\begin{subfigure}[t]{0.14\textwidth}
		\includegraphics[width=\linewidth]{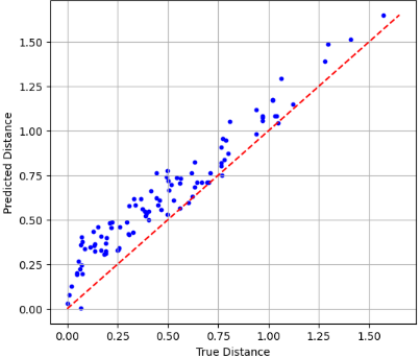}
		\caption{5-RF/GB-2-$\mathcal{S}^{'}$}
	\end{subfigure}
	\hfill
	\begin{subfigure}[t]{0.14\textwidth}
		\includegraphics[width=\linewidth]{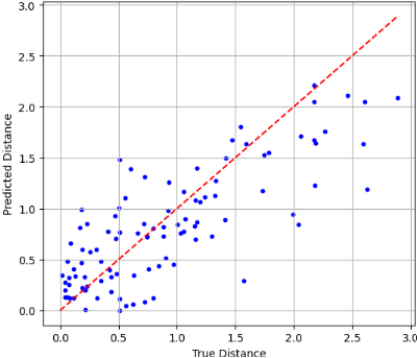}
		\caption{6-RF/GB-2-$\mathcal{S}^{'}$}
	\end{subfigure}
	
	
	\begin{subfigure}[t]{0.14\textwidth}
		\includegraphics[width=\linewidth]{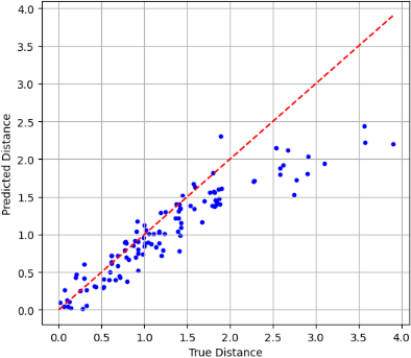}
		\caption{4-GB/GB-1-$\mathscr{D}^{'}_{\mathrm{te}}$}
	\end{subfigure}
	\hfill
	\begin{subfigure}[t]{0.14\textwidth}
		\includegraphics[width=\linewidth]{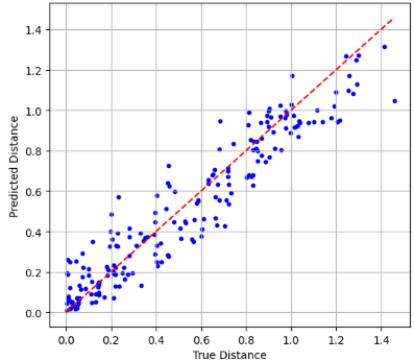}
		\caption{5-GB/GB-1-$\mathscr{D}^{'}_{\mathrm{te}}$}
	\end{subfigure}
	\hfill
	\begin{subfigure}[t]{0.14\textwidth}
		\includegraphics[width=\linewidth]{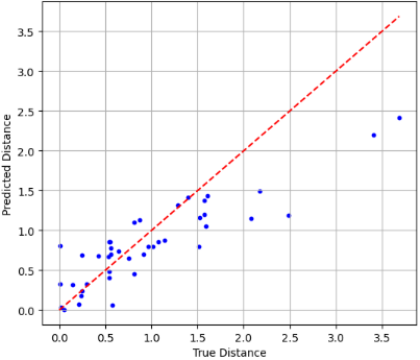}
		\caption{6-GB/GB-1-$\mathscr{D}^{'}_{\mathrm{te}}$}
	\end{subfigure}
	\hfill
	\begin{subfigure}[t]{0.14\textwidth}
		\includegraphics[width=\linewidth]{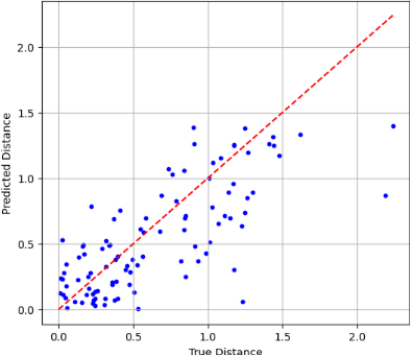}
		\caption{4-GB/GB-1-$\mathcal{S}^{'}$}
	\end{subfigure}
	\hfill
	\begin{subfigure}[t]{0.14\textwidth}
		\includegraphics[width=\linewidth]{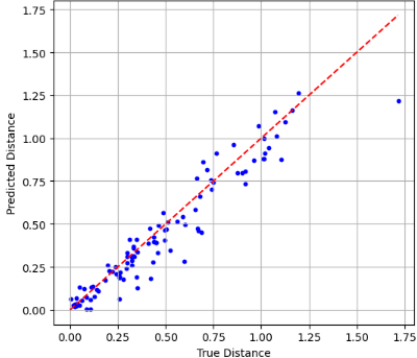}
		\caption{5-GB/GB-1-$\mathcal{S}^{'}$}
	\end{subfigure}
	\hfill
	\begin{subfigure}[t]{0.14\textwidth}
		\includegraphics[width=\linewidth]{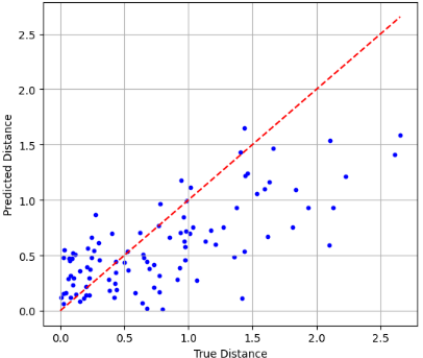}
		\caption{6-GB/GB-1-$\mathcal{S}^{'}$}
	\end{subfigure}
	
	
	\begin{subfigure}[t]{0.14\textwidth}
		\includegraphics[width=\linewidth]{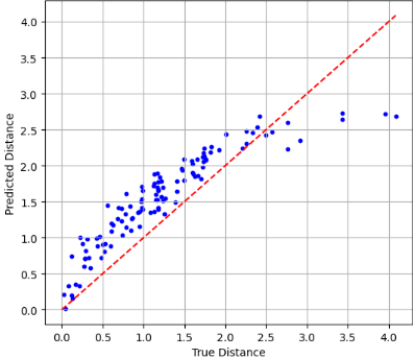}
		\caption{4-GB/GB-2-$\mathscr{D}^{'}_{\mathrm{te}}$}
	\end{subfigure}
	\hfill
	\begin{subfigure}[t]{0.14\textwidth}
		\includegraphics[width=\linewidth]{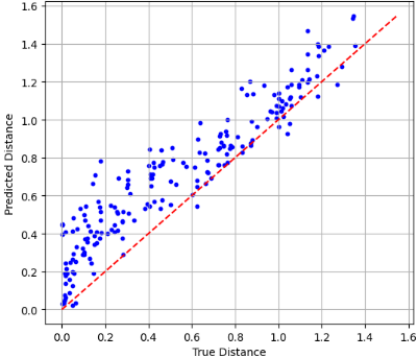}
		\caption{5-GB/GB-2-$\mathscr{D}^{'}_{\mathrm{te}}$}
	\end{subfigure}
	\hfill
	\begin{subfigure}[t]{0.14\textwidth}
		\includegraphics[width=\linewidth]{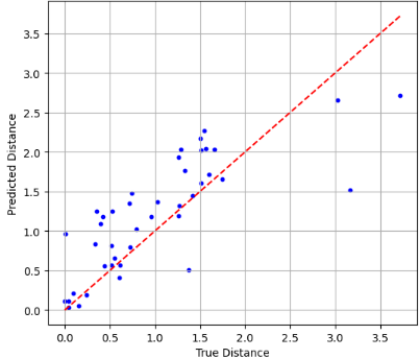}
		\caption{6-GB/GB-2-$\mathscr{D}^{'}_{\mathrm{te}}$}
	\end{subfigure}
	\hfill
	\begin{subfigure}[t]{0.14\textwidth}
		\includegraphics[width=\linewidth]{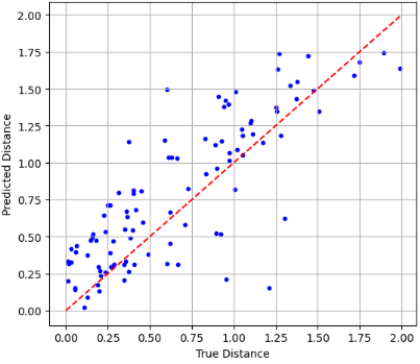}
		\caption{4-GB/GB-2-$\mathcal{S}^{'}$}
	\end{subfigure}
	\hfill
	\begin{subfigure}[t]{0.14\textwidth}
		\includegraphics[width=\linewidth]{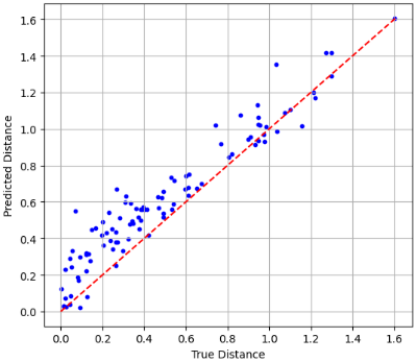}
		\caption{5-GB/GB-2-$\mathcal{S}^{'}$}
	\end{subfigure}
	\hfill
	\begin{subfigure}[t]{0.14\textwidth}
		\includegraphics[width=\linewidth]{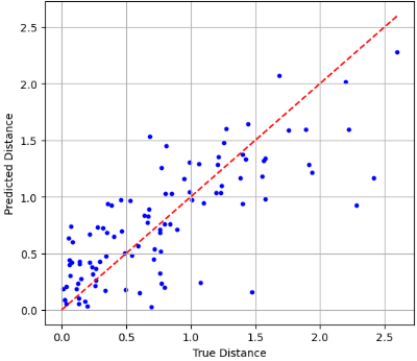}
		\caption{6-GB/GB-2-$\mathcal{S}^{'}$}
	\end{subfigure}
	
	
	\begin{subfigure}[t]{0.14\textwidth}
		\includegraphics[width=\linewidth]{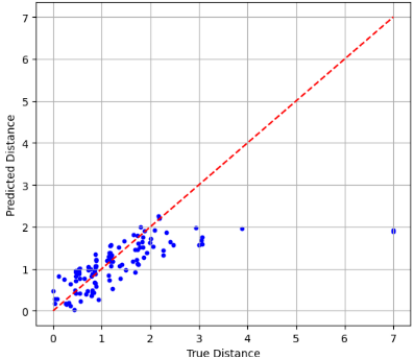}
		\caption{4-NN/GB-1-$\mathscr{D}^{'}_{\mathrm{te}}$}
	\end{subfigure}
	\hfill
	\begin{subfigure}[t]{0.14\textwidth}
		\includegraphics[width=\linewidth]{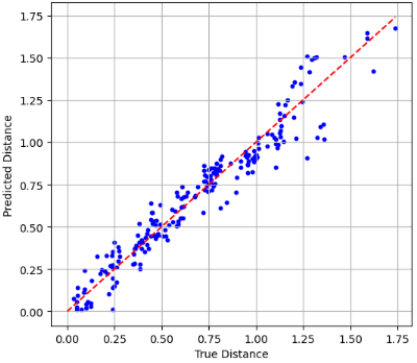}
		\caption{5-NN/GB-1-$\mathscr{D}^{'}_{\mathrm{te}}$}
	\end{subfigure}
	\hfill
	\begin{subfigure}[t]{0.14\textwidth}
		\includegraphics[width=\linewidth]{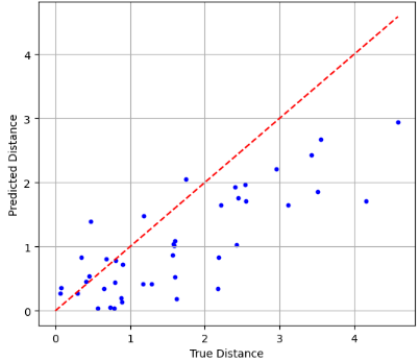}
		\caption{6-NN/GB-1-$\mathscr{D}^{'}_{\mathrm{te}}$}
	\end{subfigure}
	\hfill
	\begin{subfigure}[t]{0.14\textwidth}
		\includegraphics[width=\linewidth]{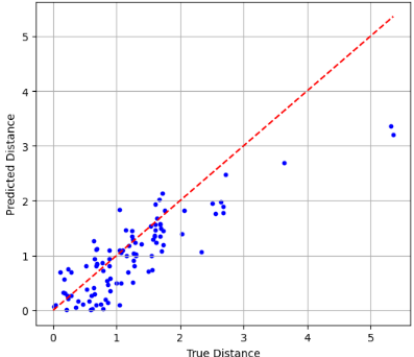}
		\caption{4-NN/GB-1-$\mathcal{S}^{'}$}
	\end{subfigure}
	\hfill
	\begin{subfigure}[t]{0.14\textwidth}
		\includegraphics[width=\linewidth]{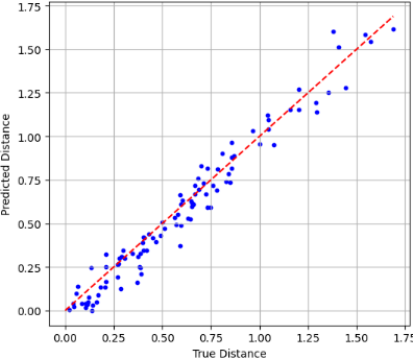}
		\caption{5-NN/GB-1-$\mathcal{S}^{'}$}
	\end{subfigure}
	\hfill
	\begin{subfigure}[t]{0.14\textwidth}
		\includegraphics[width=\linewidth]{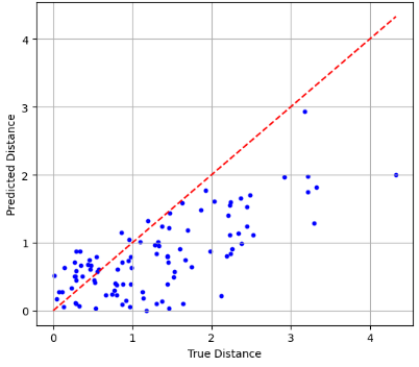}
		\caption{6-NN/GB-1-$\mathcal{S}^{'}$}
	\end{subfigure}
	
	
	\begin{subfigure}[t]{0.14\textwidth}
		\includegraphics[width=\linewidth]{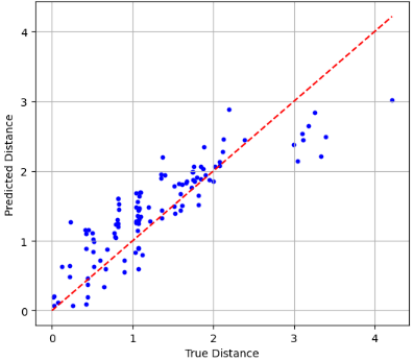}
		\caption{4-NN/GB-2-$\mathscr{D}^{'}_{\mathrm{te}}$}
	\end{subfigure}
	\hfill
	\begin{subfigure}[t]{0.14\textwidth}
		\includegraphics[width=\linewidth]{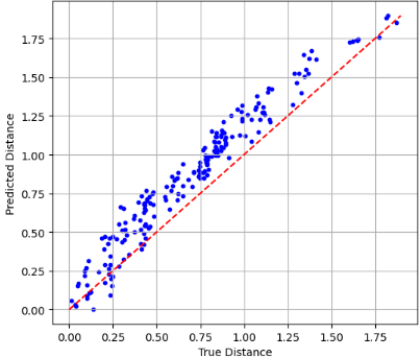}
		\caption{5-NN/GB-2-$\mathscr{D}^{'}_{\mathrm{te}}$}
	\end{subfigure}
	\hfill
	\begin{subfigure}[t]{0.14\textwidth}
		\includegraphics[width=\linewidth]{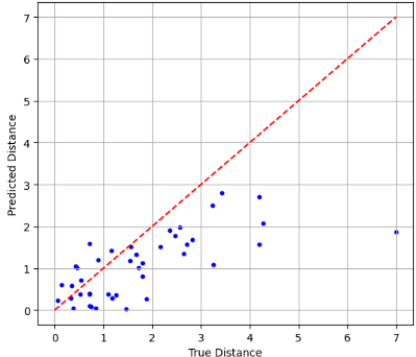}
		\caption{6-NN/GB-2-$\mathscr{D}^{'}_{\mathrm{te}}$}
	\end{subfigure}
	\hfill
	\begin{subfigure}[t]{0.14\textwidth}
		\includegraphics[width=\linewidth]{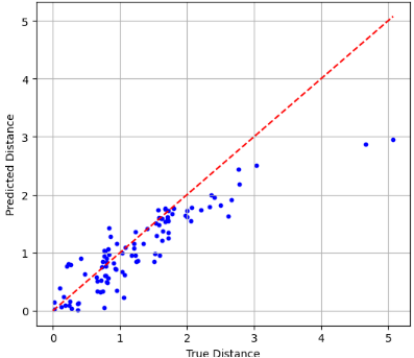}
		\caption{4-NN/GB-2-$\mathcal{S}^{'}$}
	\end{subfigure}
	\hfill
	\begin{subfigure}[t]{0.14\textwidth}
		\includegraphics[width=\linewidth]{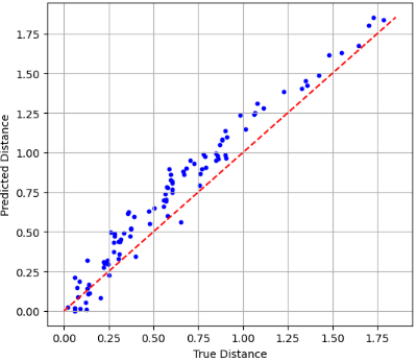}
		\caption{5-NN/GB-2-$\mathcal{S}^{'}$}
	\end{subfigure}
	\hfill
	\begin{subfigure}[t]{0.14\textwidth}
		\includegraphics[width=\linewidth]{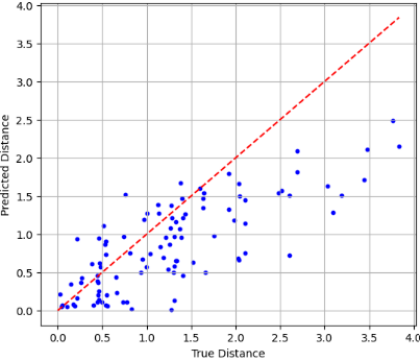}
		\caption{6-NN/GB-2-$\mathcal{S}^{'}$}
	\end{subfigure}
\end{figure}

\end{document}